\setlist[itemize]{leftmargin=1cm}
\newcommand{\hatQ}{\widehat{Q}}
\newcommand{\hq}{\widehat{\rm q}}
\newcommand{\CW}[1]{{\color{red}{\small{[CW: #1]}}}}
\DeclareMathOperator{\Cov}{Cov}
\DeclareMathOperator{\Tr}{Tr}
\newcommand{\bcov}{\overline{\Cov}}
\newcommand{\hGamma}{\widehat{\Gamma}}
\definecolor{Green}{rgb}{0.13, 0.65, 0.3}
\DeclareMathOperator*{\argmin}{argmin} 
\DeclareMathOperator*{\polylog}{polylog}
\DeclareMathOperator*{\poly}{poly}
\DeclareMathOperator*{\clip}{clip}
\newcommand{\regterm}{\textbf{\textup{reg-term}}\xspace}
\newcommand{\biasone}{\textbf{\textup{bias-1}}\xspace}
\newcommand{\biastwo}{\textbf{\textup{bias-2}}\xspace}
\newcommand{\biasthree}{\textbf{\textup{bias-3}}\xspace}
\newcommand{\ftrl}{\textbf{\textup{ftrl}}\xspace}
\newcommand{\penal}{\textbf{\textup{penalty}}\xspace}
\newcommand{\stabone}{\textbf{\textup{stability-1}}\xspace}
\newcommand{\stabtwo}{\textbf{\textup{stability-2}}\xspace}
\newcommand{\error}{\textbf{\textup{error}}\xspace}
\newcommand{\con}{C_\iota}
\newcommand{\e}{\mathrm{e}\xspace}
\newcommand{\ind}{\mathbb{I}}
\newcommand{\hatSigma}{\hat{\Sigma}}
\newcommand{\Cbonus}{C_{\textup{bonus}}}
\newcommand{\calN}{\mathcal{N}}
\newcommand{\calV}{\mathcal{V}}
\newcommand{\calA}{\mathcal{A}}
\newcommand{\calB}{\mathcal{B}}
\newcommand{\calE}{\mathcal{E}}
\newcommand{\calZ}{\mathcal{Z}}
\newcommand{\calH}{\mathcal{H}}
\newcommand{\calS}{\mathcal{S}}
\newcommand{\calL}{\mathcal{L}}
\newcommand{\calC}{\mathcal{C}}
\newcommand{\calD}{\mathcal{D}}
\newcommand{\calR}{\mathcal{R}}
\newcommand{\otil}{\widetilde{\order}}
\newcommand{\E}{\mathbb{E}}
\newcommand{\order}{\mathcal{O}}
\newcommand{\norm}[1]{\left\|#1\right\|}
\newcommand{\calX}{\mathcal{X}}
\newcommand{\calF}{\mathcal{F}}
\newtheorem{theorem}{Theorem}
\newcommand{\highlight}[1]{{#1}}
\newtheorem{lemma}[theorem]{Lemma}
\newtheorem{definition}[theorem]{Definition}
\newcommand{\nonl}{\renewcommand{\nl}{\let\nl}}
\newcommand{\pref}[1]{\prettyref{#1}}
\newcommand{\savehyperref}[2]{\texorpdfstring{\hyperref[#1]{#2}}{#2}}
\newcommand{\bias}{\textbf{\textup{bias}}}
\newcommand{\FTRL}{\textbf{\textup{ftrl}}}
\newcommand{\bonus}{\textbf{\textup{bonus}}}
\newcommand{\estimate}{\text{EstOM}}
\renewcommand{\hatSigma}{\widehat{\Sigma}}
\title{Towards Optimal Regret in Adversarial Linear MDPs \\with Bandit Feedback}
\date{}
\author{%
    Haolin Liu\thanks{The authors are listed in alphabetical order. }  \\
    \scalebox{0.9}{University of Virginia}\\
    \scalebox{0.9}{\texttt{srs8rh@virginia.edu}} 
    \and 
    Chen-Yu Wei$^*$ \\
    \scalebox{0.9}{University of Virginia} \\ 
    \scalebox{0.9}{\texttt{chenyu.wei@virginia.edu}} 
    \and Julian Zimmert$^*$ \\
    \scalebox{0.9}{Google Research} \\ 
    \scalebox{0.9}{\texttt{zimmert@google.com}}
}
\begin{document}

\maketitle

\begin{abstract}
We study online reinforcement learning in linear Markov decision processes with adversarial losses and bandit feedback, without prior knowledge on transitions or access to simulators. We introduce two algorithms that achieve improved regret performance compared to existing approaches. The first algorithm, although computationally inefficient, ensures a regret of \( \otil(\sqrt{K}) \), where $K$ is the number of episodes. This is the first result with the optimal $K$ dependence in the considered setting. The second algorithm, which is based on the policy optimization framework, guarantees a regret of \( \otil(K^{\nicefrac{3}{4}}) \) and is computationally efficient. Both our results significantly improve over the state-of-the-art: a computationally inefficient algorithm by \cite{kong2023improved} with \( \otil(K^{\nicefrac{4}{5}}+\poly(\nicefrac{1}{\lambda_{\min}})) \) regret, for some problem-dependent constant $\lambda_{\min}$ that can be arbitrarily close to zero, and a computationally efficient algorithm by \cite{sherman2023improved} with \( \otil(K^{\nicefrac{6}{7}}) \) regret.

\end{abstract}
\section{Introduction}
We study finite-horizon online reinforcement learning in a large state space with adversarial losses amd bandit feedback. 
We assume the linear Markov decision process (MDP) structure: every state-action pair is equiped with a known feature representation, and both the transitions and the losses can be represented as a linear function of the feature. 
This problem has received significant attention recently, with fairly complete results when the agent has access to a simulator to query transitions of the MDP \citep{dai2023refined}.
In the much harder simulator-free setting, the pioneering work of \citet{luo2021policy} showed that no-regret ($K^{\nicefrac{14}{15}}$ regret) is possible, where $K$ is the number of episodes. Several followup works have successively improved the $K$ dependence \citep{dai2023refined, sherman2023improved, kong2023improved}, with the state-of-the-art being \cite{kong2023improved}'s $K^{\nicefrac{4}{5}}+\poly(1/\lambda_{\min})$ regret through a computationally inefficient algorithm, and \cite{sherman2023improved}'s $K^{\nicefrac{6}{7}}$ regret through a computationally efficient algorithm. Still, there remain significant gaps between the current upper bounds and the $\sqrt{K}$ lower bound. 
In this work, we push the frontiers both on the information theoretical limits and the achievable bounds under computational constraints: 
1) we present the first (computationally inefficient) algorithm that provably obtains $\otil(\sqrt{K})$ regret, showing that this is the minimax $K$ dependence (\pref{sec: ineff alg}); 
2) we obtain $\otil(K^{\nicefrac{3}{4}})$ regret with a polynomial-time algorithm (\pref{sec:efficient}). Below, we briefly describe the elements in our approaches. 

\paragraph{Inefficient $\sqrt{K}$ algorithm. }
We convert the linear MDP problem to a linear bandit problem by mapping each policy to a single $dH$-dimensional feature vector, where $d$ is the ambient dimension of the linear MDP and $H$ is the horizon length. The challenge is that this conversion depends on the transition of the MDP, which is not available to the learner. Therefore, the learner has to estimate the feature of every policy during the learning process.  
Previous work in this direction \citep{kong2023improved} faced obstacles in controlling the estimation error and was only able to show a $K^{\nicefrac{4}{5}} + \poly(\nicefrac{1}{\lambda_{\min}})$ regret bound assuming there exists an exploratory policy inducing a covariance matrix $\succeq \lambda_{\min}I$. We addressed the obstacles through 1) state space discretization (\pref{sec: discretization}), and 2) model-free estimation for the occupancy measure of policies over the discretized state space (\pref{sec: estimating mu and phi}). These allow us to emulate the success in the tabular case \citep{jin2020learning} and obtain the tight $\sqrt{K}$ regret. 

\paragraph{Efficient $K^{\nicefrac{3}{4}}$ algorithm. } The efficient algorithm is based on the policy optimization framework \citep{luo2021policy}. Different from previous works that all use exponential weights, we use Follow-the-Regularized-Leader (FTRL) with log-determinant (logdet) barrier regularizer to perform policy updates, which has the benefit of keeping the algorithm more stable \citep{zimmert2022return, liu2023bypassing}. We carefully combine logdet-FTRL with existing algorithmic/analysis techniques to further improve the regret bound. These include 1) an initial exploration phase to control the transition estimation error \citep{sherman2023rate}, 2) optimistic least-square policy evaluation in bonus construction \citep{sherman2023improved}, 3) dilated bonus construction \citep{luo2021policy}, and 4) a tighter concentration bound for covariance matrix estimation \citep{liu2023bypassing}. 

\subsection{Related Work}
In this subsection, we review prior works on adversarial MDPs and policy optimization. 

\paragraph{Learning in Adversarial MDPs. } Adversarial MDPs refer to a class of MDP problems where the transition is fixed while the loss function changes over time. Learning adversarial \emph{tabular} MDPs under bandit feedback and unknown transition has been extensively studied \citep{rosenberg2019online, jin2020learning, lee2020bias, jin2021best, shani2020optimistic, chen2021finding, luo2021policy, dai2022follow, dann2023best}. In this line of work, not only $\sqrt{K}$ regret bounds have been shown, several data-dependent bounds are also established.  
For adversarial MDPs with a large state space which necessitates the use of function approximation, $\sqrt{K}$ bounds have only been shown under simpler cases such as 1) full-information loss feedback \citep{cai2020provably, he2022near, sherman2023rate}, and 2) known transition or access to generative models / simulators \citep{neu2021online, dai2023refined, foster2022complexity}. Therefore, to our knowledge, we provide the first $\sqrt{K}$ regret for adversarial MDPs with large state spaces under bandit feedback and unknown transitions.\footnote{Although \cite{zhao2022learning} provided a $\sqrt{K}$ regret bound for linear mixture MDPs with bandit feedback and unknown transition, the polynomial dependence on the number of states prohibits its application to MDPs with large state spaces. } 
For linear MDPs, a series of recent work has made significant progress in improving the regret bound: \cite{luo2021policy}, \cite{dai2023refined}, \cite{sherman2023improved} proposed efficient (polynomial-time) algorithms with $K^{\nicefrac{14}{15}}$, $K^{\nicefrac{8}{9}}$, and $K^{\nicefrac{6}{7}}$ regret, respectively, and \cite{kong2023improved} proposed an inefficient algorithm with $K^{\nicefrac{4}{5}} + \poly(\nicefrac{1}{\lambda_{\min}})$ regret. Our $\sqrt{K}$ regret through an inefficient algorithm and $K^{\nicefrac{3}{4}}$ regret through an efficient algorithm further push the frontiers.

\textbf{Policy Optimization with Exploration. } Policy optimization has been regarded as sample inefficient due to its local search nature. Recently, efforts to alleviate this issue have incorporated exploration bonus in policy updates \citep{agarwal2020pc, shani2020optimistic, zanette2021cautiously, luo2021policy, dai2023refined, sherman2023improved, zhong2023theoretical, liu2023optimistic, sherman2023rate}. In the case of linear MDPs with a \emph{fixed} loss function, the state-of-the-art result is by \cite{sherman2023rate}, who provide a computationally efficient policy optimization algorithm with a tight $\sqrt{K}$ regret. In the case of linear MDPs with \emph{adversarial} losses, the best existing regret bound is $K^{\nicefrac{6}{7}}$ by \cite{sherman2023improved}, while we improve it to $K^{\nicefrac{3}{4}}$ in this paper. Beyond theoretical advancement, exploration in policy optimization has also showcased its potential in addressing real-world challenges, as evidenced by empirical studies \citep{burda2018exploration, pan2019policy}.

\section{Preliminaries}
\textbf{No-Regret Learning in MDPs.} An (episodic) MDP is specified by a tuple $\mathcal M=(\calS, \calA, P)$ where
$\calS$ is  the state space (possibly infinite),
$\calA$ is  the action space (assumed to be finite with size $A = \lvert \calA\rvert$),
$P \colon \calS\times \calA\to \Delta(\calS)$ is the transition kernal. 
The state space is assumed to be \textit{layered}, i.e., $\calS=\calS_1 \cup \calS_2 \cup \cdots \cup \calS_H$ where $\calS_{h}\cap \calS_{h'}=\varnothing$ for any $1\le h<h'\le H$, and transition is only possible from one layer to the next, that is, $P(s' \mid s,a) \neq 0$ only when $s \in \calS_h$ and $s' \in \calS_{h+1}$. 
Without loss of generality, we assume $\calS_1=\{s_1\}$.

We consider a process where the learner interact with the MDP for $K$ episodes, each time with a different loss function. Before the game starts, an adversary arbitrarily chooses the loss functions for all episodes $(\ell_{k}: \calS \times \calA \to [0,1])_{k=1}^K$, and does not reveal them to the learner. For each episode $k\in[K]$, the learner starts at state $s_{k,1} = s_1$; for each step $h\in [H]$ within episode $k$, after observing the state 
$s_{k,h}\in\calS_h$, the learner chooses an action $a\in \calA$, suffers and observes the loss $\ell_{k}(s_{k,h},a_{k,h})$, and transits to a new state $s_{k,h+1}$ sampled from the transition $P(\cdot \mid s_{k,h},a_{k,h})$.

A policy $\pi$ is a mapping from $\calS$ to $\Delta(\calA)$.
The \textit{state-value function} (or V-function in short) $V^{\pi}(s; \ell)$ is the cumulative loss starting from state $s$, following policy $\pi$ and under loss function $\ell$. This is formally defined as the following for $s\in\calS_h$: 
\begin{align*}
V^{\pi}(s; \ell)\triangleq \E \left [\sum_{h'=h}^H \ell(s_{h'},a_{h'})~\middle \vert~ s_h=s,\ \  a_{h'}\sim \pi(\cdot \mid s_{h'}),\ \  s_{h'+1}\sim P(\cdot \mid s_{h'},a_{h'}), \ \ \forall h'\geq h \right ].
\end{align*}
The \textit{action-value function} (\textit{a.k.a.} Q-function), on the other hand, is the expected loss suffered by a policy $\pi$ starting from a given state-action pair $(s,a)$. Formally, we define for all $(s,a)\in \cal{S} \times \calA$:
\begin{align}
{Q^\pi(s,a; \ell) = \ell(s,a) + \ind[s \notin \calS_H]\cdot \E_{\substack{s'\sim P(\cdot\mid s,a)}}\left[V^\pi(s'; \ell)\right].   \label{eq: Q-function}}
\end{align}
Let $\pi_k$ be the policy used by the learner in episode $k$. The learner aims to minimize the \emph{regret} with respect to the best fixed policy, defined as 
\begin{definition}[Regret]
$
\mathcal{R}_K  \triangleq \E\left [\sum_{k=1}^K V^{\pi_k}(s_1; \ell_k) \right ]- \min_{\pi}\sum_{k=1}^K V^{\pi}(s_1; \ell_k). 
$
\end{definition}

\paragraph{Occupancy measures. }
For a policy $\pi$ and a state $s$, we define $\mu^\pi(s)$ to be the probability of visiting state $s$ within an episode when following $\pi$, which can be written as $\mu^{\pi}(s) = V^{\pi}(s_1; \delta_s)$ with $\delta_s(s',a')=\ind\{s'=s\}$.  
Further define $\mu^{\pi}(s,a)=\mu^{\pi}(s)\pi(a|s)$.
By definition, we have $V^{\pi}(s_1; \ell) = \sum_{s\in\calS}\sum_{a\in\calA} \mu^{\pi}(s,a)\ell(s,a)$.\footnote{For readability, throughout the paper, we use summation over states instead of integration. Technically, all our results hold for case of continuous and infinite state space. } 

\subsection{Linear MDP}
%
Linear MDP is formally defined as follows. 
\begin{definition}[Linear MDP]\label{def: linear MDP}
In a \textit{linear MDP}, each state-action pair $(s,a)$ is associated with a known feature $\phi(s,a)\in \mathbb R^d$ with $\lVert \phi(s,a)\rVert_2\le 1$.
There exists a mapping $\psi \colon \calS\to \mathbb{R}^d$ such that the transition can be expressed as
\begin{align}
P(s'\mid s,a) &= \langle \phi(s,a),\psi(s')\rangle,\quad \forall (s,a,s')\in \bigcup_{h=1}^{H-1} \calS_{h}\times \calA\times \calS_{h+1}.    \label{eq: P assumption}
\end{align}
Here, $\psi$ is unrevealed to the learner. 
Moreover, for any episode $k\in[K]$ and any layer $h\in[H]$, there exists a (hidden) vector $\theta_{k,h}\in\mathbb{R}^d$ such that 
\begin{align}
    \ell_k(s,a) = \langle \phi(s,a), \theta_{k,h}\rangle, \quad \forall (s,a)\in \calS_h\times \calA.  \label{eq: Q assumption} 
\end{align}

Following previous work, we assume $\|\sum_{s\in\calS_h} |\psi(s)|\|_2\leq \sqrt{d}$ (the absolute value $|\cdot|$ over a vector is element-wise) and $\lVert \theta_{k,h}\rVert_2\le \sqrt{d}$ for all $k,h, \pi$.  
\end{definition}

We also define misspecifeid linear MDPs, which is used in \pref{sec: ineff alg}. 
\begin{definition}[Misspecified Linear MDP]\label{def: mis-linear MDP}
A $\zeta$-misspecified linear MDP follows all the assumptions in \pref{def: linear MDP} except that \pref{eq: P assumption} and \pref{eq: Q assumption} are respectively modified to 
\begin{align}
\left\|P(\cdot\mid s,a) - \langle \phi(s,a),\psi(\cdot)\rangle\right\|_1\leq \zeta
\qquad \text{and} \qquad 
\left|\ell_k(s,a) - \langle \phi(s,a),  \theta_{k,h}\rangle\right|\leq \zeta.     \label{eq: misspdcified MDP}
\end{align}
\end{definition}


\section{Rate-Optimal Algorithm}\label{sec: ineff alg}
The aim of this section is to show that there is no statistical barrier to obtaining $\sqrt{K}$ regret for linear MDPs with bandit feedback and adversarial losses. 
The proposed algorithm is computationally inefficient and it remains an open question if the same can be achieved with an efficient algorithm.

\subsection{Solution Ideas}

Observe that the expected loss of policy $\pi$ in episode $k$ can be written as $\sum_{s\in\calS}\sum_{a\in\calA} \mu^\pi(s,a) \ell_k(s,a)=$ $\sum_{h=1}^H \sum_{s\in\calS_h}\sum_{a\in\calA} \mu^\pi(s,a) \phi(s,a)^\top \theta_{k,h}$. 
This can be further written as 
$\langle \phi^\pi, \theta_k\rangle$, where
\begin{align*}
     \phi^\pi &= (\phi^\pi_1 ,\ldots, \phi^\pi_H), \quad \theta_{k} = (\theta_{k,1}, \ldots, \theta_{k,H}), \qquad \text{\ with\ \ } \phi^\pi_h = \sum_{s\in\calS_h}\sum_{a\in\calA}  \mu^\pi(s,a) \phi(s,a). 
\end{align*}
In other words, the adversarial linear MDP problem can be viewed as an adversarial linear bandit problem with $(\phi^\pi)_{\pi\in\Pi}$ as the underlying action set. Therefore, if computation is not an issue (i.e., if we are allowed to run linear bandits over an exponentially large action set), the only additional challenge in linear MDPs is that $(\phi^\pi)_{\pi\in\Pi}$ is not known in advance and the learner must learn the transition to estimate them.  This viewpoint has been taken by \cite{kong2023improved} to design computationally inefficient algorithms with improved regret bounds. To estimate $(\phi^\pi)_{\pi\in\Pi}$, \cite{kong2023improved} use an initial pure exploration phase to estimate $\phi^\pi$ up to an accuracy of $\epsilon$ for all $\pi$, and then run a $\epsilon$-misspecified linear bandit algorithm over policies in the second phase. Their approach gives $K^{\nicefrac{4}{5}}+\poly(\nicefrac{1}{\lambda_{\min}})$ regret. 

A natural idea to improve the regret bound is to estimate $(\phi^\pi)_{\pi\in\Pi}$ \emph{on the fly} instead of in a separate initial phase. That is, we directly start a linear bandit algorithm. Then during the learning process, for policies that are more often used by the learner, their $\phi^\pi$ estimation will become more and more accurate, and for others, larger error is allowed. Intuitively, this better balances exploitation and exploration because the learner will not spend too much efforts in estimating $\phi^\pi$ for bad policies. However, there are technical difficulties in doing so. Recall that $\phi^\pi_h=\sum_{s\in\calS_h} \sum_{a\in\calA} \mu^\pi(s)\pi(a|s)\phi(s,a)$. To estimate this, the learner needs to first estimate $\mu^\pi$. A natural estimator $\hat{\mu}^\pi$ would be defined recursively as $\hat{\mu}^\pi(s') = \sum_{s\in\calS_{h}}\sum_{a\in\calA}  \hat{\mu}^\pi(s)\pi(a|s) \hat{P}(s'|s,a)$ for $s'\in\calS_{h+1}$, with the transition estimator $\hat{P}$ obtained from linear regression:
$\hat{P}(s'|s,a) = \phi(s,a)^\top \left(\Lambda_h^{-1}\sum_{(\tilde{s}, \tilde{a}, \tilde{s}')\in\calD_h} \phi(\tilde{s}, \tilde{a})\ind\{\tilde{s}'=s'\}  \right)$ where $\calD_h$ consists of historical data of the form $(s, a, s')\in\calS_h\times \calA\times \calS_{h+1}$ and $\Lambda_h=I+\sum_{(s, a, s')\in\calD_h} \phi(s, a) \phi(s, a)^\top$. 
This is the exact idea of \cite{kong2023improved}. Notice that the $\hat{\mu}^\pi$ obtained in this way may not be \emph{valid}, i.e., they may not satisfy $\hat{\mu}^\pi(\cdot)\in\Delta(\calS)$. Their approach suffers from the issue that it is difficult to control the magnitude of $\hat{\mu}^\pi(s)$ when the amount of data in $\calD_h$ is still small. This is why they use an initial phase to explore all directions in the feature space and control the error $\|\hat{\phi}_h^\pi - \phi^\pi_h\|$ uniformly for all policies.

However, ``on-the-fly estimation'' without the initial phase has been proven to work in the tabular case \citep{jin2020learning} to get a $\sqrt{K}$ regret.  The key difference between the tabular case and the linear case is that the transition estimator $\hat{P}$ in the tabular case is always a valid transition (i.e., $\hat{P}(\cdot|s,a)\in\Delta(\calS)$), and thus the induced occupancy measure estimator $\hat{\mu}^\pi$ is also always valid. This avoids the aforementioned technical difficulty. 

With this observation, we propose to incorporate the constraint that $\hat{\mu}^\pi$ be a valid occupancy measure when dealing with linear MDPs. To find such a $\hat{\mu}^\pi$, we search over the space of valid occupancy measures and pick one that is consistent with the past data. This is different from the approach of \cite{kong2023improved}, where $\hat{P}$ is obtained via linear regression over the past data first, and then $\hat{\mu}^\pi$ is derived from it, which can fail to be valid.   

Since the state space and policy space can both be infinite, in order to get a runnable algorithm for finding $\hat{\mu}^\pi(s)$, we discretize both the state space and the policy space. These are described in the next subsection.  

\subsection{The Discretization Procedures}\label{sec: discretization}

\paragraph{Discretization of the state space. }
For linear MDPs, we can assume that a state $s$ is uniquely defined by its action feature set $\calA_s=\{\phi(s,a)\,|\,a\in\calA\}$.
If there are distinct states with identical feature sets, we can collapse them into a single state by combining their $\psi(s)$.

In order to approximate an infinite-state linear MDP as a finite-state MDP, we perform discretization for the entire feature space $\mathbb{B}^d(1)$. To decide the discretization resolution, assume that $\phi(s,a)$ is the true feature and $\phi'(s,a)$ is its approximation, and $\|\phi(s,a) - \phi'(s,a)\|_2 \leq \epsilon$ for all $s,a$. Then we have $\|P(\cdot|s,a) - \langle \phi'(s,a), \psi(\cdot)\rangle\|_1 = \|\langle \phi(s,a) - \phi'(s,a), \psi(\cdot)\rangle\|_1\leq \sum_{s'}\|\phi(s,a)-\phi'(s,a)\|_2 \|\psi(s')\|_2\leq \epsilon\sum_{s'}\|\psi(s')\|_2 \leq \epsilon \sum_{i=1}^d\sum_{s'}|\psi_i(s')|\leq \epsilon \sqrt{d} \|\sum_{s'}|\psi(s')|\|_2\leq \epsilon d$ and $|\ell_k(s,a) - \langle \phi'(s,a), \theta_{k,h}\rangle| = |\langle \phi'(s,a)-\phi(s,a), \theta_{k,h}\rangle|\leq \|\phi'(s,a)-\phi(s,a)\|_2 \|\theta_{k,h}\|_2 \leq \epsilon \sqrt{d}$ by \pref{def: linear MDP}. Thus, the MDP with $\phi'(s,a)$ as the underlying feature is a misspecified linear MDP with misspecification error $\zeta=\epsilon d$ by \pref{def: mis-linear MDP}. It turns out that it suffices to set $\epsilon=\frac{1}{K}$ and make the misspecification error $\zeta=\frac{d}{K}$. The number of states after the discretization is upper bounded by (size of $\epsilon$-net of the feature space)$^{A}=(1/\epsilon)^{\order(dHA)}=K^{\order(dHA)}$. 

There is a caveat when working with this discretized state space. Since the true feature space $\Phi=\{\phi(s,a):~ s\in\calS, a\in\calA\}$ may not cover the entire $\mathbb{B}^d(1)$, the state space construction above (i.e., by discretizing the whole $\mathbb{B}^d(1)$) may produce states that do not really exist. In fact, there is no problem viewing these non-existing states as part of the state space because their $\psi(s)$ can be set to zero, making them unreachable under the linear MDP assumption. The only thing we have to be careful about is that the assumptions \pref{eq: P assumption}, \pref{eq: Q assumption}, \pref{eq: misspdcified MDP}, and their implications, such as $-\zeta\leq \langle \phi, \psi(s') \rangle\leq 1+\zeta$ and $|\langle \phi, \theta_{k,h} \rangle|\leq 1+\zeta$, are only guaranteed for $\phi$ in the \emph{true feature space} $\Phi$, but not for the whole feature space $\mathbb{B}^d(1)$. To avoid ambiguity, we use notation $\calS$ to denote the set of discretized states from the \emph{true} MDP, and use $\calX$ to denote the set of discretized states constructed from the entire $\mathbb{B}^d(1)$. Apparently, $\calS\subseteq \calX$. We clarify that, 1) the learner knows $\calX$, but does not know $\calS$ before interacting with the environment, 2) the misspecified linear MDP assumption \pref{eq: misspdcified MDP} is only guaranteed for $\phi(s,a)$ with~$s\in\calS$, 3) $\calX\setminus \calS$ are unreachable states and their $\psi(s)$ are set to zero. We use $(\calX_h)_{h\in[H]}$ to denote partitions of $\calX$ on different layers.

\paragraph{Discretization of the policy space. } 
We consider a discretization of the policy space for \pref{alg: exponetial weights}.
The policy class is the set of linear policies defined as 
\begin{align}
     \Pi = \left\{\pi_\theta:~ \theta\in\Theta^H, ~~\pi_{\theta}(s) = \argmin_{a\in\calA} \phi(s,a)^\top \theta_h \text{\ for \ } s\in \calX_h\right\}   \label{eq: definition of Pi}
\end{align}
where $\Theta$ is an $1$-net of $\mathbb{B}^{d}(K)$.
The next lemma shows that this policy set contains a near optimal one. See \pref{app: policy disc} for the proof. 

\begin{lemma}\label{lem: policy discretization}
For any policy $\pi:\calX\rightarrow \Delta(\calA)$ and any sequence of losses $(\theta_{k,h})_{h\in[H], k\in[K]}$, there exists a policy $\pi'\in\Pi$ such that
$
\sum_{k=1}^K\sum_{h=1}^H \sum_{s\in\calS_h} \sum_{a\in\calA}(\mu^{\pi'}(s,a)-\mu^{\pi}(s,a))\phi(s,a)^\top \theta_{k,h} \leq \sqrt{d}H^2\,.$
\end{lemma}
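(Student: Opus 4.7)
The idea is to reduce the lemma to a single-MDP optimality-gap statement with \emph{summed} losses and then approximate the optimal greedy policy by one in $\Pi$. Let $\bar\theta_h \triangleq \sum_{k=1}^K \theta_{k,h}$ and $\bar\ell_h(s,a) \triangleq \langle \phi(s,a), \bar\theta_h\rangle$. The LHS of the lemma is exactly $V^{\pi'}(s_1;\bar\ell) - V^{\pi}(s_1;\bar\ell)$ in the true MDP, so it suffices to exhibit $\pi'\in\Pi$ whose value on $\bar\ell$ exceeds that of $\pi$ by at most $\sqrt{d}H^2$.

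Next I run backward induction on this summed-loss MDP, exploiting linearity. Set $V^*_{H+1}\equiv 0$ and, for $h=H,\ldots,1$, define
\begin{align*}
\theta_h^* \triangleq \bar\theta_h + \sum_{s'\in \calS_{h+1}}\psi(s')\,V^*_{h+1}(s'),\qquad Q^*_h(s,a)\triangleq \langle \phi(s,a),\theta_h^*\rangle,\qquad V^*_h(s)\triangleq \min_a Q^*_h(s,a).
\end{align*}
The linear-MDP structure makes $Q^*_h$ linear in $\phi(s,a)$ for $s\in \calS_h$. The triangle inequality together with $\|\bar\theta_h\|_2 \le K\sqrt{d}$, the bound $V^*_{h+1}(s)\in [0,K(H-h)]$ on $\calS_{h+1}$ (since per-step losses lie in $[0,K]$ after summing over $K$ episodes), and the assumption $\|\sum_{s'\in \calS_{h+1}}|\psi(s')|\|_2 \le \sqrt{d}$ yield $\|\theta_h^*\|_2 \le c_h K$ with $c_h \triangleq (H-h+1)\sqrt{d}$.

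The crucial step exploits the fact that positive rescaling leaves the $\argmin$ invariant: since $\theta_h^*/c_h \in \mathbb{B}^d(K)$, the $1$-net property of $\Theta$ furnishes $\theta_h \in \Theta$ with $\|\theta_h - \theta_h^*/c_h\|_2 \le 1$, equivalently $\|c_h\theta_h - \theta_h^*\|_2 \le c_h$. Take $\pi'(s)\triangleq \argmin_a \langle \phi(s,a),\theta_h\rangle$ on $\calX_h$; this is in $\Pi$, and equivalently equals $\argmin_a \langle \phi(s,a),c_h\theta_h\rangle$. Letting $\pi^*$ denote the greedy policy w.r.t.\ $\theta_h^*$, a three-term decomposition through $\langle \phi(s,a), c_h\theta_h\rangle$, combined with the inequality $\langle \phi(s,\pi'(s)), c_h\theta_h\rangle \le \langle \phi(s,\pi^*(s)), c_h\theta_h\rangle$, gives for every $s\in \calS_h$
\begin{align*}
Q^*_h(s,\pi'(s)) - V^*_h(s) \;\le\; 2\,\|\phi(s,\cdot)\|_2 \,\|c_h\theta_h - \theta_h^*\|_2 \;\le\; 2c_h.
\end{align*}

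Finally, applying the performance difference lemma in the summed-loss MDP,
\begin{align*}
V^{\pi'}(s_1;\bar\ell) - V^{\pi^*}(s_1;\bar\ell) \;=\; \sum_{h=1}^H \mathbb{E}_{s\sim\mu^{\pi'}_h}\!\left[Q^*_h(s,\pi'(s)) - V^*_h(s)\right] \;\le\; \sum_{h=1}^H 2c_h \;\lesssim\; \sqrt{d}H^2,
\end{align*}
and combining with $V^{\pi^*}(s_1;\bar\ell)\le V^\pi(s_1;\bar\ell)$ closes the argument (a tighter tracking of constants, together with the layer-dependent scale $c_h$, recovers the stated bound $\sqrt{d}H^2$). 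The main difficulty is purely a scaling balance: $c_h$ must be large enough that $\theta_h^*/c_h$ fits inside $\mathbb{B}^d(K)$ (forcing $c_h \gtrsim H\sqrt{d}$) yet small enough that $\sum_h 2c_h = O(\sqrt{d}H^2)$. This relies critically on the boundedness $\|\sum_{s'\in \calS_{h+1}}|\psi(s')|\|_2 \le \sqrt{d}$ from \pref{def: linear MDP}, which prevents $V^*$ from inflating $\|\theta_h^*\|_2$ beyond $O(KH\sqrt{d})$. A minor point: $\theta_h^*$ represents $Q^*_h$ only on true states $s\in\calS_h$, but this is harmless since the lemma's sum and the performance-difference expectation both ignore the unreachable states in $\calX\setminus \calS$ (which carry zero occupancy under every policy).
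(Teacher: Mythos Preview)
Your proof is correct and follows essentially the same route as the paper: reduce to the summed loss $\bar\ell$, observe that the optimal $Q$-weights $\theta_h^*$ (the paper's $\xi_h^{\pi^\star}$) have norm $O(H\sqrt{d}K)$, round them to the net $\Theta$ via rescaling, and combine the performance-difference lemma with the argmin property of $\pi'$. The only cosmetic difference is your layer-dependent scale $c_h=(H-h+1)\sqrt{d}$ versus the paper's uniform $H\sqrt{d}$; note that in both cases the displayed constant $\sqrt{d}H^2$ is really $O(\sqrt{d}H^2)$ (your sum gives $\sqrt{d}H(H{+}1)$ and the paper's step also hides a factor of~2), so your remark about ``tighter tracking'' recovering the exact constant is slightly optimistic but immaterial.
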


\subsection{Estimating $\mu^\pi(s)$}\label{sec: estimating mu and phi}
With the state space discretized, we are now faced with a finite state problem. To estimate $\mu^\pi$, a potential way is to find a transition estimation $(\hat{P}(s'|s,a))_{s,a,s'}$ which is consistent with the historical data and satisfies the constraint that the $\hat{\mu}^\pi$ induced by $\hat{P}$ is a valid occupancy measure. The issue of this is that since $P(s'|s,a)\approx\phi(s,a)^\top \psi(s')$, this method requires us to estimate $\psi(s')$ for all $s'$, whose complexity will scale with $|\calS|$ because $\psi(s')$ for different $s'$ are unrelated. Indeed, as noted by previous works \citep{foster2022note}, the linear MDP model does not allow efficient model-based estimation. 

Inspired by previous model-free approaches for linear MDPs \citep{jin2020provably}, instead of estimating $\psi(s')$, we will directly estimate $\sum_{s'} \psi(s')f(s')$ for a class of functions $f$ that is rich enough for our purpose (i.e., to estimate $(\phi^\pi)_{\pi\in\Pi}$ well). This class of functions turns out can be chosen as $\bigcup_{\pi\in\Pi} \calF^\pi$ where $ \calF^\pi=\calF^\pi_1\cup \calF^\pi_2$ and 
\begin{align}
    \calF_1^\pi &= \Bigg\{f: \calX\rightarrow [-1,1]\ \ ~\bigg|~ 
    \ \ f(s) = 
    \sum_{a\in\calA} \pi(a|s) \clip\left[\phi(s,a)^\top \theta\right] \text{\ for some} \ \  \theta\in \mathbb{B}^d(\sqrt{d})
    \Bigg\},    \nonumber \\
    \calF_2^\pi &= \Bigg\{f: \calX\rightarrow [-1,1]\ \ ~\bigg|~ 
      \ \   f(s)=\sum_{a\in\calA}\pi(a|s) \|\phi(s,a)\|_\Gamma  \text{\ for some\ } \Gamma \text{\ with\ } \mathbf{0} \preceq \Gamma \preceq I
    \Bigg\},   \label{eq: def of Fpi} 
\end{align}
where we define $\clip[a]=\max(\min(a,1), -1)$. Given historical data $(\calD_h)_{h=1}^H$ which consists of $(s,a,s')$ tuples, our way of obtaining $\hat{\mu}^\pi$ is summarized in \pref{alg: estimate mu}. 
\begin{algorithm}[t]
    \caption{$\estimate(\pi$, $(\calD_h)_{h=1}^H)$\ \  (\textbf{Est}imate \textbf{O}ccupancy \textbf{M}easure)} \label{alg: estimate mu}
    \textbf{Input}: target policy $\pi$, historical data $(\calD_h)_{h=1}^H$ where $\calD_h$ consists of tuples $(s,a,s')\in\calS_h\times \calA\times \calS_{h+1}$ with $s'\sim P(\cdot|s,a)$. \\[3pt]
   Find $(\hat{\mu}^\pi(s))_{s\in\calX}\subset[0,1]$ and  $(\hat{\xi}_{h,f})_{h\in[H], f\in\calF^\pi}\subset\mathbb{B}^d(\sqrt{d})$ that satisfy the following for all $h\in[H]$ and all $f\in\calF^\pi$ (recall the definition of $\calF^\pi$ in \pref{eq: def of Fpi}, and $\zeta$ in \pref{sec: discretization}). 
\begin{align}
    & \scalebox{0.95}{$\displaystyle\sum_{s\in\calX_h}\hat{\mu}^\pi(s)=1$},   \label{eq: const 1}\\
    &\scalebox{0.95}{$\displaystyle\Bigg|\sum_{s'\in\calX_{h+1}}\hat{\mu}^{\pi}(s')f(s') - \sum_{s\in\calX_h}\sum_{a\in\calA} \hat{\mu}^{\pi}(s)\pi(a|s)\clip\left[\phi(s,a)^\top \hat{\xi}_{h,f}\right]\Bigg| \leq \zeta$} \label{eq: const 2} \\
    &\scalebox{0.95}{$\displaystyle\sum_{(s,a,s')\in\calD_h} \big(f(s') -\phi(s,a)^\top \hat{\xi}_{h,f}\big)^2  -  \min_{\xi\in\mathbb{B}^d(\sqrt{d})}\sum_{(s,a,s')\in\calD_h}\left(f(s') - \phi(s,a)^\top \xi\right)^2 \leq 16d^{\frac{5}{2}}\log\frac{18d^\frac{3}{2}K}{\delta}$} \label{eq: const 4} 
\end{align}
\textbf{Output}: $(\hat{\mu}^\pi(s))_{s\in\calX}$ (if \pref{eq: const 1}-\pref{eq: const 4} is not feasible, output any solution that satisfies \pref{eq: const 1}).  
\end{algorithm}
In \pref{alg: estimate mu}, \pref{eq: const 1} sets the constraint that $\hat{\mu}^\pi$ is a valid occupancy measure, \pref{eq: const 4} requires that $\hat{\xi}_{h,f}$ approximates $\xi_{h,f}^\star=\sum_{s'\in\calS_{h+1}}\psi(s')f(s')$ well on the historical data $(\calD_h)_{h=1}^H$, and \pref{eq: const 2} relates $\hat{\mu}^\pi$ with $\hat{\xi}_{h,f}$ according to their definitions. In the following \pref{lem: feasibility}, we show that \pref{eq: const 1}-\pref{eq: const 4} is feasible with high probability. Then in \pref{lem: value diff linear case}, we show the key property that $\hat{\mu}^\pi$ is close to $\mu^\pi$ when evaluated on any $f\in\calF^\pi$. The proofs of \pref{lem: feasibility} and \pref{lem: value diff linear case} can be found in \pref{app: feature estimation}. 
Below, we define $\hat{\mu}^\pi(s,a):=\hat{\mu}^\pi(s)\pi(a|s)$.

\begin{lemma}\label{lem: feasibility}
    With probability at least $1-\frac{\delta}{K}$, \pref{eq: const 1}-\pref{eq: const 4} is feasible for all $\pi\in\Pi$.  
\end{lemma}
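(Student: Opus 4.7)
The plan is to exhibit a specific feasible solution to the constraint system in \pref{alg: estimate mu} with high probability, namely the true occupancy measure $\mu^\pi$ paired with the true regressors $\xi^\star_{h,f}:=\sum_{s'\in\calS_{h+1}}\psi(s')f(s')$, and then verify each of the three constraints holds for this choice. This immediately implies feasibility of the program.

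Constraints \pref{eq: const 1} and \pref{eq: const 2} are handled deterministically. Constraint \pref{eq: const 1} is immediate since $\mu^\pi$ is a probability distribution supported on $\calS_h\subseteq\calX_h$. For \pref{eq: const 2}, I would unroll the Bellman flow equation
\[
\sum_{s'\in\calX_{h+1}}\mu^\pi(s')f(s') = \sum_{s\in\calX_h}\sum_{a\in\calA}\mu^\pi(s,a)\sum_{s'\in\calX_{h+1}}P(s'\mid s,a)f(s'),
\]
and then apply the $\zeta$-misspecification assumption $\|P(\cdot\mid s,a)-\langle\phi(s,a),\psi(\cdot)\rangle\|_1\leq\zeta$ with $\|f\|_\infty\leq 1$ to replace the inner conditional expectation by $\phi(s,a)^\top\xi^\star_{h,f}$ at the cost of at most $\zeta$ per $(s,a)$. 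Since $\mu^\pi$ is supported on $\calS$ where this misspecification guarantee holds, and since $\sum_{s'}P(s'\mid s,a)f(s')\in[-1,1]$, clipping $\phi(s,a)^\top\xi^\star_{h,f}$ to $[-1,1]$ only brings it closer to the conditional mean, so the $\zeta$ error survives clipping.

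The main work is verifying constraint \pref{eq: const 4}. By the Pythagorean identity for ordinary least squares, the left-hand side equals $\|\xi^\star_{h,f}-\hat\xi^{\mathrm{lse}}_{h,f}\|^2_{M_h}$, where $M_h=\sum_{(s,a,s')\in\calD_h}\phi(s,a)\phi(s,a)^\top$, and this is upper bounded by $\|\xi^\star_{h,f}-\hat\xi^{\mathrm{lse}}_{h,f}\|^2_{\Lambda_h}$ with $\Lambda_h=I+M_h$. Since $\calD_h$ is collected adaptively and $f(s')-\E[f(s')\mid s,a]$ forms a bounded martingale difference sequence, an Abbasi-Yadkori style self-normalized concentration inequality yields, for any \emph{fixed} $f$ with $\|\xi^\star_{h,f}\|_2\leq\sqrt d$ (which follows from $\|\sum_{s'}|\psi(s')|\|_2\leq\sqrt d$ and $|f|\leq 1$), a high-probability bound
\[
\|\xi^\star_{h,f}-\hat\xi^{\mathrm{lse}}_{h,f}\|^2_{\Lambda_h}\lesssim \|\xi^\star_{h,f}\|^2 + d\log(|\calD_h|/\delta') + \zeta^2|\calD_h|,
\]
where the last term absorbs the deterministic bias $|\E[f(s')\mid s,a]-\phi(s,a)^\top\xi^\star_{h,f}|\leq\zeta$ in the regression targets.

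To make this uniform over all $\pi\in\Pi$, $h\in[H]$, and $f\in\calF^\pi$, I would apply a standard covering/union-bound argument. The policy class has $|\Pi|\leq K^{O(dH)}$ by \pref{eq: definition of Pi}; $\calF_1^\pi$ is $1$-Lipschitz in $\theta\in\mathbb{B}^d(\sqrt d)$ (using that $\clip$ is $1$-Lipschitz and $\|\phi\|\leq 1$), giving an $L_\infty$ covering of log-size $O(d\log(dK))$; and $\calF_2^\pi$ is $\sqrt{\cdot}$-Lipschitz in $\Gamma$ (since $|\sqrt{a}-\sqrt{b}|\leq\sqrt{|a-b|}$ gives $|\|\phi\|_\Gamma-\|\phi\|_{\Gamma'}|\leq\sqrt{\|\Gamma-\Gamma'\|_{\mathrm{op}}}$), giving log-covering $O(d^2\log K)$ at an $L_\infty$-resolution of $1/K$. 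Rescaling $\delta'\to\delta/(K\cdot\text{cover})$, plugging in $\zeta=d/K$ and $|\calD_h|\leq K$, and simplifying yields the stated constant $16d^{5/2}\log(18d^{3/2}K/\delta)$. The main obstacle is exactly this uniform concentration step: propagating the Abbasi-Yadkori bound over the rich, nonlinear class $\bigcup_{\pi\in\Pi}\calF^\pi$ while correctly accounting for the $\zeta$-bias in the regression targets, and ensuring that the resulting confidence radius does not inflate beyond $\text{poly}(d)\log(K/\delta)$.
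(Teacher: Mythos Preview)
Your high-level plan---exhibit $(\mu^\pi,\xi^\star_{h,f})$ with $\xi^\star_{h,f}=\sum_{s'\in\calS_{h+1}}\psi(s')f(s')$ as a feasible solution and verify each constraint---is exactly the paper's approach, and your treatment of \pref{eq: const 1} and \pref{eq: const 2} is correct.

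There are, however, two gaps in your handling of \pref{eq: const 4}. First, the Pythagorean step and the Abbasi--Yadkori step are incompatible as written: the Pythagorean identity gives the excess loss as $\|\xi^\star_{h,f}-\hat\xi^{\mathrm{lse}}_{h,f}\|^2_{M_h}$ for the \emph{unconstrained OLS} minimizer, whereas the self-normalized bound you quote ($\lesssim \|\xi^\star\|^2+d\log(|\calD_h|/\delta')$) is for the \emph{ridge} estimator $\Lambda_h^{-1}\sum\phi f$. These are different vectors, and passing from $\|\cdot\|^2_{M_h}$ to $\|\cdot\|^2_{\Lambda_h}$ does not help since $\|\xi_{lse}\|$ is not a priori bounded. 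The paper sidesteps this by never isolating $\xi_{lse}$: it expands $\sum(f-\phi^\top\xi^\star)^2-\sum(f-\phi^\top\xi)^2$ for an arbitrary $\xi\in\mathbb{B}^d(\sqrt d)$, bounds the martingale cross-term $-2\sum(f(s'_i)-\E[f(s'_i)])\phi_i^\top(\xi^\star-\xi)$ by Freedman's inequality (using $|\phi_i^\top(\xi^\star-\xi)|\leq 2\sqrt d$), and absorbs the quadratic $-\sum(\phi_i^\top(\xi^\star-\xi))^2$ via AM--GM. This yields a $\sqrt d\cdot\log(\text{cover}/\delta)$ bound rather than $d\cdot\log(\cdot)$, but at the cost of also covering $\xi$.

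Second, your union bound uses $|\Pi|\leq K^{O(dH)}$, which injects an extra $H$ into the $\log$ and may overshoot the specific constant $16d^{5/2}\log(18d^{3/2}K/\delta)$ when $H$ is large. The paper notes that for constraint \pref{eq: const 4} at layer $h$, the function $f\in\calF^\pi$ is only evaluated at $s'\in\calS_{h+1}$, so only the restriction of $\pi$ to that single layer matters; hence the union bound need only run over $|\Pi_h|\leq(3K)^d$ rather than the full $|\Pi|$. With this refinement (and the Freedman route above), the stated constant follows.
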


\begin{lemma}\label{lem: value diff linear case}
   Let $(\hat{\mu}^\pi(s))_{s\in\calX}$ be the output of \pref{alg: estimate mu}. Then with probability at least $1-\frac{\delta}{K}$, for any $\pi\in\Pi$ and all $f\in\calF^\pi$, $\left|\sum_{s\in\calX_h} (\hat{\mu}^\pi(s) - \mu^\pi(s))f(s)\right|$ is upper bounded by 
   \begin{align*}
      10d^{\frac{5}{4}}\sqrt{\log\frac{18d^\frac{3}{2}K}{\delta}} \times \sum_{h'<h}\min\left\{\sum_{s\in\calX_{h'}}\sum_{a\in\calA} \mu^{\pi}(s,a)\|\phi(s,a)\|_{\Lambda_{h'}^{-1}}, \sum_{s\in\calX_{h'}}\sum_{a\in\calA} \hat{\mu}^{\pi}(s,a)\|\phi(s,a)\|_{\Lambda_{h'}^{-1}}\right\} + 2\zeta H
   \end{align*}
   where $\Lambda_h:=I+\sum_{(s,a,s')\in\calD_h} \phi(s,a)\phi(s,a)^\top$. 
\end{lemma}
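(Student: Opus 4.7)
The plan is to prove the bound by induction on the layer index $h$, applied to the quantity $E_h(\pi,f):=\sum_{s\in\calX_h}(\hat{\mu}^\pi(s)-\mu^\pi(s))f(s)$. The base case $h=1$ is immediate: $\mu^\pi$ is the delta at $s_1$, and since $s_1$ is the known start state, $\calX_1$ can be identified with $\{s_1\}$, so \pref{eq: const 1} pins $\hat{\mu}^\pi$ to the same delta. For the inductive step I would introduce the true one-step target
\[
\xi^\star_{h,f}:=\sum_{s'\in\calS_{h+1}}\psi(s')f(s'),
\]
which lies in $\mathbb{B}^d(\sqrt{d})$ thanks to $\|\sum_{s'}|\psi(s')|\|_2\le\sqrt{d}$ and $|f|\le 1$. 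The $\zeta$-approximate linear MDP identity then gives $\sum_{s'}P(s'|s,a)f(s')=\phi(s,a)^\top\xi^\star_{h,f}\pm\zeta\in[-1-\zeta,1+\zeta]$ for $s\in\calS_h$, so replacing the true inner product by its clipped version costs only $\zeta$. Combining this with constraint \pref{eq: const 2} on the $\hat{\mu}^\pi$ side produces
\[
E_{h+1}(\pi,f)=\sum_{s,a}\hat{\mu}^\pi(s,a)\,\clip[\phi(s,a)^\top\hat{\xi}_{h,f}]-\sum_{s,a}\mu^\pi(s,a)\,\clip[\phi(s,a)^\top\xi^\star_{h,f}]+O(\zeta).
\]

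Next I would split the right-hand side in one of two symmetric ways, by inserting either $\sum_{s,a}\hat{\mu}^\pi(s,a)\clip[\phi^\top\xi^\star_{h,f}]$ or $\sum_{s,a}\mu^\pi(s,a)\clip[\phi^\top\hat{\xi}_{h,f}]$ as an intermediate term. Each split yields a bound of the form
\[
|E_{h+1}(\pi,f)|\le \|\hat{\xi}_{h,f}-\xi^\star_{h,f}\|_{\Lambda_h}\cdot\sum_{s,a}\nu(s,a)\,\|\phi(s,a)\|_{\Lambda_h^{-1}}+|E_h(\pi,\tilde f)|+O(\zeta),
\]
with $\nu\in\{\hat{\mu}^\pi,\mu^\pi\}$ and $\tilde f(s):=\sum_a\pi(a|s)\clip[\phi(s,a)^\top\theta]\in\calF_1^\pi\subseteq\calF^\pi$ for the matching $\theta\in\{\xi^\star_{h,f},\hat{\xi}_{h,f}\}\subset\mathbb{B}^d(\sqrt{d})$. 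The first summand comes from $1$-Lipschitzness of $\clip$ together with Cauchy--Schwarz $|\phi^\top v|\le\|\phi\|_{\Lambda_h^{-1}}\|v\|_{\Lambda_h}$; the second is handled by the inductive hypothesis applied to $\tilde f$. Picking whichever of the two splits gives the smaller first term realizes the $\min$ at layer $h$, and combined with the inductive $\min$'s over $h'<h$ produces the outer sum $\sum_{h'<h+1}$.

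The pivotal step, and the main obstacle, is proving the uniform spectral-norm bound
\[
\|\hat{\xi}_{h,f}-\xi^\star_{h,f}\|_{\Lambda_h}\le 10\,d^{5/4}\sqrt{\log(18d^{3/2}K/\delta)}\qquad\text{for all }\pi\in\Pi,\ h\in[H],\ f\in\calF^\pi.
\]
I would argue this via the least-squares benchmark $\xi^{\text{OLS}}_{h,f}:=\argmin_{\xi\in\mathbb{B}^d(\sqrt d)}\sum_{(s,a,s')\in\calD_h}(f(s')-\phi(s,a)^\top\xi)^2$. The quadratic structure of the regression objective (Hessian $2\sum\phi\phi^\top=2(\Lambda_h-I)$), combined with constraint \pref{eq: const 4} and $\|\hat{\xi}_{h,f}\|_2,\|\xi^{\text{OLS}}_{h,f}\|_2\le\sqrt d$, gives $\|\hat{\xi}_{h,f}-\xi^{\text{OLS}}_{h,f}\|_{\Lambda_h}^2\le 16d^{5/2}\log(18d^{3/2}K/\delta)+4d$. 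For the OLS-to-truth gap, the residuals $f(s')-\phi(s,a)^\top\xi^\star_{h,f}$ have conditional mean of magnitude at most $\zeta$ and range at most $2$, so Abbasi-Yadkori-style self-normalized martingale concentration yields $\|\xi^{\text{OLS}}_{h,f}-\xi^\star_{h,f}\|_{\Lambda_h}=O(\sqrt{d\log K})$, with the $\zeta$-bias contributing only $O(\zeta\sqrt{K})=O(d/\sqrt{K})$. The genuinely delicate part is making this bound uniform over $f\in\calF^\pi$: $\calF_1^\pi$ is parametrized by $\theta\in\mathbb{B}^d(\sqrt d)$ and $\calF_2^\pi$ by PSD matrices $\Gamma\preceq I$ ($\Theta(d^2)$ parameters). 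I would build $\epsilon$-nets on both families at scale $\sim 1/(d^{3/2}K)$, transfer the concentration via Lipschitzness of the regression loss in $f$, and union-bound over the nets together with $\pi\in\Pi$ (recall $|\Pi|\le K^{O(dH)}$); all log-factors then collapse into $\log(18d^{3/2}K/\delta)$, and the $d^{5/2}$ slack in \pref{eq: const 4} absorbs the extra covering cost. Once this uniform concentration is in place the induction closes, and the per-layer $O(\zeta)$ misspecification errors accumulate to the stated $2\zeta H$.
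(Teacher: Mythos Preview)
Your proposal is correct and follows essentially the same route as the paper: the recursive reduction from layer $h+1$ to layer $h$, the two symmetric add--subtract splits that realize the $\min$, the observation that the resulting $\tilde f$ lands back in $\calF_1^\pi$, and the use of $1$-Lipschitzness of $\clip$ together with Cauchy--Schwarz are all identical to the paper's proof.

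The only organizational difference is in how you get the key inequality $\|\hat{\xi}_{h,f}-\xi^\star_{h,f}\|_{\Lambda_h}\le \Cbonus/H$. You propose to triangle through $\xi^{\text{OLS}}_{h,f}$ and run a self-normalized concentration argument with covering for the OLS-to-truth leg. The paper instead observes that the feasibility lemma (\pref{lem: feasibility}) already shows, via Freedman plus the same covering, that $\xi^\star_{h,f}$ satisfies constraint~\pref{eq: const 4} with high probability; then \pref{lem: xi_1 xi_2} (a short quadratic-growth argument using first-order optimality of $\xi_{\min}$) says any two points satisfying~\pref{eq: const 4} are within $\Cbonus/H$ in $\|\cdot\|_{\Lambda_h}$. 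Since $\hat{\xi}_{h,f}$ satisfies~\pref{eq: const 4} by construction and $\xi^\star_{h,f}$ does by feasibility, the bound follows. This is slightly more economical because it reuses the feasibility event (needed anyway for the algorithm to make sense) rather than redoing the concentration; but your version is equally valid, and your intermediate $\xi^{\text{OLS}}_{h,f}$ is exactly the $\xi_{\min}$ in the proof of \pref{lem: xi_1 xi_2}. One small refinement: the paper union-bounds only over the layer-$h$ marginal $|\Pi_h|\le (3K)^d$ rather than the full $|\Pi|\le K^{O(dH)}$, since the relevant objects depend on $\pi$ only through $\pi(\cdot\,|\,\cdot)$ on layer $h$; this does not change the order because the $d^2$ covering of $\calF_2^\pi$ already dominates.
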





   

\subsection{Algorithm: Exponential Weights}
From \pref{sec: estimating mu and phi}, we know how to obtain the estimation for $(\mu^\pi)_{\pi\in\Pi}$. Now we can use them to construct estimators of $(\phi^\pi)_{\pi\in\Pi}$ via $\hat{\phi}^\pi_h=\sum_{s\in\calX_h}\sum_{a\in\calA}\hat{\mu}^\pi(s)\pi(a|s)\phi(s,a)$, and run a linear bandit algorithm viewing $(\hat{\phi}^\pi)_{\pi\in\Pi}$ as actions. The algorithm is presented in \pref{alg: exponetial weights}. At the beginning of each episode $k$, we call $\estimate$ (\pref{alg: estimate mu}) for all policies with the data up to episode $k-1$ (\pref{line: call est all pi}). This returns the occupancy measure estimator $\hat{\mu}^\pi_k$ for all $\pi$, which we can use to construct the feature estimator $\hat{\phi}^\pi_k$. Then we use the standard exponential weight together with John's exploration to update the distribution over policies. 
To deal with the bias induced by the estimation error of $\hat{\phi}^\pi_k$, we incorporate a bonus term $b_k^\pi$ in the update. Similar ideas have also been used in, e.g., \cite{luo2021policy, sherman2023improved, dai2023refined, kong2023improved, liu2023bypassing}. We defer the regret analysis of this algorithm to \pref{app: regret analysis for ineff}, and only state the final guarantee in the next theorem. 

\begin{theorem}\label{thm: inefficient bound}
    The regret of \pref{alg: exponetial weights} is bounded by $\calR_K\leq \otil(\sqrt{d^7H^7K})$. 
\end{theorem}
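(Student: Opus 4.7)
I would first apply \pref{lem: policy discretization} to restrict the benchmark to the finite policy class $\Pi$ at an additive cost of $\sqrt{d}\,H^{2}$, and work inside the discretized state space $\calX$ from \pref{sec: discretization}, where the misspecification $\zeta=\nicefrac{d}{K}$ contributes at most $\order(\zeta HK)=\order(dH)$ extra regret. Fix a comparator $\pi^{\star}\in\Pi$. Inserting the estimated features into the per-episode regret yields the decomposition
\[
\langle\phi^{\pi_{k}}-\phi^{\pi^{\star}},\theta_{k}\rangle
=\underbrace{\langle\hat{\phi}^{\pi_{k}}-\hat{\phi}^{\pi^{\star}},\theta_{k}\rangle}_{\text{bandit regret}}
+\underbrace{\langle\phi^{\pi_{k}}-\hat{\phi}^{\pi_{k}},\theta_{k}\rangle}_{\text{bias on }\pi_{k}}
-\underbrace{\langle\phi^{\pi^{\star}}-\hat{\phi}^{\pi^{\star}},\theta_{k}\rangle}_{\text{bias on }\pi^{\star}}.
\]
Each bias term is controlled by \pref{lem: value diff linear case}: taking $f_{k,h}^{\pi}(s)=\sum_{a}\pi(a|s)\phi(s,a)^{\top}\theta_{k,h}$, which lies in $\calF_{1}^{\pi}$ because $\theta_{k,h}\in\mathbb{B}^{d}(\sqrt{d})$, bounds $|\langle\hat{\phi}^{\pi}-\phi^{\pi},\theta_{k}\rangle|$ by $\otil(d^{5/4})\sum_{h}\sum_{h'<h}\sum_{s,a}\hat{\mu}^{\pi}(s,a)\|\phi(s,a)\|_{\Lambda_{k,h'}^{-1}}$ plus $\order(\zeta H)$, which will serve as the bonus $b_{k}^{\pi}$ maintained in \pref{alg: exponetial weights}.

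For the bandit regret, I would treat $\{\hat{\phi}^{\pi}\}_{\pi\in\Pi}\subset\mathbb{R}^{dH}$ as a finite action set and apply the standard exponential-weights-over-policies analysis. John's exploration over policies makes the importance-weighted estimator $\hat{\theta}_{k}$ have per-step second-moment of order $dH$; subtracting the bonus in the estimated losses as $\hat{\ell}_{k}^{\pi}=\langle\hat{\phi}^{\pi},\hat{\theta}_{k}\rangle-b_{k}^{\pi}$ turns it into an in-expectation underestimate of $V^{\pi}(\ell_{k})$ for every $\pi\in\Pi$. The standard exponential-weights regret bound then yields $\otil(\sqrt{dH\cdot\log|\Pi|\cdot K})=\otil(dH\sqrt{K})$ using $\log|\Pi|=\order(dH\log K)$.

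The remaining task is to bound the two bonus sums. The first, $\sum_{k}b_{k}^{\pi_{k}}$, is handled using the ``$\hat{\mu}$'' branch of the minimum in \pref{lem: value diff linear case} together with the elliptical potential lemma applied to $\Lambda_{k,h'}$, whose update directions are exactly samples from $\mu^{\pi_{k}}$; after Cauchy--Schwarz, this contributes $\otil(d^{7/4}H^{2}\sqrt{K})$. The main obstacle is $\sum_{k}b_{k}^{\pi^{\star}}$, since the comparator is never executed, so a direct elliptical-potential-style bound on visits of $\pi^{\star}$ is not available. To overcome it, I would (i) switch to the ``$\mu^{\pi^{\star}}$'' branch of the minimum in \pref{lem: value diff linear case}, and (ii) use a covariance-concentration argument to show $\Lambda_{k,h}^{-1}\preceq \order(dH)\cdot\bigl(I+\sum_{j<k}\E_{\pi_{j}}[\phi\phi^{\top}]\bigr)^{-1}$ with high probability. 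John's exploration further guarantees that this summed population covariance is well-conditioned in every direction any policy in $\Pi$ can reach, permitting a second elliptical-potential argument in expectation against $\mu^{\pi^{\star}}$. Combining the $d^{5/4}$ prefactor of \pref{lem: value diff linear case}, the double layer sum $\sum_{h}\sum_{h'<h}$, the $\sqrt{dH}$ covariance-concentration inflation, and the John's exploration rate, I expect $\sum_{k}b_{k}^{\pi^{\star}}$ to dominate and to produce the stated bound $\calR_{K}=\otil(\sqrt{d^{7}H^{7}K})$ once combined with the bandit regret and the two lower-order contributions above.
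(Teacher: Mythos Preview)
Your decomposition and the use of \pref{lem: value diff linear case} to control the feature-estimation bias are on the right track, but the final paragraph contains a genuine gap that breaks the $\sqrt{K}$ rate.

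\textbf{The flawed step.} You propose to bound $\sum_k b_k^{\pi^\star}$ directly by arguing that John's exploration makes $\sum_{j<k}\E_{\pi_j}[\phi\phi^\top]$ well-conditioned in every direction $\pi^\star$ can reach. This does not follow. John's exploration in \pref{alg: exponetial weights} is a G-optimal design over the \emph{policy features} $\hat{\phi}^\pi_k\in\mathbb{R}^{dH}$; it controls $\|\hat{\phi}^\pi_k\|_{M_k^{-1}}$, not the per-layer state-action covariance $\Lambda_{k,h}$. Because $\hat{\phi}^\pi_{k,h}=\sum_{s,a}\hat{\mu}^\pi_k(s,a)\phi(s,a)$ is only the \emph{mean} feature, coverage of $\hat{\phi}^{\pi^\star}_k$ says nothing about whether the learner's visited $(s,a)$ pairs span the support of $\mu^{\pi^\star}$. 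Concretely, if $\pi^\star$ concentrates on a direction the learner reaches only through the $\gamma$-fraction of John's exploration, then $\|\phi(s,a)\|_{\Lambda_{k,h}^{-1}}\asymp 1/\sqrt{\gamma k}$ and $\sum_k \mu^{\pi^\star}(s,a)\|\phi(s,a)\|_{\Lambda_{k,h}^{-1}}\asymp\sqrt{K/\gamma}$. With the algorithm's choice $\gamma\asymp K^{-1/2}$, this is $K^{3/4}$, not $\sqrt{K}$; multiplied by $\Cbonus$ it would dominate the regret.

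\textbf{How the paper avoids this.} The paper never bounds $\sum_k b_k^{\pi^\star}$. Instead it writes the regret as $\FTRL+\bonus+\bias$, where $\FTRL$ is the exponential-weights regret on the \emph{shifted} losses $\langle\hat{\phi}^\pi_k,\hat{\theta}_k\rangle-b_k^\pi$. The $\bonus$ term is $\sum_k(\sum_\pi q_k(\pi)b_k^\pi-b_k^{\pi^\star})$, which carries a \emph{negative} $\sum_k b_k^{\pi^\star}$, and the $\bias$ analysis (\pref{lem: bias}) produces a matching \emph{positive} $\sum_k b_k^{\pi^\star}$ (this is precisely why the bonus is designed as it is). The two cancel exactly, so only the learner-side bonus $\sum_k\sum_\pi q_k'(\pi)b_k^\pi$ survives---and that term \emph{is} amenable to elliptical potential because it is weighted by the learner's own visitation. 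Your ``underestimation'' sentence hints at this mechanism, but you then abandon it and try to bound $\sum_k b_k^{\pi^\star}$ anyway; you should instead rely on the cancellation throughout.

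\textbf{A secondary omission.} You treat the bonus as compensating only the feature-estimation bias $\langle\phi^\pi-\hat{\phi}^\pi_k,\theta_k\rangle$. There is a second bias, $\langle\hat{\phi}^\pi_k,\theta_k-\E[\hat{\theta}_k]\rangle$, coming from the fact that $\hat{\theta}_k=M_k^{-1}\hat{\phi}_k^{\pi_k}L_k$ is built with estimated features while $L_k$ reflects the true ones. This is why $b_k^\pi$ in the algorithm also contains $\eta\|\hat{\phi}^\pi_k\|^2_{M_k^{-1}}$; in the paper's analysis it is split via AM--GM into that bonus piece plus a residual $\frac{1}{\eta}\cdot(\text{learner-averaged feature error})^2$, which contributes the $\otil(d^{9/2}H^3/\eta)$ term and ultimately drives the $d^{7/2}H^{7/2}$ dependence. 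Your outline does not account for this term, and the $\otil(dH\sqrt{K})$ you quote for the bandit part ignores the stability cost of the bonus inside the exponential-weights second-moment bound.
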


\begin{algorithm}[H]
    \caption{Exponential Weights}
    \label{alg: exponetial weights}
    \begin{algorithmic}[1]
    
    \STATE Let $\Pi$ be the policy set defined in \pref{eq: definition of Pi}.
    Let $\gamma = \min\big\{d^2H^{\frac{1}{2}}K^{-\frac{1}{2}}, \frac{1}{2}\big\}$, $\eta=\frac{\gamma}{2dH}$. \\
    \STATE  For all $h\in[H]$, $\calD_{1,h}\leftarrow \emptyset$, $\Lambda_{1,h}\leftarrow I$. 
    \FOR{$k=1, 2, \ldots$}
       \STATE     For all $\pi\in\Pi$, let $\hat{\mu}_k^\pi=\estimate(\pi, (\calD_{k,h})_{h=1}^H)$ (call \pref{alg: estimate mu}).  
       \STATE Define $\hat{\phi}^\pi_{k,h}=\sum_{s\in\calX_h}\sum_{a\in\calA} \hat{\mu}^\pi(s)\pi(a|s)\phi(s,a)$ and $\hat{\phi}_k^\pi=(\hat{\phi}^\pi_{k,1}, \ldots,  \hat{\phi}^\pi_{k,H})$.  \label{line: call est all pi}
       \STATE     Compute $q_k\in\Delta(\Pi)$ as $
              q_k(\pi) \propto \exp\left(-\eta \sum_{i=1}^{k-1} \left( 
\hat{\phi}^{\pi^\top}_i \hat{\theta}_i - b_i^\pi \right) \right)$.   \label{line: exponential weight}
       \STATE     Let $q_k' = (1-\gamma)q_k + \gamma J_k$ where $J_k\in\Delta(\Pi)$ is John's exploration over $\{\hat{\phi}^\pi_k\}_{\pi\in\Pi}$.
        \STATE    Sample $\pi_k\sim q_k'$, execute $\pi_k$, and obtain trajectory $(s_{k,1}, a_{k,1}, \ell_{k,1}, \ldots, s_{k,H}, a_{k,H}, \ell_{k,H})$. 
       \STATE    Define for $\Cbonus = 10d^{\frac{5}{4}}H\sqrt{\log\frac{18d^\frac{3}{2}K}{\delta}}$,
           \begin{align*}  
               M_k &=  
                \sum_{\pi\in\Pi}q_k'(\pi) \hat{\phi}_k^\pi  (\hat{\phi}_k^\pi)^\top, \qquad 
               \hat{\theta}_k = M_k^{-1} \hat{\phi}_k^{\pi_k}L_k, \qquad \text{where}\ \ L_k=\sum_{h=1}^H \ell_{k,h}, \\
               b_k^\pi &= \Cbonus\sum_{h=1}^H\sum_{s\in\calX_h}\sum_{a\in\calA}\hat{\mu}_k^\pi(s,a)\|\phi(s,a)\|_{\Lambda_{k,h}^{-1}} + \eta\|\hat{\phi}_k^\pi\|_{M_k^{-1}}^2.   
           \end{align*}
       \STATE    For all $h\in[H]$,  
           \begin{align*}
               \calD_{k+1,h}\leftarrow \calD_{k,h}\cup\{(s_{k,h}, a_{k,h}, s_{k,h+1})\}, \ \ \Lambda_{k+1,h}\leftarrow \Lambda_{k,h} + \phi(s_{k,h}, a_{k,h})\phi(s_{k,h}, a_{k,h})^\top
           \end{align*}
        \ENDFOR
    \end{algorithmic}
\end{algorithm}

\section{Computationally Efficient Policy Optimization Algorithm}
\label{sec:efficient}
In \pref{alg: exponetial weights}, we convert the linear MDP problem to a linear bandit problem. It is generally hard to ensure computational efficiency in this paradigm due to the non-linear mapping of policy to occupancy measure and the exponential size of the policy space. 
A promising alternative is to use the policy optimization framework \citep{luo2021policy, dai2023refined, sherman2023improved}, which allows to run a Follow-the-Regularized-Leader (FTRL) algorithm over the locally available state-action feature set. 
An algorithm of this type needs to overcome several hurdles: 1) The algorithm needs to construct loss estimates with carefully controlled bias, which is difficult because the learner does not know the feature covariance matrix under the current policy (required in the constructing a standard unbiased loss estimator), and has to estimate it. 2) The algorithm needs to inject bonus to ensure sufficient exploration. These bonus terms not only need to compensate the uncertainty in transitions, but also the bias induced in loss estimates mentioned in the previous item. The bonus \emph{itself} needs to be estimated and induces more bias due to the estimation error. 3) Since policy optimization behaves like a layered bandit over bandit algorithm, the algorithm needs to construct bonus terms accumulated over layers. Specifically, the bonus in earlier layers need to additionally compensate the bias of the bonus terms in later layers, as mentioned in the previous item. 4) The algorithm needs to ensure that the magnitudes of loss estimates and bonuses are small enough for the FTRL-based algorithm.

These challenges are fully exposed in the  \emph{adversarial loss}, \emph{bandit feedback}, \emph{unknown transition} setting, because in this case the loss estimators usually have larger magnitudes and necessitate larger bonuses. This make achieving near-optimal bounds 
difficult, and the current best regret is $\otil(K^{6/7})$ by \cite{sherman2023improved}. We successfully improve it to $\otil(K^{3/4})$ by several improved design choices, which we describe in the following.

\begin{algorithm}[!ht]
    \caption{Logdet FTRL with initial exploration}
    \label{alg: logdet FTRL} 
	\begin{algorithmic}[1]
        \STATE \textbf{Parameters:} $\eta = \frac{1}{3328\sqrt{d}H^2}K^{-\frac{1}{4}}, \ \ \gamma = 5d\log\left(6dHK^4\right)K^{-\frac{1}{2}}, \ \ \beta = \sqrt{d}K^{-\frac{1}{4}}, \ \ \alpha = HK^{\frac{3}{4}}$,   $\ \ \tau = K^{\frac{1}{2}}$, \ \ $\delta=K^{-3}$, \ \  $\rho = H^{-\frac{1}{2}}d^{-\frac{1}{4}}K^{-\frac{1}{4}}, \ \  \epsilon_{\rm cov} = K^{-\frac{1}{4}}$. \\[5pt]
        \STATE \textbf{Define: } $\widehat{\Cov}(s,p) = \E_{a \sim p}\begin{bmatrix} \phi(s,a)\phi(s,a)^\top & \phi(s,a)\\ \phi(s,a)^\top & 1    
\end{bmatrix}$
	    \STATE Run \pref{alg:reward_free} with parameters $\delta, \rho, \epsilon_{\rm cov}$, which ends within $K_0 = \otil( d^{\frac{3}{2}}H^2K^{\frac{3}{4}} + d^4H^4K^{\frac{1}{4}})$ episodes with high probability. Receive outputs $(\calD_{0,h})_{h=1}^H$ and $(\calZ_h)_{h=1}^H$.   \label{line: initial phase in alg}
	    \FOR{$j = 1, \ldots, \lceil (K - K_0)/(2\tau) \rceil$}
      
             \STATE For $s \in \calS_h$, define  
            \begin{equation*}
                \pmb{\widetilde{H}}_{{j}}(s) = \argmin_{\pmb{H} \in \calH_{s}} \left\{ \left\langle \pmb{H}, \sum_{i=1}^{j-1}  \pmb{\calL}_{i,h} \right\rangle + \frac{F(\pmb{H})}{\eta} \right\},  \,\text{where }
                \pmb{\calL}_{i,h} 
                = \frac{1}{2\tau} \sum_{k\in T_i} \left(\pmb{\hGamma}_{k,h} - \pmb{\widehat{B}}_{k,h} \right)
            \end{equation*}
            where $\calH_s = \left\{\widehat{\Cov}(s,p): p \in \Delta(\calA) \right\}$ and $F(\pmb{H}) = -\log\det\left(\pmb{H}\right)$.   \label{line: FTRL line}
            \STATE Let $\widetilde{\pi}_{j}(\cdot|s)$ be such that $\pmb{\widetilde{H}}_j(s)  = \widehat{\Cov}(s,\widetilde{\pi}_j(\cdot|s))$.    \label{line: convert cov to pi}
            
            \STATE Let $T_{j} = \{(j-1)\tau + K_0 + 1, \cdots, (j+1)\tau + K_0\}$. 
            Execute $\pi_k = \widetilde{\pi}_j$ for the $2\tau$ episodes $k \in T_j$, and collect $(s_{k,h}, a_{k,h}, \ell_{k,h})_{h \in [H], k \in T_j}$.   \label{line: execute tilde j}
            \STATE Let $T_{j,1}$ and $T_{j,2}$ be the first $\tau$ and the last $\tau$ episodes in $T_j$, respectively. For all $k\in T_j$ and $h\in[H]$, define
            \begin{align}
                 \textstyle
                 \calC_{k,h} &= \begin{cases}
                       \{(s_{k',h}, a_{k',h}, s_{k', h+1})\}_{k' \in T_{j,2}} &\text{if\ } k\in T_{j,1} \\
                       \{(s_{k',h}, a_{k',h}, s_{k', h+1})\}_{k' \in T_{j,1}} &\text{if\ } k\in T_{j,2} 
                 \end{cases}    \label{eq: construction 1}\\
                \widehat{\Sigma}_{k,h}  &=  \textstyle\gamma I +  \frac{1}{\tau} \sum_{(s, a, s') \in \calC_{k,h}}\phi(s, a) \phi(s, a)^\top  \label{eq: construction 2}  \\
                 \hq_{k,h} &= \widehat \Sigma^{-1}_{k,h} \phi(s_{k,h}, a_{k,h})\textstyle \sum_{t=h}^H \ell_{k,t}  \label{eq: construction 3} \\
                 \pmb{\hGamma}_{k,h}  &=  \begin{bmatrix} 
                0 & \frac{1}{2}\hq_{k,h}  \\
                \frac{1}{2}(\hq_{k,h})^\top & 0
                \end{bmatrix}  \label{eq: construction 4} \\
                \calD_{k,h} &= \calD_{k-1,h} \cup \{(s_{k,h}, a_{k,h}, s_{k, h+1})\} \\
                (\pmb{\widehat{B}}_{k,h})_{h=1}^H &= \text{\rm OBME}\left((\calD_{k,h})_{h=1}^H, (\widehat{\Sigma}_{k,h})_{h=1}^H, (\calZ_h)_{h=1}^H\right) \label{eq: construction 5} 
            \end{align}
            (OBME is presented in \pref{alg: dilated bonus})
            \label{line: update in epoch k}
           
           
        \ENDFOR
	\end{algorithmic}
\end{algorithm}

\begin{algorithm}[!ht]
    \caption{ OBME$\left((\calD_{k,h})_{h=1}^H, (\widehat{\Sigma}_{k,h})_{h=1}^H, (\calZ_h)_{h=1}^H\right)$ (\textbf{O}ptimistic \textbf{B}onus \textbf{M}atrix \textbf{E}stimation) }
    \label{alg: dilated bonus}
	\begin{algorithmic}[1]
            \STATE Parameters $\beta, \alpha, \gamma, \rho$ are the same as those in \pref{alg: logdet FTRL}. 
	    
	    \FOR {$h=H, \ldots, 1$}
             \STATE $B_h^{\max} = 4H\left(1+\frac{1}{H}\right)^{2(H-h+1)}\left(\frac{\beta}{\gamma} + \alpha\rho^2\right)$ 
             \STATE $\Lambda_{k,h} = I + \sum_{(s,a,s') \in \calD_{k,h}} \phi(s, a)\phi(s, a)^\top$
           

    
    \STATE Set $\widehat{w}_{k,h} = \left(1+\frac{1}{H}\right)\Lambda_{k,h}^{-1}\sum_{(s,a,s') \in \calD_{k,h}} \phi(s,a)\widehat{W}_{k}(s')\ind\{s' \in \calZ_{h+1}\}$ \ \ \ (if $h=H$, set $\widehat{w}_{k,h}=0$)\label{line: regression}
      \STATE Define $\pmb{\widehat{B}}_{k,h} =  \begin{bmatrix}
 \beta \hatSigma^{-1}_{k,h} +  \alpha \Lambda_{k,h}^{-1} &  \frac{1}{2} \widehat{w}_{k,h}
\\ \frac{1}{2} \widehat{w}_{k,h}^{\top} &  0
\end{bmatrix}$   

            \STATE For $s\in\calS_h$, define $\widehat{B}_{k}(s, a) 
			= \beta\|\phi(s,a)\|_{\hatSigma^{-1}_{k,h}}^2 +  \alpha \|\phi(s,a)\|_{\Lambda_{k, h}^{-1}}^2 + \phi(s, a)^\top \widehat{w}_{k,h} $     \label{line: hat B def}
   
            \STATE For $s\in\calS_h$, define $\widehat W_{k} (s) 
			= \langle \pi_{k}(\cdot|s), \widehat{B}_{k}^+(s, \cdot) \rangle $ where $\widehat{B}_{k}^+(s, a)  = \max\left\{\widehat{B}_{k}(s, a) , 0\right\}$ \label{line: B+}

        \ENDFOR 
        \STATE \textbf{return} $(\pmb{\widehat{B}}_{k,h})_{h \in [H]}$
	\end{algorithmic}
\end{algorithm}

Our algorithm (\pref{alg: logdet FTRL}) starts with an initial pure exploration phase that lasts for $K_0=\otil(K^{\frac{3}{4}})$ episodes (\pref{line: initial phase in alg}), which is crucial in controlling the magnitude of the bonus estimate (will be explained later). In the remaining $K-K_0$ episodes, episodes are divided into $\lceil(K-K_0)/(2\tau) \rceil$ epochs (indexed by~$j$), such that in each epoch $j$, a fixed policy $\widetilde{\pi}_j$ is executed for $2\tau$ episodes, and 
policies are updated only at the end of each epoch. The goal of dividing episodes into epochs is to let the learner collect sufficient samples and create accurate enough loss estimators for each update. Different from previous work \citep{luo2021policy, dai2023refined, sherman2023improved} that use exponential weights, we use the Follow-the-Regularized-Leader (FTRL) framework with logdet-barrier as the regularizer for policy updates. Logdet has been recently shown in adversarial linear (contextual) bandit to lead to a more stable update and can handle larger magnitude of the loss estimator bias \citep{zimmert2022return, liu2023bypassing}. It has similar benefits in our case as well. 

Specifically, with logdet-FTRL, the optimization of the policy on state~$s$ is over the space of \emph{lifted covariance matrix} $\calH_s = \left\{\widehat{\Cov}(s,p): p \in \Delta(\calA) \right\} \subset \mathbb{R}^{(d+1)\times(d+1)}$, where $\widehat{\Cov}(s,p) = \E_{a \sim p}\begin{bmatrix} \phi(s,a)\phi(s,a)^\top & \phi(s,a)\\ \phi(s,a)^\top & 1    
\end{bmatrix}$. In epoch $j$, for state $s$, the FTRL outputs a matrix $\pmb{\widetilde{H}}_j(s) \in \calH_{s}$ (\pref{line: FTRL line}), and the policy $\widetilde{\pi}_j(\cdot|s)$ is chosen such that $\pmb{\widetilde{H}}_j(s)  = \widehat{\Cov}(s,\widetilde{\pi}_j(\cdot|s))$ (\pref{line: convert cov to pi}). This policy is then executed for $2\tau$ episodes (\pref{line: execute tilde j}). Then the learner uses the collected samples to construct loss estimators for all episodes $k\in T_j$ (the $\hq_{k,h}$ in \pref{eq: construction 3}), where $T_j$ is the set of episodes in epoch $j$. This follows the standard loss estimator construction for linear bandits, except that in our case, the covariance matrix is unknown and also needs to be estimated using samples (the $\widehat{\Sigma}_{k,h}$ in \pref{eq: construction 2}). The validity of $\hq_{k,h}$ relies on the independence between $\widehat{\Sigma}_{k,h}$ and the loss obtained in episode $k$. To achieve this, we divide the set $T_j$ into two equal parts $T_{j,1}$ and $T_{j,2}$ (\pref{line: update in epoch k}). Then we use samples from $T_{j,2}$ to estimate the covariance matrix when constructing the loss estimator in episode $k\in T_{j,1}$, and vice versa (\pref{eq: construction 1}-\pref{eq: construction 3}).  
In \pref{eq: construction 4}, we further lift the loss estimator $\hq_{k,h}$ to $\pmb{\widehat{\Gamma}}_{k,h}\in\mathbb{R}^{(d+1)\times(d+1)}$ to be fed to FTRL. Finally, besides feeding the loss $\pmb{\widehat{\Gamma}}_{k,h}$, we also need to feed the \emph{bonus} $\pmb{\widehat{B}}_{k,h}$ required for sufficient exploration in policy optimization and to compensate the loss estimator bias coming from the estimation error of $\widehat{\Sigma}_{k,h}$. This is explained in the next subsection. 

\subsection{The Exploration Bonus}
\label{sec:bonus and regret}
Similar to previous work on policy optimization in adversarial linear MDPs \citep{luo2021policy, dai2023refined, sherman2023improved}, we use \emph{exploration bonus} to address the bias in the loss estimator $\hq_{k,h}$ and the stability term coming from the FTRL regret analysis. From a high level, the exploration bonus serves a similar purpose as ``optimism in the face of uncertainty'' as commonly used in the non-adversarial case, but now the sources of uncertainty additionally include the bias and the stability term. From a mathematical analysis perspective, the exploration bonus creates an effect of \emph{change of measure} that prevent the regret to depend on the distribution mismatch coefficient between the optimal policy and the learner's policy. This perspective is best explained in Section~3 of \cite{luo2021policy}. According to the analysis of \cite{luo2021policy}, when performing policy update on state $s\in\calS_h$, we should incorporate a bonus that is roughly of order $Q^{\pi_k}(s,a;b_t)$ where $b_t(s,a)=\beta \|\phi(s,a)\|_{\hatSigma_{k,h}^{-1}}^2$. 

Our bonus construction further incorporates the improvement from \cite{sherman2023improved} where an optimistic least-square policy evaluation (OLSPE) is used to fit the bonus (rather than sampling the bonus as in \cite{luo2021policy}). This creates another term of $\alpha\|\phi(s,a)\|^2_{\Lambda_{k,h}^{-1}}$ to be incorporated into the bonus to compensate the estimation error of future bonuses. Finally, we further adopt a technique developed in \cite{luo2021policy} called \emph{dilated bonus} to simplify our analysis. Overall, the bonus we use for the policy update on state $s\in\calS_h$ is defined recursively as 
\begin{align*}
    B_k(s,a) \approx \left(\beta \|\phi(s,a)\|_{\hatSigma_{k,h}^{-1}}^2 + \alpha\|\phi(s,a)\|^2_{\Lambda_{k,h}^{-1}}\right) + \left(1+\frac{1}{H}\right)\E_{s'\sim P(\cdot|s,a)} \E_{a'\sim \pi_k(\cdot|s')} \left[B_k(s',a')\right].  
\end{align*}
Notice that because of the dilation factor $(1+\frac{1}{H})$ \citep{luo2021policy}, this deviates from a standard Bellman equation. 
Recall that we run FTRL in the space of covariance matrix, so we would like to write $B_k(s,a)$ as a \emph{linear} function in that space. Fortunately, this is indeed possible because by the linear MDP structure, we can write the above as 
\begin{align}
     B_k(s,a) \approx \left\langle \begin{bmatrix}
         \phi(s,a)\phi(s,a)^\top   & \phi(s,a) \\
         \phi(s,a)^\top & 1
     \end{bmatrix}, \ \ 
     \begin{bmatrix}
         \beta \hatSigma_{k,h}^{-1} + \alpha \Lambda_{k,h}^{-1}  & \frac{1}{2}w_{k,h} \\
         \frac{1}{2}w_{k,h} & 0
     \end{bmatrix}
     \right\rangle  \label{eq: matrix form}
\end{align}
where $w_{k,h}= (1+\frac{1}{H})\sum_{s'\in\calS_{h+1}}\psi(s') \E_{a'\sim \pi_k(\cdot|s')} [B_k(s',a')]$. The purpose of \pref{alg: dilated bonus} is exactly to inductively find an estimator $ \widehat{w}_{k,h}$ of $w_{k,h}$ for all $h$. Then, we can form a \emph{bonus matrix} as the second matrix in \pref{eq: matrix form} (but replacing $w_{k,h}$ by $\widehat{w}_{k,h}$) and feed it to the FTRL algorithm. 

There are two technical complications regarding \pref{alg: dilated bonus}. First, in order to control the magnitude of $\widehat{w}_{k,h}$, we have to control the magnitude of $\alpha\|\phi(s,a)\|^2_{\Lambda_{k,h}^{-1}}$. This can be done by adding a pure exploration phase in the beginning of the algorithm (\pref{line: initial phase in alg} of \pref{alg: logdet FTRL}) and form a \emph{known state space} $\calZ\subset \calS$.  Known states are well-explored in the initial phase, and the values of $\|\phi(s,a)\|_{\Lambda_{k,h}^{-1}}^2$ on them are sufficiently small (in our case are of order $1/\sqrt{K}$). On the other hand, unknown states are hard to be reached by any policy (in our case, their probability of being reached is $\leq K^{-\frac{1}{4}}$) and thus can be ignored in the learning phase. The initial exploration phase is inspired by \cite{sherman2023rate}, who further built their algorithm on \cite{wagenmaker2022reward}'s reward-free exploration algorithm. We provide the guarantees for the initial exploration phase in \pref{app: pure exploration}. The other is that in order to ensure only positive bonuses are propagated over layers under estimation error of $\widehat{w}_{k,h}$, we force the bonus-to-go estimation to be non-negative in \pref{line: B+}. The additional penalty is related to $\|\widehat{w}_{k,h} - w_{k,h}\|$ and can be well-controlled.

\subsection{Regret Guarantee}
We defer the analysis of \pref{alg: logdet FTRL} to \pref{app:efficient}, and only state the final regret bound in the following theorem. 
\begin{theorem}
    \pref{alg: logdet FTRL} ensures a regret of order $\calR_K = \otil(d^{\frac{3}{2}}H^3K^{\frac{3}{4}})$. 
\end{theorem}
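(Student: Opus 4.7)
\textbf{Proof proposal for the regret bound of \pref{alg: logdet FTRL}.}

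My plan is to decompose the regret into the initial exploration phase and the main learning phase, then analyze the latter with a policy-optimization framework in the spirit of \cite{luo2021policy, sherman2023improved}, adapted to the logdet-FTRL lifting. First, the initial reward-free exploration of \pref{line: initial phase in alg} contributes at most $H K_0 = \otil(d^{\frac{3}{2}} H^3 K^{\frac{3}{4}} + d^4 H^5 K^{\frac{1}{4}})$ regret and, by the guarantees to be established in \pref{app: pure exploration}, outputs a known-state set $\calZ$ on which $\|\phi(s,a)\|_{\Lambda_{k,h}^{-1}}^2 \lesssim \epsilon_{\rm cov}^2 = K^{-1/2}$ holds uniformly for all $k\ge K_0$, while the probability that any policy visits $\calS\setminus\calZ$ is at most $\rho^2 = H^{-1}d^{-\frac{1}{2}} K^{-\frac{1}{2}}$. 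The latter makes the regret contribution from unknown states at most $\rho^2\cdot H K \lesssim d^{-\frac{1}{4}} H^{\frac{1}{2}} K^{\frac{1}{2}}$.

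For the main phase I would apply the extended value-difference / dilated-bonus lemma of \cite{luo2021policy} to any comparator $\pi^\star$, writing, for each epoch $j$ with policy $\tildepi_j$,
\begin{align*}
V^{\tildepi_j}(s_1;\ell_k) - V^{\pi^\star}(s_1;\ell_k) \;\le\; \underbrace{\sum_{h=1}^H \E_{s\sim\mu^{\pi^\star}_h}\!\big\langle \widehat{\Cov}(s,\tildepi_j(\cdot|s))-\widehat{\Cov}(s,\pi^\star(\cdot|s)),\; \pmb{\calL}_{j,h} \big\rangle}_{(\textsc{FTRL})} + (\textsc{bias}) + (\textsc{bonus}),
\end{align*}
where the linearisation into lifted covariance matrices is legitimate because, by the linear-MDP structure, both the loss estimator $\hq_{k,h}$ and the bonus-to-go matrix $\BM_{k,h}$ of \pref{alg: dilated bonus} render $Q^{\tildepi_j}(s,a;\ell_k) + B_k(s,a)$ into the inner product on the right-hand side of \pref{eq: matrix form}. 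I would then bound $(\textsc{FTRL})$ by the standard logdet-FTRL regret guarantee \cite{zimmert2022return, liu2023bypassing} applied per state: $\eta^{-1} F(\pmb H^\star) + \eta \sum_j \lVert \pmb{\calL}_{j,h} \rVert_{\pmb{\widetilde H}_j^{-1}}^2$. The logdet-barrier penalty is $O(d\log K)$ per state, and, critically, the local-norm stability term admits the bound $\lVert \pmb{\calL}_{j,h}\rVert_{\pmb{\widetilde H}_j^{-1}}^2 \lesssim \lVert \hq_{k,h}\rVert_{\hatSigma_{k,h}}^2 + \lVert \BM_{k,h}\rVert^2$, which is linear (not quadratic) in the magnitude of each estimator thanks to the $\tfrac12$-off-diagonal structure of $\pmb{\widehat\Gamma}$ and $\pmb{\widehat B}$. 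This is exactly the property that makes logdet outperform exponential weights.

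The bias and bonus terms are analysed together: by the standard computation for linear-bandit inverse covariance loss estimators together with the independence of $\hatSigma_{k,h}$ and the trajectory of episode $k$ (guaranteed by the $T_{j,1}/T_{j,2}$ split in \pref{eq: construction 1}), the bias of $\hq_{k,h}$ is controlled by $\beta \lVert \phi(s,a)\rVert_{\hatSigma_{k,h}^{-1}}^2$, which is absorbed by the $\beta$-component of the OBME bonus. The transition-estimation error from the OLSPE regression in \pref{line: regression} of \pref{alg: dilated bonus} contributes $\alpha \lVert \phi(s,a)\rVert_{\Lambda_{k,h}^{-1}}^2$, which is absorbed by the $\alpha$-component. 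Inductively over $h=H,\dots,1$, using the dilation $(1+1/H)$ together with the $\max\{\cdot,0\}$ truncation in \pref{line: B+}, I would show that $\BM_{k,h}$ is a valid dilated bonus in the sense of \cite{luo2021policy} with magnitude at most $B^{\max}_h \le 4H(\beta/\gamma+\alpha\rho^2)(1+1/H)^{2H} \lesssim H\sqrt{d}K^{\frac{1}{4}}$; this magnitude bound is the reason we need both the initial exploration (controlling $\alpha\rho^2$) and $\hatSigma_{k,h}\succeq \gamma I$ (controlling $\beta/\gamma$).

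Combining everything, the leading terms are: the FTRL penalty $\eta^{-1} d H \log K$; the stability sum $\eta K \big( d H \cdot (B^{\max})^2 + H^2 \beta K^{\frac{1}{2}}/\gamma\big)$; the $\gamma$-bias $\gamma H K$; the $\beta$ and $\alpha$ shifts $(\beta/\gamma + \alpha\rho^2) H K$; the exploration cost $\gamma dHK$ of John-like coverage induced by the logdet barrier; and the initial-phase $H K_0$. Substituting the chosen parameters
\begin{align*}
\eta = \Theta(\sqrt{d}H^2)^{-1} K^{-\frac{1}{4}}, \quad \gamma = \tilde\Theta(d) K^{-\frac{1}{2}}, \quad \beta = \sqrt{d}K^{-\frac{1}{4}}, \quad \alpha = HK^{\frac{3}{4}}, \quad \rho^2 = (H\sqrt{d})^{-1} K^{-\frac{1}{2}},
\end{align*}
each term collapses to $\otil(d^{\frac{3}{2}} H^3 K^{\frac{3}{4}})$, yielding the stated bound. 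The main obstacle, and the part requiring the most care, is the inductive bonus analysis: one must simultaneously verify that $\BM_{k,h}$ is a valid dilated upper bound on the ideal bonus, that its magnitude stays $\otil(K^{\frac{1}{4}})$ so that the FTRL stability term remains $\otil(K^{\frac{3}{4}})$ after multiplication by $\eta K$, and that the positive-truncation in \pref{line: B+} only introduces a lower-order penalty governed by $\lVert \widehat w_{k,h} - w_{k,h}\rVert$; this is precisely where the known-state construction and the OLSPE compensation term $\alpha\lVert\phi\rVert_{\Lambda^{-1}}^2$ are essential.
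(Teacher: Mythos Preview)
Your overall architecture is the paper's (performance-difference $\to$ dilated-bonus lemma $\to$ per-state logdet-FTRL with inductive control of $\widehat w_{k,h}$), but your final accounting does \emph{not} close: two of your listed terms are $\Theta(K^{5/4})$, not $K^{3/4}$. With the stated parameters, $\beta/\gamma+\alpha\rho^2\asymp d^{-1/2}K^{1/4}$ and hence $B^{\max}\asymp Hd^{-1/2}K^{1/4}$, so
\[
\Big(\tfrac{\beta}{\gamma}+\alpha\rho^2\Big)HK \;\asymp\; d^{-1/2}HK^{5/4},
\qquad
\eta K\cdot dH\,(B^{\max})^2 \;\asymp\; d^{-1/2}HK^{5/4},
\]
contradicting your claim that ``each term collapses to $\otil(d^{3/2}H^3K^{3/4})$''. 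The underlying mistake is summing the \emph{per-step maximum} bonus over $HK$ steps and bounding the bonus-stability naively by $\eta(B^{\max})^2$; neither is affordable here.

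The paper closes these gaps with two mechanisms your sketch does not invoke. First, the stability contribution of $\pmb{\widehat B}_{k,h}$ is not left as $\eta(B^{\max})^2$ but is shown to be at most $\tfrac{1}{2H}\E_{a\sim\pi_k}[B_k(s,a)]$ plus a residual $\otil\!\big((dB^{\max})^2/\alpha\big)$; this $\tfrac{1}{2H}B_k$ piece, together with matching fractions coming from the bias terms, is then \emph{absorbed} by the $\tfrac{1}{H}$ slack built into the dilated-bonus lemma rather than summed over $K$. Second, the remaining cost $\sum_k \E_{a\sim\pi_k}[B_k(s_1,a)]$ is not $B^{\max}K$ but $\lesssim\sum_k V^{\pi_k}(s_1;r_k)$, which is controlled via the trace argument $\sum_{k,h}\beta\,\E_{\mu^k_h}[\|\phi\|^2_{\hatSigma^{-1}_{k,h}}]\le 2\beta dHK$ (using $\hatSigma_{k,h}\succeq\tfrac12\Sigma_{k,h}$) and the elliptical-potential bound $\sum_{k,h}\alpha\,\E_{\mu^k_h}[\|\phi\|^2_{\Lambda^{-1}_{k,h}}]=\otil(\alpha dH)$, yielding $\beta dHK+\alpha dH=\otil(d^{3/2}HK^{3/4})$ rather than $(\beta/\gamma)HK$. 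Two smaller slips: you have swapped $\rho$ and $\epsilon_{\rm cov}$ (it is $\epsilon_{\rm cov}=K^{-1/4}$ that bounds visitation of $\calS\setminus\calZ$, contributing $H^3K^{3/4}$, while $\rho$ bounds $\|\phi\|_{\Lambda^{-1}}$ on $\calZ$), and your penalty term omits the blocking factor---the correct penalty from the blocked FTRL lemma is $d\tau(\log K)/\eta=\otil(d^{3/2}H^2K^{3/4})$, a factor of $\tau=K^{1/2}$ larger than what you wrote.
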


The improvement in our regret primarily stems from two sources. Firstly, we utilize an improved matrix concentration bound from \cite{liu2023bypassing}. This ensures that using $\tau = \frac{1}{\gamma}$ episodes (where $\gamma$ is the parameter in \pref{eq: construction 2}) is enough to gather data and build a reliable loss estimator. In contrast, previous works require $\tau = \frac{1}{\gamma^2}$ \citep{dai2023refined, sherman2023improved} or $\tau = \frac{1}{\gamma^3}$ \citep{luo2021policy}, thereby consuming excessive episodes to accumulate data for a single policy and consequently slowing down policy updates. Secondly, in previous works \citep{luo2021policy, dai2023refined, sherman2023improved}, the usage of exponential weights requires $\eta$ to be small compared to the magnitude of both loss estimators and exploration bonus. This prevents them from choosing the best $\eta$ in their algorithms. With the help of logdet barrier, in our algorithm, $\eta$ only needs to be small compared to the magnitude of the exploration bonus, which is already small given the initial exploration phase. This gives us more flexibility in choosing $\eta$.

\section{Conclusion}
In this work, we obtain the first optimal $\sqrt{K}$ regret bound for adversarial linear MDPs under bandit feedback and unknown transitions without the help of simulators or generative models. We also give a new $K^{\nicefrac{3}{4}}$ regret bound with an efficient policy optimization algorithm. We hope that the techniques and observations in the work could be helpful in developing an algorithm that is both statistically optimal and computationally efficient.

\section*{Acknowledgment}
We would like to thank Uri Sherman, Alon Cohen, Tomer Koren, and Yishay Mansour for sharing their withdrawn manuscript that inspires our solution. Their approach would give a computationally inefficient algorithm that ensures $K^{\nicefrac{2}{3}}$ regret. 

\bibliographystyle{plainnat}
\bibliography{ref}

\newpage
\appendix

\appendixpage

{
\startcontents[section]
\printcontents[section]{l}{1}{\setcounter{tocdepth}{2}}
}
\section{Omitted Details in \pref{sec: ineff alg}} \label{app: ineff}

\subsection{Policy Space Discretization}\label{app: policy disc}
\begin{proof}[Proof of \pref{lem: policy discretization}]
    Let $\bar\theta_h = \sum_{k=1}^K\theta_{k,h}$ and let $\bar\ell(s,a)=\langle  \phi(s,a), \bar\theta_h \rangle$ for $s\in\calS_h$ be the loss function under the loss vector $\bar\theta$. Under this loss function, the Q-function of a policy $\pi$ can be written as 
    \begin{align*}
        Q^\pi(s,a;\bar\ell) = \phi(s,a)^\top \xi^\pi_h \quad \text{for}\ s\in\calS_h, 
    \end{align*}
    where $\xi^\pi_h$ is recursively defined as 
    \begin{align*}
         \xi^\pi_h &= \bar\theta_h + \sum_{s'\in \calS_{h+1}}\psi(s')\sum_{a'\in\calA}\pi(a'|s')\langle \phi(s',a'),\xi^\pi_{h+1}\rangle. 
    \end{align*}
    Notice that by \pref{def: linear MDP}, we have $\|\xi_h^{\pi}\|_2 \leq H\sqrt{d}K$. 
    Let $\pi^\star$ be the optimal policy under loss function $\bar\ell$. Then by Bellman's optimality equation, $\pi^\star$ can be represented as
    \begin{align*}
         \pi^\star(s) = \argmin_a\left\{\phi(s,a)^\top \xi^{\pi^\star}_{h}\right\}
    \end{align*}
    and $\xi^{\pi^\star}_h$ can be found recursively from layer $H$ to layer $1$. 

    Now, let $\xi'_h$ be the closest element to $\xi^{\pi^\star}_h$ in the $H\sqrt{d}$-net of $\mathbb{B}^{d}(H\sqrt{d}K)$, and let $\pi'$ be the policy induced by $\xi'=(\xi'_1, \ldots, \xi_H')$, i.e., 
    \begin{align*}
        \pi'(s) = \argmin_a \left\{\phi(s,a)^\top \xi_h'\right\}. 
    \end{align*}
    
    
    Then for any $\pi$, we have 
    \begin{align*}
        &\sum_{k=1}^K \sum_{h=1}^H 
        \sum_{s\in\calS_h}\sum_{a\in\calA} (\mu^{\pi'}(s,a) - \mu^{\pi}(s,a)) \phi(s,a)^\top \theta_{k,h}    \\
        &=\sum_{k=1}^K \sum_{h=1}^H 
        \sum_{s\in\calS_h}\sum_{a\in\calA} (\mu^{\pi^\star}(s,a) - \mu^{\pi}(s,a)) \phi(s,a)^\top \theta_{k,h}  + \sum_{k=1}^K \sum_{h=1}^H 
        \sum_{s\in\calS_h}\sum_{a\in\calA} (\mu^{\pi'}(s,a) - \mu^{\pi^\star}(s,a)) \phi(s,a)^\top \theta_{k,h}    \\ 
        &= V^{\pi^\star}(s_1; \bar\ell) - V^{\pi}(s_1; \bar\ell) + \sum_{h=1}^H   \sum_{s\in \calS_h}\mu^{\pi'}(s)\sum_{a\in\calA}(\pi'(a|s)-\pi^\star(a|s))\phi(s,a)^\top\xi^\star_h \tag{by the performance difference lemma} \\
        &\leq 0 + \sum_{h=1}^H   \sum_{s\in \calS_h}\mu^{\pi'}(s)\sum_{a\in\calA}(\pi'(a|s)-\pi^\star(a|s))\phi(s,a)^\top\xi'_h +H^2\sqrt{d}  \tag{by the optimality of $\pi^\star$ under $\bar\ell$ and the discretization error}\\
        &\leq H^2\sqrt{d}
    \end{align*}
    where the last inequality is by the fact that $\pi'$ takes the argmin with respect to $\xi'_h$. Finally, notice that policy $\pi'$ belongs to $\Pi$ corresponding to the parameter $\theta_h=\frac{1}{H\sqrt{d}}\xi_h'$. 

\end{proof}

\subsection{Feature Estimation}\label{app: feature estimation}

\begin{proof}[Proof of \pref{lem: feasibility}]
    $\mu^\pi(s)$ satisfies \pref{eq: const 1} because $\mu^\pi$ is a valid occupancy measure. To show \pref{eq: const 2}, notice that 
    \begin{align}
        &\left|\sum_{s\in\calX_h} \sum_{a\in\calA} \mu^\pi(s)\pi(a|s) \clip\left[\phi(s,a)^\top \xi^\star_{h,f}\right] - \sum_{s\in\calX_{h+1}} \mu^{\pi}(s') f(s')\right|   \nonumber \\
        &=\left|\sum_{s\in\calS_h} \sum_{a\in\calA} \mu^\pi(s)\pi(a|s) \clip\left[\phi(s,a)^\top \xi^\star_{h,f}\right] - \sum_{s\in\calS_{h+1}} \mu^{\pi}(s') f(s')\right|  \tag{$\mu^{\pi}(s)=0$ for $s\in\calX \setminus \calS$}  \\ 
        &= \left|\sum_{s\in\calS_h} \sum_{a\in\calA} \mu^\pi(s)\pi(a|s) \clip\left[\phi(s,a)^\top \sum_{s'\in\calS_{h+1}} \psi(s')f(s') \right] - \sum_{s\in\calS_{h+1}} \mu^{\pi}(s') f(s')\right|    \nonumber \\
        &= \left|\sum_{s\in\calS_h} \sum_{a\in\calA} \mu^\pi(s)\pi(a|s) \clip\left[\sum_{s'\in\calS_{h+1}} P(s'|s,a)f(s') + z\right] - \sum_{s\in\calS_{h+1}} \mu^{\pi}(s') f(s') \right| \tag{for some $z$ such that $|z|\leq \zeta$ by \pref{def: mis-linear MDP}} \\
        &\leq  \left|\sum_{s\in\calS_h} \sum_{a\in\calA} \mu^\pi(s)\pi(a|s) \clip\left[\sum_{s'\in\calS_{h+1}} P(s'|s,a)f(s')\right] - \sum_{s\in\calS_{h+1}} \mu^{\pi}(s') f(s') \right| + \zeta   \nonumber \\
        &= \zeta
        \label{eq: check feasibility}
    \end{align}
    
    Finally, we show \pref{eq: const 4}. 
 For simplicity, let $\calD_h=\{(s_i, a_i, s_i')\}_{i=1}^{n}$ and let $\phi_i = \phi(s_i, a_i)$. 
We first consider a fixed policy $\pi$ and a layer $h$. Let $\epsilon=\frac{1}{K}$, and let $\calN_{\epsilon,1}$ be an $\epsilon$-net of $\calF^\pi$ on layer $h$ so that for any $f\in\calF^\pi$, there exists an $f'\in\calN_{\epsilon,1}$ such that $|f'(s) - f(s)|\leq \epsilon$ for all $s\in\calX_h$. Let $\calN_{\epsilon,2}$ be the $\epsilon$-net of $\mathbb{B}^d(\sqrt{d})$. Furthermore, define $|\Pi_h|=(3K)^d$ (whose meaning will be clear later). 

Then under this fixed $\pi$, for any $\xi\in \calN_{\epsilon,2}$ any $f\in\calN_{\epsilon,1}$, with probability at least $1-\frac{\delta}{|\calN_{\epsilon,1}||\calN_{\epsilon,2}||\Pi_h|K}$, 
 \begin{align*}
     & \sum_{i=1}^{n} \left(f(s_{i}') - \phi_i^\top \xi_{h,f}^\star\right)^2  - \sum_{i=1}^{n}\left(f(s_{i}') - \phi_i^\top \xi\right)^2 \\
     & = -2\sum_{i=1}^n (f(s_i') - \phi_i^\top \xi_{h,f}^\star)\left(\phi_i^\top \xi_{h,f}^\star - \phi_i^\top \xi\right) - \sum_{i=1}^n \left(\phi_i^\top \xi_{h,f}^\star - \phi_i^\top \xi\right)^2 \\
     & \leq -2\sum_{i=1}^n (f(s_i') - \E_{s'\sim P(\cdot|s_i,a_i)}[f(s')])\left(\phi_i^\top \xi_{h,f}^\star - \phi_i^\top \xi\right) - \sum_{i=1}^n \left(\phi_i^\top \xi_{h,f}^\star - \phi_i^\top \xi\right)^2 + 2\sqrt{d}n\zeta \\
     &\leq 6\sqrt{\sum_{i=1}^n  \left(\phi_i^\top \xi_{h,f}^\star - \phi_i^\top \xi\right)^2 \log\frac{|\calN_{\epsilon,1}||\calN_{\epsilon,2}||\Pi_h|K}{\delta}}  + 2\sqrt{d}\log\frac{|\calN_{\epsilon,1}||\calN_{\epsilon,2}||\Pi_h|K}{\delta}  \\
     &\qquad \qquad  - \sum_{i=1}^n \left(\phi_i^\top \xi_{h,f}^\star- \phi_i^\top \xi\right)^2 + 2\sqrt{d}n\zeta \tag{Freedman's inequality} \\
     &\leq 7\sqrt{d}\log\frac{|\calN_{\epsilon,1}||\calN_{\epsilon,2}||\Pi_h|K}{\delta} + 2\sqrt{d}n\zeta.   \tag{AM-GM}
 \end{align*}
Below, we take a union bound over $f\in\calN_{\epsilon,1}$, $\xi\in \calN_{\epsilon,2}$, and $\pi\in|\Pi|$. Notice that although the size of the policy set is $|\Pi|\leq (3K)^{dH}$ (a product of $H$ $\frac{1}{K}$-net for $\mathbb{B}^d(1)$), when considering the policies over layer $h$, the total number of different policies is only $|\Pi_h|\leq (3K)^d$. Therefore, a union bound over policies require only a size of $|\Pi_h|$. Bounding the distance between the full sets and $\epsilon$-nets, we conclude that with probability at least $\frac{\delta}{K}$, for all $\xi\in \mathbb{B}^d(\sqrt{d})$, all $\pi\in\Pi$, and all $f\in\calF^\pi$,  
\begin{align}
    \sum_{i=1}^{n} \left(f(s_{i}') - \phi_i^\top \xi_{h,f}^\star\right)^2  - \sum_{i=1}^{n}\left(f(s_{i}') - \phi_i^\top \xi\right)^2 \leq 7\sqrt{d}\log\frac{|\calN_{\epsilon,1}||\calN_{\epsilon,2}||\Pi_h|K}{\delta} + 2\sqrt{d}n\zeta + \sqrt{d}n\epsilon.     \label{eq: regression bound final}
\end{align}
By our choice of $\zeta$ and $\epsilon$, the second and third terms above are both negligible compared to the first term. Finally, we bound $|\calN_{\epsilon,1}|$ and $|\calN_{\epsilon,2}|$ via \cite{lattimore2020bandit} (Exercise 27.6). $|\calN_{\epsilon,2}|$ is the size of the $\epsilon$-net of $\mathbb{B}^d(\sqrt{d})$, equivalently the $(\epsilon/\sqrt{d})$-net of $\mathbb{B}^d(1)$, which is upper bounded by $(3\sqrt{d}/\epsilon)^{d}$. 
By the definition of $\calF^{\pi}$, the $\epsilon$-net of $\calF^\pi$ would be the union of the $\epsilon$-nets of $\{\theta:~ \theta\in\mathbb{B}^d(\sqrt{d})\}$ and $\{\Gamma\in\mathbb{R}^{d\times d}:~ \mathbf{0}\preceq \Gamma \preceq I\}$. 
Thus $|\calN_{\epsilon,1}|=(6d^\frac{3}{2}/\epsilon)^{d+d^2}$. Using these in \pref{eq: regression bound final} concludes the proof. 
\begin{align*}
    &7\sqrt{d}\log\frac{|\calN_{\epsilon,1}||\calN_{\epsilon,2}||\Pi_h|K}{\delta} + 2\sqrt{d}n\zeta + \sqrt{d}n\epsilon\\
    &\leq 8\sqrt{d}\log\frac{|\calN_{\epsilon,1}||\calN_{\epsilon,2}||\Pi_h|K}{\delta}\\
    &\leq 16d^{\frac{5}{2}}\log\frac{18d^\frac{3}{2}K}{\delta}\,.
\end{align*}
 
\end{proof}

\begin{lemma}\label{lem: xi_1 xi_2}
    Fix $\pi\in\Pi, h\in[H], f\in\calF^\pi$. Let $\xi_1$ and $\xi_2$ be two solutions for the $\hat{\xi}_{h,f}$ in \pref{eq: const 4}. Then $\|\xi_1- \xi_2\|_{\Lambda_h}\leq \frac{\Cbonus}{H}$. ($\Cbonus$ is defined in \pref{alg: exponetial weights})
\end{lemma}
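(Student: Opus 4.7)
The plan is to exploit the quadratic structure of the least-squares objective and extract a bound on $\|\xi_1-\xi_2\|_{\Lambda_h}$ from the suboptimality gap in \pref{eq: const 4}.

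Concretely, write $L(\xi) = \sum_{(s,a,s')\in\calD_h}\bigl(f(s')-\phi(s,a)^\top \xi\bigr)^2$ and let $\xi^\star = \argmin_{\xi\in\mathbb{B}^d(\sqrt{d})} L(\xi)$. The first step is to observe that $L$ is a quadratic form with Hessian $2M$ where $M := \sum_{(s,a,s')\in\calD_h} \phi(s,a)\phi(s,a)^\top$, so $\Lambda_h = I + M$. Using the first-order optimality of $\xi^\star$ over the convex set $\mathbb{B}^d(\sqrt{d})$, i.e.\ $\langle \nabla L(\xi^\star), \xi-\xi^\star\rangle \ge 0$ for all $\xi\in\mathbb{B}^d(\sqrt{d})$, I get
\begin{equation*}
L(\xi) - L(\xi^\star) \;=\; \langle \nabla L(\xi^\star), \xi-\xi^\star\rangle + (\xi-\xi^\star)^\top M (\xi-\xi^\star) \;\ge\; \|\xi-\xi^\star\|_M^2.
\end{equation*}
Applying this to $\xi=\xi_1$ and $\xi=\xi_2$, which are both feasible for \pref{eq: const 4}, yields $\|\xi_i - \xi^\star\|_M^2 \le 16 d^{5/2}\log\tfrac{18d^{3/2}K}{\delta}$ for $i=1,2$.

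The second step is the triangle inequality in the $M$-seminorm: $\|\xi_1-\xi_2\|_M \le 2\sqrt{16 d^{5/2}\log\tfrac{18d^{3/2}K}{\delta}} = 8 d^{5/4}\sqrt{\log\tfrac{18d^{3/2}K}{\delta}}$. To convert this into a bound in the $\Lambda_h$-norm, I use $\|\cdot\|_{\Lambda_h}^2 = \|\cdot\|_M^2 + \|\cdot\|_2^2$ together with the fact that both $\xi_1,\xi_2\in\mathbb{B}^d(\sqrt{d})$, so $\|\xi_1-\xi_2\|_2 \le 2\sqrt{d}$. Combining gives
\begin{equation*}
\|\xi_1-\xi_2\|_{\Lambda_h} \;\le\; \|\xi_1-\xi_2\|_M + \|\xi_1-\xi_2\|_2 \;\le\; 8d^{5/4}\sqrt{\log\tfrac{18d^{3/2}K}{\delta}} + 2\sqrt{d}.
\end{equation*}
Since $\log\tfrac{18d^{3/2}K}{\delta}\ge 1$, the additive $2\sqrt{d}$ is absorbed into a slightly larger leading constant, producing the clean bound $\|\xi_1-\xi_2\|_{\Lambda_h}\le 10 d^{5/4} H\sqrt{\log\tfrac{18d^{3/2}K}{\delta}}\big/H = \Cbonus/H$.

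There is no real obstacle here; the only subtlety is that the minimizer $\xi^\star$ in \pref{eq: const 4} is a \emph{constrained} least-squares solution, so one must use the variational inequality rather than $\nabla L(\xi^\star)=0$ to pick up the strong-convexity-like lower bound $L(\xi)-L(\xi^\star)\ge \|\xi-\xi^\star\|_M^2$. Once that is in place, the argument is a direct triangle-inequality calculation followed by the standard passage from $M$ to $\Lambda_h = I+M$.
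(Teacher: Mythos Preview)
Your proof is correct and follows essentially the same approach as the paper's: both use the first-order optimality condition of the constrained least-squares minimizer to extract $L(\xi_i)-L(\xi^\star)\ge \|\xi_i-\xi^\star\|_M^2$, then pass from $M$ to $\Lambda_h=I+M$ using the bound $\|\xi_i\|_2\le\sqrt{d}$. The only cosmetic difference is ordering: the paper converts to the $\Lambda_h$-norm before applying the triangle inequality, whereas you apply the triangle inequality in the $M$-seminorm first and then convert.
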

\begin{proof}
    Let $\calD_h = \{(s_i, a_i, s_i')\}_{i=1}^n$ and denote $\phi_i=\phi(s_i,a_i)$. 
    Let $\xi_{\min} := \argmin_{\xi\in\mathbb{B}^d(\sqrt{d})} \sum_{i=1}^{n}\left(f(s_{i}') - \phi_i^\top \xi\right)^2$, where $\phi_i:=\phi(s_i,a_i)$. By the first-order optimality condition, 
    \begin{align}
        \sum_{i=1}^{n} \left(f(s_i') - \phi_i^\top \xi_{\min}\right)\left(\phi_i^\top \xi_1 - \phi_i^\top\xi_{\min}\right)\leq 0.   \label{eq: tmp 1}
    \end{align}
    By the fact that $\xi_1$ satisfies \pref{eq: const 4}, 
    \begin{align*}
        16d^{\frac{5}{2}}\log\frac{18d^\frac{3}{2}K}{\delta}  &\geq  \sum_{i=1}^{n}\left(f(s_{i}') - \phi_i^\top \xi_1 \right)^2 - \sum_{i=1}^{n}\left(f(s_{i}') - \phi_i^\top \xi_{\min}\right)^2 \\
        &= 2\sum_{i=1}^{n} \left(f(s_i')-\phi_i^\top \xi_{\min}\right)(\phi_i^\top \xi_{\min} - \phi_i^\top \xi_1) + \sum_{i=1}^{n} \left( 
\phi_i^\top (\xi_1-\xi_{\min}) \right)^2 \\
&\geq \sum_{i=1}^{n} \left( 
\phi_i^\top (\xi_1-\xi_{\min}) \right)^2 \tag{using \pref{eq: tmp 1}} \\
      &=  \|\xi_1 - \xi_{\min}\|^2_{\Lambda_h} - \|\xi_1 - \xi_{\min}\|_2^2  \tag{by the definition of $\Lambda_h$} \\
      &\geq \|\xi_1 - \xi_{\min}\|^2_{\Lambda_h} - 4d, 
    \end{align*}
    which gives $\|\xi_1 - \xi_{\min}\|^2_{\Lambda_h}\leq \frac{\Cbonus^2}{4H^2}$ (recall $\Cbonus = 10d^{\frac{5}{4}}H\sqrt{\log\frac{18d^\frac{3}{2}K}{\delta}}$.
    Similarly, $\|\xi_2 - \xi_{\min}\|^2_{\Lambda_h}\leq \frac{\Cbonus^2}{4H^2}$. Combining them proves the lemma. 
\end{proof}

\begin{proof}[Proof of \pref{lem: value diff linear case}]
    \begin{align*}
       &\sum_{s'\in\calX_{h+1}}  (\hat{\mu}^\pi(s') - \mu^{\pi}(s'))f(s') \\
       &\leq \sum_{s\in\calX_{h}}\sum_{a\in\calA} \hat{\mu}^\pi(s,a)\clip\left[\phi(s,a)^\top \hat{\xi}_{h,f}\right] - \sum_{s\in\calX_{h}}\sum_{a\in\calA} \mu^\pi(s,a)\clip\left[\phi(s,a)^\top \xi_{h,f}^\star \right] + 2\zeta \tag{by \pref{eq: const 2} and the same calculation as \pref{eq: check feasibility}} \\
       &= \sum_{s\in\calX_{h}}\sum_{a\in\calA} \mu^\pi(s,a)\left(\clip\left[\phi(s,a)^\top \hat{\xi}_{h,f}\right] - \clip\left[\phi(s,a)^\top\xi_{h,f}^\star\right]\right) \\
       &\qquad \qquad  + \sum_{s\in\calX_{h}}\sum_{a\in\calA} (\hat{\mu}^\pi(s,a) - \mu^\pi(s,a))\clip\left[\phi(s,a)^\top \hat{\xi}_{h,f}\right] + 2\zeta\\
       &\leq \sum_{s\in\calX_{h}}\sum_{a\in\calA} \mu^\pi(s,a)\|\phi(s,a)\|_{\Lambda_h^{-1}} \|\hat{\xi}_{h,f} - \xi_{h,f}^\star\|_{\Lambda_h} + \sum_{s\in\calX_{h}} (\hat{\mu}^\pi(s) - \mu^\pi(s)) \tilde{f}(s) + 2\zeta   \\
       &\leq \frac{\Cbonus}{H}\times\sum_{s\in\calX_{h}}\sum_{a\in\calA} \mu^\pi(s,a)\|\phi(s,a)\|_{\Lambda_h^{-1}}  + \sum_{s\in\calX_{h}} (\hat{\mu}^\pi(s) - \mu^\pi(s)) \tilde{f}(s) + 2\zeta \tag{by \pref{lem: xi_1 xi_2}} 
    \end{align*}
    where $\tilde{f}(s) := \sum_{a\in\calA} \pi(a|s)\clip\left[ \phi(s,a)^\top \hat{\xi}_{h,f}\right]$, which again belongs to $\calF^\pi$.  Recursively applying the inequality proves the first inequality in the lemma. To obtain the second inequality in the lemma, with slightly different decomposition in the second step above, we get
    \begin{align*}
        &\sum_{s\in\calX_{h}}\sum_{a\in\calA} \hat{\mu}^\pi(s,a)\left(\clip\left[\phi(s,a)^\top \hat{\xi}_{h,f}\right] - \clip\left[\phi(s,a)^\top\xi_{h,f}^\star\right]\right) \\
        &\qquad \qquad + \sum_{s\in\calX_{h}}\sum_{a\in\calA} (\hat{\mu}^\pi(s,a) - \mu^\pi(s,a))\clip\left[\phi(s,a)^\top \xi_{h,f}^\star\right] + 2\zeta\\
        &\leq \sum_{s\in\calX_{h}}\sum_{a\in\calA} \hat{\mu}^\pi(s,a)\|\phi(s,a)\|_{\Lambda_h^{-1}} \|\hat{\xi}_{h,f} - \xi_{h,f}^\star\|_{\Lambda_h} + \sum_{s\in\calX_{h}} (\hat{\mu}^\pi(s) - \mu^\pi(s)) \tilde{f}'(s)  + 2\zeta\\
        &\leq \frac{\Cbonus}{H}\times\sum_{s\in\calX_{h}}\sum_{a\in\calA}\hat{\mu}^\pi(s,a)\|\phi(s,a)\|_{\Lambda_h^{-1}}  + \sum_{s\in\calX_{h}} (\hat{\mu}^\pi(s) - \mu^\pi(s)) \tilde{f}'(s)  + 2\zeta
    \end{align*}
    where $\tilde{f}'(s) := \sum_{a\in\calA} \pi(a|s)\clip\left[ \phi(s,a)^\top \xi^\star_{h,f}\right]$. Following the same argument proves the second inequality. 

\end{proof}

\subsection{Regret Analysis}\label{app: regret analysis for ineff}

\begin{align*}
    &\E\left[\calR_K\right] \\
    &= \E\left[\sum_{k=1}^K\sum_{h=1}^H\sum_{\pi\in\Pi} q_k'(\pi)(\phi^\pi_h)^\top\theta_{k,h} - \sum_{k=1}^K\sum_{h=1}^H (\phi^{\pi^\star}_h)^\top \theta_{k,h}\right] \\
    &= \E\Bigg[\sum_{k=1}^K\sum_{h=1}^H\sum_{\pi\in\Pi} q_k(\pi)(\phi^\pi_h)^\top\theta_{k,h} - \sum_{k=1}^K\sum_{h=1}^H (\phi^{\pi^\star}_h)^\top \theta_{k,h} + \underbrace{\sum_{k=1}^K\sum_{h=1}^H (q_k'(\pi) - q_k(\pi)) (\phi^\pi_h)^\top \theta_{k,h}}_{\leq \eta HK}   \Bigg]\\
    &\leq \E\Bigg[\sum_{k=1}^K\sum_{\pi\in\Pi} q_k(\pi)(\hat{\phi}_k^\pi)^\top\hat{\theta}_{k} - \sum_{k=1}^K (\hat{\phi}_k^{\pi^\star})^\top \hat{\theta}_{k} + \underbrace{\sum_{k=1}^K\sum_{h=1}^H \sum_{\pi\in\Pi} q_k(\pi)\left( (\phi^\pi_h - \phi^{\pi^\star}_h)^\top\theta_{k,h} - (\hat{\phi}_{k,h}^\pi - \hat{\phi}_{k,h}^{\pi^\star})^\top\hat{\theta}_{k,h} \right)}_{\textbf{bias}}\Bigg] + \eta HK \\
    &= \E\Bigg[\underbrace{\sum_{k=1}^K\sum_{\pi\in\Pi} q_k(\pi)\left((\hat{\phi}_{k}^\pi)^\top\hat{\theta}_{k} - b_k^\pi\right) - \sum_{k=1}^K \left((\hat{\phi}_{k}^{\pi^\star})^\top \hat{\theta}_{k} -b_k^{\pi^\star}\right)}_{\FTRL} + \underbrace{\sum_{k=1}^K\sum_{\pi\in\Pi} q_k(\pi)b_k^\pi - \sum_{k=1}^K b_k^{\pi^\star}}_{\bonus} + \bias\Bigg] + \eta HK
\end{align*}
We bound the terms individually in \pref{lem: bias}, \pref{lem: ftrl} and \pref{lem: bonus}.
The potentially unbounded bias term is offset by a negative contribution in the bonus term. 
\subsubsection{Bounding the Bias}
\begin{lemma}  
\label{lem: bias}
    \begin{align*}
        \bias &\leq \E\Bigg[\Cbonus\sum_{k=1}^K\sum_{h=1}^H\sum_{s\in\calX_{h}}\sum_{a\in\calA} \hat\mu_k^{\pi^\star}(s,a)\|\phi(s,a)\|_{\Lambda_{k,h}^{-1}} + \eta\sum_{k=1}^K \|\hat{\phi}^{\pi^\star}_k\|^2_{M_k^{-1}} \Bigg]  \\
 &\qquad  + \otil\left(\frac{d^{\frac{9}{2}}H^3}{\eta} + \eta dH K + d^{3}H^3\sqrt{K} \right).   
    \end{align*}
\end{lemma}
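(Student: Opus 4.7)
The plan is to decompose $\bias$ into two parts that isolate the two error sources, by adding and subtracting $\sum_k\sum_\pi q_k(\pi)(\hat\phi_k^\pi-\hat\phi_k^{\pi^\star})^\top\theta_k$:
$\bias=(I)+(II)$, where $(I)=\sum_k\sum_\pi q_k(\pi)\big[(\phi^\pi-\hat\phi_k^\pi)-(\phi^{\pi^\star}-\hat\phi_k^{\pi^\star})\big]^\top\theta_k$ is pure feature-estimation error integrated against the \emph{true} loss $\theta_k$, and $(II)=\sum_k\sum_\pi q_k(\pi)(\hat\phi_k^\pi-\hat\phi_k^{\pi^\star})^\top(\theta_k-\hat\theta_k)$ is the linear-bandit estimator bias of $\hat\theta_k$.

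For term $(I)$, I handle the $\pi^\star$ side and the learner side separately. The contribution $-\sum_k(\phi^{\pi^\star}-\hat\phi_k^{\pi^\star})^\top\theta_k$ is bounded by \pref{lem: value diff linear case} applied with $f_h(s)=\sum_a\pi^\star(a|s)\clip[\phi(s,a)^\top\theta_{k,h}]$, which lies in $\calF^{\pi^\star}_1$ by \pref{eq: def of Fpi}; taking the $\hat\mu$-branch of the min in the lemma delivers exactly $\Cbonus\sum_{k,h,s,a}\hat\mu_k^{\pi^\star}(s,a)\|\phi(s,a)\|_{\Lambda_{k,h}^{-1}}$, matching the first target term. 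For the learner side $\sum_{k,\pi}q_k(\pi)(\phi^\pi-\hat\phi_k^\pi)^\top\theta_k$ I apply the same lemma with the $\mu$-branch, then use $\sum_\pi q_k(\pi)\mu^\pi(s,a)\le\tfrac{1}{1-\gamma}\Pr_{q_k'}[(s_{k,h},a_{k,h})=(s,a)]$, so the sum over $k$ is controlled by a Freedman-type concentration plus the elliptical-potential bound $\sum_k\|\phi(s_{k,h},a_{k,h})\|_{\Lambda_{k,h}^{-1}}=\otil(\sqrt{dK})$ per layer. Multiplying by $\Cbonus H$ this is $\otil(\Cbonus H\sqrt{dK})$ and is absorbed in the $d^3H^3\sqrt K$ slack.

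For term $(II)$, I take conditional expectation of $\hat\theta_k$ given the past. Since $\E[L_k\mid\pi_k=\pi]=(\phi^\pi)^\top\theta_k$, one has $\E[\hat\theta_k]=M_k^{-1}\tilde M_k\theta_k$ with $\tilde M_k=\sum_\pi q'_k(\pi)\hat\phi_k^\pi(\phi^\pi)^\top$, hence $\theta_k-\E[\hat\theta_k]=M_k^{-1}\sum_\pi q'_k(\pi)\hat\phi_k^\pi(\hat\phi_k^\pi-\phi^\pi)^\top\theta_k$. The ``mean'' piece of $(II)$ is therefore again linear in $\hat\phi_k^\pi-\phi^\pi$; after rewriting $(\hat\phi_k^\pi-\hat\phi_k^{\pi^\star})^\top M_k^{-1}\hat\phi_k^{\pi'}$ as an auxiliary state-wise function, \pref{lem: value diff linear case} together with elliptical potential yields another $\otil(d^3H^3\sqrt K)$ contribution. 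The ``fluctuation'' piece $\sum_{k,\pi}q_k(\pi)(\hat\phi_k^\pi-\hat\phi_k^{\pi^\star})^\top(\hat\theta_k-\E\hat\theta_k)$ is a standard linear-bandit variance term: using $|L_k|\le H$, $\hat\theta_k\in\mathrm{range}(M_k^{-1}\hat\phi_k^{\pi_k})$, Cauchy--Schwarz in the $M_k^{-1}$ metric and the AM--GM split $xy\le\tfrac12(\eta y^2+x^2/\eta)$ produce the target $\eta\|\hat\phi_k^{\pi^\star}\|^2_{M_k^{-1}}$, while the residual $\sum_k\E\|\hat\phi_k^{\pi_k}\|^2_{M_k^{-1}}\le dHK$ gives the $\eta dHK$ slack.

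The main obstacle will be the second invocation of \pref{lem: value diff linear case} inside term $(II)$: the auxiliary function $s\mapsto\sum_a\pi(a|s)\phi(s,a)^\top(M_k^{-1}\hat\phi_k^{\pi'})_h$ only belongs to $\calF^\pi_1$ after rescaling by its $\ell_\infty$ bound, which by John's inequality is of order $\|\hat\phi_k^{\pi'}\|_{M_k^{-1}}\le\sqrt{dH/\gamma}$. Carrying this rescaling through the $H$-step recursion of \pref{lem: value diff linear case}, together with the $\log|\Pi|=\otil(dH)$ union-bound overhead required to invoke \pref{lem: feasibility} for all policies simultaneously, is what ultimately produces the $d^{9/2}H^3/\eta$ lower-order term after substituting $\gamma=2\eta dH$ from the parameter choice in \pref{alg: exponetial weights}.
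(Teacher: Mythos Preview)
Your treatment of term $(I)$ is essentially the same as the paper's handling of what it calls $\bias_{k,h,1}^\pi$, and is fine.

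The gap is in term $(II)$. First, the ``fluctuation'' piece $\sum_{k,\pi}q_k(\pi)(\hat\phi_k^\pi-\hat\phi_k^{\pi^\star})^\top(\hat\theta_k-\E_k\hat\theta_k)$ has zero expectation: $q_k$, $\hat\phi_k^\pi$, and $\hat\phi_k^{\pi^\star}$ are all measurable with respect to the history before round $k$, so the conditional expectation of this term given that history vanishes. Since the whole lemma is about $\E[\bias]$, the fluctuation contributes nothing, and in particular it cannot be the source of the target term $\eta\sum_k\|\hat\phi_k^{\pi^\star}\|^2_{M_k^{-1}}$. (Your AM--GM sketch for the fluctuation is also internally inconsistent: you cannot simultaneously obtain $\eta\|\hat\phi_k^{\pi^\star}\|^2_{M_k^{-1}}$ on one side and an $\eta$-scaled residual $\eta dHK$ on the other; one of the two factors must carry $1/\eta$.)

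Consequently, the entire contribution of $(II)$ comes from the ``mean'' piece, and your claim that it is $\otil(d^3H^3\sqrt K)$ cannot hold. Any bound on $\sum_\pi q_k(\pi)(\hat\phi_k^\pi-\hat\phi_k^{\pi^\star})^\top(\theta_k-\E_k\hat\theta_k)$ that avoids an AM--GM trade-off must absorb the factor $\|\hat\phi_k^{\pi^\star}\|_{M_k^{-1}}$, which is only controlled as $\sqrt{dH/\gamma}$ by John's exploration; carrying this through your route gives a term of order $\frac{dH}{\gamma}\cdot \Cbonus \cdot H\sqrt{dK}$, which after substituting $\gamma=2\eta dH$ is $\Theta(K)$, not $\Theta(\sqrt K)$. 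The paper instead bounds, for each policy $\pi$ (in particular $\pi^\star$),
\[
\big|(\hat\phi_{k}^\pi)^\top(\theta_{k}-\E_k\hat\theta_{k})\big|
\le \|\hat\phi_{k}^\pi\|_{M_{k}^{-1}}\Big\|\textstyle\sum_{\pi'}q_k'(\pi')\hat\phi_{k}^{\pi'}\big((\hat\phi_{k}^{\pi'}-\phi^{\pi'})^\top\theta_{k}\big)\Big\|_{M_{k}^{-1}},
\]
then applies AM--GM to produce $\eta\|\hat\phi_k^\pi\|^2_{M_k^{-1}}$ plus a $\frac{1}{\eta}$-term, and uses \pref{lem: technical 1} to factor the latter as $\frac{d}{\eta}\sum_{\pi'}q_k'(\pi')\big((\hat\phi_k^{\pi'}-\phi^{\pi'})^\top\theta_k\big)^2$. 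Only at this point is \pref{lem: value diff linear case} invoked (on the scalar $(\hat\phi_k^{\pi'}-\phi^{\pi'})^\top\theta_k$, with the bounded function $f(s)=\sum_a\pi'(a|s)\phi(s,a)^\top\theta_{k,h}\in\calF_1^{\pi'}$), followed by Cauchy--Schwarz and the elliptical potential bound on $\sum_{k,h}\beta_{k,h}$ to obtain the $d^{9/2}H^3/\eta$ slack. So the $\eta\|\hat\phi_k^{\pi^\star}\|^2_{M_k^{-1}}$ target, the $\eta dHK$ slack (from averaging $\eta\|\hat\phi_k^\pi\|^2_{M_k^{-1}}$ over $\pi\sim q_k$), and the $d^{9/2}H^3/\eta$ slack all arise from the mean piece; there is no need to introduce an auxiliary function involving $(M_k^{-1}\hat\phi_k^{\pi'})_h$, and no rescaling by $\sqrt{dH/\gamma}$ enters.
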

\begin{proof}
The bias of any policy $\pi$ at episode $k$ and stage $h$ can be calculated as the following: 
\allowdisplaybreaks
\begin{align*}
    &(\phi^\pi_h)^\top \theta_{k,h} - (\hat{\phi}^\pi_{k,h})^\top \E[\hat{\theta}_{k,h}] 
    \leq \underbrace{\left|(\phi^\pi_h - \hat{\phi}_{k,h}^\pi)^\top \theta_{k,h}\right|}_{\bias_{k,h,1}^\pi} + \underbrace{\left|(\hat{\phi}_{k,h}^\pi)^\top (\theta_{k,h} - \E[\hat{\theta}_{k,h}])\right|}_{\bias_{k,h,2}^\pi}.
    \end{align*}
    Set 
    \[f(s) =\sum_{a\in\calA} \pi(a|s) \phi(s,a)^\top \theta_{k,h}= \sum_{a\in\calA} \pi(a|s) \clip\left[\phi(s,a)^\top \theta_{k,h}\right]\in \calF_1^\pi\,,\tag{$|\phi(s,a)^\top \theta_{k,h}|\leq 1$}\]
    then the first term is by \pref{lem: value diff linear case}
    \begin{align*}
    \bias_{k,h,1}^\pi = \left|\sum_{s\in\calX_h} (\mu^\pi(s)-\hat{\mu}_k^\pi(s))f(s)\right| \leq\frac{\Cbonus}{H}\times\sum_{h'<h}\sum_{s\in\calX_{h'}}\sum_{a\in\calA} \hat\mu_k^{\pi}(s,a)\|\phi(s,a)\|_{\Lambda_{k,h'}^{-1}} + 2\zeta H.
    \end{align*}
  Define $M_{k,h}= \sum_{\pi\in\Pi}q_k'(\pi) \hat{\phi}_{k,h}^\pi  (\hat{\phi}_{k,h}^\pi)^\top$. Then the second term is
  \begin{align*}
    \bias_{k,h,2}^\pi&\leq  \|\hat{\phi}_{k,h}^\pi\|_{M_{k,h}^{-1}} \left\|\theta_{k,h} - M_{k,h}^{-1}\sum_{\pi'} q_k'(\pi') \hat{\phi}^{\pi'}_{k,h}  (\phi^{\pi'}_h)^\top \theta_{k,h}\right\|_{M_{k,h}} \\
    &=  \|\hat{\phi}_{k,h}^\pi\|_{M_{k,h}^{-1}} \left\|M_{k,h}^{-1} \sum_{\pi'} q_k'(\pi') \hat{\phi}^{\pi'}_{k,h} (\hat{\phi}^{\pi'}_{k,h} - \phi^{\pi'}_h)^\top \theta_{k,h} \right\|_{M_{k,h}} \\
    &=  \|\hat{\phi}_{k,h}^\pi\|_{M_{k,h}^{-1}} \left\| \sum_{\pi'} q_k'(\pi') \hat{\phi}^{\pi'}_{k,h} (\hat{\phi}^{\pi'}_{k,h} - \phi^{\pi'}_h)^\top \theta_{k,h} \right\|_{M_{k,h}^{-1}} \\
    &\leq  \eta\|\hat{\phi}_{k,h}^\pi\|^2_{M_{k,h}^{-1}}+ \frac{1}{\eta}\left\| \sum_{\pi'} q_k'(\pi') \hat{\phi}^{\pi'}_{k,h} (\hat{\phi}^{\pi'}_{k,h} - \phi^{\pi'}_h)^\top \theta_{k,h} \right\|_{M_{k,h}^{-1}}^2 \\
    &\leq  \eta\|\hat{\phi}_{k,h}^\pi\|^2_{M_{k,h}^{-1}}+ \frac{1}{\eta}\left(\sum_{\pi'} q_k'(\pi')\left\| \hat{\phi}^{\pi'}_{k,h} \right\|^2_{M_{k,h}^{-1}}\right)\left(\sum_{\pi'} q_k'(\pi')((\hat{\phi}^{\pi'}_{k,h} - \phi^{\pi'}_h)^\top \theta_{k,h} )^2\right) \tag{by \pref{lem: technical 1}}\\
    &\leq  \eta\|\hat{\phi}_{k,h}^\pi\|^2_{M_{k,h}^{-1}}+ \frac{d}{\eta}\sum_{\pi'} q_k'(\pi')\left(\otil(d^{\frac{5}{4}})\times\sum_{h'<h}\sum_{s\in\calX_{h'}}\sum_{a\in\calA} \mu^{\pi'}(s,a)\|\phi(s,a)\|_{\Lambda_{k,h'}^{-1}} + 2\zeta H \right)^2\tag{by \pref{lem: value diff linear case}}\\
    &\leq  \eta\|\hat{\phi}_{k,h}^\pi\|^2_{M_{k,h}^{-1}}+ \frac{\otil(d^{\frac{7}{2}})}{\eta}\times\sum_{\pi'} q_k'(\pi')\left(\sum_{h'<h}\sum_{s\in\calX_{h'}}\sum_{a\in\calA}\mu^{\pi'}(s,a)\right)\left(\sum_{h'<h}\sum_{s\in\calX_{h'}}\sum_{a\in\calA} \mu^{\pi'}(s,a)\|\phi(s,a)\|^2_{\Lambda_{k,h'}^{-1}}\right) \tag{Cauchy-Schwarz} \\
    &\qquad + \order\left(\frac{d\zeta^2 H^2}{\eta}\right)\\
   &\leq  \eta\|\hat{\phi}_{k,h}^\pi\|^2_{M_{k,h}^{-1}}+ \frac{\otil(d^{\frac{7}{2}}H)}{\eta}\sum_{h'<h}\beta_{k,h'}  + \order\left(\frac{d\zeta^2 H^2}{\eta}\right)
\end{align*}
where $\beta_{k,h}=\sum_\pi\sum_{s\in\calS_h,a\in\calA}q_k'(\pi)\mu^\pi(s,a)\|\phi(s,a)\|^2_{\Lambda_{k,h}^{-1}}$. 
We have
\begin{align*}
    \E\left[\sum_{k=1}^K\sum_{h=1}^H\beta_{k,h}\right] &= \E\left[\sum_{k=1}^K\sum_{h=1}^H\E\left[\norm{\phi(s_{k,h},a_{k,h})}^2_{\Lambda_{k,h}^{-1}}\,|\,\calD_{k-1}\right]\right]\\
    &=\E\left[\sum_{k=1}^K\sum_{h=1}^H\norm{\phi(s_{k,h},a_{k,h})}^2_{\Lambda_{k,h}^{-1}}\right] \leq \order(dH\log(K)).  
\end{align*}
Thus, for any $\pi$, 
\begin{align*}
    \E\left[\sum_{k=1}^K \sum_{h=1}^H \bias_{k,h,2}^\pi\right] 
    &=\E\left[ \sum_{k=1}^K \sum_{h=1}^H   \eta\|\hat{\phi}_{k,h}^\pi\|^2_{M_{k,h}^{-1}} + \frac{\otil(d^{\frac{7}{2}}H)}{\eta} \times \sum_{k=1}^K \sum_{h=1}^H \sum_{h'<h}  \beta_{k,h'}\right] + \order\left(\frac{d\zeta^2 H^3 K}{\eta}\right) \\
    &\leq \E\left[ \sum_{k=1}^K    \eta\|\hat{\phi}_{k}^\pi\|^2_{M_{k}^{-1}}  \right] + \frac{\otil(d^{\frac{9}{2}}H^3)}{\eta}.  \tag{$\zeta=\frac{d}{K}$}
\end{align*}
Overall, 
\begin{align*}
    \bias 
    &\leq \E\left[\sum_{k=1}^K \sum_{h=1}^H \left(\bias_{k,h,1}^{\pi^\star} + \sum_{\pi} q_k(\pi) \bias_{k,h,1}^\pi \right) + \sum_{k=1}^K \sum_{h=1}^H \left(\bias_{k,h,2}^{\pi^\star} + \sum_{\pi} q_k(\pi) \bias_{k,h,2}^\pi \right) \right]\\ 
    &\leq \E\Bigg[\sum_{k=1}^K \sum_{h=1}^H \left(\frac{\Cbonus}{H}\times\sum_{h'<h}\sum_{s\in\calX_{h'}}\sum_{a\in\calA} \left(\hat\mu_k^{\pi^\star}(s,a) + \sum_\pi q_k(\pi) \hat{\mu}_k^\pi(s,a)\right)\|\phi(s,a)\|_{\Lambda_{k,h'}^{-1}}\right)  \\
    &\qquad + \sum_{k=1}^K \left(  \eta\|\hat{\phi}_{k}^{\pi^\star}\|^2_{M_{k}^{-1}} +  \eta\sum_\pi q_k(\pi) \|\hat{\phi}_{k}^\pi\|^2_{M_{k}^{-1}} \right) \Bigg] + \frac{\otil(d^{\frac{9}{2}}H^3)}{\eta}\\
    &\leq \E\Bigg[\Cbonus\sum_{k=1}^K\sum_{h=1}^H\sum_{s\in\calX_{h}}\sum_{a\in\calA} \left(\hat\mu_k^{\pi^\star}(s,a) + 2\sum_{\pi} q_k'(\pi)\hat{\mu}_k^\pi(s,a)\right)\|\phi(s,a)\|_{\Lambda_{k,h}^{-1}}\\
    &\qquad 
   +  \sum_{k=1}^K \left(  \eta\|\hat{\phi}^{\pi^\star}_k\|^2_{M_k^{-1}} + 2\eta \sum_{\pi}q_k'(\pi)\|\hat{\phi}^{\pi}_k\|^2_{M_k^{-1}}\right)\Bigg]+\frac{\otil(d^{\frac{9}{2}}H^3)}{\eta} \\
 &\leq \E\Bigg[\Cbonus\sum_{k=1}^K\sum_{h=1}^H\sum_{s\in\calX_{h}}\sum_{a\in\calA} \left(\hat\mu_k^{\pi^\star}(s,a)  + 2\sum_\pi 
q_k'(\pi) \mu^\pi(s,a)\right)\|\phi(s,a)\|_{\Lambda_{k,h}^{-1}} \\
  &\qquad  +  2\Cbonus \sum_{k=1}^K \sum_{h=1}^H \sum_{s\in\calX_h}\sum_{a\in\calA} \sum_\pi q_k'(\pi)(\hat{\mu}^\pi_k(s,a) - \mu^\pi(s,a))\|\phi(s,a)\|_{\Lambda_{k,h}^{-1}} \\   
 &\qquad + \eta\sum_{k=1}^K \|\hat{\phi}^{\pi^\star}_k\|^2_{M_k^{-1}}  +  2\eta dH K \Bigg]+\frac{\otil(d^{\frac{9}{2}}H^3)}{\eta} \\
 &\leq \E\Bigg[\Cbonus\sum_{k=1}^K\sum_{h=1}^H\sum_{s\in\calX_{h}}\sum_{a\in\calA} \hat\mu_k^{\pi^\star}(s,a)\|\phi(s,a)\|_{\Lambda_{k,h}^{-1}} + \otil(\Cbonus H\sqrt{dK})  \tag{*}  \\
 &\qquad  + 2\Cbonus \sum_{k=1}^K \sum_{h=1}^H \sum_{\pi}q_k'(\pi) \left(\frac{\Cbonus}{H} \sum_{h'<h} \sum_{s\in\calX_{h'} } \sum_{a\in\calA}\mu^\pi(s,a) \|\phi(s,a)\|_{\Lambda_{k,h'}^{-1}} \right)  \tag{by \pref{lem: value diff linear case}} \\
 &\qquad  + \eta\sum_{k=1}^K \|\hat{\phi}^{\pi^\star}_k\|^2_{M_k^{-1}}  +  2\eta dH K \Bigg]+\frac{\otil(d^{\frac{9}{2}}H^3)}{\eta} \\
 &\leq \E\Bigg[\Cbonus\sum_{k=1}^K\sum_{h=1}^H\sum_{s\in\calX_{h}}\sum_{a\in\calA} \hat\mu_k^{\pi^\star}(s,a)\|\phi(s,a)\|_{\Lambda_{k,h}^{-1}} + \eta\sum_{k=1}^K \|\hat{\phi}^{\pi^\star}_k\|^2_{M_k^{-1}}    \\
 &\qquad  + 2\Cbonus^2 \sum_{k=1}^K \sum_{h=1}^H \sum_{s\in\calX_{h} } \sum_{a\in\calA}\sum_{\pi}q_k'(\pi)  \mu^\pi(s,a) \|\phi(s,a)\|_{\Lambda_{k,h}^{-1}}  \Bigg]   \\
 &\qquad   + \otil\left(\frac{d^{\frac{9}{2}}H^3}{\eta} + \eta dH K + \Cbonus H\sqrt{dK}\right) \\
 &\leq \E\Bigg[\Cbonus\sum_{k=1}^K\sum_{h=1}^H\sum_{s\in\calX_{h}}\sum_{a\in\calA} \hat\mu_k^{\pi^\star}(s,a)\|\phi(s,a)\|_{\Lambda_{k,h}^{-1}} + \eta\sum_{k=1}^K \|\hat{\phi}^{\pi^\star}_k\|^2_{M_k^{-1}} \Bigg]  \\
 &\qquad  + \otil\left(\frac{d^{\frac{9}{2}}H^3}{\eta} + \eta dH K + \Cbonus H\sqrt{dK} + \Cbonus^2 H\sqrt{dK} \right)    \tag{*}
\end{align*}
where in the two (*) places we use 
\begin{align*}  
    &\E\left[\sum_{k=1}^K \sum_{h=1}^H \sum_{s\in\calX_{h} } \sum_{a\in\calA}\sum_{\pi}q_k'(\pi)  \mu^\pi(s,a) \|\phi(s,a)\|_{\Lambda_{k,h}^{-1}}\right] \\
    &\leq \E\left[\sqrt{\sum_{k=1}^K \sum_{h=1}^H \sum_{s\in\calX_{h} } \sum_{a\in\calA}\sum_{\pi}q_k'(\pi)  \mu^\pi(s,a) }\sqrt{\sum_{k=1}^K \sum_{h=1}^H \sum_{s\in\calX_{h} } \sum_{a\in\calA}\sum_{\pi}q_k'(\pi)  \mu^\pi(s,a) \|\phi(s,a)\|^2_{\Lambda_{k,h}^{-1}}}\right] \\
    &\leq \sqrt{HK\E\left[\sum_{k=1}^K \sum_{h=1}^H \beta_{k,h}\right]}  \\
    &\leq \otil(H\sqrt{dK}). 
\end{align*}
Finally, plugging in the definition of $\Cbonus=\otil(d^{\frac{5}{4}}H)$ gives the desired bound.

\end{proof}
\subsubsection{Bounding the FTRL regret}
\begin{lemma}  
\label{lem: ftrl}
\begin{align*}
    \FTRL \leq  \otil\left(\eta d^2H^4K+\frac{\eta^3H^2}{\gamma^2}K+\gamma H K\right). 
\end{align*}
\end{lemma}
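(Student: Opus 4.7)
The plan is to recognize Line 6 of \pref{alg: exponetial weights} as a standard exponential-weights (Hedge) update on the discretized policy class $\Pi$ with scalar loss $z_k(\pi) = (\hat\phi_k^\pi)^\top \hat\theta_k - b_k^\pi$. I would therefore invoke the textbook Hedge regret bound
\[
\FTRL \leq \frac{\log|\Pi|}{\eta} + \eta \sum_{k=1}^K \sum_{\pi\in\Pi} q_k(\pi)\, z_k(\pi)^2,
\]
which is valid provided the one-sided stability $\eta z_k(\pi) \geq -1$ holds, i.e.\ $\eta b_k^\pi \leq 1 + \eta (\hat\phi_k^\pi)^\top \hat\theta_k$. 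Since $|\Pi| \leq (3K)^{\order(dH)}$ and $\eta = \gamma/(2dH)$, the first term is $\otil(d^2H^2/\gamma)$, which is dominated by the $\gamma HK$ piece in the stated bound under the chosen parameters.

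To verify stability I would bound $b_k^\pi$ by inspection: the first summand is $\leq \Cbonus H$ using $\|\phi(s,a)\|_{\Lambda_{k,h}^{-1}}\leq 1$, and the second summand is $\eta \|\hat\phi_k^\pi\|_{M_k^{-1}}^2 \leq \eta dH/\gamma = \tfrac{1}{2}$ by John's exploration (which yields the dimension-free bound $\|\hat\phi_k^\pi\|_{M_k^{-1}}^2 \leq dH/\gamma$ for the $dH$-dimensional feature set). With $\Cbonus = \otil(d^{5/4}H)$ this gives $\eta b_k^\pi \leq 1$ in the chosen regime.

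For the second-order term I would decompose $z_k(\pi)^2 \leq 2((\hat\phi_k^\pi)^\top \hat\theta_k)^2 + 2(b_k^\pi)^2$ and handle the two pieces separately. For the loss-estimator piece I use the matrix inequality $\sum_\pi q_k(\pi)\hat\phi_k^\pi (\hat\phi_k^\pi)^\top \preceq (1-\gamma)^{-1}M_k$ to get
\[
\sum_\pi q_k(\pi)((\hat\phi_k^\pi)^\top \hat\theta_k)^2 \leq \frac{1}{1-\gamma}\, \hat\theta_k^\top M_k \hat\theta_k = \frac{L_k^2}{1-\gamma}\, \|\hat\phi_k^{\pi_k}\|_{M_k^{-1}}^2,
\]
and then take conditional expectation on $\pi_k \sim q_k'$, using $\E\|\hat\phi_k^{\pi_k}\|_{M_k^{-1}}^2 = \Tr(I_{dH}) = dH$ and $L_k \leq H$, to obtain $\otil(dH^3)$ per episode and $\otil(\eta dH^3 K)$ in total, which gets absorbed into the $\eta d^2 H^4 K$ term. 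For the bonus piece I would use $(b_k^\pi)^2 \leq 2\Cbonus^2 H \sum_h \sum_{s,a} \hat\mu_k^\pi(s,a)\|\phi(s,a)\|_{\Lambda_{k,h}^{-1}}^2 + 2\eta^2 \|\hat\phi_k^\pi\|_{M_k^{-1}}^4$ by Cauchy--Schwarz, then substitute $\hat\mu_k^\pi$ with $\mu^\pi$ via \pref{lem: value diff linear case} (at a lower-order cost) and apply the elliptical potential lemma $\sum_k \sum_h \|\phi(s_{k,h},a_{k,h})\|_{\Lambda_{k,h}^{-1}}^2 = \otil(dH)$ for the first summand, while using the John's worst-case bound $\|\hat\phi_k^\pi\|_{M_k^{-1}}^2 \leq dH/\gamma$ for the second summand — yielding the $\eta^3 H^2 K/\gamma^2$ contribution. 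The additional $\gamma HK$ term accounts for the discrepancy between the Hedge iterates $q_k$ and the executed mixture $q_k'$ generated by the $\gamma$-mixing with John's exploration.

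The main obstacle will be the stability analysis: because $\eta (\hat\phi_k^\pi)^\top \hat\theta_k$ can reach magnitude $\eta \cdot dH^2/\gamma = \order(H)$ via John's worst-case bound, the symmetric condition $|\eta z_k(\pi)|\leq 1$ fails and only the one-sided bound $\eta z_k(\pi) \geq -1$ is available. Fortunately this already suffices for the inequality $e^{-x} \leq 1 - x + x^2$ on $x \geq -1$ that drives the Hedge analysis, so no extra loss is incurred beyond what is already tracked. Collecting the three contributions yields the stated bound.
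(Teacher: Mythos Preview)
There is a genuine gap in your stability step. You correctly note that $\eta(\hat\phi_k^\pi)^\top\hat\theta_k$ can have magnitude $\order(H)$, but the one-sided condition $\eta z_k(\pi)\ge -1$ does \emph{not} follow from $\eta b_k^\pi\le 1$ alone. Indeed $(\hat\phi_k^\pi)^\top\hat\theta_k = L_k\langle\hat\phi_k^\pi,\hat\phi_k^{\pi_k}\rangle_{M_k^{-1}}$ is a cross inner product (not a squared norm) and can be \emph{negative} with magnitude $dH^2/\gamma$, so $\eta z_k(\pi)$ can be as small as $-\order(H)$ and the inequality $e^{-x}\le 1-x+x^2$ is unavailable. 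The paper does not attempt a one-sided argument at all: it bounds $|(\hat\phi_k^\pi)^\top\hat\theta_k|\le \|\hat\phi_k^\pi\|_{M_k^{-1}}\|\hat\phi_k^{\pi_k}\|_{M_k^{-1}}H\le dH/\gamma$ and, with $\eta=\gamma/(2dH)$, gets the two-sided condition $\eta|z_k(\pi)|\le 1/2$ directly, then invokes the standard exponential-weights bound.

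Your handling of $(b_k^\pi)^2$ is also more elaborate than needed and has its own snag: swapping $\hat\mu_k^\pi$ for $\mu^\pi$ via \pref{lem: value diff linear case} would require the test function $s\mapsto\sum_a\pi(a|s)\|\phi(s,a)\|_{\Lambda_{k,h}^{-1}}^2$ to lie in $\calF^\pi$, but $\calF_2^\pi$ contains $\|\phi(s,a)\|_\Gamma$, not its square. The paper avoids this entirely by using the crude pointwise bound $b_k^\pi\le \Cbonus H+\eta dH/\gamma$, which immediately yields $\eta\sum_k\sum_\pi q_k(\pi)(b_k^\pi)^2=\otil(\eta\Cbonus^2H^2K+\eta^3 d^2H^2K/\gamma^2)$ without any elliptical-potential or occupancy-measure substitution.
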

\begin{proof}
The magnitude of the loss is bounded by 
    \begin{align*}
        |\hat{\phi}^{\pi^\top}_k \hat{\theta}_k - b_k^\pi| 
        & \leq \left|\hat{\phi}^{\pi^\top}_k M_k^{-1} \hat{\phi}_k^{\pi_k}L_k \right|+\Cbonus\sum_{h=1}^H\sum_{s\in\calX_h}\sum_{a\in\calA}\hat{\mu}_k^\pi(s,a)\|\phi(s,a)\|_{\Lambda_{k,h}^{-1}} + \eta\|\hat{\phi}_k^\pi\|_{M_k^{-1}}^2  \\
        &\leq \norm{ \hat{\phi}^{\pi}}_{M_k^{-1}}\norm{\hat{\phi}_k^{\pi_k}}_{M_k^{-1}}H+\Cbonus H+\frac{\eta dH}{\gamma}\\
        &\leq \frac{dH}{\gamma}+\Cbonus H+\frac{\eta dH}{\gamma}\leq \frac{2dH}{\gamma}+\Cbonus H. 
    \end{align*}
    If $\eta \leq \frac{1}{\frac{4dH}{\gamma}+2\Cbonus H}$, then we have $\eta|\hat{\phi}^{\pi^\top}_k\hat\theta_k-b_k^\pi|\leq \frac{1}{2}$ and we can use the standard FTRL regret bound of exponential weights \citep[Equation (27.2, 27.3)]{lattimore2020bandit}: 
    \begin{align*}
        \FTRL \leq \underbrace{\gamma KH}_{\text{John's exploration}} + \frac{\ln|\Pi|}{\eta} +\eta\sum_{k=1}^{K}\E\left[\E_{\pi_k\sim q_k'}\left[\sum_{\pi\in\Pi}q_{k}(\pi)(2(\hat{\phi}^{\pi^\top}_k \hat{\theta}_k)^2 + 2(b_k^\pi)^2)\right]\right]\,.
    \end{align*}
    Since $M_k=\E_{\pi\sim q_k'}[\hat{\phi}^{\pi}_k\hat{\phi}^{\pi^\top}_k]$, we have $M_k^{-1}\preceq \frac{1}{1-\gamma}\left(\E_{\pi\sim q_k}[\hat{\phi}^{\pi}_k\hat{\phi}^{\pi^\top}_k]\right)^{-1}$, and thus
    \begin{align*}
        \E_{\pi_k\sim q_k'}\left[\sum_{\pi\in\Pi}q_k(\pi)(\hat{\phi}^{\pi^\top}_k M_k^{-1} \hat{\phi}_k^{\pi_k}L_k)^2\right]
        &\leq H^2\frac{1}{(1-\gamma)^2}\operatorname{Tr}\left(M_kM_k^{-1}M_kM_k^{-1}\right)=\order(dH^3)\,.
    \end{align*}
    For the final term, we have
    \begin{align*}
        \sum_{k=1}^K\eta \sum_{\pi}q_k(\pi)(b_k^\pi)^2 &\leq \eta\Cbonus^2H^2K+\frac{\eta^3d^2H^2}{\gamma^2}K
        = \otil\left(\eta d^{\frac{5}{2}}H^4K+\frac{\eta^3d^2H^2}{\gamma^2}K\right)\,.
    \end{align*} 
\end{proof}

\subsubsection{Bounding the bonus}
\begin{lemma}
\label{lem: bonus}
    \begin{align*}
         \bonus &\leq  -\E\Bigg[\sum_{k=1}^K\eta  \|\hat{\phi}^{\pi^\star}_k\|_{M_k^{-1}}^2 + \Cbonus\sum_{h=1}^H\sum_{s\in\calX_{h}}\sum_{a\in\calA} \hat\mu_k^{\pi^\star}(s,a)\|\phi(s,a)\|_{\Lambda_{h}^{-1}}\Bigg]  \\
         &\qquad +\otil\left(\frac{d^{\frac{9}{2}}H^3}{\eta} + \eta dHK + d^3H^3\sqrt{K}\right). 
    \end{align*}
\end{lemma}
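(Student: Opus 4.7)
\textbf{Proof Proposal for \pref{lem: bonus}.}

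The plan is to split $\bonus = \sum_k \sum_\pi q_k(\pi) b_k^\pi - \sum_k b_k^{\pi^\star}$. Since $b_k^\pi$ has two additive components by definition, the second summation expands term by term into exactly the two negative contributions on the right-hand side of the lemma statement: $-\eta\sum_k \|\hat\phi^{\pi^\star}_k\|_{M_k^{-1}}^2$ and $-\Cbonus \sum_{k,h}\sum_{s,a}\hat\mu_k^{\pi^\star}(s,a)\|\phi(s,a)\|_{\Lambda_{k,h}^{-1}}$. What remains is to bound the positive summation $\sum_k\sum_\pi q_k(\pi) b_k^\pi$ by the advertised $\otil$ terms.

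For the quadratic part of the bonus, I will use $q_k \preceq \frac{1}{1-\gamma}q_k'$ and the identity $\sum_\pi q_k'(\pi)\hat\phi_k^\pi (\hat\phi_k^\pi)^\top = M_k$ to get $\sum_\pi q_k(\pi)\|\hat\phi_k^\pi\|_{M_k^{-1}}^2 \leq \frac{1}{1-\gamma}\operatorname{Tr}(I_{dH}) = \order(dH)$. Summed over $k$ this contributes $\order(\eta dHK)$, which is the second term in the $\otil$ bound.

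For the $\Cbonus$-weighted part, the key step is to convert $\hat\mu_k^\pi$ to $\mu^\pi$ via \pref{lem: value diff linear case}. I will apply it with $f(s) = \sum_a \pi(a|s)\|\phi(s,a)\|_{\Lambda_{k,h}^{-1}}$; this is a valid element of $\calF_2^\pi$ because $\Lambda_{k,h}^{-1} \preceq I$. This yields
\[
\sum_{s,a}\hat\mu_k^\pi(s,a)\|\phi(s,a)\|_{\Lambda_{k,h}^{-1}} \leq \sum_{s,a}\mu^\pi(s,a)\|\phi(s,a)\|_{\Lambda_{k,h}^{-1}} + \frac{\Cbonus}{H}\sum_{h'<h}\sum_{s',a}\mu^\pi(s',a)\|\phi(s',a)\|_{\Lambda_{k,h'}^{-1}} + 2\zeta H.
\]
Summing over $h$ collapses the double sum, giving a contribution of order $(1+\Cbonus)\sum_h \sum_{s,a}\mu^\pi(s,a)\|\phi(s,a)\|_{\Lambda_{k,h}^{-1}}$ plus a lower-order misspecification term (which is $\otil(1)$ since $\zeta=d/K$). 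Using $q_k\preceq \frac{1}{1-\gamma}q_k'$ once more and the observation that $\sum_\pi q_k'(\pi)\mu^\pi(s,a)$ is precisely the visitation probability of $(s,a)$ in episode $k$, I can take expectation to obtain $\E[\sum_\pi q_k(\pi)\sum_h\sum_{s,a}\mu^\pi(s,a)\|\phi(s,a)\|_{\Lambda_{k,h}^{-1}}] \leq \frac{1}{1-\gamma}\E[\sum_h \|\phi(s_{k,h},a_{k,h})\|_{\Lambda_{k,h}^{-1}}]$. Summing over $k$ and combining Cauchy--Schwarz with the standard elliptical potential bound $\E[\sum_{k,h}\|\phi(s_{k,h},a_{k,h})\|_{\Lambda_{k,h}^{-1}}^2] = \otil(dH)$ gives $\otil(H\sqrt{dK})$. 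Multiplying by the prefactor $\Cbonus\cdot(1+\Cbonus)$ and plugging in $\Cbonus = \otil(d^{5/4}H)$ produces a total of $\otil(d^{5/2}H^2 \cdot H\sqrt{dK}) = \otil(d^{3}H^{3}\sqrt{K})$, matching the third $\otil$ term. The first $\otil$ term $d^{9/2}H^3/\eta$ is strictly larger than anything needed here, so a loose unifying statement of the bound absorbs everything.

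The main subtlety I expect to encounter is the bookkeeping around \pref{lem: value diff linear case}: verifying the containment $f\in\calF_2^\pi$ (which requires $\Lambda_{k,h}^{-1}\preceq I$, true since $\Lambda_{k,h}\succeq I$), confirming that the recursive inequality produced by the lemma yields only a single extra factor of $\Cbonus$ (rather than compounding across layers), and carefully controlling that the accumulated misspecification error $\Cbonus \zeta H^2 K = \otil(d^{9/4}H^3)$ is dominated by the stated $\otil$ terms. Once these are in place, the remaining arithmetic follows the standard linear-bandit exploration-and-elliptical-potential template.
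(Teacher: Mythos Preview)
Your proposal is correct and follows essentially the same route as the paper: split off the $-b_k^{\pi^\star}$ contribution to obtain the two negative terms, then bound the positive part $\sum_\pi q_k(\pi)b_k^\pi$ via $q_k\leq \frac{1}{1-\gamma}q_k'$, the trace identity for the $\eta\|\hat\phi_k^\pi\|_{M_k^{-1}}^2$ piece, and \pref{lem: value diff linear case} (with $f\in\calF_2^\pi$) plus Cauchy--Schwarz and the elliptical-potential bound for the $\Cbonus$ piece. The paper's own proof simply notes that these two positive terms already appeared and were controlled in the proof of \pref{lem: bias}, so it cites that analysis verbatim (which is also why the harmless extra term $d^{9/2}H^3/\eta$ shows up in the stated bound); you have reproduced that same computation explicitly.
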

\begin{proof}
\begin{align*}
    \bonus &\leq \E\Bigg[\sum_{k=1}^K\eta \sum_\pi q_k'(\pi)\|\hat{\phi}^\pi_k\|_{M_k^{-1}}^2 + \Cbonus\sum_{h=1}^H\sum_{s\in\calX_{h}}\sum_{a\in\calA}\sum_\pi q_k'(\pi) \hat\mu_k^{\pi}(s,a)\|\phi(s,a)\|_{\Lambda_{h}^{-1}} \\
    &\qquad \qquad - \eta  \|\hat{\phi}^{\pi^\star}_k\|_{M_k^{-1}}^2 - \Cbonus\sum_{h=1}^H\sum_{s\in\calX_{h}}\sum_{a\in\calA} \hat\mu_k^{\pi^\star}(s,a)\|\phi(s,a)\|_{\Lambda_{h}^{-1}}\Bigg] 
\end{align*}
The first and the second term above have been handled in the proof of \pref{lem: bias}. Following the analysis there, we can bound their sum by $\otil\left(\frac{d^{\frac{9}{2}}H^3}{\eta} + \eta dHK + d^3H^3\sqrt{K}\right)$. 
\end{proof}

\subsubsection{Finishing up}
\begin{proof}[Proof of \pref{thm: inefficient bound}]
Combining the bounds in \pref{lem: bias}, \pref{lem: ftrl}, and \pref{lem: bonus}, we bound the regret as
\begin{align*}
    \E\left[\calR_K\right] &\leq \otil\left(\eta d^{\frac{5}{2}}H^4K+\frac{\eta^3d^2H^2}{\gamma^2}K+\gamma H K+\frac{d^{\frac{9}{2}}H^3}{\eta}+d^3H^3\sqrt{K}\right)\\
    &=\otil\left(\eta d^{\frac{5}{2}}H^4K+\frac{d^{\frac{9}{2}}H^3}{\eta}+d^3H^3\sqrt{K}\right)\tag{$\gamma =\Theta(\eta dH)$}\\
    &=\otil(d^\frac{7}{2}H^{\frac{7}{2}}\sqrt{K}). \tag{$\eta =\Theta(d/\sqrt{HK})$}
\end{align*}
\end{proof}
\newpage
\section{Initial Pure Exploration Phase}\label{app: pure exploration}

\begin{algorithm}[!ht]
    \caption{Initial Pure Exploration (Algorithm~2 of \cite{sherman2023rate})} 
    \label{alg:reward_free}
	\begin{algorithmic}
		\STATE \textbf{input:} $\delta,  \rho, \epsilon_{\rm cov}$
		\STATE Set $m=\lceil \log\frac{1}{\epsilon_{\rm cov}} \rceil$
		\STATE Set $\forall i\in[m], \; \rho_i = \rho$
		\FOR {$h=H, \ldots, 1$}			
			\STATE $\left\{\widetilde{\calX}_{h,i}, \widetilde \calD_{h,i}, \widetilde \Lambda_{h,i}\right\}_{i=1}^m \gets \textsc{CoverTraj}(h, \frac{\delta}{H}, \{\rho_i\}_{i=1}^m, m)$
			
			\STATE $\calD_{h} \gets \bigcup_i \widetilde \calD_{h,i}$
            \STATE
			$\Lambda_{h}  \gets 
			I + \sum_{(s,a,s')\in \calD_{h}}\phi(s, a)\phi(s, a)^\top $ 
   \STATE $\calZ_h \gets \left\{s\in \calS_h:~~ \forall a\in\calA,\ \  \norm{\phi(s, a)}_{\Lambda_{h}^{-1}}\leq \rho\right\} $
		\ENDFOR
		\RETURN $(\calD_{h},
                    	\calZ_{h})_{h=1}^H$
	\end{algorithmic}
\end{algorithm}

\begin{theorem}[Theorem 2 in \cite{sherman2023rate}]
\label{thm:wagenmaker_cover_traj}
	The $\textsc{CoverTraj}$ algorithm \citep[Algorithm~4]{wagenmaker2022reward} when instantiated with $\textsc{Force}$ \citep[Algorithm~1]{wagenmaker2022first} enjoys the following guarantee for linear MDPs.
	Given a sequence of tolerance parameters $\rho_1, \ldots, \rho_m > 0$ and $h\in [H]$, the algorithm interacts with the environment for $T$ steps, where
	\begin{align*}
		T \leq  T_{\max}\triangleq 	
			C \sum_{i=1}^m 2^i \max\left\{
				\frac{d}{\rho_i^2}\log \frac{2^i}{\rho_i^2}, 
				d^4 H^3 m^3  \log^{7/2}\frac1\delta
			\right\}, \quad C > 0 \text{ \ is a logarithmic term,}
	\end{align*}
	and outputs $
		\left\{\widetilde{\calX}_{h, i}, \widetilde \calD_{h,i}, \widetilde \Lambda_{h,i}\right\}_{i=1}^m
	$
	such that $\left\{\widetilde{\calX}_{h, i}\right\}_{i=1}^{m+1}$ forms a partition for the unit Euclidean ball, 
 $\widetilde \Lambda_{h,i} = I + \sum_{(s,a,s')\in \widetilde \calD_{h, i}} \phi(s, a) \phi(s, a)^\top $, and
 with probability $1-\delta$, it holds that:
	\begin{align*}
		&\forall i\in [m], \quad 
		\phi^\top \widetilde \Lambda_{h,i}^{-1}\phi \leq \rho_i^2, 
		\quad 
		\forall \phi \in \widetilde{\calX}_{h, i};
		\\
            \text{and \ \ } &\forall i\in [m+1], \quad 
		\sup_{\pi}\left\{\sum_{s\in\calS_h}\sum_{a\in\calA} 
        \ind\left\{\phi(s, a) \in \widetilde{\calX}_{h, i}\right\}\mu^\pi(s, a) \right\}\leq 2^{-i+1}
		.
	\end{align*}
\end{theorem}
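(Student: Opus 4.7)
The plan is to establish both the partition/coverage guarantees and the step-count bound by iteratively constructing sub-partitions of the unit ball, one level at a time, and using reward-free exploration (via \textsc{Force}) on each sub-partition to obtain a covariance that makes the corresponding features well-covered. I would proceed level-by-level over $h$; within a fixed layer, the construction is inductive across reachability scales indexed by $i=1,\ldots,m$.

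At scale $i$, the invariant to maintain is: the remaining feature region $\widetilde{\calX}_{h,i}$ collects all unit-ball directions whose corresponding state-action pairs can still be reached with probability at least (roughly) $2^{-i+1}$ by some policy. The first step is to formulate a covariance-design objective over $\widetilde{\calX}_{h,i}$: for the policy class at layer $h$, I want to find a mixture of rollouts that makes every $\phi\in\widetilde{\calX}_{h,i}$ satisfy $\phi^\top\Lambda^{-1}\phi\le\rho_i^2$ for the empirical covariance $\Lambda$. This is the classical G-optimal/D-optimal design problem, translated into the MDP setting where features are sampled by executing a policy rather than drawn directly. Then I would invoke \textsc{Force} \citep{wagenmaker2022first} as an online sub-routine that, over $\widetilde{O}(d/\rho_i^2)$ rollouts collected at cost $2^i\cdot\widetilde{O}(d/\rho_i^2)$ (since reaching the level-$i$ features costs an inverse-reachability blow-up), produces a data-dependent $\widetilde{\Lambda}_{h,i}$ satisfying the stated inverse-norm bound on its region. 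The $d^4H^3m^3\log^{7/2}(1/\delta)$ additive term arises from \textsc{Force}'s lower-order/burn-in contribution, which dominates when $\rho_i$ is not too small.

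After running the design sub-routine, I would update the partition: any $\phi$ whose inverse-norm bound is achieved gets assigned to $\widetilde{\calX}_{h,i}$; the rest is passed on to scale $i+1$. The coverage claim $\sup_\pi\sum_{s,a}\mathbb{I}\{\phi(s,a)\in\widetilde{\calX}_{h,i}\}\mu^\pi(s,a)\le 2^{-i+1}$ follows because any feature surviving to scale $i$ was, by construction, not reachable with probability $>2^{-i+1}$ under any policy; this requires a policy-optimization over $\Pi$ at each scale to \emph{certify} non-reachability (done via approximate maximization of reachability, again within the reward-free framework). The leftover bucket $\widetilde{\calX}_{h,m+1}$ then automatically inherits reachability $\le 2^{-m}\le\epsilon_{\rm cov}$. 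Induction on $h$ (from $H$ down to $1$, since reaching layer $h$ features requires policies that first traverse layers $1,\ldots,h-1$, which are handled in earlier outer iterations in the layer-reversed order of the parent algorithm) closes the construction.

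The main obstacle is coupling the \textsc{Force} guarantee, which is an online regret bound for a sequence of log-determinant-type losses, to the matrix-concentration conclusion $\phi^\top\widetilde{\Lambda}_{h,i}^{-1}\phi\le\rho_i^2$ uniformly over $\widetilde{\calX}_{h,i}$. This requires a union-bound argument over an $\epsilon$-net of $\widetilde{\calX}_{h,i}$ inside $\mathbb{B}^d(1)$ combined with a Freedman-type martingale concentration for the covariance matrix, together with carefully controlling the failure budget across scales $i$ and layers $h$ (hence the $\delta/H$ allocated to each layer and an implicit $\delta/m$ per scale). A secondary delicate point is arguing that the certifying policies used to detect non-reachability are themselves found with only $\poly(d,H,\log(1/\delta))$ overhead, which is where the $d^4H^3m^3$ term becomes unavoidable. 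Once these two pieces — approximate optimal design via \textsc{Force} and uniform empirical-to-population covariance conversion — are in place, summing the rollout counts $2^i\max\{d/\rho_i^2,\ldots\}$ over $i\in[m]$ yields the claimed $T_{\max}$ bound and completes the proof.
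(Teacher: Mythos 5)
This statement is not proved in the paper at all: it is imported verbatim as Theorem~2 of \cite{sherman2023rate}, which in turn repackages the guarantee of the \textsc{CoverTraj} procedure of \cite{wagenmaker2022reward} instantiated with \textsc{Force} \citep{wagenmaker2022first}. So there is no in-paper argument to compare against; the paper's ``proof'' is a citation. Your proposal is therefore doing strictly more than the paper does, and as a high-level reconstruction it captures the right mechanism: scale-by-scale online experimental design over the uncovered feature directions, a per-scale budget of roughly $2^i\cdot d/\rho_i^2$ episodes plus a burn-in term, and a covering/martingale argument to pass from the empirical covariance to the uniform bound $\phi^\top\widetilde\Lambda_{h,i}^{-1}\phi\le\rho_i^2$.

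Two points in your sketch are off, one substantively. First, there is no separate ``policy-optimization to certify non-reachability'' step, and positing one leaves a gap: you never explain how such a certificate would be obtained without itself costing exploration. In the actual argument the reachability bound $\sup_\pi\sum_{s,a}\ind\{\phi(s,a)\in\widetilde{\calX}_{h,i}\}\mu^\pi(s,a)\le 2^{-i+1}$ is a \emph{counting consequence} of the regret guarantee of the design subroutine: if some direction $\phi$ were reachable with probability exceeding $2^{-i+1}$, then over the $\approx 2^i d/\rho_i^2$ episodes at scale $i$ the near-optimal design would have accumulated enough mass along $\phi$ to force $\phi^\top\widetilde\Lambda_{h,i}^{-1}\phi\le\rho_i^2$, contradicting $\phi$ surviving to scale $i+1$. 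Your sketch should derive the second bullet of the theorem from the first via this contrapositive, rather than introducing an extra certification subroutine. Second, your remark about induction over $h$ in reversed order is confused (with the loop running $h=H,\ldots,1$, earlier layers are processed \emph{later}), and is in any case unnecessary: \textsc{CoverTraj} is invoked independently per layer, always rolling out from $s_1$, so no cross-layer induction is needed. With those corrections your outline matches the cited construction, but it remains an outline; the quantitative content (the regret-to-design conversion and the $d^4H^3m^3\log^{7/2}(1/\delta)$ burn-in) is entirely deferred to the cited works, exactly as the paper itself does.
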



\begin{lemma}[Lemma 15 in \cite{sherman2023rate}]
\label{lem:unknwon_states_weight}
	Assume $h\in [H], \epsilon_{\rm cov} > 0, \delta > 0, m=\lceil \log (1/\epsilon_{\rm cov})\rceil, \rho_m \geq \cdots \geq \rho_1 > 0$, and let
 $\left\{\widetilde{\Lambda}_{h, i}\right\}_{i\in[m]}$ be the covariance matrices returned from \textsc{CoverTraj}$(h, \frac{\delta}{H}, \{\rho_i\}_{i=1}^m, m)$. Then under the assumption that the event from \pref{thm:wagenmaker_cover_traj} holds, 
	we have for any policy $\pi$ and $i\in[m]$:
	\begin{align*}
		\sum_{s\in\calS_h} \mu^\pi(s)\ind\left\{ \exists a \text{ s.t. } \norm{\phi(s, a)}_{ \widetilde{\Lambda}_{h, i}^{-1}} > \rho_m\right\}
		\leq \epsilon_{\rm cov}.
	\end{align*} 
\end{lemma}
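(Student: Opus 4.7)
My plan is a three-step reduction. \textbf{Step 1} is to strip the existential quantifier over $a$ via a policy-swap argument. Given any policy $\pi$, set $\calB_h=\{s\in\calS_h:\exists a,\ \|\phi(s,a)\|_{\widetilde{\Lambda}_{h,i}^{-1}}>\rho_m\}$, pick a selector $a^\star(s)\in\argmax_{a}\|\phi(s,a)\|_{\widetilde{\Lambda}_{h,i}^{-1}}$ on $\calB_h$, and let $\pi'$ coincide with $\pi$ on layers $1,\ldots,h-1$ and play $a^\star(s)$ deterministically on each $s\in\calS_h\cap\calB_h$ (and arbitrarily elsewhere). Because $\pi$ and $\pi'$ induce identical interactions on the first $h-1$ layers, the layer-$h$ state marginals coincide, $\mu^{\pi'}(s)=\mu^\pi(s)$ for every $s\in\calS_h$, and therefore
\[
\sum_{s\in\calS_h}\mu^\pi(s)\,\ind\{s\in\calB_h\}\;=\;\sum_{s,a}\mu^{\pi'}(s,a)\,\ind\{\|\phi(s,a)\|_{\widetilde{\Lambda}_{h,i}^{-1}}>\rho_m\}.
\]

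\textbf{Step 2} is the containment claim $\{\phi\in\mathbb{B}^d(1):\ \|\phi\|_{\widetilde{\Lambda}_{h,i}^{-1}}>\rho_m\}\subseteq \widetilde{\calX}_{h,m+1}$. The first guarantee of \pref{thm:wagenmaker_cover_traj} immediately rules out $\phi\in\widetilde{\calX}_{h,i}$, since there $\|\phi\|_{\widetilde{\Lambda}_{h,i}^{-1}}\le\rho_i\le\rho_m$. To rule out $\phi\in\widetilde{\calX}_{h,j}$ for $j\in[m]\setminus\{i\}$, I will appeal to the internal design of \textsc{CoverTraj}/\textsc{Force} from \cite{wagenmaker2022reward}: round $i$'s exploration subsumes the directions already excited in rounds $1,\ldots,i-1$, so $\widetilde{\Lambda}_{h,i}\succeq \widetilde{\Lambda}_{h,j}$ in the Loewner order for every $j\le i$, and hence $\|\phi\|_{\widetilde{\Lambda}_{h,i}^{-1}}\le\|\phi\|_{\widetilde{\Lambda}_{h,j}^{-1}}\le\rho_j\le\rho_m$ for $\phi\in\widetilde{\calX}_{h,j}$. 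A parallel argument handles $j>i$: each later round's policy cover preserves control of the feature classes already fixed by round $i$ at the coarsest tolerance $\rho_m$, which is precisely why such classes were not deferred into the escape set $\widetilde{\calX}_{h,m+1}$. Consequently a bad feature can only lie in $\widetilde{\calX}_{h,m+1}$.

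\textbf{Step 3} plugs the containment into the second guarantee of \pref{thm:wagenmaker_cover_traj} at index $m+1$, applied to the surrogate $\pi'$:
\[
\sum_{s,a}\mu^{\pi'}(s,a)\,\ind\{\phi(s,a)\in\widetilde{\calX}_{h,m+1}\}\;\le\;2^{-(m+1)+1}\;=\;2^{-m}\;\le\;\epsilon_{\rm cov},
\]
using $m=\lceil\log(1/\epsilon_{\rm cov})\rceil$; chaining with Step~1 yields the lemma. The main obstacle is the containment in Step~2: \pref{thm:wagenmaker_cover_traj} as quoted provides only the per-index bound $\phi\in\widetilde{\calX}_{h,j}\Rightarrow\|\phi\|_{\widetilde{\Lambda}_{h,j}^{-1}}\le\rho_j$, and not cross-index control of $\|\phi\|_{\widetilde{\Lambda}_{h,i}^{-1}}$ when $j\ne i$. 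Bridging this gap requires reading out the Loewner monotonicity of the round-wise covariances (or an equivalent cover-compatibility statement) from the \textsc{Force} construction of \cite{wagenmaker2022reward}; this is standard but not visible in the excerpt, and the rest of the argument is bookkeeping.
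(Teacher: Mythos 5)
First, note that the paper itself gives no proof of this lemma --- it is imported verbatim as Lemma~15 of \cite{sherman2023rate} --- so there is no in-paper argument to compare yours against; I can only assess your reconstruction on its own terms. Your Steps~1 and~3 are correct and are the right ingredients: the policy-swap turns the existential over $a$ into an occupancy-measure statement for a surrogate policy, and the reachability bound $2^{-(m+1)+1}=2^{-m}\le\epsilon_{\rm cov}$ on the escape cell $\widetilde{\calX}_{h,m+1}$ finishes the job once a badly-covered feature is shown to lie in that cell.

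The problem is Step~2, and the bridge you propose does not hold: the rounds of \textsc{CoverTraj} collect disjoint datasets, so there is no Loewner ordering $\widetilde{\Lambda}_{h,i}\succeq\widetilde{\Lambda}_{h,j}$ between round-wise covariances, and the ``parallel argument'' for $j>i$ is not an argument. The good news is that the gap is avoidable. What \pref{thm:wagenmaker_cover_traj} does give, via the partition property, is that $\phi\in\widetilde{\calX}_{h,m+1}$ exactly when $\phi\notin\widetilde{\calX}_{h,j}$ for \emph{every} $j\in[m]$, and the first guarantee (contrapositive) shows $\norm{\phi}_{\widetilde{\Lambda}_{h,j}^{-1}}>\rho_j$ forces $\phi\notin\widetilde{\calX}_{h,j}$. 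Hence
\[
\bigcap_{j=1}^{m}\Bigl\{\phi:\ \norm{\phi}_{\widetilde{\Lambda}_{h,j}^{-1}}>\rho_m\Bigr\}\subseteq\bigcap_{j=1}^{m}\Bigl\{\phi:\ \norm{\phi}_{\widetilde{\Lambda}_{h,j}^{-1}}>\rho_j\Bigr\}\subseteq\widetilde{\calX}_{h,m+1},
\]
which, combined with your Steps~1 and~3, proves the version of the lemma in which the quantifier over $i$ sits \emph{inside} the event. That weaker version is all the paper ever uses: in \pref{lem:goodevent_rfw} one has $\Lambda_h\succeq\widetilde{\Lambda}_{h,i}$ for every $i$, so $\norm{\phi}_{\Lambda_h^{-1}}>\rho_m$ already implies $\norm{\phi}_{\widetilde{\Lambda}_{h,i}^{-1}}>\rho_m$ for all $i$ simultaneously. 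The per-fixed-$i$ statement as literally written cannot be derived from the quoted guarantees alone (nothing in \pref{thm:wagenmaker_cover_traj} excludes a feature that is covered at round $1$ but missed by round $i$'s data while remaining highly reachable), so one must either restate the lemma with the quantifier inside the indicator, or import the needed cross-round coverage property explicitly from \cite{wagenmaker2022reward}.
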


\begin{lemma}
\label{lem:goodevent_rfw}
	For linear MDPs, with inputs $\delta\in(0,1)$, $\rho>0$, $\epsilon_{\rm cov}>0$, \pref{alg:reward_free} will terminate in $T=\widetilde{\Theta}\left( \frac{dH/\rho^2 + d^4 H^4}{\epsilon_{\rm cov}} \polylog\left(\frac{1}{\delta}, \frac{1}{\rho}, \frac{1}{\epsilon_{\rm cov}}, d, H\right)\right)$ episodes, and output 
    $H$ datasets $\{\calD_{h}\}_{h=1}^H$ where $\calD_h\subset \calS_h\times \calA\times \calS_{h+1}$ such that with probability $\geq 1-\delta$, 
	\begin{align*}
		\forall h, \forall \pi, \quad 
  \sum_{s\in\calS_h} \mu^\pi(s) \ind\{s\notin \calZ_h\} \leq \epsilon_{\rm cov},
		 \text{\ \ where\ \ } 
		 \calZ_h \triangleq \left\{s\in \calS_h:~~ \forall a\in\calA,\ \  \norm{\phi(s, a)}_{\Lambda_{h}^{-1}}\leq \rho\right\} 
	\end{align*} 
with
    $
         \Lambda_h \triangleq I + \sum_{(s,a,s')\in \calD_h} \phi(s,a)\phi(s,a)^\top.  
   $
\end{lemma}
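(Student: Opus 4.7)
The plan is to apply the two building blocks provided by \cite{wagenmaker2022reward} and restated here as \pref{thm:wagenmaker_cover_traj} and \pref{lem:unknwon_states_weight}, and then glue them together by a simple covariance-monotonicity argument together with a union bound over layers.

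First, I would condition on the high-probability events returned by the $H$ invocations of $\textsc{CoverTraj}$. Each call is run with failure probability $\delta/H$, so a union bound over $h\in[H]$ gives that with probability at least $1-\delta$ the guarantees of \pref{thm:wagenmaker_cover_traj} (and hence of \pref{lem:unknwon_states_weight}) hold simultaneously for every layer. Recall we set $m=\lceil \log(1/\epsilon_{\rm cov})\rceil$ and $\rho_1=\cdots=\rho_m=\rho$, so in particular $\rho_m=\rho$.

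The key structural observation is that the aggregate covariance satisfies
\[
\Lambda_h = I + \sum_{i=1}^m \sum_{(s,a,s')\in\widetilde\calD_{h,i}}\phi(s,a)\phi(s,a)^\top \succeq \widetilde\Lambda_{h,i},\qquad \forall i\in[m],
\]
since $\calD_h=\bigcup_i\widetilde\calD_{h,i}$ and each $\widetilde\Lambda_{h,i}=I+\sum_{(s,a,s')\in\widetilde\calD_{h,i}}\phi(s,a)\phi(s,a)^\top$. Consequently $\Lambda_h^{-1}\preceq \widetilde\Lambda_{h,i}^{-1}$ and therefore $\|\phi(s,a)\|_{\Lambda_h^{-1}}\leq \|\phi(s,a)\|_{\widetilde\Lambda_{h,i}^{-1}}$ for every $(s,a)$. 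In particular, for any fixed $i\in[m]$, if $s\notin \calZ_h$ (i.e.\ there exists $a$ with $\|\phi(s,a)\|_{\Lambda_h^{-1}}>\rho$), then the same action satisfies $\|\phi(s,a)\|_{\widetilde\Lambda_{h,i}^{-1}}>\rho=\rho_m$. This gives the pointwise domination
\[
\ind\{s\notin\calZ_h\}\ \leq\ \ind\bigl\{\exists a:\ \|\phi(s,a)\|_{\widetilde\Lambda_{h,i}^{-1}}>\rho_m\bigr\}.
\]
Multiplying by $\mu^\pi(s)$ and summing over $s\in\calS_h$, \pref{lem:unknwon_states_weight} immediately yields $\sum_{s\in\calS_h}\mu^\pi(s)\ind\{s\notin\calZ_h\}\leq \epsilon_{\rm cov}$ for every $\pi$ and $h$, which is the desired conclusion.

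For the episode complexity, I would just add up the $T_{\max}$ bound from \pref{thm:wagenmaker_cover_traj} over the $H$ layer-wise calls. Plugging $\rho_i=\rho$ and $m=\lceil\log(1/\epsilon_{\rm cov})\rceil$ gives $\sum_{i=1}^m 2^i=\Theta(2^m)=\widetilde\Theta(1/\epsilon_{\rm cov})$, so each call costs $\widetilde{O}\bigl((d/\rho^2 + d^4H^3)\,\polylog(\cdot)/\epsilon_{\rm cov}\bigr)$ episodes; multiplying by $H$ layers yields the claimed $\widetilde\Theta\bigl((dH/\rho^2 + d^4H^4)/\epsilon_{\rm cov}\bigr)$ total. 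The only mild subtlety is keeping track of the polylogarithmic factors coming from both $m$ and the $\log(2^i/\rho_i^2)$ inside $T_{\max}$, but these collapse into the $\polylog(1/\delta,1/\rho,1/\epsilon_{\rm cov},d,H)$ factor in the statement. There is no real obstacle here beyond correctly invoking the cited results; the main conceptual step is the monotonicity $\Lambda_h\succeq\widetilde\Lambda_{h,i}$ that translates the per-index guarantee of \pref{lem:unknwon_states_weight} into a statement about the aggregate $\Lambda_h$.
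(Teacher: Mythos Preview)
Your proposal is correct and follows essentially the same approach as the paper: both invoke \pref{lem:unknwon_states_weight} together with the monotonicity $\widetilde\Lambda_{h,i}\preceq\Lambda_h$ to transfer the per-index reachability bound to $\Lambda_h$, and both obtain the episode count by summing the $T_{\max}$ bound of \pref{thm:wagenmaker_cover_traj} over the $H$ layers with $\rho_i=\rho$ and $m=\lceil\log(1/\epsilon_{\rm cov})\rceil$.
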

\begin{proof}[Proof of \pref{lem:goodevent_rfw}]
	Let  $T_h$ denote the number of episodes run by $\textsc{CoverTraj}$, by \pref{thm:wagenmaker_cover_traj}, 
	\begin{align*}
		T_h &\leq  	
			C \sum_{i=1}^m 2^i \max\left\{
				\frac{d}{\rho_i^2}\log \frac{2^i}{\rho_i^2}, 
				d^4 H^3 m^3  \log^{7/2}\frac1\delta
			\right\} \\
   &\le \otil\left(m2^m\left(\frac{d}{\rho^2}\log \left(\frac{2^m}{\rho^2}\right) + d^4H^3m^3 \log^{7/2}\frac{1}{\delta}\right)\right) \\
   &\le \otil\left( \frac{d/\rho^2 + d^4 H^3}{\epsilon_{\rm cov}} \polylog\left(\frac{1}{\delta}, \frac{1}{\epsilon_{\rm cov}}, \frac{1}{\rho}, d, H\right) \right). 
	\end{align*}
	Given that \pref{alg:reward_free} executes \textsc{CoverTraj}~$H$ times, the claim follows.
	For the claim on the un-reachability of $\calS_h \setminus \calZ_h$,
	fix $h\in [H]$, and observe that by \pref{lem:unknwon_states_weight}, w.p. $1-\delta/H$, for any $\pi$;
	\begin{align*}
		\sum_{s\in\calS_h} \mu^\pi(s)\ind\left\{ \exists a \text{ s.t. } \norm{\phi(s, a)}_{ \Lambda_{h}^{-1}} > \rho_m\right\}
		\leq \epsilon_{\rm cov},
	\end{align*} 
	where in the inequality we use that $\widetilde \Lambda_{h, i} \preceq \Lambda_{h}$.
	The proof is complete by a union bound over $h$.
\end{proof}

\section{Omitted Details in \pref{sec:efficient}}
\label{app:efficient}


We will be using several additional notations in the analysis. 

\begin{definition}[$\mu^\pi_h$, $\mu^k_h$, $\mu^\star_h$] Define $\mu^\pi_h(s)=\mu^\pi(s)\mathbb{I}\{s\in\calS_h\}$. By the definition of $\mu^\pi(s)$, we know that $\mu^\pi_h$ is a distribution over $\calS$ that is supported on $\calS_h$. Define $\mu^k_h=\mu^{\pi_k}_h$ and $\mu^\star_h=\mu^{\pi^\star}_h$. 
\end{definition}

\begin{definition}[$T^\pi_h$, $\E_h^{\pi}$, $\E_h^\star$] \label{def: E h pistar}
    We define $T_{h}^{\pi}$ be the distribution over trajectories $\{(s_i, a_i)\}_{i=1}^h$ for the first $h$ steps generated by policy $\pi$ and transition $P$. Then we define 
    $$\E_h^{\pi} \left[\cdot\right] = \E_{(s_i, a_i)_{i=1}^{h-1} \sim T_{h-1}^{\pi}}\E_{s \sim P(\cdot \mid s_{h-1}, a_{h-1})}\left[\cdot\right],$$
    where $[\cdot]$ can be a function of $(s_1, a_1, \ldots, s_{h-1}, a_{h-1}, s)$. 

    In the analysis, we will mainly consider the optimal policy $\pi^\star$. For notation simplicity, we write  $\E_{h}^\star\left[\cdot\right]=\E_{h}^{\pi^\star}\left[\cdot\right]$. 
\end{definition}

\begin{definition}[Good trajectory]
    For any trajectory $t = \{(s_h, a_h, s_{h+1})\}_{h=i}^j$ where $1 \le i \le j \le H$, if $s_h \in \calZ_h$ for any $h$, then we say $t$ is a \textit{good trajectory}. 
\end{definition}

\begin{definition}[$Q_k$]
    Define $Q_k(s,a)=Q^{\pi_k}(s,a;\ell_k)$. 
\end{definition}


\subsection{Regret Decomposition and Dilated Bonus Lemma}

\begin{lemma}
For any trajectory $t = \{(s_h, a_h, s_{h+1})\}_{h=i}^{j}$ with $1 \le i \le j \le H$ generated by any policy, we have
\begin{equation*}
    {\rm Pr}\left(t \text{ is not a  good trajectroy}\right) \le HK^{-\frac{1}{4}}
\end{equation*}
\label{lem:Pr good}
\end{lemma}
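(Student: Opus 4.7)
The plan is to reduce the claim to a direct application of the guarantee of the initial pure exploration phase, \pref{lem:goodevent_rfw}, combined with a union bound over layers. Recall that \pref{lem:goodevent_rfw} states that, with high probability (on the event generated during the initial exploration), for every layer $h\in[H]$ and every policy $\pi$,
\begin{equation*}
\sum_{s\in\calS_h}\mu^\pi(s)\,\mathbb{I}\{s\notin \calZ_h\}\leq \epsilon_{\rm cov}.
\end{equation*}
Since \pref{alg: logdet FTRL} instantiates \pref{alg:reward_free} with $\epsilon_{\rm cov}=K^{-\nicefrac{1}{4}}$, this bound reads $\Pr(s_h\notin \calZ_h)\leq K^{-\nicefrac{1}{4}}$ for any (stochastic) policy $\pi$ executed afterwards, where the probability is taken over the randomness of the trajectory under $\pi$ on the true MDP.

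Now I would just take a union bound. For the trajectory $t=\{(s_h,a_h,s_{h+1})\}_{h=i}^{j}$ generated by any policy, ``$t$ is not a good trajectory'' is by definition the event $\bigcup_{h=i}^{j}\{s_h\notin \calZ_h\}$. Each event has marginal probability at most $K^{-\nicefrac{1}{4}}$ by the previous paragraph (this holds layer by layer because $\mu^\pi_h$ is the marginal distribution of $s_h$ regardless of what the policy does on earlier or later layers). Summing over at most $H$ layers gives
\begin{equation*}
\Pr(t\text{ is not a good trajectory}) \le (j-i+1)\cdot K^{-\nicefrac{1}{4}} \le H K^{-\nicefrac{1}{4}},
\end{equation*}
as desired. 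The only subtlety worth being explicit about is that \pref{lem:goodevent_rfw} is stated uniformly over all policies $\pi$, so it applies to the (possibly time-varying) policy used to generate $t$ as well; this is why no additional adaptivity argument is needed and the proof is essentially immediate.
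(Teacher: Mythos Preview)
Your proof is correct and essentially identical to the paper's: both invoke \pref{lem:goodevent_rfw} with $\epsilon_{\rm cov}=K^{-1/4}$ to bound $\Pr(s_h\notin\calZ_h)$ for each layer, then apply a union bound over at most $H$ layers.
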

\begin{proof}
From \pref{lem:goodevent_rfw}, since we choose $\epsilon_{cov} = K^{-\frac{1}{4}}$, for any $h$ and $s_h$ generated by any policy, we have $P\left(t \notin \calZ_h \right) \le K^{-\frac{1}{4}}$. By union bound, we have 
\begin{equation*}
 \Pr\left(t \text{ is not a good trajectory}\right) = \Pr\left(\bigcup_{ i \le h \le j} s_h \notin \calZ_h \right) \le HK^{-\frac{1}{4}}
\end{equation*}
\end{proof}

In the regret decomposition below, we use the notation $\E^\star_h[\cdot]$ defined in \pref{def: E h pistar} to denote the expectation over trajectories $(s_1, a_1, \ldots, s_{h-1}, a_{h-1}, s_h=s)$ drawn from $\pi^\star$, and use $\calE_h$ to denote  the event that $\forall h' \le h, s_{h'} \in \calZ_{h'}$. By \pref{lem:Pr good}, we have $\E^\star_h[\ind\{\calE_h\}] \ge 1-HK^{-\frac{1}{4}}$ for any $h$.
By performance difference lemma \citep{kakade2002approximately}, we have
\begin{align}
	&\E\left[\calR_K\right] \nonumber
		\\&=\E\left[\sum_{k=1}^K \sum_{h=1}^H \E_{s \sim \mu_h^\star}
			\left[ \left\langle Q_{k}(s, \cdot), 
				\pi_k(\cdot|s) - \pi^\star(\cdot|s) \right\rangle \right]\right] \nonumber
        \\&= \E\left[\sum_{k=1}^K \sum_{h=1}^H \E_h^\star
			\left[ \left\langle Q_{k}(s, \cdot), 
				\pi_k(\cdot|s) - \pi^\star(\cdot|s) \right\rangle \right]\right] \nonumber
		\\&=\E\left[\sum_{k=1}^K \sum_{h=1}^H \E_h^\star
			\left[ \left\langle Q_{k}(s, \cdot), 
				\pi_k(\cdot|s) - \pi^\star(\cdot|s) \right\rangle \ind\{\calE_h\}\right]\right] + \E\left[\sum_{k=1}^K \sum_{h=1}^H \E_h^\star
			\left[ \left\langle Q_{k}(s, \cdot), 
				\pi_k(\cdot|s) - \pi^\star(\cdot|s) \right\rangle\ind\{\overline{\calE_h}\}\right] \right] \nonumber
        \\&\le \underbrace{\E\left[\sum_{k=1}^K \sum_{h=1}^H \E_h^\star
			\left[ \left\langle Q_{k}(s, \cdot), 
				\pi_k(\cdot|s) - \pi^\star(\cdot|s) \right\rangle \ind\{\calE_h\}\right]\right]}_{\regterm} + H^3K^{\frac{3}{4}}
\label{eqn:regret decomposition}
\end{align}
where the last step comes from \pref{lem:Pr good} and $Q_{k}(s,a) \le H$ for any $k,h,s,a$.

To handle \regterm, we utilize the dilated bonus technique proposed in \cite{luo2021policy}. We summarize the technique in \pref{lem:dilated bonus guarantee appendix}, with slight modification to make it align with our settings.

\begin{lemma}[Adaptation of Lemma 3.1 in \cite{luo2021policy}]
Suppose that for some bonus functions $b_{k}(s,a)$, $B_k(s,a)$ and some constants $f, g$, we have for all $s\in\calS_h$, 
\begin{equation}
B_{k}(s,a) \ge b_{k}(s,a) + \left(1 + \frac{1}{H}\right)\E_{s' \sim P(\cdot|s,a)}\E_{a' \sim \pi_k(\cdot|s')}\left[B_{k}(s', a')\ind\{s' \in \calZ_{h+1}\}\right] - f, 
\label{eqn:B condition}
\end{equation}
and suppose that our algorithm guarantees
\begin{align}
    &\E\left[\sum_{k=1}^K\sum_{h=1}^H\E_h^\star\left[\left\langle Q_{k}(s,\cdot) - B_{k}(s,a), \pi_k(\cdot|s) - \pi^\star(\cdot|s)\right\rangle\ind\{\calE_h\}\right] \right] \nonumber
    \\&\le g + \E\left[\sum_{k=1}^K\sum_{h=1}^H\E_h^\star\E_{a \sim \pi^\star(\cdot|s)}\left[b_{k}(s,a)\ind\{\calE_h\}\right]\right]  + \frac{1}{H}\E\left[\sum_{k=1}^K\sum_{h=1}^H\E_h^\star\E_{a \sim \pi_k(\cdot|s)}\left[B_{k}(s,a)\ind\{\calE_h\}\right]\right].
\label{eqn:regret condition}
\end{align}
Then, we have (recall the \regterm defined in the proof of \pref{eqn:regret decomposition})
\begin{align*}
    \regterm \le g + fHK + \left(1+\frac{1}{H}\right)\E\left[\sum_{k=1}^K\E_{a \sim \pi_k(\cdot|s_1)}\left[B_{k}(s_1,a)\right] \right]. 
\end{align*}
\label{lem:dilated bonus guarantee appendix}
\end{lemma}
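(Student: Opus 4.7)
The plan mirrors \cite{luo2021policy}'s dilated bonus argument, with the extra twist of propagating the good-event indicator $\ind\{\calE_h\}$ across layers. First, add and subtract $B_k(s,\cdot)$ inside the inner product in \regterm\ to split
\[
\regterm = \E\Bigg[\sum_{k,h}\E_h^\star\big[\langle Q_k - B_k, \pi_k - \pi^\star\rangle\ind\{\calE_h\}\big]\Bigg] + \E\Bigg[\sum_{k,h}\E_h^\star\big[\langle B_k, \pi_k - \pi^\star\rangle\ind\{\calE_h\}\big]\Bigg].
\]
The first summand is bounded directly by the hypothesis \pref{eqn:regret condition}, producing $g$ plus two contributions that I abbreviate as $(A):=\E[\sum_{k,h}\E_h^\star\E_{a\sim\pi^\star}[b_k\ind\{\calE_h\}]]$ and $(B):=\tfrac{1}{H}\E[\sum_{k,h}\E_h^\star\E_{a\sim\pi_k}[B_k\ind\{\calE_h\}]]$. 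Introducing $U_k(s):=\E_{a\sim\pi_k(\cdot|s)}[B_k(s,a)]$ and $V_k(s):=\E_{a\sim\pi^\star(\cdot|s)}[B_k(s,a)]$, the second summand equals $(C):=\E[\sum_{k,h}\E_h^\star[(U_k-V_k)\ind\{\calE_h\}]]$.

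Next, apply \pref{eqn:B condition} rearranged as $b_k(s,a)\le B_k(s,a)-(1+\tfrac{1}{H})\E_{s'\sim P(\cdot|s,a)}[U_k(s')\ind\{s'\in\calZ_{h+1}\}]+f$ and average under $a\sim\pi^\star(\cdot|s)$. The crucial identity is
\[
\E_h^\star\Big[\ind\{\calE_h\}\E_{a\sim\pi^\star}\E_{s'\sim P}\big[g(s')\ind\{s'\in\calZ_{h+1}\}\big]\Big] = \E_{h+1}^\star\big[g(s_{h+1})\ind\{\calE_{h+1}\}\big],
\]
which uses $\calE_{h+1}=\calE_h\cap\{s_{h+1}\in\calZ_{h+1}\}$ together with the fact that $\pi^\star$ composed with $P$ extends the trajectory by one layer. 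This yields $(A)\le \sum_{k,h}\E_h^\star[V_k\ind\{\calE_h\}]-(1+\tfrac{1}{H})\sum_{k,h}\E_{h+1}^\star[U_k\ind\{\calE_{h+1}\}]+fHK$. Adding $(B)$ and $(C)$ makes the $V_k$ contributions cancel, leaving $(1+\tfrac{1}{H})\sum_{k,h}\big(\E_h^\star[U_k\ind\{\calE_h\}]-\E_{h+1}^\star[U_k\ind\{\calE_{h+1}\}]\big)+fHK$.

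This telescopes in $h$, leaving only the boundary terms $(1+\tfrac{1}{H})\sum_k\big(\E_1^\star[U_k(s_1)\ind\{\calE_1\}]-\E_{H+1}^\star[U_k(s_{H+1})\ind\{\calE_{H+1}\}]\big)$. The layer-$(H+1)$ boundary vanishes because $B_k$ is defined only up to layer $H$, and the layer-$1$ boundary simplifies to $\sum_k \E_{a\sim\pi_k(\cdot|s_1)}[B_k(s_1,a)]$ since $s_1$ is deterministic and trivially in $\calZ_1$. Plugging back gives the claimed bound. The main obstacle is correctly threading the indicator $\ind\{\calE_h\}$ through the induction: the factor $\ind\{s'\in\calZ_{h+1}\}$ baked into \pref{eqn:B condition} is exactly what lets $\ind\{\calE_h\}\cdot\ind\{s_{h+1}\in\calZ_{h+1}\}$ fuse into $\ind\{\calE_{h+1}\}$, which is what makes the telescope close. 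Without restricting to the known set $\calZ_{h+1}$ in the dilation, the change of measure from $\pi^\star$ to $\pi_k$ at the next layer could in principle pick up unbounded bonus mass on under-explored states, and the $1/H$ dilation trick would fail.
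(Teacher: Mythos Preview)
Your proposal is correct and follows essentially the same route as the paper: you split off $B_k$ from $\regterm$, apply \pref{eqn:regret condition}, then use \pref{eqn:B condition} (rearranged to upper-bound $b_k$) together with the layer-advancing identity $\E_h^\star[\ind\{\calE_h\}\E_{a\sim\pi^\star}\E_{s'\sim P}[\,\cdot\,\ind\{s'\in\calZ_{h+1}\}]]=\E_{h+1}^\star[\,\cdot\,\ind\{\calE_{h+1}\}]$ to obtain a telescoping sum in $h$. The paper does exactly this, just inlining the algebra instead of naming the pieces $(A),(B),(C),U_k,V_k$; your handling of the boundary terms ($s_1\in\calZ_1$, layer $H{+}1$ vacuous) matches the paper's as well.
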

\allowdisplaybreaks
\begin{proof}
Notice that for any function $X$ of $(s_1, a_1,\ldots, $ $s_{H}, a_H)$, it holds that
\begin{equation}
\E_h^\star\E_{a \sim \pi^\star(\cdot|s)}\E_{s' \sim P(\cdot|s,a)}\left[X \ind\{\calE_h\}\ind\{s' \in \calZ_{h+1}\}\right] = \E_{h+1}^\star\left[X\ind\{\calE_{h+1}\}\right].
\label{eqn:expectation eq appendix}
\end{equation}
By the definition of \regterm, we have
\begin{align*}
&\regterm
\\&= \E\left[\sum_{k=1}^K\sum_{h=1}^H\E_h^\star\left[\left\langle Q_{k}(s,\cdot), \pi_k(\cdot|s) - \pi^\star(\cdot|s)\right\rangle\ind\{\calE_h\}\right] \right]
    \\&\le g + \E\left[\sum_{k=1}^K\sum_{h=1}^H\E_h^\star\E_{a \sim \pi^\star(\cdot|s)}\left[b_{k}(s,a)\ind\{\calE_h\}\right]\right]  + \frac{1}{H}\E\left[\sum_{k=1}^K\sum_{h=1}^H\E_h^\star\E_{a \sim \pi_k(\cdot|s)}\left[B_{k}(s,a)\ind\{\calE_h\}\right] \right]
    \\&\quad + \E\left[\sum_{k=1}^K\sum_{h=1}^H\E_h^\star\left[\left\langle  B_{k}(s,\cdot), \pi_k(\cdot|s) \right\rangle\ind\{\calE_h\}\right] \right] - \E\left[\sum_{k=1}^K\sum_{h=1}^H\E_h^\star\left[\left\langle  B_{k}(s,\cdot), \pi^\star(\cdot|s)\right\rangle\ind\{\calE_h\}\right] \right]
    \tag{by \pref{eqn:regret condition}}
    \\&\le  g + fHK + \E\left[\sum_{k=1}^K\sum_{h=1}^H\E_h^\star\E_{a \sim \pi^\star(\cdot|s)}\left[b_{k}(s,a)\ind\{\calE_h\}\right]\right]  
    \\&\quad + \left(1+\frac{1}{H}\right)\E\left[\sum_{k=1}^K\sum_{h=1}^H\E_h^\star\E_{a \sim \pi_k(\cdot|s)}\left[B_{k}(s,a)\ind\{\calE_h\}\right] \right] - \E\left[\sum_{k=1}^K\sum_{h=1}^H\E_h^\star\E_{a \sim \pi^\star(\cdot|s)}\left[b_{k}(s,a)\ind\{\calE_h\}\right] \right]
    \\&\qquad -  \left(1 + \frac{1}{H}\right)\E\left[\sum_{k=1}^K\sum_{h=1}^H\E_h^\star\E_{a \sim \pi^\star(\cdot|s)}\E_{s' \sim P(\cdot|s,a)}\E_{a' \sim \pi_k(\cdot|s')}\left[B_{k}(s', a')\ind\{\calE_h\}\ind\{s' \in \calZ_{h+1}\}\right]\right]\tag{by \pref{eqn:B condition}}
    \\&=  g + fHK + \left(1+\frac{1}{H}\right)\E\left[\sum_{k=1}^K\sum_{h=1}^H\E_{h}^\star\E_{a \sim \pi_k(\cdot|s)}\left[B_{k}(s,a)\ind\{\calE_h\}\right] \right] 
    \\&\quad- \left(1+\frac{1}{H}\right)\E\left[\sum_{k=1}^K\sum_{h=1}^H\E_{h+1}^\star\E_{a \sim \pi^\star(\cdot|s)}\left[B_{k}(s,a)\ind\{\calE_{h+1}\}\right] \right] \tag{by \pref{eqn:expectation eq appendix}}
    \\&=  g + fHK + \left(1+\frac{1}{H}\right)\E\left[\sum_{k=1}^K \E^\star_1 \E_{a \sim \pi_k(\cdot|s)}\left[B_{k}(s,a)\ind\{\calE_1\}\right] \right] \tag{telescoping}\\
    &= g + fHK + \left(1+\frac{1}{H}\right)\E\left[\sum_{k=1}^K \E_{a \sim \pi_k(\cdot|s_1)}\left[B_{k}(s_1,a)\right] \right].   \tag{$\calS_1=\{s_1\}$ and $s_1\in \calZ_1$} 
\end{align*}
\end{proof}
In the following \pref{app: show bonus part} and \pref{app: regret analysis Q-B}, we aim to show that our \pref{alg: logdet FTRL} and \pref{alg: dilated bonus} could induce bonus functions $b_k(s,a), B_{k}(s,a)$ that satisfy the condition of \pref{lem:dilated bonus guarantee appendix}. This allows us to directly apply it and get the desired regret bound in \pref{app: combining }. Our choices of $B_k(s,a)$ and $b_k(s,a)$ are the following: 
\begin{framed}
For $s\in\calS_h$, $a\in\calA$, 
   \begin{align}
      b_{k}(s,a) &= \beta\|\phi(s,a)\|_{\hatSigma^{-1}_{k,h}}^2 + \left(1-\frac{1}{4H}\right)\alpha \|\phi(s,a)\|_{\Lambda_{k, h}^{-1}}^2 
\label{eqn: b define app}  \\
B_{k}(s,a) &= b_k(s,a) + \phi(s, a)^\top w_{k,h} \label{eqn: B define app}
\end{align}
where 
\begin{align}
    w_{k,h} = \left(1+\frac{1}{H}\right)\sum_{s' \in \calS_{h+1}}\psi(s')\widehat{W}_{k}(s')\ind\{s' \in \calZ_{h+1}\} \qquad (w_{k,H}\triangleq 0)
\label{eqn:true w}
\end{align}
with the $\widehat{W}_{k}(s')$ defined in \pref{alg: dilated bonus}. 
\end{framed}

\subsection{Construction of Dilated Bonus (achieving \pref{eqn:B condition} using \pref{alg: dilated bonus})}\label{app: show bonus part}
In the linear regression (\pref{line: regression}) of \pref{alg: dilated bonus}, the $\widehat{w}_{k,h}$ is an estimation of $w_{k,h}$ defined in \pref{eqn:true w}, 
where for $s' \in \calS_{h+1}$, 
\begin{align}
   &\widehat{W}_{k}(s')  =\E_{a'\sim\pi_k(\cdot|s')}\left[\left[\beta\|\phi(s',a')\|_{\hatSigma^{-1}_{k,h+1}}^2 +   \alpha \|\phi(s',a')\|_{\Lambda_{k, h+1}^{-1}}^2 + \phi(s', a')^\top \widehat{w}_{k,h+1}\right]^+\right],   \label{eq: What def}
\end{align}
with $[x]^+$ denoting $\max\{x, 0\}$. 
 

The next \pref{lem: uniform w concen} is a key lemma that 1) bounds the error between $\widehat{w}_{k,h}$ and $w_{k,h}$, and 2) bounds the magnitude of $\widehat{w}_{k,h}$ and $w_{k,h}$ for all $h\in[H]$. 
\begin{lemma}\label{lem: uniform w concen}
Let $\con=15\sqrt{\log\left(\frac{12dK}{\delta}\right)}$ and suppose that \highlight{$B_h^{\max}\leq \frac{\alpha}{\con^2 Hd^2}$}. Then with probability at least $1-\delta$, the following inequalities hold for all $k\in[K]$, $h\in[H]$, and all $s\in\calS_h$:  
\begin{align}
    &\|w_{k,h}\|_2  \leq  \sqrt{d}B_h^{\max}, \label{eq: induct1}
    \\
    &\left|\phi(s,a)^\top \widehat{w}_{k,h} - \phi(s,a)^\top w_{k,h} \right| \le \con dB_h^{\max}\|\phi(s,a)\|_{\Lambda_{k,h}^{-1}}, \label{eq: induct2} 
    \\
    &|\phi(s,a)^\top \widehat{w}_{k,h}|\ind\{s \in \calZ_{h}\} \le \left(1+\frac{1}{2H}\right)B_{h}^{\max}.  \label{eq: induct3}
\end{align}
\end{lemma}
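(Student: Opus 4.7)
The plan is backward induction on $h$, running from $H$ down to $1$, with the base case $h=H$ trivial since $w_{k,H}=\widehat{w}_{k,H}=0$ by convention. The first step inside the inductive stage is to establish a pointwise bound on $\widehat{W}_k(s')\ind\{s'\in\calZ_{h+1}\}$. For $s'\in\calZ_{h+1}$ we have $\|\phi(s',a')\|_{\widehat\Sigma_{k,h+1}^{-1}}^2\le 1/\gamma$ because $\widehat\Sigma_{k,h+1}\succeq \gamma I$, and $\|\phi(s',a')\|_{\Lambda_{k,h+1}^{-1}}^2\le \rho^2$ by definition of $\calZ_{h+1}$. Combining with the induction hypothesis \pref{eq: induct3} at level $h+1$ gives $\widehat{B}_k^+(s',a')\ind\{s'\in\calZ_{h+1\}} \le \beta/\gamma + \alpha\rho^2 + (1+\tfrac{1}{2H})B_{h+1}^{\max}$, and hence the same bound on $\widehat{W}_k(s')\ind\{s'\in\calZ_{h+1}\}$. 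The recursion $B_h^{\max}=4H(1+1/H)^{2(H-h+1)}(\beta/\gamma+\alpha\rho^2)$ is designed precisely so that this quantity is a small fraction of $B_h^{\max}$, which will let us close the induction.

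Bound \pref{eq: induct1} then follows by substituting this pointwise bound into the definition of $w_{k,h}$ and applying $\|\sum_{s'}|\psi(s')|\|_2\le\sqrt{d}$ from the linear-MDP assumption, yielding $\|w_{k,h}\|_2\le \sqrt{d}(1+1/H)\bigl(\beta/\gamma+\alpha\rho^2+(1+1/(2H))B_{h+1}^{\max}\bigr)\le \sqrt{d}\,B_h^{\max}$. For bound \pref{eq: induct2}, observe that by the linear MDP property $\phi(s,a)^\top w_{k,h}=(1+1/H)\,\E_{s'\sim P(\cdot|s,a)}[\widehat{W}_k(s')\ind\{s'\in\calZ_{h+1}\}]$, so the ridge regression on \pref{line: regression} targets exactly $w_{k,h}$ with residual noise bounded in magnitude by the pointwise bound from Step~1. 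A standard self-normalized deviation inequality combined with the ridge regularization bias $\|w_{k,h}\|_2\le\sqrt{d}\,B_h^{\max}$ yields $\|\widehat{w}_{k,h}-w_{k,h}\|_{\Lambda_{k,h}}\le \con\,d\,B_h^{\max}$, whence $|\phi(s,a)^\top(\widehat{w}_{k,h}-w_{k,h})|\le \con\,d\,B_h^{\max}\|\phi(s,a)\|_{\Lambda_{k,h}^{-1}}$ by Cauchy--Schwarz. Finally, bound \pref{eq: induct3} combines the previous two: for $s\in\calZ_h$, the term $|\phi(s,a)^\top w_{k,h}|$ is the conditional expectation of the nonnegative quantity $(1+1/H)\widehat{W}_k(\cdot)\ind\{\cdot\in\calZ_{h+1}\}$, hence at most $(1+1/H)(\beta/\gamma+\alpha\rho^2+(1+1/(2H))B_{h+1}^{\max})\le (1+1/(4H))B_h^{\max}$ by the recursion, and the regression-error term $\con\,d\,B_h^{\max}\|\phi(s,a)\|_{\Lambda_{k,h}^{-1}}\le \con\,d\,B_h^{\max}\rho$ is dominated by $B_h^{\max}/(4H)$ thanks to the smallness hypothesis $B_h^{\max}\le \alpha/(\con^2 H d^2)$ together with the choices of $\rho$ and $\alpha$.

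The main obstacle is the concentration step for \pref{eq: induct2}: the regression target $\widehat{W}_k$ is not a fixed function but depends on $\pi_k$, $\widehat\Sigma_{k,h+1}$, $\Lambda_{k,h+1}$, and $\widehat{w}_{k,h+1}$, all measurable in the full sample history and in general correlated with the design matrix in $\calD_{k,h}$. A direct application of Azuma- or Freedman-type bounds therefore fails. The fix is a uniform bound via an $\varepsilon$-net over the class of admissible $\widehat{W}_k$: each constituent lives in a bounded set (policies in a simplex on each state, $\widehat\Sigma_{k,h+1}^{-1}$ and $\Lambda_{k,h+1}^{-1}$ with operator norms $\le 1/\gamma$ and $1$ respectively, and $\widehat{w}_{k,h+1}$ with $\|\cdot\|_2\le \sqrt{d}\,B_{h+1}^{\max}$ by the induction hypothesis \pref{eq: induct1}). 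A covering at resolution $1/\poly(K)$ then has $\log$-size $\polylog(K,d,H,1/\delta)$, and after a union bound over the net, over $k\le K$, and over $h\le H$, this logarithmic penalty is exactly what is absorbed into the constant $\con=15\sqrt{\log(12dK/\delta)}$ and produces the extra $\sqrt{d}$ factor that promotes the naive $\sqrt{d}\,B_h^{\max}$ ridge error to the stated $\con\,d\,B_h^{\max}$ in \pref{eq: induct2}.
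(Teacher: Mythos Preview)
Your induction scheme and the overall strategy match the paper's proof. Two places where your sketch needs adjustment relative to the paper's execution:

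For the uniform concentration behind \pref{eq: induct2}, writing ``policies in a simplex on each state'' does not give a finite-dimensional class when $\calS$ is infinite, so your claimed $\polylog$ covering size would not follow from that description. The paper exploits that $\pi_k(\cdot|s)$ is the log-det FTRL output and hence is determined, \emph{simultaneously for every $s$}, by a single cumulative loss matrix in $[-K^3,K^3]^{(d+1)\times(d+1)}$; covering that matrix (their \pref{lem:logdet policy cover}, feeding into \pref{lem:cover number}) is what makes the net size $\exp(\order(d^2\log K))$ and produces the extra $d$ factor in $\con\,d\,B_h^{\max}$.

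For \pref{eq: induct3}, your direct bound $\con\,d\,B_h^{\max}\|\phi(s,a)\|_{\Lambda_{k,h}^{-1}}\le \con\,d\,B_h^{\max}\rho$ requires $\con\,d\,\rho\le 1/(4H)$, which is indeed true under the algorithm's numerical parameter choices but is \emph{not} a consequence of the lemma's stated hypothesis $B_h^{\max}\le \alpha/(\con^2Hd^2)$ (nor of the definition of $B_h^{\max}$). The paper instead applies AM--GM,
\[
\con\, d\, B_h^{\max}\|\phi(s,a)\|_{\Lambda_{k,h}^{-1}}\le \frac{(\con\, d\, B_h^{\max})^2}{4\alpha}+\alpha\|\phi(s,a)\|_{\Lambda_{k,h}^{-1}}^2,
\]
so that the first piece is $\le B_h^{\max}/(4H)$ directly from the hypothesis and the second is $\le \alpha\rho^2\le B_h^{\max}/(4H)$ from the definition of $B_h^{\max}$, keeping the proof self-contained at the lemma level.
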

\begin{proof}
We use induction to prove these three inequalities. For the base case $h=H$, we have $w_{k,H}=\mathbf{0}$ and $\widehat{w}_{k,H}=\mathbf{0}$, so all three inequalities holds.  

Suppose that all three inequalities holds for the case of $h+1$. Below, we show that that also holds for $h$. 

\paragraph{Showing \pref{eq: induct1}. }  
Observe that for any $s'\in\calS_{h+1}$, 
\begin{align}
    &\left(1+\frac{1}{H}\right) \widehat{W}_k(s')\ind\{s'\in\calZ_{h+1}\}   \nonumber 
    \\&\leq \max_{a'\in\calA}\left(1+\frac{1}{H}\right) \left(\beta\|\phi(s',a')\|_{\hatSigma^{-1}_{k,h+1}}^2 + \alpha \|\phi(s',a')\|_{\Lambda_{k, h+1}^{-1}}^2 + |\phi(s', a')^\top \widehat{w}_{k,h+1}| \right) \ind\{s' \in \calZ_{h+1}\}    \nonumber 
    \\&\leq \left(1+\frac{1}{H}\right)\left(\frac{\beta}{\gamma} + \alpha\rho^2 \right) + \left(1+\frac{1}{H}\right)\left(1+\frac{1}{2H}\right)B_{h+1}^{\max}    \tag{$\|\phi(s',a')\|_{\Lambda_{k,h+1}^{-1}}\leq \rho$ for $s'\in\calZ_{h+1}$ by \pref{alg:reward_free}; using induction hypothesis \pref{eq: induct3} for $h+1$}
    \\&\leq  \left(1+\frac{1}{H}\right) \frac{1}{2H}B^{\max}_{h+1} + \left(1+\frac{1}{H}\right)\left(1+\frac{1}{2H}\right)B_{h+1}^{\max}  \tag{by the definition of $B^{\max}_{h+1}$}
    \\&\leq \left(1+\frac{1}{H}\right)^2 B_{h+1}^{\max} \nonumber 
    \\&= B_{h}^{\max}.    \label{eq: bound on hat W}
\end{align}
Thus, 
\begin{align*}
    \|w_{k,h}\|_2 = \left\|\left(1+\frac{1}{H}\right)\sum_{s'\in\calS_{h+1}}\psi(s') \widehat{W}_k(s')\ind\{s'\in\calZ_{h+1}\}\right\|_2 \leq B_{h}^{\max} \left\|\sum_{s' \in \calS_{h+1}} \psi(s')\right\|_2\leq \sqrt{d}B_{h}^{\max}  
\end{align*}
where in the last inequality we use the linear MDP assumption (\pref{def: linear MDP}). 
\paragraph{Showing \pref{eq: induct2}. }
\begin{align}
     \left|\phi(s,a)^\top \widehat{w}_{k,h} -  \phi(s,a)^\top w_{k,h}\right| 
     \le \|\phi(s,a)\|_{\Lambda_{k,h}^{-1}}\|\widehat{w}_{k,h} -  w_{k,h}\|_{\Lambda_{k,h}}. 
     \label{eq: first expansion}
\end{align}

By \pref{lem:regression difference} and $\|w_{k,h}\| \le \sqrt{d}B_h^{\max}$ (which we just proved), it holds that 
\begin{align}
&\|\widehat{w}_{k,h} - w_{k,h}\|_{\Lambda_{k,h}}  \nonumber  
\\&\le\left\|\sum_{(s,a,s') \in \calD_{k,h}}\phi(s,a)\left(\left(1+\frac{1}{H}\right)\widehat{W}_{k}(s')\ind\{s' \in \calZ_{h+1}\} - \phi(s,a)^\top w_{k,h}\right)\right\|_{\Lambda_{k,h}^{-1}} + \sqrt{d}B_h^{\max}.
\label{eqn:w local first m}
\end{align}
By \pref{lem:uniform concen}, the first term above can be upper bounded by   
\begin{align}
\sqrt{4(B_{h}^{\max})^2\left(\frac{d}{2}\log{K} + \log{\frac{\calN_\epsilon \left(\calV_h\right)}{\delta}}\right) + 8K^2\epsilon^2}.   \label{eq: tmp bound for Nepsilon}
\end{align}
where $\calV_h$ is the function class where $\left(1+\frac{1}{H}\right)\widehat{W}_k(s')\ind\{s'\in\calZ_{h+1}\}$ lies, and $\calN_\epsilon(\calV_h)$ is its $\epsilon$-covering number. 
By the form of $\widehat{W}_k(s')$ given in \pref{eq: What def}, $\calV_h$ can be chosen as the  that defined in \pref{def: function class V}. Then by \pref{lem:cover number} with $\epsilon = \frac{1}{K}$ and $\frac{\beta}{\gamma} + 2\alpha \le K^2$, we have
\begin{align*}
\log\left(\calN_\epsilon \left(\calV_h \right)\right) \le 4(d+1)^2\log\left(400(d+1)^2K^3\right) \le 48d^2\log\left(12dK\right)
\end{align*}
Combining this with \pref{eqn:w local first m} and \pref{eq: tmp bound for Nepsilon}, we get 
\begin{align*}
\|\widehat{w}_{k,h} - w_{k,h}\|_{\Lambda_{k,h}} \le 15dB_h^{\max}\sqrt{\log\left(\frac{12dK}{\delta}\right)}. 
\end{align*}
Further combining this with \pref{eq: first expansion} proves \pref{eq: induct2}. 

\paragraph{Showing \pref{eq: induct3}. }
\begin{align*}
&\left|\phi(s,a)^\top \widehat{w}_{k,h}\right|\ind\{s \in \calZ_{h}\} 
\\&\leq \left|\phi(s,a)^\top w_{k,h}\right|\ind\{s \in \calZ_{h}\} + \left|\phi(s,a)^\top \left(\widehat{w}_{k,h} - w_{k,h}\right)\right|\ind\{s \in \calZ_{h}\} 
\\&\leq \left(1+\frac{1}{H}\right)\sup_{s'\in\calS_{h+1}} \widehat{W}_k(s')\ind\{s'\in\calZ_{h+1}\} + \con dB_h^{\max}\|\phi(s,a)\|_{\Lambda_{k,h}^{-1}} \tag{by the definition of $w_{k,h}$ and \pref{eq: induct2}}
\\&\le B_h^{\max} + \left(\frac{(\con d B_h^{\max})^2}{4\alpha} + \alpha\|\phi(s,a)\|_{\Lambda_{k,h}^{-1}}^2\right) \ind\{s\in\calZ_h\} \tag{by \pref{eq: bound on hat W} and AM-GM inequality} 
\\& \leq B_h^{\max} + \left( \frac{1}{4H}B^{\max}_h + \alpha\rho^2\right) \tag{by the condition specified in the lemma and that $\|\phi(s,a)\|_{\Lambda_{k,h}^{-1}}\leq \rho$ for $s\in\calZ_h$}
\\& \leq B_h^{\max} + \frac{1}{2H}B^{\max}_h\tag{by the definition of $B_h^{\max}$} 
\end{align*}
This proves \pref{eq: induct3}.  
\end{proof}

\begin{lemma}With the definition of \pref{eqn: b define app} and \pref{eqn: B define app}, any $s\in \calS_h$, we have
\begin{align*}
B_{k}(s, a) &\ge b_{k}(s,a) + \left(1+\frac{1}{H}\right) \E_{s' \sim P(\cdot|s,a)}\E_{a' \sim \pi_k(\cdot|s')}\left[B_{k}(s',a') \ind\{s' \in \calZ_{h+1}\}\right] - \frac{(\con dB^{\max})^2}{\alpha}.
\end{align*}
where $B^{\max}\triangleq \max_{h\in[H]} B_h^{\max}$ and $\con$ is a logarithmic term defined in \pref{lem: uniform w concen}.  
\label{lem: B condition satisfy}
\end{lemma}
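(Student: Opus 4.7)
The plan is to exploit the linear MDP structure to rewrite both sides of the claimed inequality as expectations over the true next-state distribution, so that a pointwise (in $s',a'$) lower bound on the integrand transfers directly to the target inequality. Since the $b_k(s,a)$ term appears on both sides and cancels, the real task is to lower-bound $\phi(s,a)^\top w_{k,h}$ against $(1+\tfrac{1}{H})\E_{s'\sim P(\cdot|s,a)}\E_{a'\sim \pi_k(\cdot|s')}[B_k(s',a')\ind\{s'\in\calZ_{h+1}\}]$.

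The first step is to use $P(s'|s,a)=\phi(s,a)^\top\psi(s')$ together with the definitions \pref{eqn:true w} and \pref{eq: What def} to write
\begin{align*}
    \phi(s,a)^\top w_{k,h} = \left(1+\tfrac{1}{H}\right)\E_{s'\sim P(\cdot|s,a)}\E_{a'\sim \pi_k(\cdot|s')}\!\left[\widehat{B}_k^+(s',a')\ind\{s'\in\calZ_{h+1}\}\right],
\end{align*}
where $\widehat{B}_k(s',a')$ is as defined in \pref{line: hat B def}. Because the weights $P(s'|s,a)$ are non-negative, it suffices to lower-bound $\widehat{B}_k^+(s',a')-B_k(s',a')$ pointwise. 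Here I will use $\widehat{B}_k^+\ge \widehat{B}_k$ and the identity
\begin{align*}
    \widehat{B}_k(s',a') - B_k(s',a') = \tfrac{\alpha}{4H}\|\phi(s',a')\|_{\Lambda_{k,h+1}^{-1}}^2 + \phi(s',a')^\top(\widehat{w}_{k,h+1} - w_{k,h+1}),
\end{align*}
which follows by matching the explicit forms of $B_k$ (with coefficient $1-\tfrac{1}{4H}$ on the $\alpha\|\phi\|_{\Lambda^{-1}}^2$ term) and $\widehat{B}_k$ (with coefficient $1$). The cross term is then controlled by the concentration bound \pref{eq: induct2} from \pref{lem: uniform w concen}, namely $|\phi(s',a')^\top(\widehat{w}_{k,h+1}-w_{k,h+1})|\le \con dB^{\max}\|\phi(s',a')\|_{\Lambda_{k,h+1}^{-1}}$.

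Finally I apply AM-GM in the form $xy\le \frac{x^2}{4t}+ty^2$ to $\con dB^{\max}\|\phi(s',a')\|_{\Lambda_{k,h+1}^{-1}}$, choosing $t$ so that the resulting quadratic term is exactly absorbed by the $\tfrac{\alpha}{4H}\|\phi\|_{\Lambda^{-1}}^2$ buffer in the previous display. What remains is a constant residual of order $(\con dB^{\max})^2/\alpha$, which, after multiplication by the outer $1+\tfrac{1}{H}$ and moving from the integrand to the expectation, yields the claimed slack. The main subtlety is that the $1-\tfrac{1}{4H}$ coefficient in $b_k(s,a)$ has been engineered precisely so that $\widehat{B}_k$ carries an extra $\tfrac{\alpha}{4H}\|\phi\|_{\Lambda^{-1}}^2$ buffer over $B_k$; without this buffer the AM-GM balancing would leave an uncontrolled $\|\phi\|_{\Lambda^{-1}}^2$ residual. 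Two further design choices turn out to be essential: the non-negativity of $P(s'|s,a)$ (which lets the pointwise lower bound pass into expectation) and the use of the positive part $\widehat{B}_k^+$ in \pref{line: B+} (which frees us from tracking negative contributions at $(s',a')$ where $\widehat{B}_k$ may be negative, which would otherwise reverse inequalities when multiplied by a non-negative weight).
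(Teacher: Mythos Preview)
Your proposal is correct and follows essentially the same approach as the paper's proof: rewrite $\phi(s,a)^\top w_{k,h}$ via the linear MDP structure as $(1+\tfrac{1}{H})\E_{s'\sim P(\cdot|s,a)}\E_{a'\sim\pi_k(\cdot|s')}[\widehat{B}_k^+(s',a')\ind\{s'\in\calZ_{h+1}\}]$, drop the positive part to $\widehat{B}_k$, use the identity $\widehat{B}_k(s',a')-B_k(s',a')=\tfrac{\alpha}{4H}\|\phi(s',a')\|_{\Lambda_{k,h+1}^{-1}}^2+\phi(s',a')^\top(\widehat{w}_{k,h+1}-w_{k,h+1})$, and then combine \pref{eq: induct2} with AM-GM to absorb the cross term into the $\tfrac{\alpha}{4H}$ buffer. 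Your explanation of why the $1-\tfrac{1}{4H}$ coefficient in $b_k$ is engineered to create this buffer, and why the positive part $\widehat{B}_k^+$ is needed, matches the paper's reasoning exactly (though the paper is terser).
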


\begin{proof}
Recall the definition of $w_{k,h}$ in \pref{eqn:true w}, from the definition of linear MDP, for all $k,h,s,a$, we have 
\begin{align*}
&\phi(s,a)^\top w_{k,h} 
\\&=\left(1+\frac{1}{H}\right) \E_{s' \sim P(\cdot|s,a)}\left[\widehat{W}(s')\ind\{s' \in \calZ_{h+1}\}\right]
\\&=  \left(1+\frac{1}{H}\right)\E_{s' \sim P(\cdot|s,a)}\E_{a' \sim \pi_k(\cdot| s)}\left[\widehat{B}_{k}^+(s',a')\ind\{s' \in \calZ_{h+1}\}\right] 
\\&\ge \left(1+\frac{1}{H}\right)\E_{s' \sim P(\cdot|s,a)}\E_{a' \sim \pi_k(\cdot| s)}\left[\widehat{B}_{k}(s',a')\ind\{s' \in \calZ_{h+1}\}\right] 
\\&= \left(1+\frac{1}{H}\right)\E_{s' \sim P(\cdot|s,a)}\E_{a' \sim \pi_k(\cdot| s)}\left[\left(B_{k}(s',a') + \frac{\alpha}{4H}\|\phi(s',a')\|_{\Lambda_{k,h+1}^{-1}}^2 + \phi(s',a')^\top\left(\widehat{w}_{k,h+1} - w_{k,h+1}\right)\right)\ind\{s' \in \calZ_{h+1}\}\right] \tag{by the definition of $\widehat{B}_k(s',a')$ in \pref{line: hat B def} and $B_k(s',a')$ in \pref{eqn: B define app}}
\\&\ge \left(1+\frac{1}{H}\right) \E_{s' \sim P(\cdot|s,a)}\E_{a' \sim \pi_k(\cdot|s')}\left[B_{k}(s',a') \ind\{s' \in \calZ_{h+1}\}\right] -\frac{(\con dB^{\max})^2}{\alpha} \tag{\pref{eq: induct2} and AM-GM inequlity}
\end{align*}
Thus, we have
\begin{align*}
&B_{k}(s, a) 
\\&= b_k(s,a) + \phi(s,a)^\top w_{k,h} 
\\&\ge b_k(s,a) + \left(1+\frac{1}{H}\right) \E_{s' \sim P(\cdot|s,a)}\E_{a' \sim \pi_k(\cdot|s')}\left[B_{k}(s',a') \ind\{s' \in \calZ_{h+1}\}\right]  - \frac{(\con dB^{\max})^2}{\alpha}. 
\end{align*}
\end{proof}


\subsection{Regret Analysis (achieving \pref{eqn:regret condition} using \pref{alg: logdet FTRL})} \label{app: regret analysis Q-B}
The goal of this subsection is to prove \pref{eqn:regret condition} for the definitions of $b_k(s,a)$ and $B_k(s,a)$ in \pref{eqn: b define app} and \pref{eqn: B define app}. We first decompose the left-hand side of \pref{eqn:regret condition}. 
\begin{align}
&\E\left[\sum_{k=1}^K \sum_{h=1}^H \E_h^\star\left[\left\langle Q_{k}(s,\cdot) - B_{k}(s,a), \pi_k(\cdot|s) - \pi^\star(\cdot|s)\right\rangle\ind\{\calE_h\}\right] \right] \nonumber
\\
&\le \underbrace{\E\left[\sum_{k=1}^K \sum_{h=1}^H \E_h^\star \left[ \left\langle Q_{k}(s, \cdot) - \widehat{Q}_{k}(s, \cdot), 
				\pi_k(\cdot|s) \right\rangle\ind\{\calE_h\}\right]
		\right]}_{\biasone}
		\nonumber \\
  &\qquad + \underbrace{\E\left[\sum_{k=1}^K \sum_{h=1}^H \E_h^\star\left[ \left\langle \widehat{Q}_{k}(s, \cdot) - Q_{k}(s, \cdot), 
				\pi^\star(\cdot|s) \right\rangle\ind\{\calE_h\}\right]
		\right]}_{\biastwo} \nonumber
        \\
   &\qquad + \underbrace{\E\left[\sum_{k=1}^K \sum_{h=1}^H \E_h^\star
			\left[ \left\langle \pmb{\widehat{\Gamma}}_{k,h} - \pmb{\widehat{B}}_{k,h}, 
				\pmb{H}_k(s) - \pmb{H}_{\star}(s) \right\rangle \ind\{\calE_h\}\right]
		\right]}_{\ftrl}  \nonumber  \\
  &\qquad + \underbrace{\E\left[\sum_{k=1}^K \sum_{h=1}^H \E_h^\star \left[ \left\langle \widehat{B}_{k}(s,\cdot) - B_{k}(s,\cdot), \pi_k(\cdot|s) - \pi^\star(\cdot|s) \right\rangle\ind\{\calE_h\} \right]\right]}_{\biasthree}
\label{eqn:reg-term decomposition}
\end{align}
where we use that for $s\in\calS_h$,  $\E_{a\sim \pi(\cdot|s)}\widehat{Q}_k(s,a) = \langle  \widehat{\Cov}(s, \pi(\cdot|s)), \pmb{\widehat{\Gamma}}_{k,h}\rangle$ and $\E_{a\sim \pi(\cdot|s)}\widehat{B}_k(s,a) = \langle  \widehat{\Cov}(s, \pi(\cdot|s)), \pmb{\widehat{B}}_{k,h}\rangle$, and we define $\pmb{H}_k(s) = \widehat{\Cov}(s, \pi_k(\cdot|s))$, $\pmb{H}_\star(s) = \widehat{\Cov}(s, \pi^\star(\cdot|s))$.  

We further deal with the \ftrl term. This term is analyzed through the standard FTRL analysis. 
In order to deal with the issue that $F$ can be unbounded on the boundary of $\mathcal{H}_s$, we define the following auxiliary comparator: 
\begin{align*}
\overline{\pmb{H}}_{\star}(s) = \left(1-\frac{1}{K^3}\right)\pmb{H}_{\star}(s) + \frac{1}{K^3}\pmb{H}_{\min}(s)
\end{align*}
where $\pmb{H}_{\min}(s) = \argmin\limits_{\pmb{H}\in \calH_s}F(\pmb{H})$

Applying \pref{lem:block FTRL} for logdet FTRL, we have 
\begin{align}
&\ftrl = \E\left[\sum_{k=1}^K \sum_{h=1}^H \E_h^\star
			\left[ \left\langle \pmb{\widehat{\Gamma}}_{k,h} - \pmb{\widehat{B}}_{k,h}, 
				\pmb{H}_k(s) - \pmb{H}_{\star}(s) \right\rangle \ind\{\calE_h\}\right]
		\right] \nonumber
\\&= \E\left[\sum_{k=1}^K \sum_{h=1}^H \E_h^\star
			\left[ \left\langle \pmb{\widehat{\Gamma}}_{k,h} - \pmb{\widehat{B}}_{k,h}, 
				\pmb{H}_k(s) - \overline{\pmb{H}}_{\star}(s) \right\rangle \ind\{\calE_h\}\right]
		\right]  \nonumber \\
  &\qquad + \E\left[\sum_{k=1}^K \sum_{h=1}^H \E_h^\star
			\left[ \left\langle \pmb{\widehat{\Gamma}}_{k,h} - \pmb{\widehat{B}}_{k,h}, 
				\overline{\pmb{H}}_{\star}(s) - \pmb{H}_{\star}(s) \right\rangle \ind\{\calE_h\}\right]
		\right] \nonumber
\\&\le \underbrace{\E_h^\star\left[\frac{\tau\left(F\left( \overline{\pmb{H}}_{\star}(s) \right) - \min_{\pmb{H}\in \calH_s}F(\pmb{H}) \right)}{\eta} \ind\{\calE_h\}\right]}_{\penal} \nonumber
\\&\quad + \underbrace{\E\left[\sum_{k=1}^K\sum_{h=1}^H\E_h^\star\left[ \left(\max_{\pmb{H} \in \calH_s} \langle \pmb{H}_k(s) - \pmb{H}, \pmb{\hGamma}_{k,h}  \rangle - \frac{D_F(\pmb{H}, \pmb{H}_k(s))}{2\eta}\right) \ind\{\calE_h\}\right]\right]}_{\stabone}  \nonumber
\\ &\quad \quad +  \underbrace{\E\left[\sum_{k=1}^K\sum_{h=1}^H\E_h^\star\left[ \left(\max_{\pmb{H} \in \calH_s} \langle \pmb{H}_k(s) - \pmb{H}, -\pmb{\widehat{B}}_{k,h}  \rangle - \frac{D_F(\pmb{H}, \pmb{H}_k(s))}{2\eta}\right) \ind\{\calE_h\}\right]\right]}_{\stabtwo}  \nonumber
\\&\quad \quad \quad+ \underbrace{\E\left[\sum_{k=1}^K \sum_{h=1}^H \E_h^\star
			\left[ \left\langle \pmb{\widehat{\Gamma}}_{k,h} - \pmb{\widehat{B}}_{k,h}, 
				\overline{\pmb{H}}_{\star}(s) - \pmb{H}_{\star}(s) \right\rangle \ind\{\calE_h\}\right]
		\right]}_{\error}
\label{eqn:FTRL decomposition}
\end{align}

Below, we further bound the individual terms in \pref{eqn:reg-term decomposition} and \pref{eqn:FTRL decomposition}. 

\subsubsection{Bound \biasone, \biastwo, \biasthree in \pref{eqn:reg-term decomposition}}
\begin{lemma}
For any policy $\pi_k$, there exists a ${\rm q}_{k,h}$ such that for any $s \in \calS_h$, $Q_k(s,a) = \phi(s,a)^\top{\rm q}_{k,h}$. Moreover, $\|{\rm q}_{k,h}\|_2 \le H\sqrt{d}$. 
\label{lem:q linear}
\end{lemma}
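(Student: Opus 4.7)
The plan is a straightforward backward induction on the layer index $h$, exploiting the linear MDP structure of both the instantaneous loss and the transition kernel to show that the Q-function stays linear in $\phi(s,a)$ at every layer, while simultaneously tracking the $\ell_2$-norm of the coefficient vector.

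For the base case $h=H$, the Bellman equation \pref{eq: Q-function} gives $Q_k(s,a)=\ell_k(s,a)$ since $s\in\calS_H$ is terminal, so by \pref{eq: Q assumption} I can set ${\rm q}_{k,H}=\theta_{k,H}$, which already satisfies $\|{\rm q}_{k,H}\|_2\le\sqrt{d}$ by \pref{def: linear MDP}.

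For the inductive step, assume the claim holds at layer $h+1$, so that $Q_k(s',a')=\phi(s',a')^\top {\rm q}_{k,h+1}$ for all $s'\in\calS_{h+1}$. Then $V^{\pi_k}(s';\ell_k)=\sum_{a'}\pi_k(a'|s')\phi(s',a')^\top {\rm q}_{k,h+1}$, which is a function of $s'$ only. Plugging into \pref{eq: Q-function} and using \pref{eq: P assumption}, I rewrite
\begin{align*}
Q_k(s,a)
&=\phi(s,a)^\top\theta_{k,h}+\sum_{s'\in\calS_{h+1}}\langle\phi(s,a),\psi(s')\rangle V^{\pi_k}(s';\ell_k)\\
&=\phi(s,a)^\top\Bigl(\theta_{k,h}+\sum_{s'\in\calS_{h+1}}\psi(s')\,V^{\pi_k}(s';\ell_k)\Bigr),
\end{align*}
so the natural choice is ${\rm q}_{k,h}:=\theta_{k,h}+\sum_{s'\in\calS_{h+1}}\psi(s')V^{\pi_k}(s';\ell_k)$.

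The only non-trivial part is the norm bound. Since $\ell_k\in[0,1]$ and there are at most $H-h$ remaining steps after layer $h$, I have $|V^{\pi_k}(s';\ell_k)|\le H-h$ for all $s'\in\calS_{h+1}$. Using the element-wise absolute value bound $\|\sum_{s'\in\calS_{h+1}}|\psi(s')|\|_2\le\sqrt{d}$ from \pref{def: linear MDP}, the triangle inequality applied coordinate-wise yields
\begin{align*}
\Bigl\|\sum_{s'\in\calS_{h+1}}\psi(s')V^{\pi_k}(s';\ell_k)\Bigr\|_2
\le (H-h)\,\Bigl\|\sum_{s'\in\calS_{h+1}}|\psi(s')|\Bigr\|_2
\le (H-h)\sqrt{d}.
\end{align*}
Combining with $\|\theta_{k,h}\|_2\le\sqrt{d}$ gives $\|{\rm q}_{k,h}\|_2\le (H-h+1)\sqrt{d}\le H\sqrt{d}$, closing the induction. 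The main (very mild) subtlety to be careful about is the element-wise interpretation of $|\psi(s')|$ when pulling the value function inside the norm, which is where the assumption on $\|\sum_{s'}|\psi(s')|\|_2$ (rather than $\sum_{s'}\|\psi(s')\|_2$) is used.
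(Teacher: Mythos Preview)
Your proof is correct and essentially identical to the paper's: both define ${\rm q}_{k,h}=\theta_{k,h}+\sum_{s'\in\calS_{h+1}}\psi(s')V^{\pi_k}(s';\ell_k)$ via the Bellman equation and bound its norm using $\|\theta_{k,h}\|_2\le\sqrt d$ together with $\|\sum_{s'}|\psi(s')|\|_2\le\sqrt d$ and the fact that the value function is bounded by the remaining horizon. The only cosmetic difference is that you frame it as an explicit backward induction (obtaining the slightly sharper $(H-h+1)\sqrt d$ at each layer), whereas the paper writes the argument directly without induction; the induction is in fact unnecessary since $|V^{\pi_k}(s';\ell_k)|\le H-h$ follows immediately from $\ell_k\in[0,1]$ rather than from the inductive hypothesis.
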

\begin{proof}
Define ${\rm q}_{k,h} = \theta_{k,h} + \sum_{s' \in \calS_{h+1}}\psi(s')\E_{a' \sim \pi_k(\cdot|s')}\left[Q_{k}(s',a')\right]$, we have
\begin{align*} 
Q_k(s,a) = Q^{\pi_k}(s,a;\ell_k) &= \ell_k(s,a) + \E_{s' \sim P(\cdot|s,a)}\E_{a' \sim \pi_k(\cdot|s')}\left[Q_{k}(s',a')\right]
\\&= \phi(s,a)^\top\left(\theta_{k,h} + \sum_{s' \in \calS_{h+1}}\psi(s')\E_{a' \sim \pi_k(\cdot|s')}\left[Q_{k}(s',a')\right]\right) 
\\&= \phi(s,a)^\top{\rm q}_{k,h}.
\end{align*}
Moreover, 
\begin{align*}
\|{\rm q}_{k,h}\|_2 = \left\|\theta_{k,h} + \sum_{s' \in \calS_{h+1}}\psi(s')\E_{a' \sim \pi_k(\cdot|s')}\left[Q_{k}(s',a')\right]\right\|_2 \le \sqrt{d} + \sqrt{d}(H-1) = \sqrt{d}H.
\end{align*}
\end{proof}

\begin{lemma}\label{lem:matrix concentration}
Let $\Sigma_{k,h} = \E_{s\sim \mu^{k}_h}\E_{a\sim \pi_k(\cdot|s)}\left[\phi(s,a)\phi(s,a)^\top\right]$.  If $\gamma \ge   \frac{5d\log\left(6dHK/\delta\right)}{\tau}$, then with probability of $1-\delta$, for all $k,h$, 
\begin{equation*}
    \left\| \left(\hatSigma_{k,h} - \Sigma_{k,h} \right) {\rm q}_{k,h} \right\|_{\hatSigma^{-1}_{k,h}}^2 \le \order\left( \frac{d^2H^2\log\left(dHK/\delta\right)}{\tau} \right)
\end{equation*}
\end{lemma}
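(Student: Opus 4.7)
The plan exploits the cross-splitting of $T_j=T_{j,1}\cup T_{j,2}$ built into \pref{alg: logdet FTRL}. For $k\in T_{j,1}$, $\hatSigma_{k,h}$ is constructed from the $\tau$ trajectories in $T_{j,2}$, which were generated by the \emph{same} fixed policy $\pi_k=\widetilde{\pi}_j$; conditional on the history up to the start of epoch $j$, these $\tau$ summands are i.i.d.\ rank-one matrices $\phi_i\phi_i^\top$ with mean $\Sigma_{k,h}$ and $\|\phi_i\|_2\le 1$ (and symmetrically for $k\in T_{j,2}$). This independence is what lets me apply off-the-shelf matrix concentration to $\hatSigma_{k,h}$ with ${\rm q}_{k,h}$ treated as a fixed vector of norm at most $H\sqrt{d}$ by \pref{lem:q linear}.

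First, I will invoke a multiplicative matrix Chernoff bound of the type used in \cite{liu2023bypassing} to show that, whenever $\gamma\tau\ge 5d\log(6dHK/\delta)$, with probability at least $1-\delta/(HK)$,
\begin{align*}
\tfrac{1}{2}\widetilde{\Sigma}_{k,h}\preceq \hatSigma_{k,h}\preceq \tfrac{3}{2}\widetilde{\Sigma}_{k,h},\qquad \widetilde{\Sigma}_{k,h}:=\Sigma_{k,h}+\gamma I.
\end{align*}
The crucial feature of this regime is that the regulariser $\gamma I$ absorbs the Bernstein noise, so only $\tau\gtrsim 1/\gamma$ samples are required (in contrast to $\tau\gtrsim 1/\gamma^2$ in previous policy-optimization works). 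Consequently $\hatSigma_{k,h}^{-1}\preceq 2\widetilde{\Sigma}_{k,h}^{-1}$ on the high-probability event, so it suffices to control $\|(\hatSigma_{k,h}-\Sigma_{k,h}){\rm q}_{k,h}\|_{\widetilde{\Sigma}_{k,h}^{-1}}^2$.

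Next, I decompose $(\hatSigma_{k,h}-\Sigma_{k,h}){\rm q}_{k,h} = \gamma\,{\rm q}_{k,h} + \tau^{-1}\sum_{i=1}^{\tau}Z_i$ with $Z_i:=(\phi_i\phi_i^\top-\Sigma_{k,h}){\rm q}_{k,h}$, which are i.i.d., mean-zero, and satisfy $\|Z_i\|_2\le 2\|{\rm q}_{k,h}\|_2\le 2H\sqrt{d}$. The deterministic piece gives $\|\gamma{\rm q}_{k,h}\|_{\widetilde{\Sigma}_{k,h}^{-1}}^2\le \gamma\|{\rm q}_{k,h}\|_2^2\le \gamma dH^2=\otil(d^2H^2/\tau)$ by the hypothesis on $\gamma$, already of the target order. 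For the stochastic piece, the bound $(\phi_i^\top{\rm q}_{k,h})^2\le \|{\rm q}_{k,h}\|_2^2$ yields
\begin{align*}
\E\big[\|\widetilde{\Sigma}_{k,h}^{-1/2}Z_i\|_2^2\big]\;\le\; \|{\rm q}_{k,h}\|_2^2\cdot\Tr(\widetilde{\Sigma}_{k,h}^{-1}\Sigma_{k,h})\;\le\; d\|{\rm q}_{k,h}\|_2^2\;\le\; d^2H^2,
\end{align*}
since $\widetilde{\Sigma}_{k,h}^{-1}\Sigma_{k,h}\preceq I$. A vector Bernstein (or matrix Freedman) bound in the $\widetilde{\Sigma}_{k,h}^{-1}$-norm then delivers $\|\tau^{-1}\sum_i Z_i\|_{\widetilde{\Sigma}_{k,h}^{-1}}^2=\otil(d^2H^2/\tau)$ with probability $\ge 1-\delta/(HK)$; a union bound over $k\in[K]$ and $h\in[H]$ concludes.

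The main technical obstacle is the first step, namely obtaining the \emph{multiplicative} concentration $\hatSigma_{k,h}\asymp\widetilde{\Sigma}_{k,h}$ in the aggressive sample regime $\tau\asymp 1/\gamma$; earlier analyses need $\tau$ substantially larger, slowing policy updates and worsening the final rate. A minor subtlety is that the weighting matrix $\widetilde{\Sigma}_{k,h}^{-1}$ depends on the transition through $\Sigma_{k,h}$, but it is deterministic conditional on $\widetilde{\pi}_j$, so no measurability issue arises once we condition on the history up to the start of epoch $j$ before applying the Bernstein bound.
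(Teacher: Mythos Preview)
Your proposal is correct and reconstructs what the paper obtains by simply citing Lemma~14 of \cite{liu2023bypassing} together with $\|{\rm q}_{k,h}\|_2\le H\sqrt{d}$ from \pref{lem:q linear} and a union bound over $k,h$. The one minor point is that your bound on the deterministic piece, $\|\gamma{\rm q}_{k,h}\|_{\widetilde{\Sigma}_{k,h}^{-1}}^2\le \gamma dH^2=\otil(d^2H^2/\tau)$, implicitly uses $\gamma=\order(d\log(\cdot)/\tau)$ rather than just the stated lower bound $\gamma\ge 5d\log(\cdot)/\tau$; this is harmless here since \pref{alg: logdet FTRL} sets $\gamma$ exactly at that threshold.
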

\begin{proof}
    This follows the fact the $\|{\rm q}_{k,h}\|_2 \le H\sqrt{d}$ given in \pref{lem:q linear} and the matrix concentration bound in Lemma 14 of \cite{liu2023bypassing} with a union bound over $k,h$. Taking a union bound for all $k,h$ finishes the proof.
\end{proof}

\begin{lemma}
If \highlight{$\gamma \ge   \frac{5d\log\left(6dHK/\delta\right)}{\tau}$}, then
\begin{align*}
&\biasone \le \otil\left(\frac{d^2H^3}{\tau\beta}K\right) + \frac{\beta}{4H} \E\left[\sum_{k=1}^K \sum_{h=1}^H \E_h^\star\E_{a \sim \pi_k(\cdot|s)} \left[\|\phi(s,a)\|_{\hatSigma^{-1}_{k,h}}^2  \ind\{\calE_h\}\right]\right]
\\ &\biastwo \le \otil\left(\frac{d^2H^3}{\tau\beta}K\right) + \frac{\beta}{4H} \E\left[\sum_{k=1}^K \sum_{h=1}^H \E_h^\star\E_{a \sim \pi^\star(\cdot|s)} \left[\|\phi(s,a)\|_{\hatSigma^{-1}_{k,h}}^2  \ind\{\calE_h\}\right]\right].
\end{align*}
\label{lem: bias 12 bound}
\end{lemma}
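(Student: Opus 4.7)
Both bounds follow the same template: they are linear-bandit bias computations made conditional on the covariance estimate $\hatSigma_{k,h}$, so I describe the argument for \biasone and note that \biastwo is obtained by swapping $\pi_k$ for $\pi^\star$ in the same chain. First I make the estimator's bias explicit. By \pref{lem:q linear}, $Q_k(s,a)=\phi(s,a)^\top q_{k,h}$ for $s\in\calS_h$ with $\|q_{k,h}\|_2\le \sqrt{d}H$, while by construction $\widehat Q_k(s,a)=\phi(s,a)^\top\hq_{k,h}$. The splitting rule \pref{eq: construction 1} places the samples used to form $\hatSigma_{k,h}$ in the \emph{other} half of epoch $j$ from episode $k$, while the policy is frozen at $\widetilde\pi_j=\pi_k$ throughout. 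Hence, conditional on the pre-epoch filtration and on $\hatSigma_{k,h}$, the trajectory in episode $k$ is an independent rollout of $\widetilde\pi_j$, yielding $\E[\hq_{k,h}\mid\hatSigma_{k,h},\text{pre-epoch}]=\hatSigma^{-1}_{k,h}\Sigma_{k,h}q_{k,h}$ and thus a conditional bias of $-\phi(s,a)^\top\hatSigma^{-1}_{k,h}(\hatSigma_{k,h}-\Sigma_{k,h})q_{k,h}$.

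Next I apply Cauchy--Schwarz in the $\hatSigma^{-1}_{k,h}$ norm to separate the $\phi(s,a)$ factor from the error factor $(\hatSigma_{k,h}-\Sigma_{k,h})q_{k,h}$, then an AM--GM split with weight $\beta/(2H)$ to produce the desired $\tfrac{\beta}{4H}\|\phi(s,a)\|^2_{\hatSigma^{-1}_{k,h}}$ term together with a residual of order $\tfrac{H}{\beta}\|(\hatSigma_{k,h}-\Sigma_{k,h})q_{k,h}\|^2_{\hatSigma^{-1}_{k,h}}$. The hypothesis $\gamma\ge 5d\log(6dHK/\delta)/\tau$ is exactly what is required by \pref{lem:matrix concentration}, which controls the residual by $\otil(d^2H^2/\tau)$ uniformly over $k,h$ on a $(1-\delta)$-probability event. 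Summing over $k\in[K]$ and $h\in[H]$ and taking expectations then yields the stated contribution of order $\otil(d^2H^3K/(\tau\beta))$, while the indicator $\ind\{\calE_h\}$ merely restricts the surviving $\beta/(4H)$-weighted term. The residual failure event contributes at most $\otil(\delta\cdot\mathrm{poly}(K))$ using the crude bound $|\widehat Q_k|,|Q_k|\lesssim H/\gamma$, which is absorbed into $\otil(\cdot)$ for the chosen $\delta=K^{-3}$.

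\emph{Main obstacle.} The conceptual subtlety is the independence argument underlying $\E[\hq_{k,h}\mid\hatSigma_{k,h}]$: the estimate $\hatSigma_{k,h}$ is built from episodes that are temporally adjacent to $k$ inside the epoch (indeed, sometimes \emph{later} than $k$), so a standard filtration argument does not apply directly. The key observation is that because $\pi_k$ is frozen on all of $T_j$, the halves $T_{j,1}$ and $T_{j,2}$ are exchangeable i.i.d.\ rollouts of the same policy, so conditioning on one half leaves the other half independent of it. Once this independence is in place, the analysis collapses to a single-episode linear-bandit bias calculation, and the remainder is the routine chain of inequalities described above; \biastwo is handled verbatim with $\pi^\star$ in place of $\pi_k$ in the outer expectation, since the bias expression does not depend on which policy the action is drawn from.
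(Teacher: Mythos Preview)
Your proposal is correct and follows essentially the same approach as the paper: compute the conditional bias $\phi(s,a)^\top\hatSigma_{k,h}^{-1}(\hatSigma_{k,h}-\Sigma_{k,h})q_{k,h}$, apply Cauchy--Schwarz in the $\hatSigma_{k,h}^{-1}$ norm, invoke \pref{lem:matrix concentration} under the stated $\gamma$ condition, and finish with AM--GM. Your ordering (AM--GM first, then concentration on the residual) differs cosmetically from the paper's (concentration first, then AM--GM), but the output is identical; you are also more explicit than the paper about why conditioning on $\hatSigma_{k,h}$ is legitimate despite it being built from temporally adjacent (possibly later) episodes, which is a genuine subtlety the paper glosses over with its ``$\E_k$'' notation.
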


\begin{proof}
Let $\E_k\left[\cdot\right]$ be the expectation conditioned on history up to episode $k-1$. We have 
\begin{align*}
    \E_k \left[\sum_{t=h}^H \ell_{k,t} \right] =  \E_k \left[ Q_{k}(s_{k,h}, a_{k,h}) \right] = \E_k \left[ \phi(s_{k,h}, a_{k,h})^\top {\rm q}_{k,h} \right].
\end{align*}

Therefore,
\begin{align*}
\E_k\left[\hq_{k,h}\right] &= \E_k\left[\hatSigma_{k,h}^{-1} \phi(s_{k,h}, a_{k,h})\phi(s_{k,h}, a_{k,h})^\top {\rm q}_{k,h}\right]
= \hatSigma_{k,h}^{-1} \Sigma_{k,h} {\rm q}_{k,h}, 
\end{align*}
and for $s\in\calS_h$, 
\begin{align*}
\E_k\left[ Q_{k}(s, a) - \hatQ_{k}(s, a) \right] &=  \E_k\left[ \phi(s,a)^\top {\rm q}_{k,h} - \phi(s,a)^\top \hq_{k,h} \right]
\\&= \phi(s,a)^\top \left(I -  \hatSigma_{k,h}^{-1} \Sigma_{k,h} \right) {\rm q}_{k,h}
\\&= \phi(s,a)^\top \hatSigma_{k,h}^{-1} \left(\hatSigma_{k,h} - \Sigma_{k,h} \right) {\rm q}_{k,h}
\\&\le \|\phi(s,a)\|_{\hatSigma^{-1}_{k,h}} \left\|\left(\hatSigma_{k,h} - \Sigma_{k,h} \right) {\rm q}_{k,h}\right\|_{\hatSigma^{-1}_{k,h}}\tag{Cauchy-Schwarz}
\\&\le \order\left( \sqrt{\frac{d^{2}H^2\log\left(dK/\delta\right)}{\tau}}\|\phi(s,a)\|_{\hatSigma^{-1}_{k,h}}\right)\tag{\pref{lem:matrix concentration}}
\\&\le  \order\left(\frac{d^2H^3\log\left(dK/\delta\right)}{\tau\beta}\right) + \frac{\beta }{4H}\|\phi(s,a)\|_{\hatSigma^{-1}_{k,h}}^2. \tag{AM-GM inequality}
\end{align*}
Thus, 
\begin{align*}
\biasone &= \E\left[\sum_{k=1}^K \sum_{h=1}^H \E_h^\star\left[ \left\langle Q_{k}(s, \cdot) - \widehat{Q}_k(s, \cdot), 
				\pi_k(\cdot|s) \right\rangle \ind\{\calE_h\}\right]\right]
\\&\le \otil\left(\frac{d^{2}H^3}{\tau\beta}K\right) + \frac{\beta}{4H}\E\left[\sum_{k=1}^K \sum_{h=1}^H \E_h^\star\E_{a \sim \pi_{k}(\cdot|s)} \left[\|\phi(s,a)\|_{\hatSigma^{-1}_{k,h}}^2 \ind\{\calE_h\}\right]\right]
\end{align*}
\noindent Similarly, we can prove
\begin{align*}
\biastwo \le \otil\left(\frac{d^{2}H^3}{\tau\beta}K\right) + \frac{\beta}{4H}  \E\left[\sum_{k=1}^K \sum_{h=1}^H \E_h^\star\E_{a \sim \pi^\star(\cdot|s)} \left[\|\phi(s,a)\|_{\hatSigma^{-1}_{k,h}}^2 \ind\{\calE_h\}\right]\right]
\end{align*} 
\end{proof}

\begin{lemma} 
Suppose that $B_h^{\max}\leq \frac{\alpha}{\con^2 Hd^2}$ where $\con=15\sqrt{\log\left(\frac{12dK}{\delta}\right)}$. Then
\begin{align*}
\biasthree \le  \otil\left(\frac{H^2d^2 (B^{\max})^2}{\alpha}K\right) + \frac{\alpha}{2H}\E\left[\sum_{k=1}^K \sum_{h=1}^H \E_h^\star\E_{a \sim \pi_k(\cdot|s)} \left[\|\phi(s,a)\|_{\Lambda^{-1}_{k,h}}^2 \ind\{\calE_h\}\right] \right]. 
\end{align*}
\label{lem:bias 3 bound}
\end{lemma}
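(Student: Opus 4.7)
\textbf{Proof proposal for \pref{lem:bias 3 bound}.} The plan is to write $\widehat{B}_k - B_k$ explicitly, split the inner product $\langle \widehat{B}_k - B_k,\, \pi_k - \pi^\star\rangle$ into a $\pi_k$ part and a $-\pi^\star$ part, and then exploit the structure so that the $\pi^\star$ contribution reduces to a constant. Concretely, from the definitions in \pref{eqn: b define app}--\pref{eqn: B define app} and \pref{line: hat B def}, for $s \in \calS_h$ we have the clean identity
\begin{equation*}
  \widehat{B}_k(s,a) - B_k(s,a) \;=\; \frac{\alpha}{4H}\,\|\phi(s,a)\|_{\Lambda_{k,h}^{-1}}^2 \;+\; \phi(s,a)^\top\!\bigl(\widehat{w}_{k,h} - w_{k,h}\bigr).
\end{equation*}
Under the event $\calE_h$ we have $s \in \calZ_h$ (and $\Lambda_{k,h} \succeq \Lambda_h$, so $\|\phi(s,a)\|_{\Lambda_{k,h}^{-1}} \le \rho$), which lets me invoke \pref{eq: induct2} from \pref{lem: uniform w concen} to get $|\phi(s,a)^\top(\widehat{w}_{k,h} - w_{k,h})| \le \con\, d\, B^{\max}\,\|\phi(s,a)\|_{\Lambda_{k,h}^{-1}}$. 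The assumption $B_h^{\max} \le \alpha/(\con^2 H d^2)$ in the statement is exactly what licenses this step.

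Next I would decompose $\langle \widehat{B}_k - B_k,\, \pi_k - \pi^\star\rangle = \E_{a\sim \pi_k}[\widehat{B}_k - B_k] - \E_{a\sim \pi^\star}[\widehat{B}_k - B_k]$ and handle the two terms asymmetrically. For the $\pi_k$ term, an AM--GM of the form $xy \le \tfrac{H x^2}{\alpha} + \tfrac{\alpha y^2}{4H}$ applied to $x = \con d B^{\max}$ and $y = \|\phi(s,a)\|_{\Lambda_{k,h}^{-1}}$ yields
\begin{equation*}
  \E_{a\sim \pi_k}\!\bigl[\widehat{B}_k(s,a) - B_k(s,a)\bigr] \;\le\; \frac{\alpha}{2H}\,\E_{a\sim \pi_k}\!\bigl[\|\phi(s,a)\|_{\Lambda_{k,h}^{-1}}^2\bigr] \;+\; \frac{H(\con d B^{\max})^2}{\alpha}.
\end{equation*}

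The key trick is the handling of $-\E_{a\sim\pi^\star}[\widehat{B}_k - B_k]$. Using the same AM--GM on the cross term gives $-\phi(s,a)^\top(\widehat{w}_{k,h}-w_{k,h}) \le \tfrac{\alpha}{4H}\|\phi(s,a)\|_{\Lambda_{k,h}^{-1}}^2 + \tfrac{H(\con d B^{\max})^2}{\alpha}$, and this $+\tfrac{\alpha}{4H}\|\phi\|^2$ exactly cancels the $-\tfrac{\alpha}{4H}\|\phi\|^2$ coming from the first piece of $-(\widehat{B}_k - B_k)$. Hence the $\pi^\star$ contribution collapses to just $\tfrac{H(\con d B^{\max})^2}{\alpha}$, with \emph{no} residual $\E_{a\sim\pi^\star}\|\phi\|^2$ term. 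This cancellation is what allows the final bound to involve only $\pi_k$ in the variance-type quantity (and it is the step I expect readers to find least obvious, so I would flag it). Summing over $k,h$, noting $\con^2 = \otil(1)$, and absorbing the $2H^2K(\con d B^{\max})^2/\alpha$ constant into the $\otil(\cdot)$ notation yields the claim.
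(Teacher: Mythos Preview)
Your proof is correct and follows essentially the same route as the paper: the paper also uses the identity $\widehat{B}_k - B_k = \tfrac{\alpha}{4H}\|\phi\|_{\Lambda_{k,h}^{-1}}^2 + \phi^\top(\widehat{w}_{k,h}-w_{k,h})$, applies \pref{eq: induct2} with AM--GM to obtain the pointwise bounds $\widehat{B}_k - B_k \ge -\tfrac{H(\con d B^{\max})^2}{\alpha}$ and $\widehat{B}_k - B_k \le \tfrac{\alpha}{2H}\|\phi\|_{\Lambda_{k,h}^{-1}}^2 + \tfrac{H(\con d B^{\max})^2}{\alpha}$, and then uses the lower bound on the $-\pi^\star$ term and the upper bound on the $\pi_k$ term. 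One small remark: \pref{eq: induct2} in \pref{lem: uniform w concen} holds for all $s\in\calS_h$, so you do not actually need $s\in\calZ_h$ (i.e., the event $\calE_h$) to invoke it---the indicator is carried along harmlessly.
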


\begin{proof}
By \pref{eq: induct2} and AM-GM inequality, we have that with probability at least $1-\delta$, for all $k,h,s,a$, $\left|\phi(s,a)^\top\left(\widehat{w}_{k,h} - w_{k,h}\right)\right| \le \frac{H(\con dB^{\max})^2}{\alpha} +  \frac{\alpha}{4H}\|\phi(s,a)\|_{\Lambda^{-1}_{k,h}}^2$. Combining this with the definitions of $\widehat{B}_k(s,a)$ in \pref{line: hat B def} and $B_k(s,a)$ in \pref{eqn: B define app}, we get 
\begin{align*}
&\widehat{B}_k(s,a) - B_k(s,a) = \frac{\alpha}{4H}\|\phi(s,a)\|_{\Lambda^{-1}_{k,h}}^2 + \phi(s,a)^\top\left(\widehat{w}_{k,h} - w_{k,h}\right) \ge -\frac{H(\con dB^{\max})^2}{\alpha}
\\&\widehat{B}_k(s,a) - B_k(s,a) \le \frac{\alpha}{2H}\|\phi(s,a)\|_{\Lambda^{-1}_{k,h}}^2 + \frac{H(\con dB^{\max})^2}{\alpha}. 
\end{align*}
With the two inequalities above, we have 
\begin{align*} 
&\biasthree \\
&= 
\E\left[\sum_{k=1}^K \sum_{h=1}^H \E_h^\star \left[ \left\langle \widehat{B}_{k}(s,\cdot) - B_{k}(s,\cdot), \pi_k(\cdot|s) - \pi^\star(\cdot|s) \right\rangle\ind\{\calE_h\} \right]\right] \\
&\leq \E\left[\sum_{k=1}^K \sum_{h=1}^H \E_h^\star \E_{a\sim \pi_k(\cdot|s)} \left[\widehat{B}_k(s,a) - B_{k}(s,a)\right] \right] - \E\left[\sum_{k=1}^K \sum_{h=1}^H \E_h^\star \E_{a\sim \pi^\star(\cdot|s)} \left[\widehat{B}_k(s,a) - B_{k}(s,a)\right] \right] \\
&\leq \otil\left(\frac{H^2(dB^{\max})^2}{\alpha}K\right) + \frac{\alpha}{2H}\left[\sum_{k=1}^K \sum_{h=1}^H \E_h^\star\E_{a \sim \pi_k(\cdot|s)} \left[\|\phi(s,a)\|_{\Lambda^{-1}_{k,h}}^2 \ind\{\calE_h\}\right]\right].  
\end{align*}
\end{proof}

\subsubsection{Bound \penal in \pref{eqn:FTRL decomposition}}
\begin{lemma}
 $\penal \le \frac{3d\tau\log(K)}{\eta}$
 \label{lem: penalty bound}
\end{lemma}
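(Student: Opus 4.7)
The plan is to exploit the mixing construction of $\overline{\pmb{H}}_\star(s)$ together with monotonicity of $\log\det$ on the PSD cone, which is the whole reason we introduced the auxiliary comparator in the first place. Concretely, since $\pmb{H}_\star(s) = \widehat{\Cov}(s,\pi^\star(\cdot|s))$ is a lifted second moment matrix, it is positive semidefinite. Combined with $\pmb{H}_{\min}(s) \succ 0$ (the log-barrier minimizer lies in the relative interior), the convex combination
\begin{equation*}
\overline{\pmb{H}}_\star(s) = \left(1-\tfrac{1}{K^3}\right)\pmb{H}_\star(s) + \tfrac{1}{K^3}\pmb{H}_{\min}(s) \succeq \tfrac{1}{K^3}\pmb{H}_{\min}(s).
\end{equation*}

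Next I would invoke monotonicity of $\log\det$ on positive definite matrices: $A \succeq B \succ 0$ implies $\log\det A \geq \log\det B$. Applied to the inequality above,
\begin{equation*}
\log\det\bigl(\overline{\pmb{H}}_\star(s)\bigr) \;\geq\; \log\det\!\bigl(\tfrac{1}{K^3}\pmb{H}_{\min}(s)\bigr) \;=\; \log\det\bigl(\pmb{H}_{\min}(s)\bigr) - (d+1)\log(K^3),
\end{equation*}
using that the matrices have size $(d+1)\times(d+1)$. Rearranging and recalling $F(\pmb{H}) = -\log\det(\pmb{H})$ yields $F(\overline{\pmb{H}}_\star(s)) - F(\pmb{H}_{\min}(s)) \leq 3(d+1)\log K \leq 3d\log(K)$ (absorbing the $+1$ into the $\log K$ factor, which is benign for $d\geq 1$ and $K$ large).

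Plugging this deterministic pointwise bound into the definition of $\penal$ and noting that the indicator $\ind\{\calE_h\}$ is at most $1$ gives $\penal \leq \frac{\tau}{\eta}\cdot 3d\log K$, as desired. There is no real obstacle here: the only subtlety is recognizing why we needed the mixing in the first place, namely that $\pmb{H}_\star(s)$ can be rank deficient (the optimal policy may be deterministic), so $F(\pmb{H}_\star(s))$ itself is potentially $+\infty$; the $1/K^3$ perturbation towards $\pmb{H}_{\min}$ is exactly what makes the log-determinant finite while costing only a negligible additive $O(1/K^3)$ in the regret elsewhere.
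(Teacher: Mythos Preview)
Your proof is correct and essentially identical to the paper's: both use $\overline{\pmb{H}}_\star(s)\succeq \tfrac{1}{K^3}\pmb{H}_{\min}(s)$ (from $\pmb{H}_\star(s)\succeq 0$) together with monotonicity of $\log\det$ to bound $F(\overline{\pmb{H}}_\star(s))-F(\pmb{H}_{\min}(s))$ by $3(d+1)\log K$, then drop the indicator. You are slightly more careful than the paper in flagging the $(d{+}1)$ versus $d$ dimension issue; the paper simply writes the bound as $3d\log K$.
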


\begin{proof}
Since $\overline{\pmb{H}}_{\star}(s) = \left(1-\frac{1}{K^3}\right)\pmb{H}_{\star}(s) + \frac{1}{K^3}\pmb{H}_{\min}(s)$, we have $\overline{\pmb{H}}_{\star}(s) \succeq \frac{1}{K^3}\pmb{H}_{\min}(s)$. Then 
\begin{equation*}
   \frac{\tau\left(F(\overline{\pmb{H}}_{\star}(s)) - \min_{\pmb{H} \in \mathcal{H}_s}F(\pmb{H})\right)}{\eta} = \frac{\tau}{\eta}\log{\frac{\det(\pmb{H}_{\min}(s))}{\det(\overline{\pmb{H}}_{\star}(s))}} \le \frac{3d\tau\log(K)}{\eta}
\end{equation*}
\end{proof}

\subsubsection{Bound \error in \pref{eqn:FTRL decomposition}}
\begin{lemma}$\error \le  \order\left(H\right)$.  
\label{lem:error bound}
\end{lemma}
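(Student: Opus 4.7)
The plan is to exploit the fact that $\overline{\pmb{H}}_{\star}(s) - \pmb{H}_{\star}(s) = \frac{1}{K^3}(\pmb{H}_{\min}(s) - \pmb{H}_{\star}(s))$, so every summand carries a $K^{-3}$ factor. Since the overall double sum is of length $HK \le K^2$, it suffices to show that each individual inner product $\langle \pmb{\widehat{\Gamma}}_{k,h} - \pmb{\widehat{B}}_{k,h}, \pmb{H}_{\min}(s) - \pmb{H}_{\star}(s) \rangle$ is bounded by some polynomial in $K$. Concretely, the plan is to show this inner product is at most $\poly(d,H,K) \le K^c$ for some small constant~$c$, which after multiplying by the $K^{-3}/K^{-2}$ gain leaves us with $\order(H)$ (with $K$-dependence absorbed by constants in the choice of hyperparameters or the $\log$ factors).

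First I would bound the matrices $\pmb{H}_{\min}(s)$ and $\pmb{H}_{\star}(s)$. Both lie in $\calH_s \subset \{\widehat{\Cov}(s,p):~p\in\Delta(\calA)\}$, and since $\|\phi(s,a)\|_2 \le 1$ the Frobenius norms (and operator norms) of both matrices are bounded by a small absolute constant (at most $d+1$ in Frobenius norm). So $\|\pmb{H}_{\min}(s) - \pmb{H}_{\star}(s)\|_F \le 2(d+1)$.

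Next I would bound $\|\pmb{\widehat{\Gamma}}_{k,h}\|_F$ and $\|\pmb{\widehat{B}}_{k,h}\|_F$. By the construction \pref{eq: construction 3}, $\hq_{k,h} = \widehat{\Sigma}_{k,h}^{-1}\phi(s_{k,h},a_{k,h})\sum_{t=h}^H\ell_{k,t}$, so using $\widehat{\Sigma}_{k,h}\succeq \gamma I$ and $\|\phi\|_2\le 1$ we get $\|\hq_{k,h}\|_2 \le H/\gamma = \poly(d,H,K)$, hence $\|\pmb{\widehat{\Gamma}}_{k,h}\|_F = \poly(d,H,K)$. For $\pmb{\widehat{B}}_{k,h}$, the top-left block $\beta \widehat{\Sigma}_{k,h}^{-1}+\alpha\Lambda_{k,h}^{-1}$ is bounded in Frobenius norm by $\poly(d,H,K)$ (since $\beta,\alpha$ are polynomial and both matrices are at least $\gamma I$ and $I$ respectively), and the off-diagonal $\widehat{w}_{k,h}$ is bounded via $\|\widehat{w}_{k,h}\|_2\le \|\Lambda_{k,h}^{-1}\|_{op}\cdot(1+1/H)\sum\|\phi\|_2\cdot\max|\widehat{W}_k|\le \poly(d,H,K)$ by the definition in \pref{line: regression} and the uniform bound $\widehat{W}_k(s')\le \max B_h^{\max}\le\poly(d,H,K)$.

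Combining, every summand is bounded in absolute value by Cauchy-Schwarz by $\|\pmb{\widehat{\Gamma}}_{k,h}-\pmb{\widehat{B}}_{k,h}\|_F\cdot\|\overline{\pmb{H}}_\star(s)-\pmb{H}_\star(s)\|_F \le K^{-3}\cdot\poly(d,H,K)$. Summing over $k\in[K]$ and $h\in[H]$ yields $\error\le HK\cdot K^{-3}\cdot\poly(d,H,K)$, which for the polynomial degrees appearing in the algorithm (at most $K^2$ in the per-summand bound) gives $\error \le \order(H)$. The main obstacle will be carefully verifying that the per-summand polynomial bound is small enough — specifically checking that the polynomial exponent of $K$ in $\|\pmb{\widehat{\Gamma}}_{k,h}\|_F + \|\pmb{\widehat{B}}_{k,h}\|_F$ is strictly less than $2$ after substituting the chosen values $\gamma = \Theta(K^{-1/2})$, $\beta = \Theta(K^{-1/4})$, $\alpha = \Theta(HK^{3/4})$; if not, one slightly enlarges the $K^{-3}$ perturbation to a higher negative power, which is purely a cosmetic adjustment (the choice $1/K^3$ is already deliberately wasteful).
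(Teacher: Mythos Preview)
Your approach is essentially the same as the paper's: both extract the factor $\overline{\pmb{H}}_{\star}(s)-\pmb{H}_{\star}(s)=\tfrac{1}{K^3}(\pmb{H}_{\min}(s)-\pmb{H}_{\star}(s))$ and then crudely bound each of the $HK$ summands by a polynomial in $K$. The only stylistic difference is that the paper, instead of using Cauchy--Schwarz on Frobenius norms, translates the matrix inner product back to the scalar form
\[
\langle \pmb{\widehat{\Gamma}}_{k,h} - \pmb{\widehat{B}}_{k,h},\, \pmb{H}_{\min}(s)-\pmb{H}_{\star}(s)\rangle
=\langle \widehat{Q}_{k}(s,\cdot)-\widehat{B}_{k}(s,\cdot),\, \pi_{\min}(\cdot|s)-\pi^\star(\cdot|s)\rangle,
\]
and then uses $|\widehat{Q}_k(s,a)|\le H/\gamma$ and $|\widehat{B}_k(s,a)|\ind\{s\in\calZ_h\}\le 2K^2$ (the indicator $\calE_h$ forces $s\in\calZ_h$). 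This avoids dealing with the $\alpha\Lambda_{k,h}^{-1}$ block in Frobenius norm and is marginally cleaner, but the content is the same.

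One point you should state more explicitly: your bound $\widehat{W}_k(s')\le B_h^{\max}$ (and hence a polynomial bound on $\|\widehat{w}_{k,h}\|_2$) is \emph{not} deterministic; it only holds on the high-probability event of \pref{lem: uniform w concen}. Deterministically the recursion for $\widehat{w}_{k,h}$ can blow up like $(2K)^H$. The paper glosses over this as well, simply asserting $\|\widehat{w}_{k,h}\|_2\le K^2$, but the correct reading is that one conditions on the probability-$\ge 1-\delta$ event (with $\delta=K^{-3}$), and the complement contributes at most $HK\cdot\delta=HK^{-2}$ to the expected error. Your closing remark that the $K^{-3}$ perturbation could be taken smaller is correct but unnecessary here: once the high-probability bounds on $\widehat{w}_{k,h}$ are invoked, the per-summand size is comfortably $\order(K^2)$.
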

\begin{proof}
By the choices of $\beta,\gamma,\alpha$, it holds that $\frac{\beta}{\gamma} + \alpha\rho^2 \le \order(K)$ and $\frac{H}{\gamma} \le \order(K)$.  
Let  $\pi_{\min}$ be such that $\pmb{H}_{\min}(s) = \E_{a \sim \pi_{\min}(\cdot|s)}\begin{bmatrix}
    \phi(s,a)\phi(s,a)^\top & \phi(s,a)\\
    \phi(s,a)^\top & 1
\end{bmatrix}$. For $s\in\calS_h$, we have $\big|\widehat{Q}_{k}(s,a)\big| = |\phi(s,a)^\top \hq_{k,h}| \le  \frac{H}{\gamma}$ by the definition of $\hq_{k,h}$, and  $\|\widehat{w}_{k,h}\|_2 \le K^2$, which implies $\big|\widehat{B}_{k}(s,a)\big|\ind\{s \in \calZ_h\} \le  2K^2$. 

Therefore, 
\begin{align*}
&\E\left[\sum_{k=1}^K \sum_{h=1}^H \E_h^\star
			\left[ \left\langle \pmb{\widehat{\Gamma}}_{k,h} - \pmb{\widehat{B}}_{k,h}, 
				\overline{\pmb{H}}_{\star}(s) - \pmb{H}_{\star}(s) \right\rangle \ind\{\calE_h\}\right]
		\right]  
\\&= \frac{1}{K^3}\E\left[\sum_{k=1}^K \sum_{h=1}^H \E_h^\star
			\left[ \left\langle \pmb{\widehat{\Gamma}}_{k,h} - \pmb{\widehat{B}}_{k,h}, 
				\pmb{H}_{\min}(s) - \pmb{H}_{\star}(s) \right\rangle \ind\{\calE_h\}\right]
		\right]  \tag{by the definition of $\overline{\pmb{H}}_{\star}(s)$}
\\&= \frac{1}{K^3}\E\left[\sum_{k=1}^K \sum_{h=1}^H \E_h^\star
			\left[ \left\langle \widehat{Q}_{k}(s, \cdot) - \widehat{B}_{k}(s,\cdot), 
				\pi_{\min}(\cdot|s) - \pi^\star(\cdot|s) \right\rangle \ind\{\calE_h\}\right]
		\right]
\\&\le \order\left(H\right)
\end{align*}
\end{proof}

\subsubsection{Bound \stabone in \pref{eqn:FTRL decomposition}}
To bound \stabone, we first introduce a useful identity in \pref{lem:bregman bound}. This is first proposed in \cite{zimmert2022return} and restated in \cite{liu2023bypassing}.
\begin{lemma}[Lemma 25 in \cite{liu2023bypassing}] \label{lem:bregman bound}
Let $\pmb{G} = \begin{bmatrix} G+gg^{\top}&g \\g^{\top}&1\end{bmatrix}$ and $\pmb{H} = \begin{bmatrix} H+hh^{\top}&h \\h^{\top}&1\end{bmatrix}$, we have 
\begin{equation*}
D_F(\pmb{G}, \pmb{H}) = D_F(G, H) +  \|g-h\|_{H^{-1}}^2 \ge  \|g-h\|_{H^{-1}}^2
\end{equation*}
\end{lemma}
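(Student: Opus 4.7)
The plan is to compute both sides directly using block matrix identities; the structure $H+hh^{\top}$ in the top-left is precisely what makes the Schur complement of the bottom-right entry equal to $H$, which is the algebraic linchpin. Recall that $F(\pmb{H})=-\log\det(\pmb{H})$ has gradient $\nabla F(\pmb{H})=-\pmb{H}^{-1}$, so the Bregman divergence rewrites cleanly as
\[
D_F(\pmb{G},\pmb{H}) \;=\; \Tr(\pmb{H}^{-1}\pmb{G}) \;-\; \log\det(\pmb{H}^{-1}\pmb{G}) \;-\; (d+1),
\]
and analogously $D_F(G,H)=\Tr(H^{-1}G)-\log\det(H^{-1}G)-d$. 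So it suffices to express $\Tr(\pmb{H}^{-1}\pmb{G})$ and $\log\det(\pmb{H}^{-1}\pmb{G})$ in terms of their lower-dimensional analogues and the quadratic form $\|g-h\|_{H^{-1}}^{2}$.

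First I would compute the determinant: using the Schur complement of the bottom-right block ($1$), the Schur complement is $(H+hh^{\top})-h\cdot 1^{-1}\cdot h^{\top}=H$, so $\det(\pmb{H})=\det(H)$, and likewise $\det(\pmb{G})=\det(G)$. This immediately gives $\log\det(\pmb{H}^{-1}\pmb{G})=\log\det(H^{-1}G)$. Next I would write down the block inverse
\[
\pmb{H}^{-1} \;=\; \begin{bmatrix} H^{-1} & -H^{-1}h \\ -h^{\top}H^{-1} & 1+h^{\top}H^{-1}h \end{bmatrix},
\]
which can be verified by direct multiplication with $\pmb{H}$ (the cross terms cancel because of the $hh^{\top}$ correction in the top-left of $\pmb{H}$).

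The key computation is then $\Tr(\pmb{H}^{-1}\pmb{G})$. Multiplying the blocks, the top-left block of $\pmb{H}^{-1}\pmb{G}$ is $H^{-1}(G+gg^{\top})-H^{-1}hg^{\top}=H^{-1}G+H^{-1}(g-h)g^{\top}$, and the scalar bottom-right entry is $-h^{\top}H^{-1}g+1+h^{\top}H^{-1}h=1-h^{\top}H^{-1}(g-h)$. Summing the trace of the top block (using $\Tr(H^{-1}(g-h)g^{\top})=g^{\top}H^{-1}(g-h)$) with the scalar entry gives
\[
\Tr(\pmb{H}^{-1}\pmb{G}) \;=\; \Tr(H^{-1}G) \;+\; (g-h)^{\top}H^{-1}(g-h) \;+\; 1.
\]
Plugging this and the determinant identity into the Bregman formula yields $D_F(\pmb{G},\pmb{H})=D_F(G,H)+\|g-h\|_{H^{-1}}^{2}$, and the inequality is then immediate because $D_F(G,H)\geq 0$ by convexity of $-\log\det$ on the cone of positive-definite matrices.

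The only real subtlety is algebraic bookkeeping in the block multiplication — specifically noticing that the cross term $H^{-1}gg^{\top}-H^{-1}hg^{\top}$ collapses to $H^{-1}(g-h)g^{\top}$, whose trace pairs with the scalar correction $-h^{\top}H^{-1}(g-h)$ to form exactly $\|g-h\|_{H^{-1}}^{2}$. I don't anticipate any conceptual obstacle; the choice of parametrization in the statement (writing the top-left as $H+hh^{\top}$ rather than an arbitrary PSD matrix) is exactly what makes every intermediate expression telescope.
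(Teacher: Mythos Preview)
Your proof is correct: the Schur-complement computation of $\det(\pmb{H})=\det(H)$, the explicit block inverse of $\pmb{H}$, and the trace calculation all check out, and they combine to give the claimed identity exactly as you describe. The paper itself does not prove this lemma --- it simply cites it as Lemma~25 of \cite{liu2023bypassing} --- so there is no in-paper argument to compare against; your direct block-matrix derivation is a clean and self-contained way to establish it.
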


\begin{lemma}[Lemma 12 in \cite{liu2023bypassing}]
Define 
$
\Sigma_{k,h}  = \E_{s\sim \mu_{h}^k}\E_{a\sim\pi_k(\cdot|s)}\left[\phi(s,a)\phi(s,a)^\top\right]
$. If $\gamma \ge  \frac{5d\log\left(6dHK/\delta\right)}{\tau}$, for any $k,h$, with probability $1-\delta$, we have
\begin{align*}
    \hatSigma_{k,h} =  \frac{1}{\tau} \sum_{(s, a, s') \in \calD_{k,h}}\phi(s, a) \phi(s, a)^\top + \gamma I  \succeq \frac{1}{2}\E_{s\sim \mu_{h}^k}\E_{a\sim\pi_k(\cdot|s)}\left[\phi(s,a)\phi(s,a)^\top\right] = \frac{1}{2}\Sigma_{k,h}. 
\end{align*}
\label{lem:Sigma estimation}
\end{lemma}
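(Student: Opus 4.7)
The plan is to reduce this claim to a standard matrix-concentration inequality applied to i.i.d.\ samples, after carefully isolating the portion of the covariance that requires concentration from the portion handled by the regularization $\gamma I$. Conditional on the policy $\widetilde{\pi}_j$ chosen at the start of epoch $j$, each episode $k' \in T_j$ is executed with the same policy $\pi_k = \widetilde{\pi}_j$, so the features $\phi(s_{k',h}, a_{k',h})\phi(s_{k',h}, a_{k',h})^\top$ are i.i.d.\ positive semidefinite matrices with mean $\Sigma_{k,h}$ and operator norm at most $1$. The construction of $\calC_{k,h}$ in \pref{eq: construction 1} uses the $\tau$ samples from the half of $T_j$ that does not contain $k$, which also guarantees independence from any randomness used in round $k$ itself.

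Writing $\widehat{M} := \hatSigma_{k,h} - \gamma I = \tfrac{1}{\tau}\sum_{(s,a,s')\in\calC_{k,h}}\phi(s,a)\phi(s,a)^\top$, the target reduces to $\widehat{M} + \gamma I \succeq \tfrac{1}{2}\Sigma_{k,h}$. I would split the spectrum of $\Sigma_{k,h}$ into a ``small'' part (eigenvalues below $2\gamma$) and a ``large'' part (eigenvalues at least $2\gamma$) using the associated spectral projectors $\Pi_<$ and $\Pi_\ge$. On the small part, $\gamma I \succeq \tfrac{1}{2}\Sigma_{k,h}$ already holds deterministically, so no probabilistic argument is needed. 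On the large part, the projected summands $\Pi_\ge \phi(s,a)\phi(s,a)^\top \Pi_\ge$ are bounded by $1$ in operator norm and the minimum eigenvalue of the expected sum is at least $2\tau\gamma$; the lower-tail matrix Chernoff inequality of Tropp then yields that the probability of $\Pi_\ge \widehat{M}\Pi_\ge$ failing to dominate $\tfrac{1}{2}\Pi_\ge \Sigma_{k,h}\Pi_\ge$ is at most $d \exp(-\Theta(\tau\gamma))$. Plugging in the hypothesis $\gamma\tau \ge 5d\log(6dHK/\delta)$ drives this failure probability below $\delta/(HK)$, and a union bound over $(k,h) \in [K]\times[H]$ yields the uniform statement.

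The main obstacle is not the concentration step itself but verifying the correct conditional-independence structure: one must check that $\widetilde{\pi}_j$ is measurable with respect to information from epochs $1,\ldots,j-1$ (which follows from \pref{line: FTRL line}, since the FTRL update uses only losses $\pmb{\calL}_{i,h}$ with $i < j$), so that conditional on that history the $2\tau$ trajectories of epoch $j$ are i.i.d.\ draws from the state-action distribution induced by $\widetilde{\pi}_j$, and the two halves of $T_j$ are conditionally independent. Once this measurability is in place, the matrix Chernoff computation together with a union bound completes the proof, with the constant $5$ arising from the explicit constants in the Chernoff exponent combined with the $\log(dHK)$ factor from the union bound.
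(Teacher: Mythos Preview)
The paper does not supply its own proof here; the lemma is simply quoted from \cite{liu2023bypassing}. So the relevant question is whether your sketch is correct on its own.

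Your treatment of the conditional-independence structure (that $\widetilde{\pi}_j$ is measurable with respect to epochs $<j$, so the $\tau$ samples in $\calC_{k,h}$ are conditionally i.i.d.\ with mean $\Sigma_{k,h}$) is correct and is exactly what is needed. The concentration step, however, has a genuine gap. First, Tropp's lower-tail matrix Chernoff bounds $\lambda_{\min}\bigl(\sum_i X_i\bigr)$, not the PSD ordering; it does \emph{not} directly give $\Pi_{\ge}\widehat{M}\Pi_{\ge}\succeq \tfrac12\Pi_{\ge}\Sigma_{k,h}\Pi_{\ge}$. To obtain an ordering statement one has to whiten by $(\Pi_{\ge}\Sigma_{k,h}\Pi_{\ge})^{-1/2}$ first, which you do not mention. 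Second, and more seriously, even if you grant the block-wise conclusions $\Pi_{\ge}\widehat{M}\Pi_{\ge}\succeq \tfrac12\Pi_{\ge}\Sigma_{k,h}\Pi_{\ge}$ and $\gamma\Pi_{<}\succeq \tfrac12\Pi_{<}\Sigma_{k,h}\Pi_{<}$, summing them does not yield $\widehat{M}+\gamma I\succeq \tfrac12\Sigma_{k,h}$: the off-diagonal blocks $\Pi_{\ge}\widehat{M}\Pi_{<}+\Pi_{<}\widehat{M}\Pi_{\ge}$ are uncontrolled, and a Schur-complement check shows they can make the combined matrix indefinite (since $\gamma<1$, the term $\tfrac{1}{\gamma}\Pi_{<}\widehat{M}\Pi_{\ge}\Pi_{\ge}\widehat{M}\Pi_{<}$ that arises is not dominated by $\Pi_{<}\widehat{M}\Pi_{<}+\gamma\Pi_{<}$).

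The fix is to avoid the spectral split altogether and whiten globally: set $W_i=(\Sigma_{k,h}+2\gamma I)^{-1/2}\phi_i\phi_i^{\top}(\Sigma_{k,h}+2\gamma I)^{-1/2}$, so that $\|W_i\|_{\mathrm{op}}\le \tfrac{1}{2\gamma}$, $\E W_i=D$ with $0\preceq D\preceq I$, and $\gamma(\Sigma_{k,h}+2\gamma I)^{-1}=\tfrac12(I-D)$. Matrix Bernstein then gives $\bigl\|\tfrac{1}{\tau}\sum_i W_i - D\bigr\|_{\mathrm{op}}\le \tfrac12$ with failure probability at most $d\exp(-\Theta(\tau\gamma))$, and under $\gamma\tau\ge 5d\log(6dHK/\delta)$ this is at most $\delta/(HK)$. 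Unwinding the whitening yields $\widehat{M}+\gamma I\succeq \tfrac12\Sigma_{k,h}$ directly, with no cross-term issue, and the union bound over $(k,h)$ finishes as you described.
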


\begin{lemma}
If $\gamma \ge  \frac{5d\log\left(6dHK/\delta\right)}{\tau}$, then
\begin{equation*}
    \stabone \le  \eta H^2 \E\left[\sum_{k=1}^K\sum_{h=1}^H\E_h^\star\E_{a \sim \pi_{k}(\cdot|s)}\left[\|\phi(s,a)\|_{\hatSigma^{-1}_{k,h}}^2 \ind\{\calE_h\}\right]\right]. 
\end{equation*}
\label{lem: stability-1 bound}
\end{lemma}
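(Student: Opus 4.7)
\textbf{Proof plan for \pref{lem: stability-1 bound}.} The strategy is the standard FTRL ``linear loss versus Bregman'' duality, but carried out in the lifted covariance space, leveraging the clean identity in \pref{lem:bregman bound} so as to avoid wrestling directly with the Hessian of $F$. First, I would write $\pmb{H}_k(s)$ and any competitor $\pmb{H}\in\calH_s$ in the block form required by \pref{lem:bregman bound}, with means $h_k(s)=\E_{a\sim\pi_k(\cdot|s)}[\phi(s,a)]$ and $h$, and conditional covariance $H_k(s)=\E_{a\sim\pi_k(\cdot|s)}[\phi(s,a)\phi(s,a)^\top]-h_k(s)h_k(s)^\top$. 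A direct entry-wise computation of the Frobenius inner product against $\pmb{\hGamma}_{k,h}$ (which has only the off-diagonal blocks $\tfrac12\hq_{k,h}$) collapses to $\langle \pmb{H}_k(s)-\pmb{H},\pmb{\hGamma}_{k,h}\rangle=\hq_{k,h}^\top(h_k(s)-h)$.

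Next, \pref{lem:bregman bound} gives $D_F(\pmb{H},\pmb{H}_k(s))\ge \|h_k(s)-h\|_{H_k(s)^{-1}}^{2}$, so the per-state stability term is bounded by the Fenchel maximum
\begin{equation*}
\max_{h}\ \hq_{k,h}^\top(h_k(s)-h)-\frac{1}{2\eta}\|h_k(s)-h\|_{H_k(s)^{-1}}^{2}\;=\;\frac{\eta}{2}\hq_{k,h}^\top H_k(s)\hq_{k,h}.
\end{equation*}
Using $H_k(s)\preceq H_k(s)+h_k(s)h_k(s)^\top=\E_{a\sim\pi_k(\cdot|s)}[\phi(s,a)\phi(s,a)^\top]$, this upper bound equals $\tfrac{\eta}{2}\E_{a\sim\pi_k(\cdot|s)}[(\phi(s,a)^\top\hq_{k,h})^2]$.

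It remains to take the outer expectation. Conditioning on the history prior to episode $k$ (under which $\hatSigma_{k,h}$ is measurable by the cross-fitting construction in \pref{eq: construction 1}), independence of $\hatSigma_{k,h}$ from $(s_{k,h},a_{k,h},L_{k,h})$ together with $L_{k,h}\le H$ yields
\begin{equation*}
\E_k[\hq_{k,h}\hq_{k,h}^\top]\preceq H^2\,\hatSigma_{k,h}^{-1}\Sigma_{k,h}\hatSigma_{k,h}^{-1}.
\end{equation*}
Invoking \pref{lem:Sigma estimation} (which requires the hypothesis $\gamma\ge 5d\log(6dHK/\delta)/\tau$) gives $\Sigma_{k,h}\preceq 2\hatSigma_{k,h}$, so $\E_k[\hq_{k,h}\hq_{k,h}^\top]\preceq 2H^2\hatSigma_{k,h}^{-1}$, whence $\E_k[(\phi(s,a)^\top\hq_{k,h})^2]\le 2H^2\|\phi(s,a)\|_{\hatSigma_{k,h}^{-1}}^{2}$ for any fixed $s,a$. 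Combining the two steps (noting that the $a\sim\pi_k(\cdot|s)$ average and the $s\sim\mu_h^\star$ average are both independent of the episode-$k$ randomness used to form $\hq_{k,h}$, given the past) and summing over $k,h$ produces the claimed inequality with the stated constant $\eta H^2$.

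The only slightly delicate step is the Fenchel/duality computation combined with the form of \pref{lem:bregman bound}; everything else is either bookkeeping on the block structure of $\pmb{H}_k(s)$ and $\pmb{\hGamma}_{k,h}$, or a direct invocation of \pref{lem:Sigma estimation}. No macro outside those defined earlier is used.
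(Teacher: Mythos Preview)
Your proposal is correct and follows essentially the same route as the paper: both reduce the lifted inner product to $\hq_{k,h}^\top(h_k(s)-h)$ via the block structure of $\pmb{\hGamma}_{k,h}$, apply \pref{lem:bregman bound} to lower-bound $D_F$ by the mean-difference quadratic in the conditional covariance, complete the square (your Fenchel step is the paper's Cauchy--Schwarz plus AM--GM), relax $H_k(s)\preceq\Cov(s,\pi_k)$, and finish with $\E_k[\hq_{k,h}\hq_{k,h}^\top]\preceq H^2\hatSigma_{k,h}^{-1}\Sigma_{k,h}\hatSigma_{k,h}^{-1}\preceq 2H^2\hatSigma_{k,h}^{-1}$ via \pref{lem:Sigma estimation}. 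One minor wording issue: $\hatSigma_{k,h}$ is not literally measurable with respect to the history prior to episode $k$ when $k\in T_{j,1}$ (it uses data from $T_{j,2}$); the operative property is conditional independence of $\hatSigma_{k,h}$ and the episode-$k$ trajectory given the epoch policy $\widetilde\pi_j$, which you also state and which is what the cross-fitting in \pref{eq: construction 1} delivers.
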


\begin{proof}
In this proof, we define 
\begin{itemize}
    \item $\phi(s,\pi) = \E_{a \sim \pi(\cdot|s)}\left[\phi(s,a)\right]$
    \item $\bcov(s,\pi) = \E_{a \sim \pi(\cdot|s)} \left[\left(\phi(s,a) - \phi(s,\pi) \right) \left(\phi(s,a) - \phi(s,\pi) \right)^\top\right]$  
    \item $\Cov(s,\pi) = \E_{a \sim \pi(\cdot|s)} \left[\phi(s,a)\phi(s,a)^\top\right]$
\end{itemize}

Let $\E_k\left[\cdot\right]$ be the expectation conditioned on history up to episode $k-1$. Consider a fixed $s \in \calS_h$ and any policy $\pi$. Let 
$$\pmb{H}(s) = \E_{a\sim\pi(\cdot|s)}\begin{bmatrix}
        \phi(s,a)\phi(s,a)^\top & \phi(s,a)\\
        \phi(s,a)^\top & 1
    \end{bmatrix}. 
    $$ 
We have 
\begin{align*}
&\E_k\left[\left\langle \pmb{H}_k(s) - \pmb{H}(s), \pmb{\hGamma}_{k,h} \right\rangle - \frac{D(\pmb{H}(s), \pmb{H}_k(s))}{2\eta} \right]
\\&\le \E_k\left[\left\langle \phi(s,\pi_k) - \phi(s,\pi), \hq_{k,h}\right\rangle - \frac{\left\|\phi(s,\pi_k) - \phi(s,\pi)\right\|_{\bcov(s,\pi_k)^{-1}}^2}{2\eta}\right] \tag{\pref{lem:bregman bound}}
\\&\le \E_k\left[\left\| \phi(s,\pi_k) - \phi(s,\pi)\right\|_{\bcov(s,\pi_k)^{-1}}\|\hq_{k,h}\|_{\bcov(s,\pi_k)}- \frac{\left\|\phi(s,\pi_k) - \phi(s,\pi)\right\|_{\bcov(s,\pi_k)^{-1}}^2}{2\eta}\right]
\\&\leq\frac{\eta}{2}\E_k\left[ \left\| \hq_{k,h} \right\|_{\bcov(s,\pi_k)}^2\right] \tag{AM-GM inequality}
\\&\leq \frac{\eta}{2}\E_k\left[ \left\|\hatSigma_{k,h}^{-1} \phi(s_{k,h}, a_{k,h}) \sum_{t=h}^H \ell_t^k \right\|_{\Cov(s,\pi_k)}^2\right] \tag{$\Cov(s,\pi)\succeq \bcov(s,\pi)$}
\\&\le \frac{\eta H^2}{2}\E_k\left[ \phi(s_{k,h}, a_{k,h})^\top \hatSigma_{k,h}^{-1} \Cov(s,\pi_k)\hatSigma_{k,h}^{-1} \phi(s_{k,h}, a_{k,h}) \right]
\\& = \frac{\eta H^2}{2}\E_k\left[ \Tr\left(\phi(s_{k,h}, a_{k,h}) \phi(s_{k,h}, a_{k,h})^\top \hatSigma_{k,h}^{-1} \Cov(s,\pi_k) \hatSigma_{k,h}^{-1} \right)\right]
\\&= \frac{\eta H^2}{2} \Tr\left(\Sigma_{k,h} \hatSigma_{k,h}^{-1} \Cov(s,\pi_k)\hatSigma_{k,h}^{-1} \right)
\\&\le \eta H^2 \Tr\left(\Cov(s,\pi_k)\hatSigma_{k,h}^{-1} \right) \tag{\pref{lem:Sigma estimation}}  
\\&= \eta H^2 \E_{a \sim \pi_{k}(\cdot|s)}\left[\|\phi(s,a)\|_{\hatSigma^{-1}_{k,h}}^2\right]
\end{align*}
Taking expectation and adding indicator for $s$, and then summing over all $k,h$ finish the proof.
\end{proof}

\subsubsection{Bound \stabtwo in \pref{eqn:FTRL decomposition}}
Given $F(X) = -\log{\det(X)}$, $D^2F(X) = X^{-1} \otimes X^{-1}$ where $\otimes$ is the Kronecker product. For any matrix $A = \begin{bmatrix}
    a_1 & a_2 & \cdots & a_n
\end{bmatrix}$, let $\text{vec}(A) = \begin{bmatrix}
    a_1 \\ \vdots \\ a_n
\end{bmatrix}$ 
which vectorizes matrix $A$ to a column vector by stacking the columns $A$. The second order directional derivative for $F$ is $D^2F(X)[A, A] = \text{vec}(A)^\top\left(X^{-1} \otimes X^{-1} \right)\text{vec}(A) = \Tr(A^\top X^{-1}AX^{-1})$.  We define $\|A\|_{\nabla^2F(X)} = \sqrt{\Tr(A^\top X^{-1}AX^{-1})}$ and $\|A\|_{\nabla^{-2}F(X)} = \sqrt{\Tr(A^\top XAX)}$. It is a pseudo-norm, and more discussion can be found in Appendix D of \cite{zimmert2022pushing}. In the following analysis, we will only use one property of this pseudo-norm which is similar to the Holder inequality. It is standard and also appears as Lemma 8 in \cite{liu2023bypassing}.
\begin{lemma} \label{lem:matrix norm holder}
For any two symmetric matrices $A,B$ and positive definite matrix $X$,
\begin{equation*}
\langle A, B \rangle \le \|A\|_{\nabla^{2}F(X)}\|B\|_{\nabla^{-2}F(X)}
\end{equation*}
\end{lemma}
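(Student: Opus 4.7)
The plan is to reduce the claim to the classical Cauchy--Schwarz inequality for the Frobenius inner product via a symmetric change of variables. Since $X\succ 0$, the symmetric square roots $X^{1/2}$ and $X^{-1/2}$ are well-defined, and I would introduce
\[
\tilde A \;=\; X^{-1/2} A\, X^{-1/2}, \qquad \tilde B \;=\; X^{1/2} B\, X^{1/2}.
\]
Symmetry of $A$ and $B$ (conjugated by the symmetric matrices $X^{\pm 1/2}$) guarantees that $\tilde A$ and $\tilde B$ are themselves symmetric, so they lie in the Hilbert space of symmetric matrices equipped with the Frobenius inner product $\langle\cdot,\cdot\rangle_F$.

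Next I would verify, by the cyclic property of the trace, the three identities
\[
\langle A,B\rangle \;=\; \Tr(AB) \;=\; \Tr(\tilde A\,\tilde B), \qquad \|A\|_{\nabla^{2}F(X)}^{2} \;=\; \Tr(\tilde A^{2}), \qquad \|B\|_{\nabla^{-2}F(X)}^{2} \;=\; \Tr(\tilde B^{2}).
\]
Each identity is a one-line trace manipulation; for instance, $\Tr(AX^{-1}AX^{-1}) = \Tr\bigl(X^{1/2}\tilde A X^{1/2}\,X^{-1}\,X^{1/2}\tilde A X^{1/2}\,X^{-1}\bigr) = \Tr(\tilde A^{2})$, and analogously for the other two.

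With the three identities in hand, the claim reduces to
\[
\Tr(\tilde A\,\tilde B) \;\le\; \sqrt{\Tr(\tilde A^{2})}\,\sqrt{\Tr(\tilde B^{2})},
\]
which is the standard Cauchy--Schwarz inequality $\langle \tilde A,\tilde B\rangle_F \le \|\tilde A\|_F\,\|\tilde B\|_F$ (equivalently, the $\ell^{2}$-Cauchy--Schwarz applied to $\mathrm{vec}(\tilde A)$ and $\mathrm{vec}(\tilde B)$).

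No real obstacle is anticipated, as the entire argument is a textbook change of variables. The only subtlety worth flagging in the write-up is the role of the symmetry assumption: it is precisely what makes $\tilde A,\tilde B$ symmetric so that the pseudo-norms on the left-hand and right-hand sides really coincide with ordinary Frobenius norms of the transformed matrices, allowing the classical Cauchy--Schwarz to be invoked.
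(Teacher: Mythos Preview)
Your proof is correct and is essentially the same argument as the paper's: both reduce the inequality to the ordinary Cauchy--Schwarz inequality after a change of variables that absorbs the weight $X$. The paper does this via vectorization and the Kronecker-product identity $\|\mathrm{vec}(A)\|_{X^{-1}\otimes X^{-1}}^{2}=\Tr(A^{\top}X^{-1}AX^{-1})$, whereas you perform the equivalent transformation directly on matrices by conjugating with $X^{\pm 1/2}$; the two presentations are interchangeable.
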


\begin{proof}
Since $(X \otimes X)^{-1} = X^{-1} \otimes X^{-1}$, from Holder inequality, we have
\begin{align*}
    \langle A, B\rangle = \langle \text{vec}(A), \text{vec}(B) \rangle \le \|\text{vec}(A)\|_{X^{-1} \otimes X^{-1} } \|\text{vec}(B)\|_{(X^{-1} \otimes X^{-1})^{-1} } = \|A\|_{\nabla^{2}F(X)}\|B\|_{\nabla^{-2}F(X)}
\end{align*}
\end{proof}

\pref{lem:general stability-2} gives a general argument to bound \stabtwo with arbitrary $\pmb{B} \in \mathbb{R}^{(d+1)\times(d+1)}$. Similar theorems are also stated in Lemma 34 of \cite{dann2023blackbox} and Lemma 27 of \cite{liu2023bypassing}.

\begin{lemma} For any matrix $\pmb{B} \in \mathbb{R}^{(d+1)\times(d+1)}$, for any state $s$, given $\sqrt{\Tr( \pmb{H}_k(s)\pmb{B} \pmb{H}_k(s)\pmb{B})} \le m$, if $\eta \le \frac{1}{16m}$, 
\begin{align*}
\max\limits_{\pmb{H} \in \mathcal{H}_s}\left\langle \pmb{H}_k(s) - \pmb{H}, - \pmb{B} \right\rangle - \frac{D_F(\pmb{H}, \pmb{H}_k(s))}{\eta}  &\le  8\eta \|\pmb{B} \|_{\nabla^{-2}F( \pmb{H}_k(s))}^2 = 8\eta \Tr\left(\pmb{H}_k(s)\pmb{B}\pmb{H}_k(s)\pmb{B}\right).
\end{align*}
\label{lem:general stability-2}
\end{lemma}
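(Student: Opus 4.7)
The plan is to drop the constraint $\pmb{H} \in \calH_s$ (which can only enlarge the maximum), reparametrize via $\pmb{\Delta} := \pmb{H} - \pmb{H}_k(s)$, and analyze the resulting unconstrained supremum using the self-concordance of the logdet barrier $F(X) = -\log\det X$. First, by the matrix H\"older-type inequality in \pref{lem:matrix norm holder}, the linear part is bounded as
\begin{equation*}
\langle -\pmb{B}, \pmb{\Delta}\rangle \le \|\pmb{B}\|_{\nabla^{-2}F(\pmb{H}_k(s))} \cdot \|\pmb{\Delta}\|_{\nabla^2 F(\pmb{H}_k(s))} =: M \cdot r,
\end{equation*}
where the hypothesis $\sqrt{\Tr(\pmb{H}_k(s)\pmb{B}\pmb{H}_k(s)\pmb{B})} \le m$ translates directly into $M \le m$.

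Next, I would invoke the standard self-concordance estimate $D_F(\pmb{H}_k(s) + \pmb{\Delta}, \pmb{H}_k(s)) \ge \omega(r) := r - \log(1+r)$, together with the universal lower bound $\omega(r) \ge r^2/(2(1+r))$ valid on the whole PSD cone (which follows by comparing the derivatives of the two sides and noting that both vanish at $r=0$). Thus the objective is upper bounded by the scalar function $\Phi(r) := Mr - r^2/(2\eta(1+r))$, which I would then maximize over $r \ge 0$. Restricted to the local region $r \le 1$, we have $r^2/(2\eta(1+r)) \ge r^2/(4\eta)$, so $\Phi(r) \le Mr - r^2/(4\eta)$ whose unconstrained scalar maximum equals $\eta M^2$, attained at $r^\star = 2\eta M$. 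The hypothesis $\eta \le 1/(16m) \le 1/(16M)$ guarantees $r^\star \le 1/8$, placing the optimizer well inside the local region.

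The main obstacle is ruling out a spurious maximum outside the region $r \le 1$. For $r \ge 1$, I would use that $r^2/(2\eta(1+r)) \ge r/(4\eta)$ (cross-multiplying reduces this to $r \ge 1$), and under the hypothesis $\eta \le 1/(16M)$, the inequality $r/(4\eta) \ge 4Mr \ge Mr$ gives $\Phi(r) \le 0 \le \eta M^2$ on the far region. Thus the global supremum is $\eta M^2$, and the factor-$8$ slack stated in the lemma absorbs both the constants lost in the quadratic lower bound on $\omega$ and in the matrix H\"older inequality, yielding the stated $8\eta \Tr(\pmb{H}_k(s)\pmb{B}\pmb{H}_k(s)\pmb{B})$. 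This argument follows the template of Lemma~34 of \cite{dann2023blackbox} and Lemma~27 of \cite{liu2023bypassing}, both cited just before the lemma statement.
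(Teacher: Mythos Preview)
Your proof is correct and in fact yields the slightly stronger bound $\eta\,\|\pmb{B}\|^2_{\nabla^{-2}F(\pmb{H}_k(s))}$; the factor $8$ in the lemma is slack you do not need. The paper takes a different route: rather than invoking the global self-concordance Bregman lower bound $D_F(\pmb{H},\pmb{H}_k(s)) \ge \omega(r)$ and reducing to a one-dimensional maximization as you do, it works directly over $\calH_s$, argues by concavity of $G$ that the maximizer $\pmb{H}'$ must lie inside the local norm ball of radius $8\eta\lambda$ around $\pmb{H}_k(s)$ (by showing $G\le 0$ on that sphere via a second-order Taylor expansion combined with the self-concordance Hessian-stability estimate $\nabla^2 F(\pmb{U}')\succeq \tfrac14\nabla^2 F(\pmb{H}_k(s))$ from \cite{nemirovski2004interior}), and then bounds $G(\pmb{H}')$ by the linear term alone via \pref{lem:matrix norm holder}. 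Your approach replaces the localization/Hessian-comparison step by the equivalent but cleaner Bregman inequality, at the price of having to rule out the far region $r\ge 1$ explicitly---which you handle correctly. The two arguments are the same in spirit (both are standard self-concordance stability analyses), but yours is more elementary and recovers a sharper constant.
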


\begin{proof}
For any $\pmb{H} \in \calH_s$, define 
 \begin{equation*}
 G(\pmb{H}) = \left\langle \pmb{H}_k(s) - \pmb{H}, - \pmb{B} \right\rangle - \frac{D_F(\pmb{H}, \pmb{H}_k(s))}{\eta}    
 \end{equation*}
 and $\lambda = \|\pmb{B} \|_{\nabla^{-2}F( \pmb{H}_k(s))}$. Since $\sqrt{\Tr( \pmb{H}_k(s)\pmb{B} \pmb{H}_k(s)\pmb{B})} \le m$ and $\eta \le \frac{1}{16m}$, we have
\begin{equation*}
\eta\lambda = \eta\|\pmb{B} \|_{\nabla^{-2}F( \pmb{H}_k(s))}  
 = \eta\sqrt{\Tr( \pmb{H}_k(s)\pmb{B} \pmb{H}_k(s)\pmb{B})} \le \eta m\le  \frac{1}{16}.    
\end{equation*}
 Let $\pmb{H}'$ be the maximizer of $G$. Since $G( \pmb{H}_k(s)) = 0$, we have $G(\pmb{H}') \ge 0$. It suffices to show $\|\pmb{H}' - \pmb{H}_k(s)\|_{\nabla^{2}F( \pmb{H}_k(s))} \le 8\eta\lambda$ because from \pref{lem:matrix norm holder} it leads to
 \begin{equation*}
G(\pmb{H}') \le \| \pmb{H}_k(s) - \pmb{H}'\|_{\nabla^{2}F( \pmb{H}_k(s))} \|\pmb{B}\|_{\nabla^{-2}F( \pmb{H}_k(s))} \le 8\eta \lambda  \|\pmb{B}\|_{\nabla^{-2}F( \pmb{H}_k(s))} = 8\eta \|\pmb{B} \|_{\nabla^{-2}F( \pmb{H}_k(s))}^2
 \end{equation*}

To show $\|\pmb{H}' - \pmb{H}_k(s)\|_{\nabla^{2}F( \pmb{H}_k(s))} \le 8\eta\lambda$, it suffices to show that for all $\pmb{U}$ such that $\|\pmb{U} - \pmb{H}_k(s)\|_{\nabla^{2}F( \pmb{H}_k(s))} = 8\eta \lambda$, $G(\pmb{U}) \le 0$. This is because given this condition, if $\|\pmb{H}' - \pmb{H}_k(s)\|_{\nabla^{2}F( \pmb{H}_k(s))} > 8\eta\lambda$, then there is a $\pmb{U}$ in the line segment between $ \pmb{H}_k(s)$ and $\pmb{H}'$ such that  $\|\pmb{U} - \pmb{H}_k(s)\|_{\nabla^{2}F( \pmb{H}_k(s))} = 8\eta\lambda$. From the condition, $G(\pmb{U}) \le 0 \le \min\{G( \pmb{H}_k(s)), G(\pmb{H}')\}$ which contradicts to the concavity of $G$.

Now consider any $\pmb{U}$ such that $\|\pmb{U} - \pmb{H}_k(s)\|_{\nabla^{2}F( \pmb{H}_k(s))} = 8\eta \lambda$. By Taylor expansion, there exists $\pmb{U}'$ in the line segment between $\pmb{U}$ and $ \pmb{H}_k(s)$ such that
\begin{equation*}
G(\pmb{U}) \le \|\pmb{U} -  \pmb{H}_k(s)\|_{\nabla^{2}F( \pmb{H}_k(s))} \|\pmb{B}\|_{\nabla^{-2}F( \pmb{H}_k(s))} - \frac{1}{2\eta}\|\pmb{U} -  \pmb{H}_k(s)\|_{\nabla^{2}F(\pmb{U}')}^2
\end{equation*}
We have $\|\pmb{U}' - \pmb{H}_k(s)\|_{\nabla^{2}F( \pmb{H}_k(s))} \le \|\pmb{U} - \pmb{H}_k(s)\|_{\nabla^{2}F( \pmb{H}_k(s))} = 8\eta \lambda \le \frac{1}{2}$. From the Equation 2.2 in page 23 of \cite{nemirovski2004interior} (also appear in Eq.(5) of \cite{abernethy2009competing}) and $\log\det$ is a self-concordant function, we have  $\|\pmb{U} -  \pmb{H}_k(s)\|_{\nabla^{2}F(\pmb{U}')}^2 \geq \frac{1}{4}\|\pmb{U} -  \pmb{H}_k(s)\|_{\nabla^{2}F( \pmb{H}_k(s))}^2$. Thus, we have 
\begin{equation*}
G(\pmb{U}) \le  \|\pmb{U} -  \pmb{H}_k(s)\|_{\nabla^{2}F( \pmb{H}_k(s))} \|\pmb{B}\|_{\nabla^{-2}F( \pmb{H}_k(s))} - \frac{1}{8\eta}\|\pmb{U} -  \pmb{H}_k(s)\|_{{( \pmb{H}_k(s))^{-1}}}^2 = 8\eta \lambda^2 - \frac{(8\eta\lambda)^2}{8\eta} = 0.
\end{equation*}

\end{proof}

\begin{lemma}
Given $B_{k}(s,a) = \beta\|\phi(s,a)\|_{\hatSigma^{-1}_{k,h}}^2 + \alpha\left(1-\frac{1}{4H}\right)\|\phi(s,a)\|_{\Lambda_{k,h}^{-1}}^2 + \phi(s,a)^\top w_{k,h} $ defined in \pref{eqn: B define app} for $s\in\calS_h$, if \highlight{$\eta \le \frac{1}{3328 H^2\left(\frac{\beta}{\gamma} + \alpha \rho^2\right)}$}, we have 
\begin{equation*}
    \stabtwo \le \frac{1}{2H}\E\left[\sum_{k=1}^K\sum_{h=1}^H\E_h^\star\E_{a\sim \pi_{k}(\cdot|s)}\left[ B_{k}(s,a)\ind\{\calE_h\}\right]\right] + \otil\left(\frac{(d B^{\max})^2}{\alpha}K\right). 
\end{equation*}
\label{lem:stability-2 bound}
\end{lemma}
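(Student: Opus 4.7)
The plan is to apply the general stability lemma (\pref{lem:general stability-2}) to $\pmb{B} = \pmb{\widehat{B}}_{k,h}$ and then carefully convert the resulting trace term into $\E_{a\sim\pi_k}[B_k(s,a)]$. First I verify the precondition $\sqrt{\Tr(\pmb{H}_k(s)\pmb{\widehat{B}}_{k,h}\pmb{H}_k(s)\pmb{\widehat{B}}_{k,h})}\le m$ with $\eta\le 1/(16m)$. On the event $\calE_h$, I will argue that the block $A=\beta\hatSigma_{k,h}^{-1}+\alpha\Lambda_{k,h}^{-1}$ satisfies $\phi(s,a)^\top A\phi(s,a)\le \beta/\gamma+\alpha\rho^2$ for every $a$ (using $s\in\calZ_h$), and by \pref{eq: induct3} that $|\phi(s,a)^\top\widehat{w}_{k,h}|\le(1+\frac{1}{2H})B^{\max}$. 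These give $m=O(H(\beta/\gamma+\alpha\rho^2))$, which combined with $\eta=1/(3328H^2(\beta/\gamma+\alpha\rho^2))$ satisfies $\eta\le 1/(16m)$ comfortably. Hence \pref{lem:general stability-2} yields $\stabtwo \le 8\eta\,\E[\sum_{k,h}\E_h^\star[\Tr(\pmb{H}_k(s)\pmb{\widehat{B}}_{k,h}\pmb{H}_k(s)\pmb{\widehat{B}}_{k,h})\ind\{\calE_h\}]]$.

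Next I expand the block trace. Writing $\pmb{H}_k(s)=\bigl[\begin{smallmatrix}M&v\\v^\top&1\end{smallmatrix}\bigr]$ with $M=\E_{a\sim\pi_k(\cdot|s)}[\phi\phi^\top]$ and $v=\E_{a\sim\pi_k(\cdot|s)}[\phi]$, a direct multiplication gives
\[
\Tr(\pmb{H}_k(s)\pmb{\widehat{B}}_{k,h}\pmb{H}_k(s)\pmb{\widehat{B}}_{k,h}) \;=\; \Tr((MA)^2) + 2v^\top AM\widehat{w}_{k,h} + \tfrac{1}{2}\widehat{w}_{k,h}^\top M\widehat{w}_{k,h} + \tfrac{1}{2}(v^\top\widehat{w}_{k,h})^2.
\]
Each summand is then bounded by a multiple of $\E_{a}[\phi^\top A\phi]$ plus a lower-order constant, using the good-state bounds. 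Concretely, Cauchy--Schwarz gives $\Tr((MA)^2)=\E_{a,a'}[(\phi(s,a)^\top A\phi(s,a'))^2]\le(\beta/\gamma+\alpha\rho^2)\,\E_a[\phi^\top A\phi]$; weighted AM--GM gives $2v^\top AM\widehat{w}_{k,h}\le(\beta/\gamma+\alpha\rho^2)\,\E_a[\phi^\top A\phi]+\widehat{w}_{k,h}^\top M\widehat{w}_{k,h}$; and the last two terms are controlled by $(B^{\max})^2$ via $|\phi^\top\widehat{w}_{k,h}|\le(1+\tfrac{1}{2H})B^{\max}$. Altogether, on $\calE_h$,
\[
\Tr(\pmb{H}_k(s)\pmb{\widehat{B}}_{k,h}\pmb{H}_k(s)\pmb{\widehat{B}}_{k,h}) \;\lesssim\; (\beta/\gamma+\alpha\rho^2)\,\E_a[\phi^\top A\phi] + (B^{\max})^2.
\]

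Finally, I combine the two pieces: the choice of $\eta$ turns $16\eta(\beta/\gamma+\alpha\rho^2)$ into a factor $\le \tfrac{1}{2H}$, yielding $8\eta\,\Tr(\cdots)\le \tfrac{1}{2H}\,\E_a[\phi^\top A\phi]+O(\eta(B^{\max})^2)$. Summing over $k,h$ and integrating against $\E_h^\star$, the residual $O(\eta(B^{\max})^2)$-contribution is $\otil(\eta HK(B^{\max})^2)=\otil((dB^{\max})^2/\alpha\cdot K)$ once the parameter dependencies are plugged in. To turn $\E_a[\phi^\top A\phi]=\E_a[\beta\|\phi\|_{\hatSigma_{k,h}^{-1}}^2+\alpha\|\phi\|_{\Lambda_{k,h}^{-1}}^2]$ into $\E_a[B_k(s,a)]$, I use $B_k(s,a)=b_k(s,a)+\phi^\top w_{k,h}$ with $b_k$ identical to $\phi^\top A\phi$ up to the factor $(1-\tfrac{1}{4H})$, and $\phi^\top w_{k,h}$ controlled in magnitude by $\sqrt{d}B^{\max}$; the discrepancy is either absorbed into the $\frac{1}{2H}$ coefficient (since we have $\frac{1}{2H}$ slack to play with) or into the $\otil((dB^{\max})^2/\alpha)K$ error.

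The main obstacle is the second step: the block trace contains the non-PSD cross-terms $2v^\top AM\widehat{w}_{k,h}$ and $(v^\top\widehat{w}_{k,h})^2$, and bounding them by a \emph{signed} quantity $\E_a[B_k(s,a)]$ (rather than $\E_a[|B_k|]$) requires that the good-state constraint $s\in\calZ_h$ is imposed uniformly in $a$ and that $\|\widehat{w}_{k,h}\|_2\le \sqrt{d}B^{\max}$ is available---both provided by \pref{lem: uniform w concen}. Once these pointwise bounds are in hand, the rest is routine AM--GM and Cauchy--Schwarz.
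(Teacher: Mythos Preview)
Your block-trace expansion and the verification of the precondition for \pref{lem:general stability-2} are fine, and your bound $\Tr((MA)^2)\le(\beta/\gamma+\alpha\rho^2)\,\E_a[\phi^\top A\phi]$ together with the conversion $\phi^\top A\phi\le \frac{4H}{4H-1}B_k(s,a)$ (using $\phi^\top w_{k,h}\ge 0$) is correct. The gap is in the accounting of the $\widehat{w}_{k,h}$-terms. After your AM--GM on the cross term you are left with (a multiple of) $\widehat{w}_{k,h}^\top M\widehat{w}_{k,h}=\E_a[(\phi^\top\widehat{w}_{k,h})^2]$, plus $\frac12(v^\top\widehat{w}_{k,h})^2$, and you bound both by $O((B^{\max})^2)$. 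This yields a residual of order $\eta (B^{\max})^2$ per $(k,h)$, hence $\eta HK(B^{\max})^2$ in total. Your assertion that $\eta HK(B^{\max})^2=\otil((dB^{\max})^2K/\alpha)$ is false: it is equivalent to $\eta H\le \otil(d^2/\alpha)$, and with the paper's parameters $\eta=\Theta(d^{-1/2}H^{-2}K^{-1/4})$, $\alpha=HK^{3/4}$ one gets $\eta H\alpha/d^2=\Theta(K^{1/2}/d^{5/2})\to\infty$. Concretely, $\eta(B^{\max})^2\asymp(\beta/\gamma+\alpha\rho^2)$, so your residual is $\Theta\!\big(HK(\beta/\gamma+\alpha\rho^2)\big)=\Theta(d^{-1/2}HK^{5/4})$, not $K^{3/4}$.

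The missing idea is that $(\phi^\top\widehat{w}_{k,h})^2$ must not be bounded by $(B^{\max})^2$ outright; one factor of $|\phi^\top\widehat{w}_{k,h}|$ has to be peeled off and fed back into $B_k$. The paper does this by splitting $\pmb{\widehat{B}}_{k,h}=\pmb{\widehat{B}}_{k,h}^1+\pmb{\widehat{B}}_{k,h}^2$ (the $A$-block and the $\widehat{w}$-block) and applying \pref{lem:general stability-2} to each with $D_F/(4\eta)$. For $\pmb{\widehat{B}}_{k,h}^2$ this gives $32\eta\,\E_a[(\phi^\top\widehat{w}_{k,h})^2]$, which is handled via
\[
32\eta(\phi^\top\widehat{w}_{k,h})^2\;\le\;64\eta(\phi^\top w_{k,h})^2+64\eta(\phi^\top(\widehat{w}_{k,h}-w_{k,h}))^2,
\]
and then $64\eta(\phi^\top w_{k,h})^2\le 64\eta B^{\max}\,\phi^\top w_{k,h}\le \tfrac{1}{4H}\phi^\top w_{k,h}$ (using $0\le\phi^\top w_{k,h}\le B^{\max}$ and $\eta\lesssim 1/(HB^{\max})$), while the second piece uses \pref{eq: induct2} and AM--GM to produce $\tfrac{(\con dB^{\max})^2}{\alpha}+\tfrac{\alpha}{4H}\|\phi\|_{\Lambda_{k,h}^{-1}}^2$. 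Both linear pieces are absorbed into $\tfrac{1}{2H}B_k(s,a)$, and only the $\tfrac{(\con dB^{\max})^2}{\alpha}$ survives as the error term. Your unified trace expansion can be salvaged the same way, but not with the crude $(B^{\max})^2$ bound you propose; the paper's block decomposition also conveniently eliminates the cross term $2v^\top AM\widehat{w}_{k,h}$ that forced you into the extra $\widehat{w}^\top M\widehat{w}$.
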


\begin{proof}
We can decompose the bonus matrix in the following form and consider stability separately 
\begin{equation*}
\pmb{\widehat{B}}_{k,h} =  \begin{bmatrix}
 \beta \hatSigma_{k,h}^{-1} + \alpha \Lambda_{k,h}^{-1} &  \frac{1}{2} \widehat{w}_{k,h}
\\ \frac{1}{2} \widehat{w}_h^{k\top} &  0 \end{bmatrix} =  \underbrace{\begin{bmatrix}
 \beta \hatSigma_{k,h}^{-1} + \alpha \Lambda_{k,h}^{-1} & 0
\\ 0 &  0 \end{bmatrix}}_{\pmb{\widehat{B}}_{k,h}^1} +  \underbrace{\begin{bmatrix}
0 &  \frac{1}{2} \widehat{w}_{k,h}
\\ \frac{1}{2} \widehat{w}_h^{k\top} &  0 \end{bmatrix}}_{\pmb{\widehat{B}}_{k,h}^2}.
\end{equation*}
Then we have
\begin{align*}
\stabtwo &= \E\left[\sum_{k=1}^K\sum_{h=1}^H\E_h^\star\left[ \left(\max_{\pmb{H} \in \calH_s} \langle \pmb{H}_k(s) - \pmb{H}, -\pmb{\widehat{B}}_{k,h}  \rangle - \frac{D_F(\pmb{H}, \pmb{H}_k(s))}{2\eta}\right) \ind\{\calE_h\}\right]\right]
\\&\le \E\left[\sum_{k=1}^K\sum_{h=1}^H\E_h^\star\left[ \left(\max_{\pmb{H} \in \calH_s} \langle \pmb{H}_k(s) - \pmb{H}, -\pmb{\widehat{B}}_{k,h}^1  \rangle - \frac{D_F(\pmb{H}, \pmb{H}_k(s))}{4\eta}\right) \ind\{\calE_h\}\right]\right] 
\\&\quad + \E\left[\sum_{k=1}^K\sum_{h=1}^H\E_h^\star\left[ \left(\max_{\pmb{H} \in \calH_s} \langle \pmb{H}_k(s) - \pmb{H}, -\pmb{\widehat{B}}_{k,h}^2  \rangle - \frac{D_F(\pmb{H}, \pmb{H}_k(s))}{4\eta}\right) \ind\{\calE_h\}\right]\right]
\end{align*}

For any matrix $A \in \mathbb{R}^{d \times d}$ with all non-negative eigenvalues, we have
\begin{align*}
    \Tr\left(A^2\right) = \sum_{i=1}^d\lambda_i(A^2) \le \left(\sum_{i=1}^d\lambda_i(A)\right)^2 = \Tr\left(A\right)^2
\end{align*}
Since both $\phi(s,a)\phi(s,a)^\top$ and  $\beta\hatSigma_{k,h}^{-1} + \alpha\Lambda_{k,h}^{-1}$ are positive semi-definite, the eigenvalues of $\phi(s,a)\phi(s,a)^\top\left(\beta\hatSigma_{k,h}^{-1} + \alpha\Lambda_{k,h}^{-1}\right)$ are all non-negative. Thus, for any $s \in \calZ_h$, we have
\begin{align*}
\sqrt{\Tr\left(\pmb{H}_k(s)\pmb{\widehat{B}}_{k,h}^1\pmb{H}_k(s)\pmb{\widehat{B}}_{k,h}^1\right)} &\le \sqrt{\Tr\left(\left(\E_{a\sim \pi_{k}(\cdot|s)}\left[\phi(s,a)\phi(s,a)^\top\left(\beta\hatSigma_{k,h}^{-1} + \alpha\Lambda_{k,h}^{-1}\right)\right]\right)^2\right)}
\\&\le \Tr\left(\E_{a\sim \pi_{k}(\cdot|s)}\left[\phi(s,a)\phi(s,a)^\top\left(\beta\hatSigma_{k,h}^{-1} + \alpha\Lambda_{k,h}^{-1}\right)\right]\right)
\\&= \E_{a\sim \pi_{k}(\cdot|s)}\left[\beta\|\phi(s,a)\|_{\hatSigma^{-1}_{k,h}}^2 + \alpha\|\phi(s,a)\|^2_{\Lambda_{k,h}^{-1}}\right]
\\&\le \frac{\beta}{\gamma} + \alpha\rho^2.  \tag{$\|\phi(s,a)\|_{\Lambda_{k,h}^{-1}}\leq \rho$ for $s \in \calZ_h$}
\end{align*}
Thus, from  \pref{lem:general stability-2}, if $\eta \le \frac{1}{64H\left(\frac{\beta}{\gamma} + \alpha \rho^2\right)}$,  we have
\begin{align}
&\E\left[\sum_{k=1}^K\sum_{h=1}^H\E_h^\star\left[ \left(\max_{\pmb{H} \in \calH_s} \langle \pmb{H}_k(s) - \pmb{H}, -\pmb{\widehat{B}}_{k,h}^1  \rangle - \frac{D_F(\pmb{H}, \pmb{H}_k(s))}{4\eta}\right) \ind\{\calE_h\}\right]\right] \nonumber
\\&\le 8\eta\sum_{k=1}^K\sum_{h=1}^H\E_h^\star\left[\Tr\left(\pmb{H}_k(s)\pmb{\widehat{B}}_{k,h}^1\pmb{H}_k(s)\pmb{\widehat{B}}_{k,h}^1\right)\ind\{\calE_h\}\right] \nonumber
\\&\le \frac{1}{8H}\sum_{k=1}^K\sum_{h=1}^H\E_h^\star\left[\sqrt{\Tr\left(\pmb{H}_k(s)\pmb{\widehat{B}}_{k,h}^1\pmb{H}_k(s)\pmb{\widehat{B}}_{k,h}^1\right)}\ind\{\calE_h\}\right] \nonumber
\\&\le \frac{1}{8H}\sum_{k=1}^K\sum_{h=1}^H\E_h^\star\E_{a\sim \pi_{k}(\cdot|s)}\left[\left(\beta\|\phi(s,a)\|_{\hatSigma^{-1}_{k,h}}^2 + \alpha\|\phi(s,a)\|^2_{\Lambda_{k,h}^{-1}}\right)\ind\{\calE_h\}\right].
\label{eqn: stability-2-1}
\end{align}

Now consider $\pmb{\widehat{B}}_{k,h}^2$, for any $s\in\calZ_h$, we have
\begin{align*}
&\sqrt{\Tr\left(\pmb{H}_k(s)\pmb{\widehat{B}}_{k,h}^2\pmb{H}_k(s)\pmb{\widehat{B}}_{k,h}^2\right)}
\\&= \sqrt{2\Tr\left((\widehat{w}_{k,h})^\top \E_{a \sim \pi_{k}(\cdot|s)}\left[\phi(s,a)\right] \E_{a \sim \pi_{k}(\cdot|s)}\left[\phi(s,a)^\top\right]\widehat{w}_{k,h} + (\widehat{w}_{k,h})^\top \E_{a \sim \pi_{k}(\cdot|s)}\left[\phi(s,a)\phi(s,a)^\top\right] \widehat{w}_{k,h}  \right)}
\\&\le 2\sqrt{\E_{a \sim \pi_{k}(\cdot|s)}\left[\left(\phi(s,a)^\top\widehat{w}_{k,h}\right)^2\right]} \le 2\left(1+\frac{1}{2H}\right)B^{\max}_h \le 26H\left(\frac{\beta}{\gamma} + \alpha\rho^2\right). \tag{by \pref{eq: induct3}}
\end{align*}

\noindent Similarly, from \pref{lem:general stability-2}, if $\eta \le \frac{1}{3328 H^2\left(\frac{\beta}{\gamma} + \alpha \rho^2\right)} \le \frac{1}{256  HB^{\max}}$, then for all $h \in [H]$ and any state $s \in \calZ_h$, we have
\begin{align*}
&\max_{\pmb{H} \in \calH} \left\langle \pmb{H}_k(s) - \pmb{H}, -\pmb{\widehat{B}}_{k,h}^2 \right\rangle - \frac{D_F(\pmb{H}, \pmb{H}_k(s))}{4\eta}
\\&\le 8\eta \Tr\left(\pmb{H}_k(s)\pmb{\widehat{B}}_{k,h}^2\pmb{H}_k(s)\pmb{\widehat{B}}_{k,h}^2\right)
\\&\le 32\eta \E_{a \sim \pi_{k}(\cdot|s)}\left[\left(\phi(s,a)^\top\widehat{w}_{k,h}\right)^2\right]
\\&= 32\eta \E_{a \sim \pi_{k}(\cdot|s)}\left[\left(\phi(s,a)^\top w_{k,h} + \phi(s,a)^\top\left(\widehat{w}_{k,h} - w_{k,h} \right) \right)^2\right]
\\&\le 64\eta \E_{a \sim \pi_{k}(\cdot|s)}\left[\left(\phi(s,a)^\top w_{k,h}\right)^2\right] + 64\eta\E_{a \sim \pi_{k}(\cdot|s)}\left[\left(\phi(s,a)^\top\left(\widehat{w}_{k,h} - w_{k,h} \right) \right)^2\right]\tag{$(a+b)^2 \le 2a^2 + 2b^2$}
\\&\le \frac{1}{4H}\E_{a \sim \pi_{k}(\cdot|s)}\left[\phi(s,a)^\top w_{k,h}\right]+ \frac{1}{H}\E_{a \sim \pi_{k}(\cdot|s)}\left[\left|\phi(s,a)^\top\left(\widehat{w}_{k,h} - w_{k,h} \right)\right|\right]\tag{see the explanation below} 
\\&\le \frac{(\con d B^{\max})^2}{H\alpha} + \frac{1}{4H}\E_{a \sim \pi_{k}(\cdot|s)}\left[\phi(s,a)^\top w_{k,h}\right] + \frac{1}{4H}\E_{a \sim \pi_{k}(\cdot|s)}\left[\alpha\left\|\phi(s,a)\right\|_{\Lambda_{k,h}^{-1}}^2\right]. \tag{\pref{lem: uniform w concen} and AM-GM}
\end{align*}
where in the second-last inequality, we use the condition of $\eta$ and that 
\begin{align*}
     |\phi(s,a)^\top w_{k,h}| &\leq  \left(1+\frac{1}{H}\right)\sup_{s'\in\calS_h} \widehat{W}(s')\ind\{s'\in\calZ_{h+1}\} \leq  B^{\max},  \tag{by \pref{eq: bound on hat W}}\\
    |\phi(s,a)^\top (\widehat{w}_{k,h} - w_{k,h})| &\leq  |\phi(s,a)^\top \widehat{w}_{k,h}| + |\phi(s,a)^\top w_{k,h}| \leq \left(2+\frac{1}{2H}\right)B^{\max}.   \tag{by \pref{eq: induct3}}
\end{align*}
Thus, 
\begin{align}
&\E\left[\sum_{k=1}^K\sum_{h=1}^H\E_h^\star\left[ \left(\max_{\pmb{H} \in \calH_s} \langle \pmb{H}_k(s) - \pmb{H}, -\pmb{\widehat{B}}_{k,h}^2  \rangle - \frac{D_F(\pmb{H}, \pmb{H}_k(s))}{4\eta}\right) \ind\{\calE_h\}\right]\right] \nonumber
\\&\le  \frac{1}{4H}\left[\sum_{k=1}^K\sum_{h=1}^H\E_h^\star\E_{a \sim \pi_{k}(\cdot|s)}\left[\phi(s,a)^\top w_{k,h}  \ind\{\calE_h\}\right]\right]   \nonumber 
\\&\qquad \qquad 
+ \frac{1}{4H}\left[\sum_{k=1}^K\sum_{h=1}^H \E_h^\star\E_{a \sim \pi_{k}(\cdot|s)}\left[\alpha\left\|\phi(s,a)\right\|_{\Lambda_{k,h}^{-1}}^2 \ind\{\calE_h\}\right]\right] + \otil\left(\frac{ (dB^{\max})^2}{\alpha}K\right). 
\label{eqn:stability-2-2}
\end{align}
Combining \pref{eqn: stability-2-1} and \pref{eqn:stability-2-2}, we see that if $\eta \le \frac{1}{3328 H^2\left(\frac{\beta}{\gamma} + \alpha \rho^2\right)}$, then 
\begin{align*}
&\stabtwo 
\\&\le \frac{1}{8H} \E\left[\sum_{k=1}^K\sum_{h=1}^H\E_h^\star\E_{a\sim \pi_{k}(\cdot|s)}\left[\left(\beta\|\phi(s,a)\|_{\hatSigma^{-1}_{k,h}}^2 + \alpha\|\phi(s,a)\|^2_{\Lambda_{k,h}^{-1}}\right)\ind\{\calE_h\}\right]\right] 
\\&\quad \quad + \frac{1}{4H}\E\left[\sum_{k=1}^K\sum_{h=1}^H\E_h^\star\E_{a \sim \pi_{k}(\cdot|s)}\left[\phi(s,a)^\top w_{k,h}  \ind\{\calE_h\}\right]\right] 
\\&\quad \quad + \frac{1}{4H}\E\left[\sum_{k=1}^K\sum_{h=1}^H \E_h^\star\E_{a \sim \pi_{k}(\cdot|s)}\left[\alpha\left\|\phi(s,a)\right\|_{\Lambda_{k,h}^{-1}}^2 \ind\{\calE_h\}\right]\right] + \otil\left(\frac{ (dB^{\max})^2}{\alpha}K\right)
\\&\le \frac{1}{2H}\E\left[\sum_{k=1}^K\sum_{h=1}^H\E_h^\star\E_{a\sim \pi_{k}(\cdot|s)}\left[ B_{k}(s,a)\ind\{\calE_h\}\right]\right] + \otil\left(\frac{ (dB^{\max})^2}{\alpha}K\right). 
\end{align*}
\end{proof}

\begin{lemma} If $\eta \le \frac{1}{3228H^2\left(\frac{\beta}{\gamma} + \alpha \rho^2\right)}$ and $\gamma \ge   \frac{5d\log\left(6dHK/\delta\right)}{\tau}$ and $B_h^{\max}\leq \frac{\alpha}{225\log(\frac{dK}{\delta}) Hd^2}$ and $\eta H^2 \leq \frac{3}{4}\beta$,  then we have
\begin{align*}
&\E\left[\sum_{k=1}^K \sum_{h=1}^H \E_h^\star\left[\left\langle Q_{k}(s,\cdot) - B_{k}(s,a), \pi_k(\cdot|s) - \pi^\star(\cdot|s)\right\rangle\ind\{\calE_h\}\right] \right] 
\\&\le \otil\left(\frac{d^2H^3}{\tau\beta}K + \frac{d^2 H^2 (B^{\max})^2}{\alpha}K  + \frac{d\tau}{\eta}  \right) \\
&\qquad + \sum_{k=1}^K\sum_{h=1}^H\E_h^\star\E_{a \sim \pi^\star(\cdot|s)}\left[b_{k}(s,a)\ind\{\calE_h\}\right]  + \frac{1}{H}\sum_{k=1}^K\sum_{h=1}^H\E_h^\star\E_{a\sim \pi_{k}(\cdot|s)}\left[ B_{k}(s,a)\ind\{\calE_h\}\right]
\end{align*}
\label{lem:regret-term condition proof}
where $b_{k}(s,a)$ is defined in \pref{eqn: b define app}.
\end{lemma}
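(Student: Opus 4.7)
The plan is to substitute the FTRL decomposition (\ref{eqn:FTRL decomposition}) into the regret decomposition (\ref{eqn:reg-term decomposition}) to write
\[
\text{LHS}\ \leq\ \biasone + \biastwo + \biasthree + \penal + \stabone + \stabtwo + \error,
\]
and then apply the six sublemmas (\pref{lem: bias 12 bound}, \pref{lem:bias 3 bound}, \pref{lem: penalty bound}, \pref{lem:error bound}, \pref{lem: stability-1 bound}, \pref{lem:stability-2 bound}) in turn. The four hypotheses of the present lemma are chosen precisely to match the preconditions required by these sublemmas: the $\gamma$-lower bound feeds bias-1, bias-2 and stability-1 (through \pref{lem:matrix concentration} and \pref{lem:Sigma estimation}); the $B^{\max}$-upper bound validates \pref{lem: uniform w concen} and hence bias-3; and the $\eta$-upper bound is what stability-2 needs.

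After substitution, the non-variance remainders combine into $\otil(d^{2}H^{3}K/(\tau\beta))$ from biases-1,2, $\otil(H^{2}(dB^{\max})^{2}K/\alpha)$ from bias-3 and (the smaller) stability-2 residual, $3d\tau\log K/\eta$ from the penalty, and $O(H)$ from the error; the $O(H)$ is swallowed by $d\tau/\eta$, producing the additive part of the claim. The residual \emph{variance-type} contributions all have the shape $\E[\sum_{k,h}\E_{h}^{\star}\E_{a\sim\pi}[(\cdot)\ind\{\calE_{h}\}]]$ and must be absorbed into the two surviving sums on the right-hand side. On the $\pi^{\star}$-side, bias-2 leaves $(\beta/4H)\|\phi(s,a)\|_{\hatSigma_{k,h}^{-1}}^{2}$, which is termwise dominated by $b_k(s,a)=\beta\|\phi\|_{\hatSigma_{k,h}^{-1}}^{2}+(1-1/4H)\alpha\|\phi\|_{\Lambda_{k,h}^{-1}}^{2}$ since $H\geq 1$. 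On the $\pi_k$-side, bias-1 and stability-1 together yield $(\beta/(4H)+\eta H^{2})\|\phi\|_{\hatSigma_{k,h}^{-1}}^{2}$, bias-3 yields $(\alpha/2H)\|\phi\|_{\Lambda_{k,h}^{-1}}^{2}$, and stability-2 already supplies $(1/2H)\E[B_k(s,a)]$.

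The crucial technical ingredient for closing the absorption is the pointwise inequality $B_k(s,a)\geq b_k(s,a)$ for any $s\in\calS_h$, which holds because
\[
\phi(s,a)^{\top} w_{k,h}\ =\ \bigl(1+\tfrac{1}{H}\bigr)\,\E_{s'\sim P(\cdot\mid s,a)}\bigl[\widehat{W}_{k}(s')\ind\{s'\in\calZ_{h+1}\}\bigr]\ \geq\ 0,
\]
with the non-negativity coming from the $\widehat{B}^{+}$ clip in \pref{line: B+} that makes $\widehat{W}_{k}\geq 0$. Hence $(1/H)\E[B_k]$ dominates $(\beta/H)\|\phi\|_{\hatSigma_{k,h}^{-1}}^{2}+((1-1/(4H))\alpha/H)\|\phi\|_{\Lambda_{k,h}^{-1}}^{2}$, and the hypothesis $\eta H^{2}\leq (3/4)\beta$, together with the $(1-1/4H)$-slack built into $b_k$, is engineered so that the four $\pi_k$-variance terms fit jointly into $(1/H)\E[B_k]$. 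The main obstacle is the bookkeeping in this final step: one has to track the $1/H$ and $1/(4H)$ slacks carefully when summing bias-1, bias-3, stability-1 and stability-2 and verify that any residual (controlled on $\calZ_h$ via $\|\phi\|_{\Lambda_{k,h}^{-1}}\leq \rho$) folds into the already-accounted $\otil(\cdot)$ remainder rather than escaping.
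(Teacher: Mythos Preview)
Your proposal follows exactly the paper's route: plug the six sublemma bounds (\pref{lem: bias 12 bound}, \pref{lem:bias 3 bound}, \pref{lem: penalty bound}, \pref{lem:error bound}, \pref{lem: stability-1 bound}, \pref{lem:stability-2 bound}) into the decomposition \pref{eqn:reg-term decomposition}--\pref{eqn:FTRL decomposition}, collect the additive remainders into the $\otil(\cdot)$ term, and absorb the remaining $\hatSigma^{-1}$/$\Lambda^{-1}$ variance residuals into $\E_{a\sim\pi^\star}[b_k]$ on the comparator side and $\tfrac{1}{H}\E_{a\sim\pi_k}[B_k]$ on the learner side, using $B_k\ge b_k$ (via $\phi^\top w_{k,h}\ge 0$) and the hypothesis $\tfrac{\beta}{4}+\eta H^2\le\beta$. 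The paper's written proof is in fact terser than yours---it jumps directly to the combined bound and only invokes $\tfrac{\beta}{4}+\eta H^2\le\beta$ in the last line---so your articulation of the absorption step and the role of the $(1-\tfrac{1}{4H})$ slack is a faithful (indeed more detailed) rendition of the same argument.
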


\begin{proof}
Since $\eta \le \frac{1}{3328 H^2\left(\frac{\beta}{\gamma} +\alpha\rho^2\right)}$, adding up the bound in \pref{lem: penalty bound}, \pref{lem:error bound}, \pref{lem: stability-1 bound}, and \pref{lem:stability-2 bound} following the decomposition in \pref{eqn:FTRL decomposition}, we get 
\begin{align}
    \ftrl &\le \otil\left(\frac{d\tau}{\eta} + H + \frac{d^2(B^{\max})^2}{\alpha}K\right) +  \eta H^2 \sum_{k=1}^K\sum_{h=1}^H\E_h^\star\E_{a \sim \pi_{k}(\cdot|s)}\left[\|\phi(s,a)\|_{\hatSigma^{-1}_{k,h}}^2 \ind\{\calE_h\}\right] \nonumber
\\&\quad+ \frac{1}{2H}\sum_{k=1}^K\sum_{h=1}^H\E_h^\star\E_{a\sim \pi_{k}(\cdot|s)}\left[ B_{k}(s,a)\ind\{\calE_h\}\right].   \label{eq: ftrl tmp} 
\end{align}

From the decomposition in \pref{eqn:reg-term decomposition} and \pref{lem: bias 12 bound}, \pref{lem:bias 3 bound}, and \pref{eq: ftrl tmp}, under the specified conditions, we have 
\begin{align*}
&\E\left[\sum_{k=1}^K \sum_{h=1}^H \E_h^\star\left[\left\langle Q_{k}(s,\cdot) - B_{k}(s,a), \pi_k(\cdot|s) - \pi^\star(\cdot|s)\right\rangle\ind\{\calE_h\}\right] \right] 
\\&\le \biasone + \biastwo + \biasthree + \ftrl
\\&\le \otil\left(\frac{d^2H^3}{\tau\beta}K + \frac{d^2 H^2 (B^{\max})^2}{\alpha}K  + \frac{d\tau}{\eta}  \right) 
\\&\quad \quad \quad + \left(\frac{\beta}{4} + \eta H^2\right)\sum_{k=1}^K \sum_{h=1}^H \E_h^\star\E_{a \sim \pi^\star(\cdot|s)} \left[\|\phi(s,a)\|_{\hatSigma^{-1}_{k,h}}^2 \ind\{s \in \calZ_h\}\right]  
\\&\quad \quad \quad + \frac{1}{H}\sum_{k=1}^K\sum_{h=1}^H\E_h^\star\E_{a\sim \pi_{k}(\cdot|s)}\left[ B_{k,h}(s,a)\ind\{\calE_h\}\right] 
\\&\le \otil\left(\frac{d^2H^3}{\tau\beta}K + \frac{d^2 H^2 (B^{\max})^2}{\alpha}K  + \frac{d\tau}{\eta}  \right) 
\\&\quad \quad \quad + \sum_{k=1}^K \sum_{h=1}^H \E_h^\star\E_{a \sim \pi^\star(\cdot|s)} \left[b_k(s,a)\right]  + \frac{1}{H}\sum_{k=1}^K\sum_{h=1}^H\E_h^\star\E_{a\sim \pi_{k}(\cdot|s)}\left[ B_{k,h}(s,a)\ind\{\calE_h\}\right] 
\end{align*}
where in the last inequality we use $\frac{\beta}{4} + \eta H^2 \leq \beta$. 
\end{proof}

\subsection{Final Steps}\label{app: combining }

\begin{lemma}\label{lem: upper bound final}
Let $s\in\calS_h$. We have 
\begin{align*}
B_{k}(s,a) &\le r_{k}(s,a) + \left(1+\frac{1}{H}\right) \E_{s' \sim P(\cdot|s,a)}\E_{a' \sim \pi_k(\cdot|s')}\left[B_{k}(s',a') \ind\{s' \in \calZ_{h+1}\}\right]
\end{align*}
where we define 
\begin{align*}
   r_{k}(s,a) = b_k(s,a) + \E_{s' \sim P(\cdot|s,a)}\E_{a' \sim \pi_k(\cdot|s')}\left[
\alpha\|\phi(s',a')\|^2_{\Lambda_{k, h+1}^{-1}}\ind\{s' \in \calZ_{h+1}\}\right] +\frac{2(\con dB^{\max})^2}{\alpha}. 
\end{align*}
\label{lem: B upper bound}
\end{lemma}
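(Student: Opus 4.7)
The plan is to exploit that $B_k(s',a') \geq 0$ for every state-action pair, which converts the awkward $\widehat{B}_k^+ - B_k$ difference (appearing once one unfolds the definition of $B_k$) into something controlled purely by the estimation error $\widehat{w}_{k,h+1} - w_{k,h+1}$ that is already handled in \pref{lem: uniform w concen}.

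First I would show $B_k(s',a') \geq 0$. The base term $b_k(s',a')$ is a sum of squared Mahalanobis norms and hence nonnegative. The remaining piece equals $\phi(s',a')^\top w_{k,h+1} = (1+\tfrac{1}{H})\sum_{s''} P(s''|s',a')\widehat{W}_k(s'')\ind\{s''\in \calZ_{h+2}\}$, which is nonnegative because $P(s''|s',a')\geq 0$ (a valid transition probability under the linear MDP structure) and $\widehat{W}_k(s'') = \langle \pi_k(\cdot|s''), \widehat{B}_k^+(s'',\cdot)\rangle \geq 0$ by the construction of \pref{line: B+}.

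Next I would unfold
\begin{equation*}
B_k(s,a) = b_k(s,a) + \left(1+\tfrac{1}{H}\right)\E_{s'\sim P(\cdot|s,a)}\E_{a'\sim\pi_k(\cdot|s')}\left[\widehat{B}_k^+(s',a')\ind\{s'\in\calZ_{h+1}\}\right]
\end{equation*}
and reduce the claim to a pointwise upper bound on $\widehat{B}_k^+(s',a') - B_k(s',a')$. Setting $\Delta := \widehat{B}_k(s',a') - B_k(s',a') = \tfrac{\alpha}{4H}\|\phi(s',a')\|_{\Lambda_{k,h+1}^{-1}}^2 + \phi(s',a')^\top(\widehat{w}_{k,h+1}-w_{k,h+1})$, a direct computation yields $\widehat{B}_k^+ - B_k = \max\{-B_k,\Delta\}$. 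Since $B_k\geq 0$, we have $-B_k\leq 0$, and therefore $\widehat{B}_k^+ - B_k \leq \max\{0,\Delta\} \leq |\Delta|$.

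Finally I would bound $|\Delta|$ on the event $\{s'\in\calZ_{h+1}\}$ by invoking \pref{lem: uniform w concen} to get $|\phi(s',a')^\top(\widehat{w}_{k,h+1}-w_{k,h+1})| \leq \con dB^{\max}\|\phi(s',a')\|_{\Lambda_{k,h+1}^{-1}}$, and then apply AM-GM to split this as $\tfrac{\alpha}{4}\|\phi(s',a')\|_{\Lambda_{k,h+1}^{-1}}^2 + \tfrac{(\con dB^{\max})^2}{\alpha}$. Combined with the deterministic $\tfrac{\alpha}{4H}\|\phi(s',a')\|_{\Lambda_{k,h+1}^{-1}}^2$ piece of $\Delta$, and using $(1+\tfrac{1}{H})(\tfrac{1}{4H}+\tfrac{1}{4}) = \tfrac{(H+1)^2}{4H^2}\leq 1$ for $H\geq 1$ together with $1+\tfrac{1}{H}\leq 2$, the constants collapse to exactly the $\E_{s'}\E_{a'}[\alpha\|\phi\|_{\Lambda^{-1}}^2\ind\{s'\in\calZ_{h+1}\}]$ and $\tfrac{2(\con dB^{\max})^2}{\alpha}$ terms appearing in $r_k(s,a)$. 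I do not expect any serious obstacle here, but the nonnegativity observation $B_k\geq 0$ is essential: without it one would be stuck carrying an unwanted $-B_k$ contribution in the case $\widehat{B}_k<0\leq \widehat{B}_k^+$, which cannot be absorbed into the $\alpha\|\phi\|_{\Lambda^{-1}}^2$ budget.
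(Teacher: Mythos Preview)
Your proposal is correct and follows essentially the same approach as the paper: both use $B_k\ge 0$ to control $\widehat{B}_k^+-B_k$ by $|\widehat{B}_k-B_k|$, then bound the latter via \pref{lem: uniform w concen} plus AM--GM. You supply the explicit justification of $B_k\ge 0$ (which the paper simply asserts) and track the constants $(1+\tfrac{1}{H})(\tfrac{1}{4H}+\tfrac{1}{4})\le 1$ a bit more carefully, but the structure and key ideas are identical.
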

\begin{proof}
Since $B_k(s,a) \ge 0$, we have 
\begin{align*}
\left|\widehat{B}_k^+(s,a) - B_k(s,a)\right| &\le \left|\widehat{B}_k(s,a) - B_k(s,a)\right| \\
&= \left|\frac{\alpha}{4H}\|\phi(s,a)\|_{\Lambda_{k,h}^{-1}}^2  + \phi(s,a)^\top\left(\widehat{w}_{k,h} - w_{k,h}\right)\right|
\\&\le \frac{(\con dB^{\max})^2}{\alpha} + \alpha \|\phi(s,a)\|_{\Lambda_{k,h}^{-1}}^2.    \tag{by \pref{lem: uniform w concen}}
\end{align*}
Thus, 
\begin{align*}
&\phi(s,a)^\top w_{k,h} 
\\&=  \left(1+\frac{1}{H}\right)\E_{s' \sim P(\cdot|s,a)}\E_{a' \sim \pi_k(\cdot| s)}\left[\widehat{B}_{k}^+(s',a')\ind\{s' \in \calZ_{h+1}\}\right] 
\\&\le \left(1+\frac{1}{H}\right) \E_{s' \sim P(\cdot|s,a)}\E_{a' \sim \pi_k(\cdot|s')}\left[B_{k}(s',a') \ind\{s' \in \calZ_{h+1}\}\right]
\\&\qquad + \E_{s' \sim P(\cdot|s,a)}\E_{a' \sim \pi_k(\cdot|s')}\left[
\alpha\|\phi(s',a')\|^2_{\Lambda_{k, h+1}^{-1}}\ind\{s' \in \calZ_{h+1}\}\right] +\frac{2(\con dB^{\max})^2}{\alpha}, 
\end{align*}
and
\begin{align*}
&B_{k}(s,a) 
\\&= b_k(s,a) + \phi(s,a)^\top w_{k,h}
\\&\le b_k(s,a) + \left(1+\frac{1}{H}\right) \E_{s' \sim P(\cdot|s,a)}\E_{a' \sim \pi_k(\cdot|s')}\left[B_{k}(s',a') \ind\{s' \in \calZ_{h+1}\}\right]
\\&\qquad + \E_{s' \sim P(\cdot|s,a)}\E_{a' \sim \pi_k(\cdot|s')}\left[
\alpha\|\phi(s',a')\|^2_{\Lambda_{k, h+1}^{-1}}\ind\{s' \in \calZ_{h+1}\}\right] +\frac{2(\con dB^{\max})^2}{\alpha}
\\&= r_{k}(s,a) + \left(1+\frac{1}{H}\right) \E_{s' \sim P(\cdot|s,a)}\E_{a' \sim \pi_k(\cdot|s')}\left[B_{k}(s',a') \ind\{s' \in \calZ_{h+1}\}\right]. 
\end{align*}
\end{proof}

\begin{theorem} Suppose the parameters are properly chosen so that all conditions in \pref{lem:regret-term condition proof} holds (see the proof for the final parameters). Then the regret of \pref{alg: logdet FTRL} has the following guarantee
\begin{align*}
    \E\left[\calR_K\right] \le \otil\left(d^{\frac{3}{2}}H^3K^{\frac{3}{4}}\right). 
\end{align*}
\end{theorem}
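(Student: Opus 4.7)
The plan is to stitch together all the ingredients in Sections~\ref{app:efficient}: the regret decomposition, the dilated bonus machinery, and the parameter choices from \pref{alg: logdet FTRL}. First, I would split the total regret into three additive pieces: (i) the cost of the initial pure exploration phase, which lasts at most $K_0 = \otil(d^{3/2}H^2 K^{3/4}+d^4H^4 K^{1/4})$ episodes and contributes at most $K_0 H = \otil(d^{3/2}H^3 K^{3/4})$; (ii) the contribution of trajectories that leave the good set, bounded by the $H^3 K^{3/4}$ term already extracted in \pref{eqn:regret decomposition} via \pref{lem:Pr good}; and (iii) the main \regterm{} incurred while all states stay in $(\calZ_h)_h$, which is what the logdet-FTRL machinery must control.

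For (iii), I would invoke the dilated bonus lemma (\pref{lem:dilated bonus guarantee appendix}) with $b_k, B_k$ chosen as in \pref{eqn: b define app}--\pref{eqn: B define app}. The propagation hypothesis \pref{eqn:B condition} is exactly what \pref{lem: B condition satisfy} supplies, with slack $f = (\con d B^{\max})^2/\alpha$; and the regret hypothesis \pref{eqn:regret condition} is what \pref{lem:regret-term condition proof} provides, with
$g = \otil\bigl(\tfrac{d^2H^3}{\tau\beta}K + \tfrac{d^2H^2(B^{\max})^2}{\alpha}K + \tfrac{d\tau}{\eta}\bigr)$.
Plugging in $\tau=K^{1/2}$, $\beta=\sqrt{d}K^{-1/4}$, $\alpha = HK^{3/4}$, $\eta = \Theta(\sqrt{d}^{-1} H^{-2} K^{-1/4})$, $\rho = H^{-1/2}d^{-1/4}K^{-1/4}$, one checks $B^{\max} = \otil(\beta/\gamma + \alpha\rho^2) = \otil(K^{1/4}/\sqrt{d})$, which in turn makes $g = \otil(d^{3/2}H^3 K^{3/4})$ and $fHK = \otil(d K^{3/4})$. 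A separate sanity check is needed that all the technical preconditions of \pref{lem:regret-term condition proof} (the upper bound on $\eta$, the lower bound on $\gamma$, the smallness of $B_h^{\max}$, and $\eta H^2 \le \tfrac{3\beta}{4}$) hold under these choices; this is a straightforward computation, the binding one being $B^{\max} \lesssim \alpha/(Hd^2 \polylog)$, which reduces to $d^3 \lesssim K$.

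The one piece that genuinely requires work is the leftover term $(1+1/H)\E[\sum_k \E_{a\sim \pi_k(\cdot|s_1)} B_k(s_1,a)]$ produced by \pref{lem:dilated bonus guarantee appendix}. Here I would unroll the recursion in \pref{lem: B upper bound} $h$ times, bounding $(1+1/H)^H \le e$, to get
\begin{align*}
\E\left[\sum_k B_k(s_1,\cdot)\right]
\le e\, \E\left[\sum_{k=1}^K\sum_{h=1}^H \E_h^{\pi_k}\!\left[ \beta\|\phi\|^2_{\hatSigma^{-1}_{k,h}} + 2\alpha\|\phi\|^2_{\Lambda_{k,h}^{-1}} + \tfrac{2(\con d B^{\max})^2}{\alpha}\right]\right].
\end{align*}
The $\beta$-term is handled by $\hatSigma_{k,h}\succeq \tfrac{1}{2}\Sigma_{k,h}$ (\pref{lem:Sigma estimation}), giving $\E^{\pi_k}[\|\phi\|^2_{\hatSigma^{-1}}] \le 2d$ and thus a contribution $\otil(\beta dHK) = \otil(d^{3/2}H K^{3/4})$. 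The $\alpha$-term is the delicate one: on good states $\|\phi\|_{\Lambda_{k,h}^{-1}}\le \rho$, but $\alpha\rho^2 \cdot HK$ is too large, so I instead invoke elliptic potential across episodes, $\sum_k \|\phi(s_{k,h},a_{k,h})\|^2_{\Lambda_{k,h}^{-1}} = \otil(d)$, together with a standard concentration step (each policy $\widetilde{\pi}_j$ is deployed for $2\tau$ independent episodes, so on-trajectory and expected quantities differ by lower-order terms) to convert the sampled elliptic potential into $\sum_k \E^{\pi_k}[\|\phi\|^2_{\Lambda^{-1}}] = \otil(dH)$. This yields $\otil(\alpha dH^2) = \otil(dH^3 K^{3/4})$. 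The error term trivially contributes $\otil(dK^{3/4})$. Summing the three pieces of the regret decomposition gives the claimed $\otil(d^{3/2}H^3 K^{3/4})$ bound; the main obstacle, as described, is the conversion between elliptic potential and $\E^{\pi_k}$ when bounding the $\alpha$-contribution of the leftover dilated bonus.
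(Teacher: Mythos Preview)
Your proposal is correct and follows essentially the same route as the paper: invoke \pref{lem:dilated bonus guarantee appendix} with the $b_k,B_k$ of \pref{eqn: b define app}--\pref{eqn: B define app}, verify its two hypotheses via \pref{lem: B condition satisfy} and \pref{lem:regret-term condition proof}, and then control the residual $\sum_k \E_{a\sim\pi_k(\cdot|s_1)}[B_k(s_1,a)]$ by unrolling the recursion of \pref{lem: B upper bound} into a value function under the surrogate reward $r_k$. The paper packages the unrolling as a short induction showing $\E_{a\sim\pi_k(\cdot|s)}[B_k(s,a)]\le (1+1/H)^{H-h}V^{\pi_k}(s;r_k)$, but this is the same computation you sketch.

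One simplification: the step you flag as the ``main obstacle'' --- converting the sampled elliptic potential $\sum_k \|\phi(s_{k,h},a_{k,h})\|_{\Lambda_{k,h}^{-1}}^2$ into $\sum_k \E_h^{\pi_k}[\|\phi\|_{\Lambda_{k,h}^{-1}}^2]$ --- requires no concentration argument at all. Since you are bounding the \emph{expected} regret, and $\Lambda_{k-1,h}$ is measurable with respect to the history through episode $k-1$ while $(s_{k,h},a_{k,h})\sim \mu_h^{\pi_k}\otimes\pi_k$ conditionally on that history, the tower property gives
\[
\E\!\left[\sum_k \E_h^{\pi_k}\big[\|\phi\|_{\Lambda_{k,h}^{-1}}^2\big]\right]
\le \E\!\left[\sum_k \E_h^{\pi_k}\big[\|\phi\|_{\Lambda_{k-1,h}^{-1}}^2\big]\right]
= \E\!\left[\sum_k \|\phi(s_{k,h},a_{k,h})\|_{\Lambda_{k-1,h}^{-1}}^2\right]
= \otil(d),
\]
and summing over $h$ gives $\otil(dH)$. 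This is exactly how the paper obtains the $\alpha dH$ term without further ado.
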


\begin{proof}
By \pref{lem: B condition satisfy}, we have for $s\in\calS_h$, 
\begin{align*}
B_{k}(s, a) &\ge b_{k}(s,a) + \left(1+\frac{1}{H}\right) \E_{s' \sim P(\cdot|s,a)}\E_{a' \sim \pi_k(\cdot|s')}\left[B_{k}(s',a') \ind\{s' \in \calZ_{h+1}\}\right]- \otil\left(\frac{d^2(B^{\max})^2}{\alpha}\right). 
\end{align*}
Combining this with \pref{lem:regret-term condition proof}, we see that the two conditions of \pref{lem:dilated bonus guarantee appendix} are satisfied with $f =  \otil\left(\frac{d^2(B^{\max})^2}{\alpha}\right)$ and $g =  \otil\left(\frac{d^2H^3}{\tau\beta}K + \frac{d^2 H^2 (B^{\max})^2}{\alpha}K  + \frac{d\tau}{\eta}  \right) $. Thus, by directly applying \pref{lem:dilated bonus guarantee appendix}, we have
\begin{align*}
\regterm \le \otil\left(\frac{d^2H^3}{\tau\beta}K + \frac{d^2 H^2 (B^{\max})^2}{\alpha}K  + \frac{d\tau}{\eta}  \right)  + \left(1+\frac{1}{H}\right)\E\left[\sum_{k=1}^K\E_{a \sim \pi_k(\cdot|s_1)}\left[B_{k}(s_1,a)\right] \right]
\end{align*}
To bound the last term, below we use induction to show that for $s\in\calS_h$, the following holds: 
\begin{equation*}
\E_{a \sim \pi_k(\cdot|s)}\left[B_{k}(s, a)\right]  \le \left(1+\frac{1}{H}\right)^{H-h}V^{\pi_k}(s; r_k)
\end{equation*}
for the $r_k$ defined in \pref{lem: upper bound final}. 

\paragraph{Base case (step $H$). } for any $s \in \calS_H$, we have
\begin{align*}
    \E_{a \sim \pi_{k}(\cdot|s)}\left[B_{k}(s, a)\right]  =  \E_{a \sim \pi_{k}(\cdot|s)}\left[b_k(s,a)\right] \leq V^{\pi_k}(s; r_k) 
\end{align*}

\paragraph{Induction.} Assume that for any $s \in \calS_{h+1}$, 
\begin{equation*}
\E_{a \sim \pi_k(\cdot|s)}\left[B_{k}(s, a)\right]  \le \left(1+\frac{1}{H}\right)^{H-h-1}V^{\pi_k}(s; r_k). 
\end{equation*}

Then for any $s \in \calS_h$, we have
\begin{align*}
&\E_{a \sim \pi_{k}(\cdot|s)}\left[B_{k}(s,a) \right]
\\&\le \E_{a\sim \pi_{k}(\cdot|s)}\left[r_{k}(s,a) +  \left(1+ \frac{1}{H}\right)\E_{s' \sim P(\cdot|s,a)}\E_{a' \sim \pi_k(\cdot|s')}\left[B_{k}(s',a')\right]
\right]\tag{\pref{lem: B upper bound}}
\\&\le \E_{a\sim \pi_{k}(\cdot|s)}\left[r_{k}(s,a) + \left(1+ \frac{1}{H}\right)^{H-h}\E_{s' \sim P(\cdot|s,a)}\left[V^{\pi_k}(s'; r_k)
\right]\right] \tag{induction hypothesis}
\\&\le \left(1+ \frac{1}{H}\right)^{H-h} \E_{a\sim \pi_{k}(\cdot|s)}\left[r_{k}(s,a) + \E_{s' \sim P(\cdot|s,a)}\left[V^{\pi_k}(s'; r_k)
\right]\right]\tag{$r_{k}(s,a) \ge 0$}
\\&= \left(1+ \frac{1}{H}\right)^{H-h}V^{\pi_k}(s; r_k). 
\end{align*}
Since $ \left(1+ \frac{1}{H}\right)^{H} < e < 3$, we have
\begin{align*}
&\left(1+\frac{1}{H}\right)\sum_{k=1}^K\E_{a \sim \pi_k(\cdot|s_1)}\left[B_{k}(s_1,a) \right] 
\\&\le 3\sum_{k=1}^KV^{\pi_k}(s_1; r_k)
\\&= \otil\left(\sum_{k=1}^K \sum_{h=1}^H \E_{s \sim \mu^{k}_h}\E_{a \sim \pi_{k}(\cdot|s)}\left[\beta  \|\phi(s,a)\|_{\hatSigma^{-1}_{k,h}}^2 + \alpha\|\phi(s,a)\|^2_{\Lambda_{k,h}^{-1}}\right] + \frac{( dB^{\max})^2}{\alpha}K  \right)
\\&\le \otil\left(\beta dHK + \alpha dH + \frac{(dB^{\max})^2}{\alpha}K\right).  
\end{align*}
Given that $B_h^{\max} = 4H\left(1+\frac{1}{H}\right)^{2(H-h+1)}\left(\frac{\beta}{\gamma} + \alpha \rho^2\right)$, we have $B^{\max} \le 36H\left(\frac{\beta}{\gamma} + \alpha \rho^2\right)$. Thus, 
\begin{align*}
\regterm \le \otil\left(\frac{d^2H^3}{\tau\beta}K + \frac{d^2 H^4 \beta^2}{\alpha \gamma^2}K + d^2 H^4 \alpha \rho^4 K  + \frac{d\tau}{\eta} + \beta dHK + \alpha dH \right)
\end{align*}

We pick $\rho = H^{-\frac{1}{2}}d^{-\frac{1}{4}}K^{-\frac{1}{4}}$, $\beta = \sqrt{d}K^{-\frac{1}{4}}$, $\alpha = HK^{\frac{3}{4}}$, $\tau = K^{\frac{1}{2}}$, $\delta = \frac{1}{K^3}$, $\gamma = \frac{5d\log\left(6dHK^4\right)}{\tau}$, $\eta = \frac{K^{-\frac{1}{4}}}{3328\sqrt{d}H^2}$. In that case, if $\sqrt{K} \ge 16200 d^{\frac{3}{2}}H \log\left(dK^4\right) = \widetilde{\Omega}\left(d^{\frac{3}{2}}H\right)$, all conditions in \pref{lem:regret-term condition proof} are satisfied and $\regterm \le \otil(d^{\frac{3}{2}}H^3K^{\frac{3}{4}})$. 

By \pref{lem:goodevent_rfw}, the initial pure exploration phase takes 
\begin{align*}
    K_0 = \otil\left(\frac{\frac{dH}{\rho^2} + d^4 H^4}{\epsilon_{\rm cov}}\right) = \otil\left( d^{\frac{3}{2}}H^2K^{\frac{3}{4}} + d^4H^4K^{\frac{1}{4}} \right)
\end{align*} 
episodes, which contributes to an additional regret of $HK_0 = \otil( d^{\frac{3}{2}}H^3K^{\frac{3}{4}})$ (omitting lower-order terms). Finally, the cost of ignoring states outside of $\calZ$ is 
$H^3K^{-\frac{3}{4}}$ as calculated in \pref{eqn:regret decomposition}. 

Combining all parts of regret finishes the proof.

\end{proof}
\section{Auxilary Lemmas}
\subsection{Uniform Concentration via Covering}


Consider policy class
\begin{align}   
    \mathbf{P}(s) = \left\{p:~ \widehat{\Cov}(s,p) = \argmin_{\pmb{H}\in\calH_s} \left\{\left\langle \pmb{H}, \pmb{Z}\right\rangle + F(\pmb{H}) \right\}, \text{for\ } \pmb{Z}\in \calZ \right\}   \label{eq: bold P}
\end{align}
where $\calZ=[-K^3, K^3]^{(d+1)\times (d+1)} \cap \mathbb{S}$ with $\mathbb{S}$ denoting the set of symmetric matrices.
We define the following function class.
\begin{definition}\label{def: function class V} For any $h$ and any $s \in \calS_h$,
\begin{align*}
&V_h\left(s; \Sigma, \Lambda, w, p\right) =\left(1+\frac{1}{H}\right)\E_{a \sim p}\left[ \left[\beta \|\phi(s,a)\|_{\Sigma^{-1}}^2 + \phi^\top(s,a)w + 2\alpha \|\phi(s,a)\|^2_{\Lambda^{-1}}\right]^+\ind\{s\in\calZ_h\}\right],
\\&\calV_h = \{V\left(s~; \Sigma, \Lambda, w, p\right) \mid \lambda_{\min}\left(\Sigma\right) \ge \gamma, \lambda_{\min}\left(\Lambda\right) \ge 1, \|w\| \le K^2, p \in \mathbf{P}(s)\}.
\end{align*}
where $\mathbf{P}(s)$ is defined in \pref{eq: bold P}. 
\end{definition}

We propose the following two covering lemma. \pref{lem: cover of ball} is standard which argues the upper bound of the cover number of a Euclidean ball. \pref{lem:logdet policy cover} inherits from Lemma 15 in \cite{liu2023bypassing}.
\begin{lemma}[Cover number of Euclidian Ball] For any $\epsilon > 0$, the $\epsilon$-covering of the Euclidean ball in $\mathbb{R}^d$ with radius $R > 0$ is upper bounded by $\left(1 + \frac{2R}{\epsilon}\right)^d$.
\label{lem: cover of ball}
\end{lemma}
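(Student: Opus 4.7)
The plan is to use the standard volume-packing argument. First I would reduce the covering problem to a packing problem: let $N \subset B^d(R)$ be a maximal $\epsilon$-separated set, i.e. a maximal collection of points in the closed ball of radius $R$ such that $\|x - y\| > \epsilon$ for distinct $x,y \in N$. Maximality implies that $N$ is also an $\epsilon$-net, because if some $z \in B^d(R)$ were at distance $> \epsilon$ from every point of $N$, then $N \cup \{z\}$ would still be $\epsilon$-separated, contradicting maximality. Therefore the $\epsilon$-covering number is at most $|N|$.

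Next I would bound $|N|$ by comparing volumes. Because the points of $N$ are pairwise more than $\epsilon$ apart, the open balls $B(x, \epsilon/2)$ for $x \in N$ are disjoint. Each such small ball is contained in the enlarged ball $B(0, R + \epsilon/2)$ (by the triangle inequality, using $x \in B^d(R)$). Taking Lebesgue volumes gives
\begin{equation*}
|N| \cdot c_d \left(\tfrac{\epsilon}{2}\right)^d \;\leq\; c_d \left(R + \tfrac{\epsilon}{2}\right)^d,
\end{equation*}
where $c_d$ is the volume of the unit Euclidean ball in $\mathbb{R}^d$. Cancelling $c_d$ and rearranging yields
\begin{equation*}
|N| \;\leq\; \left(\frac{R + \epsilon/2}{\epsilon/2}\right)^d \;=\; \left(1 + \frac{2R}{\epsilon}\right)^d,
\end{equation*}
which is exactly the claimed bound.

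There is essentially no obstacle here, as this is a textbook packing/covering estimate; the only thing to be careful about is the correct choice of packing radius ($\epsilon/2$ rather than $\epsilon$) so that disjointness of the small balls is guaranteed, and the correct containment radius ($R + \epsilon/2$) so that the small balls fit inside an enlarged ball centered at the origin. Both follow from straightforward applications of the triangle inequality.
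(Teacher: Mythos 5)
Your proof is correct: the maximal $\epsilon$-separated set is indeed an $\epsilon$-net by maximality, the $\epsilon/2$-balls around its points are disjoint and contained in $B(0, R+\epsilon/2)$, and the volume comparison gives exactly $\left(1+\frac{2R}{\epsilon}\right)^d$. The paper states this lemma as a standard fact without proof, and your volume-packing argument is the canonical way to establish it, so there is nothing to compare against and nothing to fix.
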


\begin{lemma}[Covering for logdet policy class, Lemma 15 in \cite{liu2023bypassing}]
For any $s$, there exists an $\epsilon$-cover $\mathbf{P}'(s)$ of $\mathbf{P}(s)$ with size $\log |\mathbf{P}'(s)| = (d+1)^2\log\frac{24(d+1)^2}{\epsilon}$ such that for any $p \in\mathbf{P}(s)$, there exists an $p' \in\mathbf{P'}(s)$ satisfying 
      \begin{align*}
           \left\| \widehat{\Cov}(s,p) - \widehat{\Cov}(s,p') \right\|_{\rm F}\leq \epsilon.
      \end{align*}
\label{lem:logdet policy cover}
\end{lemma}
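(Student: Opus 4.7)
The plan is to construct the cover directly in the ambient space of lifted covariance matrices, rather than trying to exploit the specific parameterization through $\pmb{Z}\in\calZ$. The key observation is that each $\widehat{\Cov}(s,p)\in\mathbb{R}^{(d+1)\times(d+1)}$ is a symmetric matrix whose entries are pointwise bounded. Indeed, since $\|\phi(s,a)\|_2\leq 1$, every entry of $\widehat{\Cov}(s,p)=\E_{a\sim p}\left[\begin{smallmatrix}\phi\phi^\top & \phi \\ \phi^\top & 1\end{smallmatrix}\right]$ lies in $[-1,1]$, hence $\|\widehat{\Cov}(s,p)\|_{\rm F}\leq d+1$. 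Therefore the image set $\mathcal{K} := \{\widehat{\Cov}(s,p) : p \in \mathbf{P}(s)\}$ sits inside the Frobenius ball of radius $d+1$ in the $\frac{(d+1)(d+2)}{2}$-dimensional linear subspace of symmetric matrices.

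Next, I would apply the standard Euclidean covering bound of \pref{lem: cover of ball} in this ambient subspace at scale $\epsilon/2$. This yields an external $(\epsilon/2)$-cover $\mathcal{N}$ of the Frobenius ball of radius $d+1$ of cardinality at most $\left(1 + \tfrac{4(d+1)}{\epsilon}\right)^{(d+1)(d+2)/2}$, which is in turn bounded by $\left(\tfrac{24(d+1)^2}{\epsilon}\right)^{(d+1)^2}$ after absorbing constants and using $(d+1)(d+2)/2\leq(d+1)^2$. Taking logarithms gives exactly the stated $(d+1)^2\log\tfrac{24(d+1)^2}{\epsilon}$ bound on $\log|\mathbf{P}'(s)|$.

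Finally, to obtain a \emph{sub-cover} of $\mathbf{P}(s)$ (as required by the statement, since $\mathbf{P}'(s)\subseteq\mathbf{P}(s)$), I would convert the external cover $\mathcal{N}$ into one whose points actually come from $\mathbf{P}(s)$: for each point $\pmb{N}\in\mathcal{N}$ whose $(\epsilon/2)$-Frobenius ball intersects $\mathcal{K}$, pick an arbitrary representative $p_{\pmb{N}}\in\mathbf{P}(s)$ with $\widehat{\Cov}(s,p_{\pmb{N}})$ in that intersection, and let $\mathbf{P}'(s)$ be the collection of such representatives. Given any $p\in\mathbf{P}(s)$, the point $\widehat{\Cov}(s,p)$ lies within $\epsilon/2$ of some $\pmb{N}\in\mathcal{N}$, and the chosen representative $p_{\pmb{N}}$ satisfies $\|\widehat{\Cov}(s,p_{\pmb{N}}) - \pmb{N}\|_{\rm F}\leq \epsilon/2$, so by the triangle inequality $\|\widehat{\Cov}(s,p) - \widehat{\Cov}(s,p_{\pmb{N}})\|_{\rm F}\leq \epsilon$, as desired. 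The cardinality bound is inherited from $|\mathcal{N}|$.

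No serious obstacle is expected — the argument is a textbook volumetric covering followed by the standard trick of converting an external cover into a sub-cover at the cost of a factor of two in the radius. The only subtlety is that this approach ignores the fine structure of $\mathbf{P}(s)$ as the image of the FTRL map and instead just uses the trivial entrywise boundedness of $\widehat{\Cov}(s,\cdot)$; this is fine because no Lipschitz estimate on the (potentially non-smooth when $\pmb{H}$ approaches the boundary of $\calH_s$) FTRL solution map is needed.
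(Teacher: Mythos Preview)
Your argument is correct. The paper does not actually prove this lemma; it simply cites it verbatim from \cite{liu2023bypassing} without reproducing the argument, so there is nothing in the present paper to compare against line by line.

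Your approach---covering the image set $\{\widehat{\Cov}(s,p):p\in\mathbf{P}(s)\}$ directly inside the Frobenius ball of radius $d+1$ in the space of symmetric $(d+1)\times(d+1)$ matrices, then pulling back to a sub-cover of $\mathbf{P}(s)$ via the triangle-inequality trick---is clean and self-contained. It sidesteps any need to analyze the FTRL solution map $\pmb{Z}\mapsto\argmin_{\pmb{H}\in\calH_s}\{\langle\pmb{H},\pmb{Z}\rangle+F(\pmb{H})\}$, whose Lipschitz behavior can degenerate near the boundary of $\calH_s$; since the output space is already bounded in Frobenius norm, you avoid that difficulty entirely. The arithmetic checks: $(1+4(d+1)/\epsilon)^{(d+1)(d+2)/2}\leq (24(d+1)^2/\epsilon)^{(d+1)^2}$ holds in the relevant regime $\epsilon\leq 1$, and the exponent inflation from $(d+1)(d+2)/2$ to $(d+1)^2$ is harmless.
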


\pref{lem:cover number} gives the covering number of function class $\calV_h$.
\begin{lemma}
Let $\calN_{\epsilon}(\calV_h)$ be the $\|\cdot\|_{\infty}$ $\epsilon$-covering number of function class $\calV_h$, for any $h$, we have 
\begin{align*}
\log\left(\calN_\epsilon(\calV_h)\right) \le d\log\left(1 + \frac{16K^2}{\epsilon}\right) &+ d^2\log\left(1 + \frac{16\sqrt{d}\beta}{\epsilon\gamma}\right)  + d^2\log\left(1 + \frac{16\sqrt{d}\alpha}{\epsilon}\right) 
\\&\quad + (d+1)^2\log\left(\frac{96(d+1)^2\left(2\beta \gamma^{-1} + 2\alpha + K^2\right)}{\epsilon}\right).
\end{align*}
If $\frac{\beta}{\gamma} + 2\alpha \le K^2$, then 
\begin{align*}
\log\left(\calN_\epsilon(\calV_h)\right) \le 4(d+1)^2\log\left(\frac{400(d+1)^2K^2}{\epsilon}\right).
\end{align*}
\label{lem:cover number}
\end{lemma}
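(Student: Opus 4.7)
I plan the standard parametric-covering recipe: parametrize each $V \in \calV_h$ by the tuple $(\Sigma^{-1},\Lambda^{-1},w,p)$, build an $\epsilon$-net on each coordinate, prove that $V_h(s;\cdot)$ is uniformly (in $s$) Lipschitz in that coordinate, and take the product of the four nets.

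The algebraic backbone is the identity
\[ \E_{a\sim p}\bigl[\beta\|\phi(s,a)\|_{\Sigma^{-1}}^2 + \phi(s,a)^\top w + 2\alpha\|\phi(s,a)\|_{\Lambda^{-1}}^2\bigr] \;=\; \bigl\langle\widehat{\Cov}(s,p),\pmb{M}\bigr\rangle, \]
where $\pmb{M}=\begin{bmatrix}\beta\Sigma^{-1}+2\alpha\Lambda^{-1} & w/2\\ w^\top/2 & 0\end{bmatrix}$ satisfies $\|\pmb{M}\|_F\le \beta\sqrt{d}/\gamma+2\alpha\sqrt{d}+K^2$. Combined with $\|\phi(s,a)\|_2\le 1$ and the $1$-Lipschitzness of $[\cdot]^+$, this yields two Lipschitz estimates. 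First, perturbing $(\Sigma^{-1},\Lambda^{-1},w)$ moves $V_h(s;\cdot)$ uniformly in $s$ by at most $\beta\|\Sigma^{-1}-\Sigma'^{-1}\|_F+2\alpha\|\Lambda^{-1}-\Lambda'^{-1}\|_F+\|w-w'\|_2$. Those three parameter sets live in Frobenius/Euclidean balls of radii $\sqrt{d}/\gamma$, $\sqrt{d}$, and $K^2$ respectively, so \pref{lem: cover of ball} applied at sub-precisions $\epsilon/(8\beta)$, $\epsilon/(8\alpha)$, $\epsilon/8$ produces the first three log factors (with exponents $d^2$, $d^2$, $d$). Second, perturbing $p$ (with the others fixed) changes $V_h$ by at most $\|\pmb{M}\|_F\cdot\|\widehat{\Cov}(s,p)-\widehat{\Cov}(s,p')\|_F$; invoking \pref{lem:logdet policy cover} at precision $\epsilon/(8\|\pmb{M}\|_F)$ gives the fourth log factor with exponent $(d+1)^2$ and with the scale $2\beta\gamma^{-1}+2\alpha+K^2$ inside its log (absorbing the $\sqrt{d}$'s into constants). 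Multiplying the four covers and taking logarithms yields the first stated bound.

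The main obstacle is the Lipschitz estimate in $p$: strictly speaking, $\E_{a\sim p}[[g(s,a)]^+]$ is a \emph{nonlinear} function of $\widehat{\Cov}(s,p)$ because the clipping lives inside the expectation, so one cannot directly identify the difference with the linear form $\langle\widehat{\Cov}(s,p)-\widehat{\Cov}(s,p'),\pmb{M}\rangle$ and then invoke Cauchy--Schwarz. I would bypass this by combining the action-wise bound $\|[g(s,\cdot)]^+\|_\infty\le\|g(s,\cdot)\|_\infty\le\beta/\gamma+2\alpha+K^2$ with the observation that the cover constructed in \pref{lem:logdet policy cover} (which enumerates $\pmb{Z}\in\calZ$ over an $\epsilon$-grid and pushes each one through the FTRL operator) is simultaneously close in the $\widehat{\Cov}$-Frobenius distance and at the same scale in $\ell_1$ on $p$, because the FTRL map $\pmb{Z}\mapsto p$ is smooth. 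This yields the claimed $\|\pmb{M}\|_F$-scale Lipschitz bound and thus the $(2\beta\gamma^{-1}+2\alpha+K^2)$ factor inside the last log.

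For the simplified bound under $\beta/\gamma + 2\alpha\le K^2$, every Lipschitz scale is then $O(K^2)$, so all four log arguments are upper-bounded by a common value of order $400(d+1)^2K^2/\epsilon$; summing the exponents $d+d^2+d^2+(d+1)^2\le 4(d+1)^2$ collapses the product to the single clean expression in the lemma.
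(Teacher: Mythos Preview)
Your approach mirrors the paper's: parametrize by $(\Sigma^{-1},\Lambda^{-1},w,p)$, establish coordinate-wise Lipschitz bounds, and take the product net; the sizes and sub-precisions you list match. The only divergence is in how to deal with the $[\cdot]^+$ sitting \emph{inside} the expectation when perturbing $p$. The paper does not try to control $p$ in $\ell_1$. Instead it first covers the enlarged \emph{unclipped} class $\widetilde{\calV}=\{\E_{a\sim p}[B(s,a;D,E,w)]\}$, where the $p$-dependence is exactly the linear form $\langle\widehat{\Cov}(s,p),\pmb{B}\rangle$ and your Cauchy--Schwarz step goes through verbatim, and then asserts that passing from $\widetilde{\calV}$ to $\calV_h$ (applying $[\cdot]^+$, multiplying by $\ind\{s\in\calZ_h\}$ and by $1+\tfrac1H$, and restricting the parameter ranges) cannot enlarge the covering number.

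Your proposed workaround---inferring $\ell_1$-closeness of $p$ from ``smoothness of the FTRL map $\pmb{Z}\mapsto p$''---is more fragile than you suggest. The FTRL update produces $\widehat{\Cov}(s,p)\in\calH_s$, not $p$ itself, and the lift $p\mapsto\widehat{\Cov}(s,p)$ is a many-to-one linear map: two action distributions with identical lifted covariance can sit far apart in $\ell_1$, so Frobenius proximity of $\widehat{\Cov}$ does not in general transfer to $\ell_1$ proximity of $p$ at the same scale. The paper's ``remove the clip, cover the linear class, then reinstate the clip'' maneuver is the cleaner way to dispatch the obstacle you (rightly) flag.
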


\begin{proof}
Define 
\begin{align*}
     B\left(s,a; D, E, w \right) =  \|\phi(s,a)\|_{D}^2 + \|\phi(s,a)\|^2_{E} + \phi^\top(s,a)w 
\end{align*}
and consider the following function classes 
\begin{align*}
&\calB = \left\{ B\left(s,a; D, E, w \right)  \mid \|D\|_2 \le 2\beta\gamma^{-1}, \|E\|_2 \le 2\alpha, \|w\|_2 \le 2K^2\right\},
\\&\widetilde{\calV} = \left\{ \E_{a \sim p}\left[B(s,a;D,E,w)\right] \mid B(s,a;D,E,w) \in \calB, p \in \mathbf{P}(s)\right\}.
\end{align*}
For any $V_1 =  \E_{a \sim p_1}\left[B(s,a;D_1,E_1,w_1)\right]$ and $V_2 =  \E_{a \sim p_2}\left[B(s,a; D_2,E_2,w_2)\right]$, it holds that
\begin{align*}
\left|V_1 - V_2\right| &= \left| \E_{a \sim p_1}\left[B(s,a;D_1,E_1,w_1)\right] - \E_{a \sim p_2}\left[B(s,a; D_2,E_2,w_2)\right]\right|
\\&= \left| \E_{a \sim p_1}\left[B(s,a;D_1,E_1,w_1)\right] - \E_{a \sim p_1}\left[B(s,a; D_2,E_2,w_2)\right]\right| 
\\&\quad + \left|\E_{a \sim p_1}\left[B(s,a; D_2,E_2,w_2)\right] - \E_{a \sim p_2}\left[B(s,a; D_2,E_2,w_2)\right]\right|.
\end{align*}
On the one hand, we have
\begin{align*}
 &\left|B\left(s,a; D_1, E_1, w_1 \right) -  B\left(s,a; D_2, E_2, w_2 \right)\right|
 \\&= \left|\|\phi(s,a)\|_{D_1}^2  -  \|\phi(s,a)\|_{D_2}^2 \right| + \left|\phi^\top(s,a)\left(w_1 - w_2\right)\right| + \left|\|\phi(s,a)\|^2_{E_1} - \|\phi(s,a)\|^2_{E_2}\right|
 \\&=  \left|\phi(s,a)^\top\left(D_1 - D_2\right)\phi(s,a)\right| + \left|\phi^\top(s,a)\left(w_1 - w_2\right)\right| + \left|\phi(s,a)^\top\left(E_1 - E_2\right)\phi(s,a)\right| 
 \\&\le \left\|D_1 - D_2\right\|_2 + \|w_1 - w_2\|_2 + \left\|E_1 - E_2\right\|_2 \tag{$\|\phi(s,a)\|_2 \le 1$}
 \\&\le \left\|D_1 - D_2\right\|_{\rm F} + \|w_1 - w_2\|_2 + \left\|E_1 - E_2\right\|_{\rm F}.
\end{align*}


Since for any matrix $A \in \mathbb{R}^{d \times d}$, $\|A\|_{\rm F} \le \sqrt{d}\|A\|_2$, we consider a $\frac{\epsilon}{4}$ net on $\{D \in \mathbb{R}^{d \times d} \mid \|D\|_{\rm F} \le 2\sqrt{d}\beta \gamma^{-1}\}$, a $\frac{\epsilon}{4}$ net on $\{w \in \mathbb{R}^d \mid \|w\|_2 \le 2K^2\}$, a $\frac{\epsilon}{4}$ net on $\{E \in \mathbb{R}^{d \times d} \mid \|E\|_{\rm F} \le 2\sqrt{d}\alpha\}$.
From \pref{lem: cover of ball}, the $\log$ size of these nets is
\begin{align*}
 d\log\left(1 + \frac{16K^2}{\epsilon}\right) + d^2\log\left(1 + \frac{16\sqrt{d}\beta}{\epsilon \gamma}\right)  + d^2\log\left(1 + \frac{16\sqrt{d}\alpha}{\epsilon}\right).
\end{align*}
On the other hand, define $\pmb{B}_2 = \begin{bmatrix}
    D_2+E_2 & \frac{1}{2}w_2\\
    \frac{1}{2}w_2^\top &0
\end{bmatrix}$, we have $\|\pmb{B}_2\|_2 \le 2\beta \gamma^{-1} + 2\alpha + K^2$ and
\begin{align*}
&\left|\E_{a \sim p_1}\left[B(s,a; D_2,E_2,w_2)\right] - \E_{a \sim p_2}\left[B(s,a; D_2,E_2,w_2)\right]\right|
\\&=\left|\left\langle \widehat{\Cov}(s,p_1) - \widehat{\Cov}(s,p_2), \pmb{B}_2\right\rangle\right|
\\&\le \left\| \widehat{\Cov}(s,p_1) - \widehat{\Cov}(s,p_2)\right\|_2 \left\|\pmb{B}_2\right\|_2
\\&\le \left(2\beta \gamma^{-1} + 2\alpha + K^2\right)\left\|\widehat{\Cov}(s,p_1) - \widehat{\Cov}(s,p_2)\right\|_{\rm F}.
\end{align*}
 Moreover, we construct a $\frac{\epsilon}{4\left(2\beta \gamma^{-1} + 2\alpha + K^2\right)}$ net on policy class $\mathbf{P}(s)$ based on Frobenius norm. From \pref{lem:logdet policy cover}, the $\log$ size of this net is 
 \begin{align*}
    (d+1)^2\log\left(\frac{96(d+1)^2\left(2\beta\gamma^{-1} + 2\alpha + L\right)}{\epsilon}\right).
 \end{align*}
Since clipping and adding more constraints will not increase the cover number, for any $h$, we have 
\begin{align*}
\log{\calN_{\epsilon}(\calV_h)} \le \log{\calN_{\epsilon}(\widetilde{\calV})} \le d\log\left(1 + \frac{16K^2}{\epsilon}\right) &+ d^2\log\left(1 + \frac{16\sqrt{d}\beta}{\epsilon\gamma}\right)  + d^2\log\left(1 + \frac{16\sqrt{d}\alpha}{\epsilon}\right) 
\\&\quad + (d+1)^2\log\left(\frac{96(d+1)^2\left(2\beta \gamma^{-1} + 2\alpha + K^2\right)}{\epsilon}\right).
\end{align*}
\end{proof}

\pref{lem:uniform concen} shows the uniform concentration of all functions in $\calV$. It also appears as Lemma D.4 of \cite{jin2020provably}, Lemma D.7 of \cite{sherman2023improved} and Lemma 24 of \cite{sherman2023rate}.
\begin{lemma}
Let $\{x_{\tau}\}$ be a stochastic process on state space $\calS$ with corresponding filtration $\{\calF_{\tau}\}_{\tau = 1}^{\infty}$. Let $\{\phi_{\tau}\}$ be an $\mathbb{R}^d$-valued stochastic process where $\phi_{\tau} \in \calF_{\tau}$, and $\|\phi_{\tau}\| \le 1$. Further, let $\Lambda_n = \lambda I + \sum_{\tau = 1}^n\phi_{\tau}\phi_{\tau}^\top$. Then for any $\delta > 0$, with probability at least $1-\delta$, for all $n \ge 1$ and any $V \in \cal{V}$ such that $\|V\|_{\infty} \le D$, we have
\begin{align*}
\left\|\sum_{\tau = 1}^n \phi_{\tau}\left(V\left(x_{\tau}\right) - \E\left[V(x_{\tau}| \calF_{\tau-1})\right]\right)\right\|_{\Lambda_n^{-1}}^2 \le 4D^2\left(\frac{d}{2}\log\left(\frac{n+\lambda}{\lambda}\right) + \log{\frac{\cal{N}_{\epsilon}(\calV)}{\delta}}\right) + \frac{8n^2\epsilon^2}{\lambda}
\end{align*}
where $\cal{N}_{\epsilon}(\calV)$ is $\|\cdot\|_{\infty}$ $\epsilon$- covering number of $\calV$ with difference $\epsilon$.
\label{lem:uniform concen}
\end{lemma}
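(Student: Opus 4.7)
My plan is to follow the by-now standard uniform self-normalized concentration argument (cf.\ Abbasi-Yadkori et al.\ 2011; Jin et al.\ 2020, Lemma~D.4). The strategy decomposes into three ingredients: (i) a pointwise self-normalized bound for a fixed $V$, (ii) a union bound over an $\epsilon$-cover of $\calV$, and (iii) an approximation step to pass from cover elements to a general $V\in\calV$.

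First I would fix a single $V$ with $\|V\|_\infty\le D$ and define the martingale difference $\eta_\tau^V:=V(x_\tau)-\E[V(x_\tau)\mid\calF_{\tau-1}]$, which is $\calF_{\tau+1}$-measurable, zero-mean conditional on $\calF_{\tau-1}$, and bounded by $2D$, hence $D$-subGaussian conditionally. Applying the self-normalized vector martingale inequality (Abbasi-Yadkori et al.\ 2011, Theorem 1) simultaneously for all $n\ge1$ yields, with probability at least $1-\delta'$,
\[
\Bigl\|\textstyle\sum_{\tau=1}^n \phi_\tau\eta_\tau^V\Bigr\|_{\Lambda_n^{-1}}^2 \le 2D^2\log\frac{\det(\Lambda_n)^{1/2}\det(\lambda I)^{-1/2}}{\delta'}.
\]
Using $\|\phi_\tau\|\le1$ together with $\det(\Lambda_n)\le((\lambda d+n)/d)^d$ (AM-GM on eigenvalues via $\mathrm{tr}(\Lambda_n)\le\lambda d+n$), the right-hand side is at most $2D^2\bigl(\tfrac{d}{2}\log\tfrac{n+\lambda}{\lambda}+\log\tfrac{1}{\delta'}\bigr)$.

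Next, let $\calC_\epsilon$ be a minimal $\|\cdot\|_\infty$ $\epsilon$-cover of $\calV$ (of size $\calN_\epsilon(\calV)$). Setting $\delta'=\delta/\calN_\epsilon(\calV)$ and taking a union bound over $V'\in\calC_\epsilon$ gives
\[
\Bigl\|\textstyle\sum_{\tau=1}^n \phi_\tau\eta_\tau^{V'}\Bigr\|_{\Lambda_n^{-1}}^2\le 2D^2\!\left(\frac{d}{2}\log\frac{n+\lambda}{\lambda}+\log\frac{\calN_\epsilon(\calV)}{\delta}\right),\qquad\forall\,n\ge1,\ V'\in\calC_\epsilon,
\]
with probability $\ge1-\delta$. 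For an arbitrary $V\in\calV$, pick $V'\in\calC_\epsilon$ with $\|V-V'\|_\infty\le\epsilon$. Then $|\eta_\tau^V-\eta_\tau^{V'}|\le 2\epsilon$ since the conditional expectation is a contraction in $\|\cdot\|_\infty$. Writing $\sum_\tau\phi_\tau\eta_\tau^V=\sum_\tau\phi_\tau\eta_\tau^{V'}+\sum_\tau\phi_\tau(\eta_\tau^V-\eta_\tau^{V'})$ and using $(a+b)^2\le2a^2+2b^2$ reduces the task to bounding the second term.

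The key (but routine) estimate is to control the perturbation: using $\Lambda_n^{-1}\preceq\lambda^{-1}I$,
\[
\Bigl\|\textstyle\sum_\tau\phi_\tau(\eta_\tau^V-\eta_\tau^{V'})\Bigr\|_{\Lambda_n^{-1}}^2\le\lambda^{-1}\Bigl\|\textstyle\sum_\tau\phi_\tau(\eta_\tau^V-\eta_\tau^{V'})\Bigr\|_2^2\le\lambda^{-1}\bigl(n\cdot 2\epsilon\bigr)^2=\tfrac{4n^2\epsilon^2}{\lambda},
\]
by the triangle inequality and $\|\phi_\tau\|\le1$. Combining with the cover bound via $(a+b)^2\le2a^2+2b^2$ yields the claimed
\[
\Bigl\|\textstyle\sum_{\tau=1}^n\phi_\tau\eta_\tau^V\Bigr\|_{\Lambda_n^{-1}}^2\le 4D^2\!\left(\frac{d}{2}\log\frac{n+\lambda}{\lambda}+\log\frac{\calN_\epsilon(\calV)}{\delta}\right)+\frac{8n^2\epsilon^2}{\lambda}.
\]
There is no real obstacle here; the only subtlety is making sure the self-normalized bound in step~(i) is applied in its any-time ($\forall n$) form so that the final statement holds uniformly in $n$, and that the $\Lambda_n^{-1}\preceq\lambda^{-1}I$ step is tight enough to produce the $8n^2\epsilon^2/\lambda$ constant rather than something worse.
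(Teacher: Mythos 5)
Your proof is correct and follows exactly the standard argument behind this lemma (the paper does not reprove it, but imports it from Lemma~D.4 of Jin et al.\ (2020), whose proof is precisely your three steps: an anytime self-normalized martingale bound for a fixed $V$, a union bound over the $\epsilon$-cover with $\delta'=\delta/\calN_\epsilon(\calV)$, and the $\Lambda_n^{-1}\preceq\lambda^{-1}I$ perturbation estimate combined via $(a+b)^2\le 2a^2+2b^2$). One justification worth tightening: ``bounded by $2D$'' by itself only yields $2D$-sub-Gaussianity via Hoeffding's lemma (which would inflate the leading constant to $16D^2$); to get the $D$-sub-Gaussian variance proxy your calculation actually uses, note that conditionally on $\calF_{\tau-1}$ the noise $V(x_\tau)-\E[V(x_\tau)\mid\calF_{\tau-1}]$ lies in an interval of \emph{width} $2D$ (the range of $V$ shifted by a constant), so Hoeffding's lemma gives variance proxy $\bigl(2D/2\bigr)^2=D^2$ and hence the $2D^2\log(\cdot)$ pointwise bound and the stated $4D^2$ after doubling.
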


\begin{lemma}[Lemma D.4 in \cite{sherman2023improved}]
Let $\{\phi_i\}_{i=1}^n \in \mathbb{R}^d, \{y_i\}_{i=1}^n \in \mathbb{R}, \lambda \in \mathbb{R}$ and set $\Lambda = \sum_{i=1}^N \phi_i\phi_i^\top + \lambda I$, and $\widehat{w} = \Lambda^{-1}\sum_{i=1}^N\phi_iy_i$. Then for any $w^\star \in \mathbb{R}^d$
\begin{align*}
    \|\widehat{w} - w^\star\|_{\Lambda} \le \left\|\sum_{i=1}^N \phi_i \left(y_i-\phi_iw^\star \right)\right\|_{\Lambda^{-1}} + \sqrt{\lambda} \|w^\star\|
\end{align*}
\label{lem:regression difference}
\end{lemma}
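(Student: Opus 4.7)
The plan is a direct algebraic manipulation: rewrite the error $\widehat{w} - w^\star$ using the closed-form ridge solution, separate the noise contribution from the regularization contribution, and bound each in the $\Lambda^{-1}$-norm via the triangle inequality. Concretely, I would substitute $\widehat{w} = \Lambda^{-1}\sum_{i}\phi_i y_i$ and use $\Lambda w^\star = \sum_i \phi_i\phi_i^\top w^\star + \lambda w^\star$ to obtain
\begin{align*}
\widehat{w} - w^\star \;=\; \Lambda^{-1}\!\left(\sum_{i=1}^N \phi_i y_i - \Lambda w^\star\right) \;=\; \Lambda^{-1}\!\left(\sum_{i=1}^N \phi_i(y_i - \phi_i^\top w^\star) \;-\; \lambda w^\star\right).
\end{align*}

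Next, I would take the $\Lambda$-norm of both sides. Using the identity $\|\Lambda^{-1} x\|_\Lambda^2 = x^\top \Lambda^{-1}\Lambda\Lambda^{-1} x = \|x\|_{\Lambda^{-1}}^2$ (valid since $\Lambda$ is symmetric positive definite) together with the triangle inequality, this yields
\begin{align*}
\|\widehat{w} - w^\star\|_\Lambda \;\le\; \left\|\sum_{i=1}^N \phi_i(y_i - \phi_i^\top w^\star)\right\|_{\Lambda^{-1}} \;+\; \lambda\,\|w^\star\|_{\Lambda^{-1}}.
\end{align*}

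The final step is to control the regularization term. Since $\Lambda = \sum_i \phi_i\phi_i^\top + \lambda I \succeq \lambda I$, we have $\Lambda^{-1} \preceq \lambda^{-1} I$, so $\lambda\,\|w^\star\|_{\Lambda^{-1}} \le \lambda\cdot\lambda^{-1/2}\|w^\star\| = \sqrt{\lambda}\,\|w^\star\|$. Plugging this back in gives the claimed inequality. The argument is entirely deterministic and elementary; the probabilistic content of the lemma has been deliberately isolated into the noise term $\|\sum_i \phi_i(y_i - \phi_i^\top w^\star)\|_{\Lambda^{-1}}$, which is left as-is in the statement for later application (e.g.\ via self-normalized martingale/uniform covering bounds as in \pref{lem:uniform concen}). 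There is no genuine obstacle — the only thing to be careful about is the norm bookkeeping (matching $\Lambda$ with $\Lambda^{-1}$) in the opening identity.
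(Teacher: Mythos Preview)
Your argument is correct and is exactly the standard proof of this ridge-regression error identity; the paper does not give its own proof here but simply cites \cite{sherman2023improved}, and the cited proof proceeds by the same decomposition and norm bookkeeping you describe.
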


\subsection{FTRL Regret Bounds}
\begin{lemma}[Standard FTRL bound]\label{lem: FTRL guarantee}
Let $\Omega\subset \mathbb{R}^d$ be a convex set, $g_1, \ldots, g_T  \in \mathbb{R}^d$, and $\eta > 0$.  Then the FTRL update 
   \begin{align*}
       w_t = \argmin_{w\in\Omega}\left\{ \left\langle w, \sum_{\tau=1}^{t-1} g_\tau\right\rangle + \frac{1}{\eta}\psi(w)\right\}
   \end{align*}
   ensures for any $u\in\Omega$ and $\eta_0>0$, 
   \begin{align*}
       \sum_{t=1}^T  \langle w_t - u, g_t\rangle  \leq \underbrace{\frac{\psi(u) - \min_{w\in\Omega}\psi(w)}{\eta}}_{\textbf{\textup{Penalty}}} + \underbrace{\sum_{t=1}^T \left(\max_{w\in\Omega} \langle w_t - w, g_t\rangle - \frac{D_{\psi}(w,w_t)}{\eta}\right)}_{\textbf{\textup{Stability}}}. 
   \end{align*}
\end{lemma}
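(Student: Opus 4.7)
The plan is to use a standard potential-function argument, where the potential is the value of the FTRL objective at the leader. Concretely, I would define
\[
F_t(w) = \Big\langle w, \sum_{\tau=1}^{t-1} g_\tau \Big\rangle + \frac{1}{\eta}\psi(w), \qquad \Phi_t = \min_{w\in\Omega} F_{t+1}(w) = F_{t+1}(w_{t+1}),
\]
with $\Phi_0 = \frac{1}{\eta}\min_{w\in\Omega}\psi(w)$. The upper bound on the final potential is immediate: for any comparator $u\in\Omega$, $\Phi_T \le F_{T+1}(u) = \sum_{t=1}^T\langle u,g_t\rangle + \frac{1}{\eta}\psi(u)$.

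Next, I would lower bound the per-step potential increment. Writing $\Phi_t - \Phi_{t-1} = \bigl(F_{t+1}(w_{t+1}) - F_t(w_{t+1})\bigr) + \bigl(F_t(w_{t+1}) - F_t(w_t)\bigr)$, the first summand is exactly $\langle g_t, w_{t+1}\rangle$. For the second, since $w_t$ minimizes $F_t$ over $\Omega$, first-order optimality combined with the identity $F_t(w_{t+1})-F_t(w_t)-\langle\nabla F_t(w_t),w_{t+1}-w_t\rangle = \frac{1}{\eta}D_\psi(w_{t+1},w_t)$ gives $F_t(w_{t+1}) - F_t(w_t) \ge \frac{1}{\eta}D_\psi(w_{t+1},w_t)$. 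Hence
\[
\Phi_t - \Phi_{t-1} \ge \langle g_t, w_{t+1}\rangle + \tfrac{1}{\eta} D_\psi(w_{t+1},w_t).
\]
Telescoping from $t=1$ to $T$, combining with the $\Phi_T \le F_{T+1}(u)$ upper bound and rearranging yields
\[
\sum_{t=1}^T\langle w_{t+1}-u, g_t\rangle + \frac{1}{\eta}\sum_{t=1}^T D_\psi(w_{t+1},w_t) \;\le\; \frac{\psi(u)-\min_{w\in\Omega}\psi(w)}{\eta} \;=\; \textbf{\textup{Penalty}}.
\]

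Finally, to switch from $w_{t+1}$ back to $w_t$ in the inner product, I would add $\sum_t\langle w_t - w_{t+1}, g_t\rangle$ to both sides, giving
\[
\sum_{t=1}^T\langle w_t-u, g_t\rangle \le \textbf{\textup{Penalty}} + \sum_{t=1}^T\Big(\langle w_t - w_{t+1}, g_t\rangle - \tfrac{1}{\eta}D_\psi(w_{t+1},w_t)\Big),
\]
and since $w_{t+1}\in\Omega$, each summand is upper bounded by $\max_{w\in\Omega}\bigl\{\langle w_t - w, g_t\rangle - \tfrac{1}{\eta}D_\psi(w, w_t)\bigr\}$, which is exactly \textbf{\textup{Stability}}. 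There is no real obstacle here; the only subtlety is invoking the Bregman identity and first-order optimality correctly to justify the $\tfrac{1}{\eta}D_\psi(w_{t+1},w_t)$ lower bound on the per-step potential increment, after which the rest is algebraic rearrangement. The parameter $\eta_0$ mentioned in the statement does not appear to play a role and looks like a vestigial quantifier that can be ignored.
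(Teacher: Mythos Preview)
The paper states this lemma without proof (it is labeled ``Standard FTRL bound'' and treated as a known result, used only as input to the blocked variant in the subsequent lemma). Your potential-function argument is correct and is exactly the standard derivation: telescope the FTRL objective value, use first-order optimality of $w_t$ together with the Bregman identity to extract the $\tfrac{1}{\eta}D_\psi(w_{t+1},w_t)$ term, then relax $w_{t+1}$ to a maximum over $\Omega$ in the stability sum. Your remark that $\eta_0$ is a vestigial quantifier is also accurate; it appears nowhere in the bound or elsewhere in the paper.
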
    

Since we do not use standard FRTL but run the same policy $\pi$ in $2\tau$ episodes. We will introduce a blocked FTRL regret bound in \pref{lem:block FTRL}.
\begin{lemma}\label{lem:block FTRL}
Let $K \in \mathbb{Z}_{+}, \tau \le K, J = \lceil \frac{K}{\tau} \rceil$, and set $T_j = \{\tau(j-1)+1, \cdots, \tau j\}$ for all $j \in [J]$. Assume $\eta > 0$, let $g_k$ be a sequence of input, define
\begin{align*}
    g_{(j)} &= \frac{1}{\tau}\sum_{k \in T_j} g_k, \forall j \in [J] 
    \\w_{(j+1)} &= \argmin_{w\in\Omega}\left\{ \left\langle w, \sum_{\tau=1}^{j} g_{(\tau)} \right\rangle + \frac{1}{\eta}\psi(w)\right\}
\end{align*}
Then if $w_k \in \Omega$ are such that $w_k  = w_{(j)}$ for all $k \in T_j, j \in [J]$, for any $u \in \Omega$ we have
\begin{equation*}
\sum_{k=1}^K \langle g_k, w_k - u \rangle \le \frac{\tau(\psi(u) - \min_{w\in\Omega}\psi(w))}{\eta} + \sum_{k=1}^K \left(\max_{w\in\Omega}\langle w_k - w, g_k\rangle - \frac{D_{\psi}(w,w_{k})}{\eta}\right) 
\end{equation*}
\end{lemma}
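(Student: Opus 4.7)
The plan is to reduce the blocked bound to the standard FTRL regret inequality of \pref{lem: FTRL guarantee} applied to the \emph{compressed} sequence of averaged losses $g_{(1)}, \ldots, g_{(J)}$, and then unfold the compression back to the per-$k$ terms. By construction, $w_{(j+1)}$ is exactly the FTRL iterate against the cumulative loss $\sum_{\tau \le j} g_{(\tau)}$ with regularizer $\psi/\eta$, so \pref{lem: FTRL guarantee} with comparator $u$ yields
\begin{equation*}
\sum_{j=1}^J \langle w_{(j)} - u,\, g_{(j)} \rangle \;\le\; \frac{\psi(u) - \min_{w \in \Omega} \psi(w)}{\eta} + \sum_{j=1}^J \left[\max_{w \in \Omega}\langle w_{(j)} - w,\, g_{(j)}\rangle - \frac{D_\psi(w, w_{(j)})}{\eta}\right].
\end{equation*}
I would then multiply through by $\tau$ and translate each piece back to the un-compressed index $k$.

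For the left-hand side, the unfolding is immediate: since $\tau g_{(j)} = \sum_{k \in T_j} g_k$ by definition of $g_{(j)}$, and $w_k \equiv w_{(j)}$ on $T_j$ by hypothesis,
\begin{equation*}
\tau \sum_{j=1}^J \langle w_{(j)} - u, g_{(j)} \rangle \;=\; \sum_{j=1}^J \Big\langle w_{(j)} - u,\, \sum_{k \in T_j} g_k \Big\rangle \;=\; \sum_{k=1}^K \langle w_k - u, g_k \rangle .
\end{equation*}
The penalty term simply acquires a factor of $\tau$, matching the claim.

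The main work is the stability term. Using $w_k \equiv w_{(j)}$ on $T_j$ together with $\tau D_\psi(w, w_{(j)}) \ge \sum_{k \in T_j} D_\psi(w, w_k)$ (which holds with equality when $|T_j| = \tau$, and with slack for the possibly shorter last block since $D_\psi \ge 0$), each block contributes
\begin{align*}
\tau\!\left[\max_{w \in \Omega}\langle w_{(j)} - w, g_{(j)}\rangle - \tfrac{D_\psi(w, w_{(j)})}{\eta}\right]
&= \max_{w \in \Omega}\left[\Big\langle w_{(j)} - w, \sum_{k\in T_j} g_k \Big\rangle - \tfrac{\tau D_\psi(w, w_{(j)})}{\eta}\right] \\
&\le \max_{w \in \Omega}\left[\sum_{k \in T_j} \langle w_k - w, g_k\rangle - \sum_{k \in T_j} \tfrac{D_\psi(w, w_k)}{\eta}\right] \\
&\le \sum_{k \in T_j}\max_{w \in \Omega}\!\left[\langle w_k - w, g_k\rangle - \tfrac{D_\psi(w, w_k)}{\eta}\right],
\end{align*}
where the last step uses max-of-sum $\le$ sum-of-maxes. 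Summing over $j \in [J]$ and combining with the penalty and LHS translations gives exactly the claimed inequality.

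The proof is essentially bookkeeping; the only mild obstacle is the partial last block $T_J$ when $K$ is not a multiple of $\tau$, which is absorbed by the $D_\psi \ge 0$ slack noted above. Applying the result to the algorithm's setting (where $g_k = \pmb{\hGamma}_{k,h} - \pmb{\widehat{B}}_{k,h}$ and $\psi = F$) then produces the decomposition used in \pref{eqn:FTRL decomposition}.
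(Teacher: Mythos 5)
Your proposal is correct and follows essentially the same route as the paper's proof: apply the standard FTRL bound to the averaged sequence $g_{(1)},\ldots,g_{(J)}$, unfold the linear and penalty terms, and bound the blockwise stability term by splitting $D_\psi(w,w_{(j)})$ over $k\in T_j$ and using max-of-sum $\le$ sum-of-maxes. Your explicit handling of the partial last block via $D_\psi \ge 0$ is a small point the paper glosses over, but the argument is otherwise identical.
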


\begin{proof}
By applying \pref{lem: FTRL guarantee} on $g_{(j)}, x_{(j)}$, we get
\begin{equation*}
\sum_{j=1}^J \langle g_{(j)}, w_{(j)} - u \rangle \le \frac{\psi(u) - \min_{w\in\Omega}\psi(w)}{\eta} + \sum_{j=1}^J\left(\max_{w\in\Omega} \langle w_{(j)} - w, g_{(j)}\rangle - \frac{D_{\psi}(w,w_{(j)})}{\eta}\right)
\end{equation*}
In addition,
\begin{equation*}
\sum_{j=1}^J \langle g_{(j)}, w_{(j)} - u \rangle  = \sum_{j=1}^J \left\langle  \frac{1}{\tau}\sum_{k \in T_j} g_k, w_k - u \right\rangle = \frac{1}{\tau} \sum_{j=1}^J\sum_{k \in T_j} \langle g_k, w_k - u\rangle = \frac{1}{\tau}\sum_{k=1}^K  \langle g_k, w_k - u \rangle 
\end{equation*}
On the other hand,
\begin{align*}
\sum_{j=1}^J\left(\max_{w\in\Omega} \langle w_{(j)} - w, g_{(j)}\rangle - \frac{D_{\psi}(w,w_{(j)})}{\eta}\right) &\le \sum_{j=1}^J\left(\max_{w\in\Omega} \left\langle w_{(j)} - w, \frac{1}{\tau}\sum_{k \in T_j} g_k \right\rangle - \frac{D_{\psi}(w,w_{(j)})}{\eta}\right) 
\\&\le \sum_{j=1}^J\left(\max_{w\in\Omega} \frac{1}{\tau}\sum_{k \in T_j} \langle w_k - w, g_k\rangle - \frac{1}{\tau}\sum_{k \in T_j} \frac{D_{\psi}(w,w_{k})}{\eta}\right)
\\&\le \frac{1}{\tau}\sum_{j=1}^J\sum_{k \in T_j}\left(\max_{w\in\Omega}\langle w_k - w, g_k\rangle - \frac{D_{\psi}(w,w_{k})}{\eta}\right)
\\&= \frac{1}{\tau}\sum_{k=1}^K \left(\max_{w\in\Omega}\langle w_k - w, g_k\rangle - \frac{D_{\psi}(w,w_{k})}{\eta}\right)
\end{align*}
Thus, we have
\begin{equation*}
\sum_{k=1}^K \langle g_k, w_k - u \rangle \le \frac{\tau(\psi(u) - \min_{w\in\Omega}\psi(w))}{\eta} + \sum_{k=1}^K \left(\max_{w\in\Omega}\langle w_k - w, g_k\rangle - \frac{D_{\psi}(w,w_{k})}{\eta}\right) 
\end{equation*}
\end{proof}

\subsection{Other Technical Lemmas}
\begin{lemma}
\label{lem: technical 1}
Let $x_i$ be a sequence of vectors, $p_i$ a probability distribution and $a_i$ arbitrary scalars, then
\begin{align*}
\norm{\sum_{i}p_ia_ix_i}^2 \leq \left(\sum_{i}p_i\norm{x_i}^2\right)\left(\sum_{j}p_ja_j^2\right)\,.
\end{align*}
\end{lemma}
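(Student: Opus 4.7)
The inequality is a weighted Cauchy--Schwarz, and I see two short routes. The cleanest is to view the sums as expectations under the distribution $\{p_i\}$. Let $I$ be a random index with $\Pr(I=i)=p_i$, so that the left-hand side equals $\norm{\E[a_I x_I]}^2$ and the right-hand side equals $\E[a_I^2]\cdot\E[\norm{x_I}^2]$. First I would apply Jensen's inequality to the convex norm functional: $\norm{\E[a_I x_I]} \le \E[\norm{a_I x_I}] = \E[|a_I|\,\norm{x_I}]$. Then a single scalar Cauchy--Schwarz gives $\E[|a_I|\,\norm{x_I}] \le \sqrt{\E[a_I^2]}\,\sqrt{\E[\norm{x_I}^2]}$, and squaring yields the claim.

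An equally short alternative is a coordinatewise reduction. Write $x_i=(x_{i,1},\dots,x_{i,d})^\top$ and expand $\norm{\sum_i p_i a_i x_i}^2=\sum_{j=1}^d\bigl(\sum_i p_i a_i x_{i,j}\bigr)^2$. For each coordinate $j$, apply scalar Cauchy--Schwarz with the weights $\sqrt{p_i}$ split across the two factors to bound the inner sum by $\bigl(\sum_i p_i a_i^2\bigr)\bigl(\sum_i p_i x_{i,j}^2\bigr)$. Summing over $j$ and using $\sum_j x_{i,j}^2=\norm{x_i}^2$ produces the product form on the right-hand side.

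There is essentially no obstacle here; the statement is a direct two-line application of Cauchy--Schwarz, and both factorizations above work without modification in any Hilbert space setting where $x_i$ lies.
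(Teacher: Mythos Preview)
Your proposal is correct. Both of your routes (Jensen on the norm followed by scalar Cauchy--Schwarz, and the coordinatewise Cauchy--Schwarz) establish the inequality without issue.

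The paper's proof takes a slightly different path: it rewrites $\sum_i p_i a_i x_i = \bigl(\sum_j p_j a_j^2\bigr)\sum_i q_i\,(x_i/a_i)$ with the reweighted distribution $q_i = p_i a_i^2/\sum_j p_j a_j^2$, and then applies Jensen's inequality directly to the convex function $\norm{\cdot}^2$ under $q$. This collapses the bound in a single Jensen step rather than the two-step Jensen-then-Cauchy--Schwarz you use. Your argument is arguably a touch cleaner, since it avoids dividing by $a_i$ (the paper's rewriting tacitly assumes $a_i\neq 0$, though the degenerate case is trivial to patch), and your coordinatewise variant makes the underlying Cauchy--Schwarz structure completely explicit. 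All three proofs are equally short and equally valid; the differences are cosmetic.
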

\begin{proof}
\begin{align*}
\norm{\sum_{i}p_ia_ix_i}^2 &= \norm{\sum_{i}p_ia_i^2\frac{x_i}{a_i}}^2 =
\norm{\sum_{i}\frac{p_ia_i^2}{\sum_jp_ja_j^2}\frac{x_i}{a_i}}^2\left(\sum_jp_ja_j^2\right)^2\\
&\leq \sum_{i}\frac{p_ia_i^2}{\sum_jp_ja_j^2}\norm{\frac{x_i}{a_i}}^2\left(\sum_jp_ja_j^2\right)^2\tag{Jensen's}\\
&=\left(\sum_{i}p_i\norm{x_i}^2\right)\left(\sum_{j}p_ja_j^2\right). 
\end{align*}
\end{proof}

\end{document}